\def\ARXIVBUILD{1}
\documentclass{article}

\newif\ifarxiv
\ifdefined\ARXIVBUILD
  \arxivtrue
\fi

\ifarxiv
  \usepackage[preprint]{neurips_2026}
\else
  \usepackage{neurips_2026}
\fi

\usepackage[utf8]{inputenc}
\usepackage{amsmath,amssymb,amsthm,amsfonts}
\usepackage{xcolor}
\usepackage{graphicx}
\usepackage{tikz}
\usetikzlibrary{positioning,calc,arrows.meta}
\usepackage{booktabs}
\usepackage{enumitem}
\usepackage{mathtools}
\usepackage{fontspec}
\usepackage[english,provide=*]{babel}
\usepackage[breaklinks=true,hidelinks]{hyperref}
\usepackage{url}
\ifarxiv
  \definecolor{yatlink}{HTML}{246875}
  \hypersetup{colorlinks=true,linkcolor=yatlink,citecolor=yatlink,urlcolor=yatlink,
    bookmarksnumbered=true,bookmarksopen=true,bookmarksopenlevel=1,
    pdfsubject={Kernel foundations, directional structure, and neural layers},
    pdfkeywords={reproducing kernel Hilbert spaces, inverse multiquadric, Yat kernel, approximation theory}}
\fi

\newtheorem{theorem}{Theorem}
\newtheorem{proposition}{Proposition}
\newtheorem{lemma}{Lemma}
\newtheorem{corollary}{Corollary}
\newtheorem{remark}{Remark}

\newtheorem*{theorem*}{Theorem}
\newtheorem*{proposition*}{Proposition}
\newtheorem*{corollary*}{Corollary}
\newtheorem*{lemma*}{Lemma}

\babelprovide[import,onchar=ids fonts]{english}
\babelprovide[onchar=ids fonts]{tifinagh}
\ifarxiv
\babelfont[tifinagh]{rm}{DejaVuSans.ttf}
\newfontfamily\tifinaghfont{DejaVuSans.ttf}
\else
\babelfont[tifinagh]{rm}[Path=fonts/, Extension=.ttf, UprightFont=*]{ebrima}
\newfontfamily\tifinaghfont[Path=fonts/, Extension=.ttf, UprightFont=*]{ebrima}
\fi
\DeclareRobustCommand{\yat}{\text{\normalfont\tifinaghfont ⵟ}}
\DeclareRobustCommand{\kYat}{k_{\yat}}

\newcommand{\R}{\mathbb{R}}
\newcommand{\IMQ}{\mathrm{IMQ}}
\newcommand{\Sph}{\mathbb{S}}

\title{Action at a Distance: A Universal Reproducing Kernel Hilbert Space from Polynomial Alignment and IMQ Distance}

\ifarxiv
  \author{%
    Taha Bouhsine\\
    Azetta AI\\
    \texttt{taha@azetta.ai}%
  }
  \hypersetup{%
    pdftitle={Action at a Distance: A Universal Reproducing Kernel Hilbert Space from Polynomial Alignment and IMQ Distance},
    pdfauthor={Taha Bouhsine}%
  }
\else
  \author{%
    Anonymous Authors\\
    Anonymous Affiliation\\
    \texttt{anonymous@example.com}%
  }
\fi

\begin{document}
\maketitle

\begin{abstract}
Inverse-multiquadric (IMQ) kernel sections decay at infinity. Can polynomial
alignment add persistent directional responses while preserving universal
approximation? The \yat{} (Yat) kernel answers this question by multiplying IMQ
distance by a biased squared inner product. With shared positive bias and
regularization, its RKHS continuously contains the IMQ RKHS and is universal on
every compact domain. Alignment strictly enlarges the global function space:
Yat sections retain a quadratic directional trace at infinity.
We quantify a second distinction through high-dimensional activation covariance.
For a fixed number of orthogonal unit centers and fixed kernel parameters,
isotropic Gaussian inputs yield a rank-one rescaled limiting covariance for IMQ
and a full-rank limit for Yat. On the radius-$\sqrt d$ sphere, both limits are
full rank, isolating shared input-norm fluctuations as the Gaussian collapse
mechanism. The connection is also constructive: three positive-bias Yat atoms
recover one IMQ section exactly, and three are necessary at nonzero centers when
regularization is shared and bias varies across atoms. These results identify how
polynomial alignment preserves radial approximation while adding directional
structure.
\end{abstract}

\ifarxiv
\begin{center}
\small
\href{https://github.com/mlnomadpy/flaxchat}{Training code}
\quad$\cdot$\quad
\href{https://huggingface.co/mlnomad}{Model hub}
\quad$\cdot$\quad
\hyperref[app:public-resources]{Checkpoints and run links}
\quad$\cdot$\quad
\hyperref[arxiv:contents]{Contents}
\end{center}
\fi

\section{Introduction}
\label{sec:intro}

Neural units built from centers and units built from directions offer complementary
geometries. Learned-center RBF and IMQ units are symmetric kernel sections
\citep{broomhead1988rbf,moody1989rbf,park1991rbf}: they measure locality through
$\|\mathbf{x}-\mathbf{w}\|$, and a layer with shared kernel parameters is a finite
expansion in one RKHS. Their response, however, is constant on every sphere about
$\mathbf{w}$. Behind a learned feature map, purely radial responses are governed
entirely by the relation between feature-space distances and bandwidth; activation or
Gram-spectrum concentration can follow when those scales are mismatched. Standard
scalar activations such as ReLU and GELU
\citep{hendrycks2016gelu} instead use $\mathbf{w}$ as a direction through
$\sigma(\mathbf{w}^\top\mathbf{x}+b)$, but a unit is not canonically a symmetric
Mercer section on the shared input--parameter domain. This leaves a basic design
question: can a single kernel section retain an IMQ distance factor while adding a
preferred direction?

We introduce the \emph{\yat{} (Yat) kernel}
\begin{equation}
\label{eq:yat-def}
k_{\yat,b,\varepsilon}(\mathbf{w},\mathbf{x}) \;=\; \frac{(\mathbf{w}^\top\mathbf{x}+b)^2}{\|\mathbf{x}-\mathbf{w}\|^2+\varepsilon},
\qquad b\ge 0,\ \varepsilon>0.
\end{equation}
We abbreviate $k_{\yat,b,\varepsilon}$ to $k_{b,\varepsilon}$ below.
It is the product of the degree-two polynomial kernel
$(\mathbf w^\top\mathbf x+b)^2$ and the inverse-multiquadric (IMQ) kernel
$h_\varepsilon(\mathbf w,\mathbf x):=(\|\mathbf x-\mathbf w\|^2+\varepsilon)^{-1}$.
Closure of positive-semidefinite kernels under products immediately certifies this
construction~\citep{scholkopf2002learning,steinwart2008support}. Our focus is the
induced function space: which IMQ guarantees survive the product, and which functions
become available through alignment.

The first question is what the kernel preserves from IMQ. For a
fixed positive bias, $b^2h_\varepsilon\preceq k_{b,\varepsilon}$ in the Loewner order,
so the universal IMQ RKHS embeds continuously into $\mathcal H_{b,\varepsilon}$. Hence
one fixed Yat kernel is universal on every compact subset of $\mathbb R^d$ when $b>0$.
At $b=0$, every section vanishes at the origin. We prove the exact compact-domain
boundary: the zero-bias kernel is universal on a compact set if and only if that set
does not contain the origin.

The second question is what alignment changes. Expanding the numerator separates
radial, linear-alignment, and quadratic-alignment channels. Along a ray $r\mathbf u$,
a Yat section converges to $(\mathbf u^\top\mathbf w)^2$, while every finite IMQ
expansion converges to zero. The resulting far-field trace gives a concrete witness
of directional functions absent from finite radial spans.

The third question is how the atom families represent functions. For every
$t>0$,
\begin{equation}
\label{eq:intro-imq-identity}
k_{3t,\varepsilon}(\mathbf w,\cdot)
-2k_{2t,\varepsilon}(\mathbf w,\cdot)
+k_{t,\varepsilon}(\mathbf w,\cdot)
=2t^2h_\varepsilon(\mathbf w,\cdot).
\end{equation}
The three biases in~\eqref{eq:intro-imq-identity} are different, so this identity
acts across the variable-bias atom family. We prove that three atoms are also necessary
at every nonzero center. Fixed bias therefore gives continuous RKHS containment;
variable bias gives an exact, width-sharp finite representation of every IMQ section.

\paragraph{Contributions.}
The paper establishes the following kernel-specific results.
\begin{enumerate}[leftmargin=2em,itemsep=2pt,topsep=3pt]
\item \emph{Radial structure is retained.} For fixed $b>0$, Loewner domination yields
continuous IMQ-RKHS containment and therefore fixed-kernel universality,
characteristicness, and strict positive definiteness on compact domains
(Section~\ref{sec:psd}).
\item \emph{Directional structure is added.} A quadratic far-field trace gives
strict global RKHS enlargement. For a fixed number of unit centers, the Gaussian
and spherical covariance limits identify the contributions of input-norm fluctuations
and center geometry (Section~\ref{sec:extension}).
\item \emph{Finite representations are related constructively.} Three variable-bias
Yat atoms recover one IMQ atom exactly, and three are necessary at nonzero centers.
Rank-$r$ positive-semidefinite quadratic targets admit $r$-atom Yat approximations; in dimension $d\ge5$,
accurate IMQ approximation requires an explicit coefficient-variation budget
(Section~\ref{sec:universality}).
\end{enumerate}

For a layer with shared $(b,\varepsilon)$, these structural results sit inside the
standard fixed-kernel framework. If
$f=\sum_j\alpha_jk_{b,\varepsilon}(\mathbf w_j,\cdot)$, then
$\|f\|_{\mathcal H_{b,\varepsilon}}^2=\boldsymbol\alpha^\top\mathbf K\boldsymbol\alpha$.
Together with the Yat diagonal
$k_{b,\varepsilon}(\mathbf x,\mathbf x)=(\|\mathbf x\|^2+b)^2/\varepsilon$,
this identity yields an explicit layer-local Rademacher bound.

The layer-local geometry extends through composition in two complementary forms. Fixing
the preceding layers produces a prefix-dependent pullback RKHS with an exact quotient-norm
characterization. For fixed finite-depth architectures with prescribed center, bias,
and coefficient budgets and regularization bounded away from zero, a Sobolev argument
places the resulting functions in a parameter-independent ambient RKHS
(Appendix~\ref{app:layer-theory}). These are two distinct geometries: the pullback
depends on the trained prefix, while the ambient space is fixed before training.

\section{What Distance Retains: Fixed-Kernel Universality}
\label{sec:psd}

\noindent\textbf{Notation.} Throughout, $d\ge 1$, $\mathcal{X}\subset\R^d$ is compact, $b\ge 0$, $\varepsilon>0$. Write $D_\varepsilon(\mathbf{x},\mathbf{w}):=\|\mathbf{x}-\mathbf{w}\|^2+\varepsilon$ for the regularized squared distance, and use the shorthands
\[
\begin{aligned}
g_\varepsilon(\mathbf{x};\mathbf{w},b)
&:= \frac{(\mathbf{x}^\top\mathbf{w}+b)^2}{D_\varepsilon(\mathbf{x},\mathbf{w})}
&&\text{(biased Yat atom)},\\
h_\varepsilon(\mathbf{x},\mathbf{w})
&:= k_\IMQ^\varepsilon(\mathbf{x},\mathbf{w})
:= D_\varepsilon(\mathbf{x},\mathbf{w})^{-1}
&&\text{(IMQ)}.
\end{aligned}
\]
The unbiased section $\kYat(\mathbf{w},\mathbf{x})=g_\varepsilon(\mathbf{x};\mathbf{w},0)$ corresponds to $b=0$. We adopt throughout the convention that the RKHS of the identically-zero kernel is the zero subspace $\{0\}$ with trivial norm; this makes the channel decomposition of Theorem~\ref{thm:channel-decomp} and the limit $b\to 0^+$ well-defined when individual channels collapse.

\begin{proposition}[Kernel validity and inherited approximation guarantees]
\label{prop:inherited-guarantees}
Fix $b\ge0$ and $\varepsilon>0$. The kernel $k_{b,\varepsilon}$ is symmetric
positive semidefinite on $\R^d$ and continuous Mercer on every compact restriction.
It satisfies $b^2h_\varepsilon\preceq k_{b,\varepsilon}$. For $b>0$,
\[
\mathcal H_{h_\varepsilon}\subseteq\mathcal H_{b,\varepsilon},
\qquad
\|f\|_{\mathcal H_{b,\varepsilon}}\le b^{-1}\|f\|_{\mathcal H_{h_\varepsilon}}.
\]
For $b>0$, on every compact $\mathcal X\subset\R^d$ the fixed kernel is universal, its mean
embedding is injective on finite signed Borel measures, and its MMD metrizes weak
convergence of Borel probability measures. Its Gram matrix on any finite set of
distinct points is strictly positive definite.
At $b=0$, the kernel is universal on a nonempty compact $\mathcal X$ if and only if
$\mathbf0\notin\mathcal X$.
\end{proposition}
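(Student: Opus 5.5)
The plan follows the statement in order: kernel validity, the Loewner domination, the consequences for fixed $b>0$, and finally the $b=0$ boundary case.

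\emph{Kernel validity.} Write $k_{b,\varepsilon}=p_b\cdot h_\varepsilon$ with $p_b(\mathbf w,\mathbf x)=(\mathbf w^\top\mathbf x+b)^2$. The factor $p_b$ is positive semidefinite: $\mathbf w^\top\mathbf x+b$ is the linear kernel plus a nonnegative constant, and its square is PSD by the Schur product theorem. The IMQ factor $h_\varepsilon$ is PSD by the standard Bochner/completely-monotone argument, since $t\mapsto (t+\varepsilon)^{-1}$ is completely monotone in $t=\|\mathbf x-\mathbf w\|^2$ (Schoenberg). Their product is therefore PSD. Continuity is clear, so on any compact set the kernel is a continuous Mercer kernel.

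\emph{Loewner domination and containment.} Expand
\[
p_b=(\mathbf w^\top\mathbf x)^2+2b\,\mathbf w^\top\mathbf x+b^2 .
\]
The first two terms are PSD kernels, so $p_b-b^2\succeq0$. Multiplying by the PSD kernel $h_\varepsilon$ preserves this, by Schur again, which gives $k_{b,\varepsilon}-b^2h_\varepsilon\succeq0$. Aronszajn's inclusion theorem then applies: if $K_1\preceq K_2$ then $\mathcal H_{K_1}\subseteq\mathcal H_{K_2}$ with $\|f\|_{K_2}\le\|f\|_{K_1}$. Applied to $b^2h_\varepsilon\preceq k_{b,\varepsilon}$, and using $\|f\|_{b^2h}=b^{-1}\|f\|_{h}$, this gives the stated norm bound.

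\emph{Consequences for fixed $b>0$.} The IMQ kernel is universal on every compact set; this is classical, since its spectral measure is a Bessel-type density of full support (Micchelli / Steinwart). Containment then makes $\mathcal H_{b,\varepsilon}|_{\mathcal X}$ dense in $C(\mathcal X)$, because it contains a dense subspace. Universality on a compact metric space gives injectivity of the mean embedding on finite signed measures by Riesz duality: a signed measure annihilating a dense subspace of $C(\mathcal X)$ is zero. It also gives weak metrization of MMD, which holds for continuous universal kernels on compact spaces (Sriperumbudur et al.). For strict positive definiteness, take distinct points. The IMQ Gram matrix $H$ is strictly PD, as IMQ is SPD by Micchelli's theorem. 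Then $\mathbf K\succeq b^2H\succ0$.

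\emph{The case $b=0$.} If $\mathbf0\in\mathcal X$, every section $k_{0,\varepsilon}(\mathbf w,\cdot)$ vanishes at $\mathbf0$. So every element of the RKHS, and every uniform limit of such elements, vanishes at $\mathbf0$, and the constant $1$ is not approximable. Conversely, suppose $\mathbf0\notin\mathcal X$. Let $\mu$ be a signed measure on $\mathcal X$ annihilating all sections, i.e.
\[
\int (\mathbf w^\top\mathbf x)^2 h_\varepsilon(\mathbf w,\mathbf x)\,d\mu(\mathbf x)=0\quad\text{for all }\mathbf w .
\]
This is the step I expect to be the main obstacle, because $k_{0,\varepsilon}$ does not dominate a multiple of $h_\varepsilon$. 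I would try first to show that $\mathcal H_{0,\varepsilon}$ equals the space of products $p_0\cdot$(functions), via the product-kernel RKHS description. The polynomial $(\mathbf w^\top\mathbf x)^2=\sum_{ij}w_iw_jx_ix_j$ yields a feature expansion
\[
k_{0,\varepsilon}=\sum_{i,j}(x_ix_j\,\cdot)\,(w_iw_j\,\cdot)\,h_\varepsilon ,
\]
so the RKHS contains the functions $x_ix_j\,g$ with $g\in\mathcal H_{h_\varepsilon}$; in particular it contains $\|\mathbf x\|^2 g$. Those functions are dense in $\{\|\mathbf x\|^2 g: g\in C(\mathcal X)\}$, by IMQ universality and boundedness of $\|\mathbf x\|^2$ on $\mathcal X$. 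This set is all of $C(\mathcal X)$ because $\|\mathbf x\|^2$ is bounded below by a positive constant on $\mathcal X$ when $\mathbf0\notin\mathcal X$, so one may divide. The delicate point is justifying the inclusion $\{x_ix_jg\}\subseteq\mathcal H_{0,\varepsilon}$. I would use the Loewner-type bound
\[
(\mathbf w^\top\mathbf x)^2\succeq c\,(w_1x_1)^2\ \text{coordinatewise}
\]
or more simply average over rotations. Failing that, I would argue directly with the annihilating measure: differentiate in $\mathbf w$, or take $\mathbf w=s\mathbf u$ with $s\to0$ and compare Taylor coefficients, to deduce
\[
\int x_ix_j\,h_\varepsilon(\mathbf w,\mathbf x)\,d\mu=0 .
\]
Summing over $i=j$ then shows that $\|\mathbf x\|^2d\mu$ annihilates the IMQ sections, so $\mu=0$.
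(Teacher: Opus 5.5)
Your proposal is correct and follows the paper's proof for nearly every component. The Schur factorization $p_b\cdot h_\varepsilon$ matches; your complete-monotonicity argument for $h_\varepsilon$ is the same Laplace--Gaussian mixture the paper uses. So do the expansion giving $k_{b,\varepsilon}-b^2h_\varepsilon\succeq0$, Aronszajn inclusion with $\|f\|_{b^2h_\varepsilon}=b^{-1}\|f\|_{h_\varepsilon}$, Micchelli universality pushed through the embedding, and the Riesz argument for injectivity.

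The one genuinely different step is strict positive definiteness. You use $\mathbf K\succeq b^2\mathbf H\succ0$ with $\mathbf H$ the IMQ Gram matrix. The paper instead derives it from characteristicness: a null vector $\mathbf c$ of $\mathbf K$ would give the atomic measure $\sum_ic_i\delta_{\mathbf x_i}$ zero mean embedding. Your route is shorter and bypasses the universality chain, but it imports strict positive definiteness of IMQ. That fact is immediate from the Laplace mixture, since every Gaussian with $t>0$ is strictly positive definite and the weight $e^{-t\varepsilon}$ is positive. The paper's route uses only what it has already proved.

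For the $b=0$ converse, your hedging is unnecessary, because the coordinate expansion you wrote already settles the ``delicate point.'' Each summand $(x_ix_j)(w_iw_j)h_\varepsilon$ is a rank-one kernel times a PSD kernel. The sum rule therefore embeds its RKHS $x_ix_j\,\mathcal H_{h_\varepsilon}$ into $\mathcal H_{0,\varepsilon}$. Equivalently, your bound $(\mathbf w^\top\mathbf x)^2\succeq(w_ix_i)^2$ holds with constant $1$. This is the paper's argument, phrased there as $\|\mathbf x\|^2$ lying in the RKHS of the homogeneous quadratic kernel plus the Steinwart--Christmann product containment.

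Drop the Taylor fallback. Sending $\mathbf w=s\mathbf u$ to $\mathbf 0$ only gives $\int x_ix_j\,h_\varepsilon(\mathbf 0,\mathbf x)\,d\mu=0$ at the single center $\mathbf 0$. Your final step needs this identity for all centers, which that argument does not provide.
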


Appendix~\ref{app:distance-proofs} gives the individual statements and proofs.
The kernel-order constant $b^2$ is sharp on $\R^d$; the corresponding norm bound
scales as $1/b$. At zero bias every RKHS function vanishes at the origin.
Shared $(b,\varepsilon)$ define one kernel and one RKHS; atom-dependent parameters
define the broader span $F_\yat^{\ge0}$.

The construction separates three channels:
\[
k_{b,\varepsilon}
=\underbrace{b^2h_\varepsilon}_{\text{radial}}
+\underbrace{2b(\mathbf x^\top\mathbf w)h_\varepsilon}_{\text{linear alignment}}
+\underbrace{(\mathbf x^\top\mathbf w)^2h_\varepsilon}_{\text{quadratic alignment}}.
\]
The associated RKHS is the sum of these channel spaces, with the minimum sum of
squared component norms (Theorem~\ref{thm:channel-decomp}).
Only the quadratic channel remains at $b=0$.
Table~\ref{tab:kernel-cmp} compares the resulting structure with classical alternatives.

\begin{table}[t]
\centering
\footnotesize
\caption{Geometry and guarantees of kernel-valued hidden units. ``Fixed UAT'' means
universality of one fixed kernel on every compact domain; ``trace'' records a finite,
nonzero directional limit at infinity. The exact squared Gram norm is shared by every fixed
kernel expansion and is therefore not used as a distinguishing column.}
\label{tab:kernel-cmp}
\setlength{\tabcolsep}{2.5pt}
\begin{tabular}{@{}lcccccc@{}}
\toprule
Primitive & Locality & Alignment & \shortstack{Fixed\\UAT} & \shortstack{Each fixed\\section bounded} &
\shortstack{Nonzero\\trace} & \shortstack{IMQ\\relation} \\
\midrule
\textbf{Yat} ($b>0$) & \checkmark & \checkmark & \checkmark & \checkmark & \checkmark &
\shortstack{sub-RKHS} \\
\textbf{Yat} ($b=0$) & \checkmark & \checkmark & \shortstack{iff\\$0\notin\mathcal X$}$^\ddagger$ & \checkmark & \checkmark & --- \\
Variable-$b$ Yat span & \checkmark & \checkmark & \shortstack{family\\UAT} & \checkmark & \checkmark & \shortstack{three-atom\\exact$^\dagger$} \\
IMQ & \checkmark & --- & \checkmark & \checkmark & --- & baseline \\
RBF & \checkmark & --- & \checkmark & \checkmark & --- & --- \\
Quadratic poly. & --- & \checkmark & --- & --- & diverges & --- \\
Univ.\ dot-product$^\S$ & --- & \checkmark & \checkmark & --- & diverges & --- \\
\bottomrule
\end{tabular}
\parbox{\linewidth}{\footnotesize
$^\dagger$Exact recovery uses three atoms with different positive biases; fixed $b>0$
gives IMQ-RKHS containment. $^\ddagger$Proposition~\ref{prop:zero-bias-boundary}.
$^\S$Example:
$e^{\gamma\mathbf{x}^\top\mathbf{w}}$~\citep{smola2001regularization}. All listed
fixed kernels are PSD in their stated parameter regimes. The variable-bias row
describes a family of atoms, rather than one Mercer kernel.}
\end{table}

\section{What Alignment Adds: Nonradial Structure}
\label{sec:extension}

The Loewner embedding identifies the IMQ native space as a subspace of
$\mathcal{H}_{b,\varepsilon}$ for $b>0$. The inclusion is strict. Every function in
$\mathcal{H}_{h_\varepsilon}$ vanishes at infinity, whereas a Yat section has the
directional limit $(\mathbf{u}^\top\mathbf{w})^2$ along
$\mathbf{x}=r\mathbf{u}$. This section formalizes the distinction through an RKHS
sum decomposition and the asymptotic trace operator $T_\infty$.

We begin with the radial baseline: every function in the IMQ RKHS vanishes at
infinity.

\begin{lemma}[IMQ RKHS functions vanish at infinity]
\label{lem:imq-rkhs-c0}
Every $f\in\mathcal{H}_{h_\varepsilon}$ belongs to $C_0(\mathbb{R}^d)$, i.e.,
$f(\mathbf{x})\to0$ as $\|\mathbf{x}\|\to\infty$.
\end{lemma}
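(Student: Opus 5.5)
The plan is the standard density-plus-uniform-limit argument for RKHSs with a bounded kernel whose sections lie in $C_0(\R^d)$.

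\emph{Step 1: each section is in $C_0$ and the kernel is bounded.} For fixed $\mathbf w$, the section $h_\varepsilon(\mathbf w,\cdot)=(\|\cdot-\mathbf w\|^2+\varepsilon)^{-1}$ is continuous. It tends to zero as $\|\mathbf x\|\to\infty$, since $\|\mathbf x-\mathbf w\|\ge\|\mathbf x\|-\|\mathbf w\|$. Consequently every finite span $\sum_j\alpha_jh_\varepsilon(\mathbf w_j,\cdot)$ lies in $C_0(\R^d)$. The diagonal is constant: $h_\varepsilon(\mathbf x,\mathbf x)=1/\varepsilon$.

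\emph{Step 2: the RKHS norm controls the sup norm.} By the reproducing property and Cauchy--Schwarz, for every $f\in\mathcal H_{h_\varepsilon}$ and every $\mathbf x$,
\[
|f(\mathbf x)|=|\langle f,h_\varepsilon(\mathbf x,\cdot)\rangle|\le\|f\|_{\mathcal H_{h_\varepsilon}}\,\varepsilon^{-1/2}.
\]
Hence $\|f\|_\infty\le\varepsilon^{-1/2}\|f\|_{\mathcal H_{h_\varepsilon}}$.

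\emph{Step 3: approximate and pass to the limit.} By the Moore--Aronszajn construction, the span of sections is dense in $\mathcal H_{h_\varepsilon}$. Given $f$, choose finite expansions $f_n\to f$ in RKHS norm. By Step 2, $f_n\to f$ uniformly on $\R^d$. Since $C_0(\R^d)$ is closed in $(C_b(\R^d),\|\cdot\|_\infty)$, we conclude $f\in C_0(\R^d)$.

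For this last closure step we use an explicit $\delta/2$ argument. Given $\delta>0$, pick $n$ with $\|f-f_n\|_\infty<\delta/2$. Then pick $R$ such that $|f_n(\mathbf x)|<\delta/2$ whenever $\|\mathbf x\|>R$. This gives $|f(\mathbf x)|<\delta$ for $\|\mathbf x\|>R$. Continuity of $f$ follows the same way, as a uniform limit of continuous functions.

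There is no real obstacle in this argument. The only point needing care is the uniform (not merely pointwise) convergence in Step 3, and it is supplied precisely by the bounded diagonal in Step 2.
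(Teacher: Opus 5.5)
Your proposal is correct and follows the paper's proof essentially step for step. Both arguments use the same four ingredients: each IMQ section decays at infinity, the constant diagonal gives $\|f\|_\infty\le\varepsilon^{-1/2}\|f\|_{\mathcal{H}_{h_\varepsilon}}$, the span of sections is dense in $\mathcal{H}_{h_\varepsilon}$, and $C_0(\R^d)$ is closed under uniform limits; your explicit $\delta/2$ closure argument just spells out the last step, which the paper states in one line.
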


\noindent
The lemma is a \emph{global} statement on $\R^d$, distinct from compact-domain
universality: the IMQ RKHS restricts densely to $C(\mathcal{X})$ on every compact
$\mathcal{X}$~\citep{micchelli2006}, yet its ambient functions vanish at infinity.
A Yat section with nonzero directional trace therefore witnesses strict containment.

\begin{corollary}[Strict containment and non-reversibility]
\label{cor:strict-order}
For $b>0$, $\varepsilon>0$: $\mathcal{H}_{h_\varepsilon}\subsetneq\mathcal{H}_{b,\varepsilon}$.
Moreover, no constant $C>0$ satisfies $k_{b,\varepsilon}\preceq C\,h_\varepsilon$ on $\mathbb{R}^d$.
\end{corollary}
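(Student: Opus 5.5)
Containment $\mathcal H_{h_\varepsilon}\subseteq\mathcal H_{b,\varepsilon}$ is already given by Proposition~\ref{prop:inherited-guarantees}. So the corollary reduces to two claims: the containment is proper, and the kernel order cannot be reversed. For strictness I will exhibit one element of $\mathcal H_{b,\varepsilon}$ that is not in $C_0(\R^d)$. Lemma~\ref{lem:imq-rkhs-c0} then shows it cannot lie in $\mathcal H_{h_\varepsilon}$.

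The natural witness is a section $f=k_{b,\varepsilon}(\mathbf w,\cdot)$ with $\mathbf w\neq\mathbf 0$; it lies in $\mathcal H_{b,\varepsilon}$ by the reproducing property. Take the unit vector $\mathbf u=\mathbf w/\|\mathbf w\|$ and evaluate $f$ along the ray $\mathbf x=r\mathbf u$:
\[
f(r\mathbf u)=\frac{(r\|\mathbf w\|+b)^2}{r^2-2r\|\mathbf w\|+\|\mathbf w\|^2+\varepsilon}\;\longrightarrow\;\|\mathbf w\|^2>0
\qquad (r\to\infty).
\]
This is a routine quotient-of-quadratics limit: divide numerator and denominator by $r^2$. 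Hence $f\notin C_0(\R^d)$, so $f\notin\mathcal H_{h_\varepsilon}$, and the containment is strict.

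For non-reversibility, suppose for contradiction that $k_{b,\varepsilon}\preceq C h_\varepsilon$ for some $C>0$. By Aronszajn's inclusion theorem, $\mathcal H_{b,\varepsilon}\subseteq\mathcal H_{h_\varepsilon}$ with $\|g\|_{h_\varepsilon}\le\sqrt C\,\|g\|_{b,\varepsilon}$. This inclusion contradicts the strictness just proved.

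A more elementary route also works. The Loewner inequality forces the diagonal inequality $k_{b,\varepsilon}(\mathbf x,\mathbf x)\le C\,h_\varepsilon(\mathbf x,\mathbf x)$, which reads $(\|\mathbf x\|^2+b)^2/\varepsilon\le C/\varepsilon$. This fails as $\|\mathbf x\|\to\infty$, so no such $C$ exists. I will present the diagonal argument, since it is self-contained and does not even need $b>0$, and mention the inclusion argument as the conceptual reason.

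None of these steps is hard. The only point needing care is the logic of strictness: it rests entirely on Lemma~\ref{lem:imq-rkhs-c0}, which I take as given. I also need $\mathbf w\neq\mathbf 0$ so that the trace $(\mathbf u^\top\mathbf w)^2=\|\mathbf w\|^2$ along the chosen ray is nonzero.
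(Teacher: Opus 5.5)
Your strictness argument is the paper's. A section $k_{b,\varepsilon}(\mathbf w,\cdot)$ with $\mathbf w\neq\mathbf 0$ has a nonzero radial limit along any ray with $\mathbf u^\top\mathbf w\neq 0$. Your choice $\mathbf u=\mathbf w/\|\mathbf w\|$ is a valid instance, and your limit computation is correct. So the section lies outside $C_0(\R^d)$, which contains $\mathcal H_{h_\varepsilon}$ by Lemma~\ref{lem:imq-rkhs-c0}.

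For non-reversibility, the paper uses precisely the route you relegate to a remark: $k_{b,\varepsilon}\preceq C h_\varepsilon$ would give $\mathcal H_{b,\varepsilon}\subseteq\mathcal H_{h_\varepsilon}\subset C_0(\R^d)$, contradicting the same trace witness. Your diagonal comparison is a genuinely different, more elementary argument, and it is correct. Positive semidefiniteness of $C h_\varepsilon-k_{b,\varepsilon}$ forces $(\|\mathbf x\|^2+b)^2\le C$ for every $\mathbf x$, which fails. It avoids both Lemma~\ref{lem:imq-rkhs-c0} and the inclusion theorem. It is also quantitative: any reverse constant valid merely on the ball $B_R$ must be at least $(R^2+b)^2$. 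The paper's version instead derives both halves of the corollary from a single function-space witness.

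Given the forward containment, the two halves are in fact equivalent. The converse direction of Aronszajn's theorem says a set inclusion $\mathcal H_{k_1}\subseteq\mathcal H_{k_2}$ is bounded, by the closed graph theorem, and therefore forces $k_1\preceq\gamma^2k_2$. So your diagonal bound alone already rules out $\mathcal H_{b,\varepsilon}\subseteq\mathcal H_{h_\varepsilon}$, which would yield strictness without the $C_0$ lemma. The paper enters from the strictness end; your diagonal argument enters from the kernel-order end.
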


The quadratic channel carries the directional trace. The following proposition
identifies its range with the homogeneous quadratic forms on $\Sph^{d-1}$.

\begin{proposition}[Directional asymptotic trace separates Yat from IMQ]
\label{prop:trace-sep}
Let $\mathcal{D}_{T_\infty}$ denote the linear subspace of continuous functions $F:\R^d\to\R$ for
which the radial limit $\lim_{r\to\infty}F(r\mathbf{u})$ exists for every
$\mathbf{u}\in\Sph^{d-1}$, and define on $\mathcal{D}_{T_\infty}$ the asymptotic trace
operator $T_\infty F(\mathbf{u}):=\lim_{r\to\infty}F(r\mathbf{u})$. Both $F_{\yat}^{\ge 0}$
and $F_{\IMQ,\varepsilon}$ lie in $\mathcal{D}_{T_\infty}$. For every $\mathbf{w}\in\R^d$, every $b\in\R$, and every $\mathbf{u}\in\Sph^{d-1}$, the biased Yat atom satisfies
\begin{equation}
\label{eq:trace-yat}
T_\infty\,g_\varepsilon(\cdot;\mathbf{w},b)(\mathbf{u}) \;=\; (\mathbf{u}^\top\mathbf{w})^2,
\end{equation}
independent of $b$ and $\varepsilon$, and the convergence is uniform in $\mathbf{u}$. By contrast, $T_\infty F\equiv 0$ for every finite IMQ combination $F\in F_{\IMQ,\varepsilon}$. Consequently the induced linear map $T_\infty:F_{\yat}^{\ge 0}\to C(\Sph^{d-1})$ is surjective onto the space of restrictions of homogeneous quadratic forms,
\(
\mathcal{Q}_2(\Sph^{d-1})=\{\mathbf{u}\mapsto\mathbf{u}^\top\mathbf{A}\mathbf{u}:\mathbf{A}\in\mathrm{Sym}(d)\},
\)
of dimension $d(d{+}1)/2$, while $F_{\IMQ,\varepsilon}\subseteq\ker T_\infty$.
\end{proposition}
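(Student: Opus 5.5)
The proof will be a direct limit computation, followed by linearity to pass from atoms to finite spans. We use the notation from the statement: $F_{\yat}^{\ge0}$ is the span of biased atoms $g_\varepsilon(\cdot;\mathbf w,b)$ with atom-dependent $(\mathbf w,b,\varepsilon)$, and $F_{\IMQ,\varepsilon}$ is the span of IMQ sections $h_\varepsilon(\cdot,\mathbf w)$. Both $T_\infty$ and the existence of the radial limit are linear. It therefore suffices to show that each atom lies in $\mathcal D_{T_\infty}$ and to compute its trace. Continuity is immediate because $D_\varepsilon\ge\varepsilon>0$.

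\textbf{Step 1 (Yat atom).} Put $\mathbf x=r\mathbf u$ with $\|\mathbf u\|=1$. Then
\[
g_\varepsilon(r\mathbf u;\mathbf w,b)=\frac{r^2(\mathbf u^\top\mathbf w)^2+2rb\,\mathbf u^\top\mathbf w+b^2}{r^2-2r\,\mathbf u^\top\mathbf w+\|\mathbf w\|^2+\varepsilon}.
\]
Subtract $(\mathbf u^\top\mathbf w)^2$ and put everything over the common denominator. The $r^2$ terms cancel, so the numerator of the difference is bounded by $C(1+r)$, where $C$ depends only on $\|\mathbf w\|,|b|,\varepsilon$, via $|\mathbf u^\top\mathbf w|\le\|\mathbf w\|$. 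For $r\ge 4\|\mathbf w\|$ the denominator is at least $r^2/2$. This gives the bound $\sup_{\mathbf u}|g_\varepsilon(r\mathbf u;\mathbf w,b)-(\mathbf u^\top\mathbf w)^2|\le 2C(1+r)/r^2\to0$. That proves \eqref{eq:trace-yat}, uniformly in $\mathbf u$ and for every $b\in\R$. The limit does not depend on $b$ or $\varepsilon$.

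\textbf{Step 2 (IMQ).} We have $0<h_\varepsilon(r\mathbf u,\mathbf w)\le (r-\|\mathbf w\|)^{-2}$ for $r>\|\mathbf w\|$, so $T_\infty h_\varepsilon(\cdot,\mathbf w)\equiv0$. By linearity $F_{\IMQ,\varepsilon}\subseteq\mathcal D_{T_\infty}\cap\ker T_\infty$. Lemma~\ref{lem:imq-rkhs-c0} gives the same conclusion, but the direct estimate is simpler.

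\textbf{Step 3 (range).} By Step 1 and linearity, $T_\infty$ sends $\sum_j\alpha_j g_\varepsilon(\cdot;\mathbf w_j,b_j)$ to $\mathbf u\mapsto\mathbf u^\top(\sum_j\alpha_j\mathbf w_j\mathbf w_j^\top)\mathbf u$. The image therefore lies in $\mathcal Q_2(\Sph^{d-1})$. For surjectivity, diagonalize any $\mathbf A\in\mathrm{Sym}(d)$ as $\sum_i\lambda_i\mathbf v_i\mathbf v_i^\top$. The combination $\sum_i\lambda_i g_\varepsilon(\cdot;\mathbf v_i,b)$, with any fixed $b\ge0$, then has trace $\mathbf u^\top\mathbf A\mathbf u$. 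Signed coefficients are permitted in a linear span.

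\textbf{Step 4 (dimension).} The last point to check is that $\mathcal Q_2(\Sph^{d-1})$ has dimension $d(d+1)/2$. This amounts to showing that $\mathbf A\mapsto(\mathbf u\mapsto\mathbf u^\top\mathbf A\mathbf u)|_{\Sph^{d-1}}$ is injective on $\mathrm{Sym}(d)$. Suppose the quadratic form vanishes on the sphere. By homogeneity it vanishes on all of $\R^d$, and polarization then gives $\mathbf A=0$ because $\mathbf A$ is symmetric. Hence the dimension equals $\dim\mathrm{Sym}(d)=d(d+1)/2$.

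The main obstacle is modest. It is to obtain the uniform-in-$\mathbf u$ bound in Step 1 while allowing $b$ of either sign, which requires a denominator lower bound that holds for all directions simultaneously. Beyond that, the only care needed is to keep the atom-dependent parameters straight when invoking linearity.
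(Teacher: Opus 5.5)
Your proposal is correct and follows the same route as the paper. You divide out the radial growth to get the limit $(\mathbf u^\top\mathbf w)^2$ uniformly in $\mathbf u$, bound each IMQ atom by $O(r^{-2})$, and use linearity together with the fact that rank-one symmetric matrices span $\mathrm{Sym}(d)$. The differences are minor: you use a spectral decomposition where the paper uses the generators $\mathbf e_i\mathbf e_i^\top$ and $(\mathbf e_i+\mathbf e_j)(\mathbf e_i+\mathbf e_j)^\top$, and you give an explicit uniform error bound. You also check explicitly that $\dim\mathcal Q_2(\Sph^{d-1})=d(d+1)/2$, via injectivity of $\mathbf A\mapsto\mathbf u^\top\mathbf A\mathbf u$ on the sphere, which the paper leaves implicit.
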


The trace is a global function-space witness. A second consequence appears in the
centered activation spectrum. For a fixed number of centers, the shared fluctuation of
$\|\mathbf{x}\|^2$ produces a rank-one leading covariance for IMQ activations. The Yat
numerator preserves the covariance of the projected coordinates.

\begin{theorem}[Activation covariance under Gaussian and fixed-norm inputs]
\label{thm:activation-covariance}
\label{thm:fixed-norm-activation-covariance}
Fix $m\in\mathbb N$, $b\ge0$, and $\varepsilon>0$. For each $d\ge m$, let
$\mathbf w_{1,d},\ldots,\mathbf w_{m,d}\in\R^d$ be unit vectors whose Gram matrices
$\mathbf C_d=[\mathbf w_{i,d}^{\top}\mathbf w_{j,d}]$ converge to
$\mathbf C\in\R^{m\times m}$. For either input law below, define
\[
\mathbf H_d:=\bigl(h_\varepsilon(\mathbf x_d,\mathbf w_{j,d})\bigr)_{j=1}^m,
\qquad
\mathbf Y_d:=\bigl(k_{b,\varepsilon}(\mathbf w_{j,d},\mathbf x_d)\bigr)_{j=1}^m.
\]
The following limits hold entrywise and in operator norm, where $\circ$ denotes
the Hadamard product:
\begin{enumerate}[leftmargin=2em,itemsep=2pt,topsep=2pt]
\item[(i)] For $\mathbf x_d\sim\mathcal N(\mathbf0,\mathbf I_d)$,
\begin{equation}
\label{eq:activation-covariance-limits}
d^3\operatorname{Cov}(\mathbf H_d)\longrightarrow2\mathbf1\mathbf1^\top,
\qquad
d^2\operatorname{Cov}(\mathbf Y_d)\longrightarrow
2(\mathbf C\circ\mathbf C)+4b^2\mathbf C.
\end{equation}
\item[(ii)] For $\mathbf x_d$ uniform on $\sqrt d\,\Sph^{d-1}$,
\begin{equation}
\label{eq:fixed-norm-activation-covariance-limits}
d^4\operatorname{Cov}(\mathbf H_d)\longrightarrow4\mathbf C,
\qquad
d^2\operatorname{Cov}(\mathbf Y_d)\longrightarrow
2(\mathbf C\circ\mathbf C)+4b^2\mathbf C.
\end{equation}
\end{enumerate}
For orthonormal centers, the Gaussian IMQ limit has eigenvalues
$(2m,0,\ldots,0)$ and the spherical IMQ limit is $4\mathbf I_m$.
The Yat limit is $(2+4b^2)\mathbf I_m$ in both cases.
Thus the limiting effective ranks are $1$ versus $m$ under Gaussian input
and $m$ for both kernels under spherical input.
\end{theorem}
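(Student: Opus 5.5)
The plan is a second-order delta method around the typical value $\|\mathbf x_d\|^2\approx d$, with explicit moment control of the remainders. Write $z_j:=\mathbf w_{j,d}^\top\mathbf x_d$ and $S:=\|\mathbf x_d\|^2$. Since each center is a unit vector, $D_j:=D_\varepsilon(\mathbf x_d,\mathbf w_{j,d})=S-2z_j+1+\varepsilon$. Then $H_j=D_j^{-1}$ and $Y_j=(z_j+b)^2D_j^{-1}$.

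Under either law, $\mathbb E[\mathbf z\mathbf z^\top]=\mathbf C_d$. In the Gaussian case $\mathbf z\sim\mathcal N(\mathbf 0,\mathbf C_d)$ exactly. On $\sqrt d\,\Sph^{d-1}$, the vector $\mathbf z$ converges in distribution to $\mathcal N(\mathbf0,\mathbf C)$, and all its moments converge; these are explicit spherical moments, for example $\mathbb E z_j^4=3d/(d+2)$. Because $m$ is fixed, entrywise convergence of the covariance matrices implies operator-norm convergence. The closing eigenvalue claims then follow by substituting $\mathbf C=\mathbf I_m$: $\mathbf 1\mathbf 1^\top$ has spectrum $(m,0,\ldots,0)$, and $\mathbf C\circ\mathbf C=\mathbf C=\mathbf I_m$.

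\emph{IMQ.} Set $\Delta_j:=D_j-d=(S-d)-2z_j+1+\varepsilon$ and expand
\[
H_j=\frac1d-\frac{\Delta_j}{d^2}+R_j,\qquad R_j=\frac{\Delta_j^2}{d^2D_j}.
\]
\begin{itemize}
\item Gaussian case: $S-d$ has variance $2d$ and is uncorrelated with $z_j$, since odd moments vanish. Hence $\operatorname{Cov}(\Delta_i,\Delta_j)=2d+4(\mathbf C_d)_{ij}$. This gives $d^4\operatorname{Cov}(H_i,H_j)=2d+O(1)$ plus remainder terms, that is, $d^3\operatorname{Cov}(\mathbf H_d)\to2\mathbf1\mathbf1^\top$.
\item Spherical case: $S\equiv d$, so $\Delta_j=-2z_j+\text{const}$. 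The leading term is then $4\mathbf C_d/d^4$, giving $d^4\operatorname{Cov}(\mathbf H_d)\to4\mathbf C$.
\end{itemize}

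\emph{Yat.} Write
\[
Y_j=\frac{N_j}{d}\cdot\frac{d}{D_j},\qquad N_j=(z_j+b)^2,\qquad \frac d{D_j}=1-\frac{\Delta_j}{D_j}.
\]
The main term $N_j/d$ gives $\operatorname{Cov}(N_i,N_j)/d^2$. Expanding $N_j=z_j^2+2bz_j+b^2$, the Gaussian (or limiting Gaussian) covariance equals $\operatorname{Cov}(z_i^2,z_j^2)+4b^2\operatorname{Cov}(z_i,z_j)=2(\mathbf C\circ\mathbf C)_{ij}+4b^2\mathbf C_{ij}$; the mixed terms vanish because they are odd moments. In the spherical case the same limit follows from convergence of the fourth moments of $\mathbf z$. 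The correction $N_j\Delta_j/(dD_j)$ has $L^2$ norm of order $d^{-1}\cdot d^{1/2}/d=d^{-3/2}$ in the Gaussian case, and $d^{-2}$ in the spherical case. By Cauchy--Schwarz, its contribution to $d^2\operatorname{Cov}$ is therefore $O(d^{-1/2})\to0$.

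\emph{Main obstacle: the remainders.} The key is a uniform-in-$d$ bound on moments of $d/D_j$. Deterministically, $D_j\ge\varepsilon$. On the event $\{S\ge d/2,\ |z_j|\le d/8\}$ we have $D_j\ge d/4$, so there $d/D_j\le4$. The complement has probability $e^{-cd}$ by $\chi^2$ and Gaussian (respectively spherical-cap) tail bounds. On the complement we only have $d/D_j\le d/\varepsilon$, a polynomial factor that the exponential probability absorbs. Combining this with the moment bounds $\mathbb E|S-d|^p=O(d^{p/2})$ and $\mathbb E|z_j|^p=O(1)$ gives the following remainder estimates:
\begin{itemize}
\item IMQ: $\|R_j\|_{L^2}=O(d^{-2})$ for Gaussian input and $O(d^{-3})$ for spherical input.
\item Yat: the correction term is of the order claimed above.
\end{itemize}
In each case the estimate is one power of $\sqrt d$ (Gaussian) or of $d$ (spherical) beyond the leading fluctuation. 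Hence all cross-covariances involving a remainder vanish after the stated rescaling, which yields \eqref{eq:activation-covariance-limits} and \eqref{eq:fixed-norm-activation-covariance-limits}.
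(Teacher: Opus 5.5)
Your proposal is correct. For the spherical law it is essentially the paper's proof of (ii): an $L^2$ linearization of $1/D_j$ and of $d\,Y_j$ around $\|\mathbf x_d\|^2=d$. There $D_j\ge(\sqrt d-1)^2+\varepsilon$ holds deterministically, so your cap tail bound is not needed.

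For Gaussian input your route is genuinely different and more quantitative. The paper proves (i) qualitatively. It shows that $d^{3/2}(1/D_j-1/\mu_d)$ converges jointly in distribution to a single common variable $-G$ with $G\sim\mathcal N(0,2)$, and that $d\,Y_j$ converges to $(Z_j+b)^2$, both via Slutsky. It then upgrades to convergence of first and cross moments by uniform integrability, using uniform $L^{2+\eta}$ bounds. You instead compute the covariance of the linear term exactly at finite $d$, $\operatorname{Cov}(\Delta_i,\Delta_j)=2d+4(\mathbf C_d)_{ij}$, and remove the quadratic remainder with Cauchy--Schwarz and H\"older.

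Both routes rest on the same key estimate: a uniform $L^q$ bound on $d/D_j$, obtained by splitting on a lower-tail event and using the floor $D_j\ge\varepsilon$ on its exponentially small complement.

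What each route buys:
\begin{itemize}
\item Your version handles both input laws with one mechanism and avoids weak-convergence and uniform-integrability arguments. It also yields explicit rates relative to $\mathbf C_d$: $O(d^{-1/2})$ under Gaussian input and $O(d^{-1})$ on the sphere.
\item The paper's version extends the single-unit argument of Proposition~\ref{prop:hd-variance} directly. It also records a slightly stronger structural fact: the rescaled centered IMQ activations converge jointly to one common random variable. That is exactly the shared-norm mechanism behind the rank-one Gaussian limit.
\end{itemize}
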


Fixing the input norm removes the shared radial fluctuation that drives the leading
Gaussian IMQ covariance. The unscaled covariances vanish with $d$ in both models;
the limits describe their leading nonzero terms.
Both proofs appear in Appendix~\ref{app:covariance-proof}.

\paragraph{What alignment preserves after normalization.}
Removing input-norm fluctuations leaves a distinction between linear and quadratic
center geometry. The spherical IMQ limit is $4\mathbf C$, so its rank is the dimension
spanned by the limiting center coordinates. The Yat limit additionally contains
$2(\mathbf C\circ\mathbf C)$, the Gram matrix of their quadratic features. These
features can distinguish activation directions that are linearly dependent in the
original center coordinates.

\begin{corollary}[Rank criterion for general limiting center geometry]
\label{cor:covariance-rank-geometry}
Under the assumptions of Theorem~\ref{thm:activation-covariance}, factor
$\mathbf C=\mathbf V^\top\mathbf V$ with columns
$\mathbf v_1,\ldots,\mathbf v_m\in\R^r$, where $r=\operatorname{rank}\mathbf C$.
The common rescaled Yat limit is the Gram matrix of the lifted vectors
\[
\boldsymbol\psi_b(\mathbf v_j)
=\bigl(\sqrt2\,\mathbf v_j\otimes\mathbf v_j,\;2b\,\mathbf v_j\bigr).
\]
Consequently, its rank equals
$\dim\operatorname{span}\{\boldsymbol\psi_b(\mathbf v_j):1\le j\le m\}$,
whereas the rescaled spherical IMQ limit has rank $r$.
The Yat limit is full rank exactly when the lifted vectors are linearly independent.
Its rank is at most $\min\{m,r(r+1)/2\}$ for $b=0$, and at most
$\min\{m,r(r+1)/2+r\}$ for $b>0$.
\end{corollary}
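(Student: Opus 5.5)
We take the common rescaled Yat limit from Theorem~\ref{thm:activation-covariance}, namely $\mathbf M_b:=2(\mathbf C\circ\mathbf C)+4b^2\mathbf C$, and the spherical IMQ limit $4\mathbf C$, and reduce everything to Gram-matrix identities. Since $\mathbf C$ is a PSD Gram matrix of rank $r$, we factor $\mathbf C=\mathbf V^\top\mathbf V$ with $\mathbf V\in\R^{r\times m}$ of full row rank $r$ (for example from the spectral decomposition). Then $C_{ij}=\mathbf v_i^\top\mathbf v_j$.

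\textbf{Gram identity.} The key identity is $(\mathbf v_i\otimes\mathbf v_i)^\top(\mathbf v_j\otimes\mathbf v_j)=(\mathbf v_i^\top\mathbf v_j)^2=C_{ij}^2$. It gives
\[
\boldsymbol\psi_b(\mathbf v_i)^\top\boldsymbol\psi_b(\mathbf v_j)=2C_{ij}^2+4b^2C_{ij}=(\mathbf M_b)_{ij}.
\]
Writing $\boldsymbol\Psi$ for the matrix with columns $\boldsymbol\psi_b(\mathbf v_j)$, we have $\mathbf M_b=\boldsymbol\Psi^\top\boldsymbol\Psi$. We then use the standard fact that $\operatorname{rank}(\boldsymbol\Psi^\top\boldsymbol\Psi)=\operatorname{rank}\boldsymbol\Psi$, since the two matrices share a kernel: $\boldsymbol\Psi^\top\boldsymbol\Psi\boldsymbol\alpha=0$ implies $\|\boldsymbol\Psi\boldsymbol\alpha\|^2=0$. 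So $\operatorname{rank}\mathbf M_b$ equals the dimension of the span of the lifted vectors. In particular $\mathbf M_b$ is full rank, i.e.\ rank $m$, exactly when these $m$ vectors are linearly independent. The same argument applied to $4\mathbf C=(2\mathbf V)^\top(2\mathbf V)$ gives rank $r$ for the spherical IMQ limit.

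\textbf{Upper bounds.} We bound the ambient dimension of the span, not $\R^{r^2}\times\R^r$ itself. Each $\mathbf v\otimes\mathbf v$ lies in the symmetric subspace of $\R^r\otimes\R^r$, which has dimension $r(r+1)/2$.
\begin{enumerate}[leftmargin=2em,itemsep=2pt,topsep=2pt]
\item For $b=0$, the second block vanishes. The lifted vectors therefore lie in a space of dimension $r(r+1)/2$, so the rank is at most $\min\{m,r(r+1)/2\}$.
\item For $b>0$, they lie in $\mathrm{Sym}\oplus\R^r$, of dimension $r(r+1)/2+r$. Together with the trivial bound $m$ on the number of vectors, this gives the second bound.
\end{enumerate}

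The only step needing care is that the rank statements are independent of the chosen factorization. This follows because $\mathbf M_b$ depends only on $\mathbf C$. No step looks like a genuine obstacle. The only point to justify explicitly is that $\mathbf v\otimes\mathbf v$ is symmetric, which holds because it corresponds to the matrix $\mathbf v\mathbf v^\top$.
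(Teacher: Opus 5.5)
Your proposal is correct and follows the paper's proof. The paper uses the same three steps: the tensor identity $\langle\mathbf v_i\otimes\mathbf v_i,\mathbf v_j\otimes\mathbf v_j\rangle=C_{ij}^2$ shows that $2(\mathbf C\circ\mathbf C)+4b^2\mathbf C$ is the Gram matrix of the lifted vectors, the rank of a Gram matrix equals the dimension of the span of its features, and the bounds come from $\mathbf v\otimes\mathbf v$ lying in the $r(r+1)/2$-dimensional symmetric subspace. Your kernel argument for $\operatorname{rank}(\boldsymbol\Psi^\top\boldsymbol\Psi)=\operatorname{rank}\boldsymbol\Psi$ spells out a step the paper only asserts.
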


\noindent
The proof is in Appendix~\ref{app:rank-geometry-proof}.
For a concrete example, take three centers in a fixed two-dimensional subspace,
embedded in $\R^d$ as $d$ grows:
\[
\mathbf v_1=\mathbf e_1,\qquad \mathbf v_2=\mathbf e_2,\qquad
\mathbf v_3=(\mathbf e_1+\mathbf e_2)/\sqrt2.
\]
Here $\operatorname{rank}\mathbf C=2$, but
$\det(\mathbf C\circ\mathbf C)=1/2>0$. At $b=0$, the spherical IMQ covariance
limit therefore has rank two and the Yat limit has rank three
(Figure~\ref{fig:quadratic-lift}). The rank-three conclusion also holds for $b>0$.
Normalization removes shared radial fluctuations; polynomial alignment can still
add independent covariance directions. These rank statements concern the leading
rescaled covariance limits under the theorem's fixed-center asymptotics.

\begin{figure}[ht]
\centering
\begingroup
\definecolor{liftTeal}{HTML}{2E7D8A}
\definecolor{liftAmber}{HTML}{D87F26}
\definecolor{liftGray}{HTML}{6B6B6B}
\begin{tikzpicture}[x=1cm,y=1cm,every node/.style={font=\footnotesize},
  lift/arrow/.style={-{Stealth[length=2mm]},semithick}]
\node[font=\small\bfseries] at (2.0,2.8) {Linear center geometry};
\node[font=\small\bfseries] at (9.2,2.8) {Quadratic feature geometry};
\coordinate (origin) at (0.6,0.35);
\draw[liftGray!60] (origin) -- ++(2.55,0);
\draw[liftGray!60] (origin) -- ++(0,2.0);
\draw[liftGray!50,dashed] (2.5,0.35) arc[start angle=0,end angle=90,radius=1.9];
\draw[lift/arrow,liftTeal] (origin) -- (2.5,0.35) node[right] {$\mathbf v_1$};
\draw[lift/arrow,liftTeal] (origin) -- (0.6,2.25) node[above right] {$\mathbf v_2$};
\draw[lift/arrow,liftAmber,thick] (origin) -- (1.9435,1.6935)
  node[above right] {$\mathbf v_3$};
\node at (2.0,-0.12) {$\mathbf v_3=(\mathbf v_1+\mathbf v_2)/\sqrt2$};
\node[font=\small\bfseries,liftTeal] at (2.0,-0.62) {Span dimension 2};
\draw[lift/arrow,liftGray] (4.1,1.35) -- (6.0,1.35)
  node[midway,above=4pt] {$\mathbf v\mapsto q(\mathbf v)$};
\node[align=center] at (9.2,1.35) {$\begin{array}{c|ccc}
 & q(\mathbf v_1)&q(\mathbf v_2)&q(\mathbf v_3)\\[3pt]\hline
v_1^2 &1&0&1/2\\[3pt]
v_2^2 &0&1&1/2\\[3pt]
\color{liftAmber}\sqrt2v_1v_2&0&0&\color{liftAmber}1/\sqrt2
\end{array}$};
\node[liftAmber] at (9.2,-0.12) {Mixed coordinate adds an independent direction};
\node[font=\small\bfseries,liftTeal] at (9.2,-0.62) {Span dimension 3};
\end{tikzpicture}
\endgroup
\caption{\textbf{Quadratic alignment separates linearly dependent centers.}
The three unit centers span two dimensions. Their quadratic coordinates
$q(\mathbf v)=(v_1^2,v_2^2,\sqrt2v_1v_2)$ span three: only the third center has
a nonzero mixed coordinate. Since $q(\mathbf v_i)^\top q(\mathbf v_j)
=(\mathbf v_i^\top\mathbf v_j)^2$, this gives spherical covariance-limit ranks
two for IMQ and three for Yat, already at $b=0$.}
\label{fig:quadratic-lift}
\end{figure}

For $n$ independent inputs, the empirical centered activation covariance converges
almost surely to its population counterpart as $n\to\infty$. Thus, sequentially in
$n$ and $d$, the participation-ratio effective rank under Gaussian input
$r_{\mathrm{eff}}(\mathbf{A}):=\operatorname{tr}(\mathbf{A})^2/
\operatorname{tr}(\mathbf{A}^2)$ converges to $1$ for IMQ and to $m$ for orthogonal
Yat centers; for orthogonal centers under fixed-norm spherical input it converges to
$m$ for both. These are fixed-$m$ sequential limits, not claims for growing numbers
of centers or arbitrary learned center geometry. If
$\mathbf{A}\in\R^{n\times m}$ is the activation matrix and
$\mathbf{P}_n=\mathbf{I}_n-\mathbf{1}\mathbf{1}^\top/n$, the centered sample Gram
$\mathbf{P}_n\mathbf{A}\mathbf{A}^\top\mathbf{P}_n/n$ and activation covariance
$\mathbf{A}^\top\mathbf{P}_n\mathbf{A}/n$ have the same nonzero eigenvalues. The
theorem therefore describes the leading centered Gram spectrum as well.
Figure~\ref{fig:hd-spectrum} visualizes the Gaussian and fixed-norm limits with exact
sampling; Appendix~\ref{app:spectral-protocol} gives the diagnostic protocol.

\begin{figure}[b]
\centering
\includegraphics[width=0.94\linewidth]{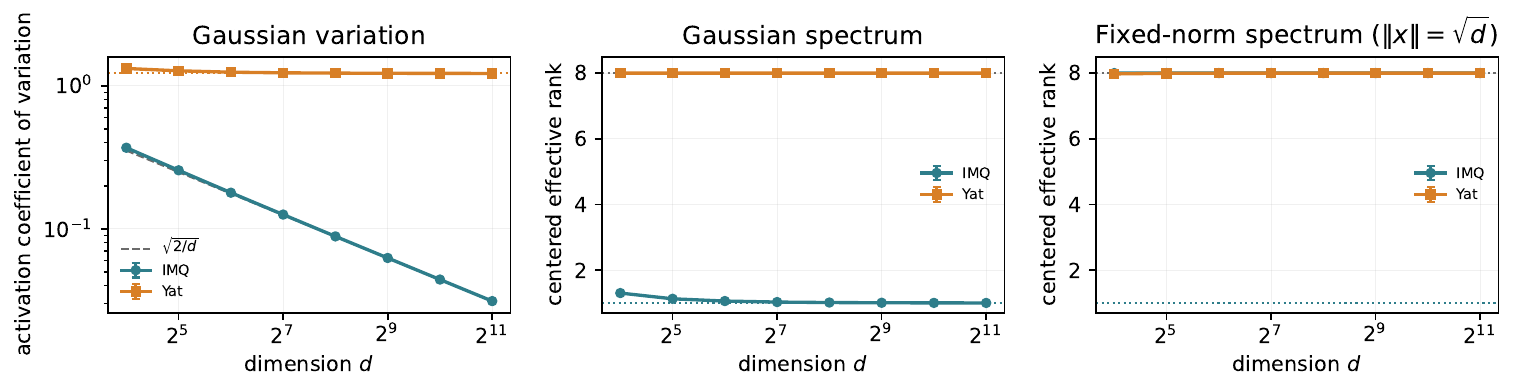}
\caption{\textbf{Input normalization determines whether IMQ covariance collapses.}
Mean $\pm$ standard deviation over five Monte Carlo seeds, with $10^5$ isotropic
Gaussian or fixed-norm spherical inputs per seed, $m=8$ orthogonal unit centers, and
$b=\varepsilon=1$. Under Gaussian input, IMQ follows the predicted $\sqrt{2/d}$
coefficient of variation and its leading centered covariance approaches rank one,
while Yat retains effective rank $8$. On the sphere $\|\mathbf x\|=\sqrt d$, both
effective ranks approach $8$, matching
Theorem~\ref{thm:fixed-norm-activation-covariance}.}
\label{fig:hd-spectrum}
\end{figure}

\paragraph{Trace diagnostic.}
The far-field formula is directly visible in a two-dimensional synthetic test. Finite
IMQ expansions with $50$ and $200$ learned centers are fitted to one Yat atom on an
annulus and evaluated farther along the aligned ray. On $r\ge400$, their mean absolute
errors are $1.008$ and $1.007$, respectively, against the limiting value $1.000$
predicted by Corollary~\ref{cor:directional-gap}. The diagnostic illustrates the
asymptotic obstruction quantitatively. Appendix~\ref{app:directional} gives the
construction and training protocol.

\section{Constructive Representation and Approximation}
\label{sec:universality}

Fixed positive bias gives continuous RKHS containment. Allowing bias to vary across
atoms gives an exact finite representation: a second finite difference recovers one
IMQ section, with sharp three-atom complexity at nonzero centers.

\begin{theorem}[Exact IMQ recovery with sharp three-atom complexity]
\label{thm:imq-embed}
\label{thm:tightness-main}
Fix $d\ge1$ and $\varepsilon>0$.
\begin{enumerate}[leftmargin=2em,itemsep=2pt,topsep=2pt]
\item[(i)] For every $\mathbf w\in\R^d$ and $h>0$, the positive-bias atoms satisfy
\begin{equation}
\label{eq:imq-identity}
g_\varepsilon(\mathbf{x};\mathbf{w},3h)
\;-\; 2\,g_\varepsilon(\mathbf{x};\mathbf{w},2h)
\;+\; g_\varepsilon(\mathbf{x};\mathbf{w},h)
\;=\; \frac{2h^2}{\|\mathbf{x}-\mathbf{w}\|^2+\varepsilon}
\;=\; 2h^2\,k_\IMQ^\varepsilon(\mathbf{x},\mathbf{w}),
\end{equation}
for every $\mathbf{x}\in\R^d$. Consequently
$F_{\IMQ,\varepsilon}\subseteq F_{\yat,\varepsilon}^{+}$, where
$F_{\yat,\varepsilon}^{+}=\mathrm{span}\{g_\varepsilon(\cdot;\mathbf{w},b):\mathbf{w}\in\R^d, b>0\}$
and $F_{\IMQ,\varepsilon}=\mathrm{span}\{k_\IMQ^\varepsilon(\cdot,\mathbf{w}):\mathbf{w}\in\R^d\}$.
The containment is strict.
\item[(ii)] For every
$\mathbf w_0\neq\mathbf0$, no linear combination of at most two biased Yat atoms,
all using this same $\varepsilon$ but with arbitrary centers and biases, equals a
nonzero scalar multiple of $k_\IMQ^\varepsilon(\cdot,\mathbf w_0)$.
\end{enumerate}
At $\mathbf w_0=\mathbf0$, one atom suffices:
$g_\varepsilon(\cdot;\mathbf0,b)/b^2=h_\varepsilon(\cdot,\mathbf0)$ for $b>0$.
\end{theorem}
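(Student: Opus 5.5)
For part (i) the plan is to exploit the fact that, with $\mathbf x$ and $\mathbf w$ fixed, the Yat numerator is a quadratic polynomial in the bias. Write $s:=\mathbf x^\top\mathbf w$. Then $g_\varepsilon(\mathbf x;\mathbf w,b)=(s+b)^2/D_\varepsilon(\mathbf x,\mathbf w)$, and the left side of~\eqref{eq:imq-identity} equals $D_\varepsilon^{-1}$ times the second forward difference in $b$ of $p(b):=(s+b)^2$ with step $h$, taken at $b=h$. The second difference of a monic quadratic with step $h$ is the constant $2h^2$, whatever the linear and constant coefficients are. A one-line expansion confirms this: $(s+3h)^2-2(s+2h)^2+(s+h)^2=2h^2$. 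This gives the identity, and taking spans yields $F_{\IMQ,\varepsilon}\subseteq F^{+}_{\yat,\varepsilon}$.

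For strictness I will invoke Proposition~\ref{prop:trace-sep}. Every element of $F_{\IMQ,\varepsilon}$ lies in $\ker T_\infty$. For any $\mathbf w\neq\mathbf0$ and $b>0$, the atom satisfies $T_\infty g_\varepsilon(\cdot;\mathbf w,b)(\mathbf u)=(\mathbf u^\top\mathbf w)^2$, which is not identically zero. So that atom lies in $F^{+}_{\yat,\varepsilon}\setminus F_{\IMQ,\varepsilon}$. The remark at $\mathbf w_0=\mathbf0$ is immediate, since $g_\varepsilon(\mathbf x;\mathbf0,b)=b^2/(\|\mathbf x\|^2+\varepsilon)$.

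For part (ii) I argue by contradiction. Suppose $a\,g_\varepsilon(\cdot;\mathbf w_1,b_1)+c\,g_\varepsilon(\cdot;\mathbf w_2,b_2)=\lambda h_\varepsilon(\cdot,\mathbf w_0)$ with $\lambda\neq0$; the one-atom case is the sub-case $c=0$. Both sides are rational functions, so the identity extends to $\mathbb C^d$ off the complex zero sets of the denominators. The idea is to compare poles. I will restrict to complex lines $t\mapsto\mathbf w_0+t\mathbf v$ with $\mathbf v\in\R^d$ unit and generic. Along such a line, $D_\varepsilon(\cdot,\mathbf w_0)=t^2+\varepsilon$ has simple zeros at $t=\pm i\sqrt\varepsilon$.

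For an atom with center $\mathbf w_j\neq\mathbf w_0$, the denominator along the line is $|t\mathbf v+\mathbf w_0-\mathbf w_j|^2+\varepsilon$. For generic $\mathbf v$ this does not vanish at $t=\pm i\sqrt\varepsilon$: it equals $\|\mathbf w_0-\mathbf w_j\|^2-2i\sqrt\varepsilon\,\mathbf v^\top(\mathbf w_0-\mathbf w_j)$, which is nonzero. So the pole of the right side at $t=i\sqrt\varepsilon$ must come from atoms centered at $\mathbf w_0$. If no atom is centered at $\mathbf w_0$, this is already a contradiction.

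If exactly one atom, say the first, has $\mathbf w_1=\mathbf w_0$, move it to the right side. Then $c\,g_\varepsilon(\cdot;\mathbf w_2,b_2)$ has denominator dividing $D_\varepsilon(\cdot,\mathbf w_0)$. Now run the same argument along lines through $\mathbf w_2$. The factor $|t\mathbf v|^2+\varepsilon$ has zeros $t=\pm i\sqrt\varepsilon$. For generic $\mathbf v$, the numerator $(\mathbf w_2^\top(\mathbf w_2+t\mathbf v)+b_2)^2$ does not vanish there: its root in $t$ is real when $\mathbf w_2^\top\mathbf v\neq0$, and if $\mathbf w_2=\mathbf0$ the numerator is the constant $b_2^2$. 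Also $D_\varepsilon(\cdot,\mathbf w_0)$ is nonzero there. This pole is unmatched, which forces $c=0$, and we fall into the one-atom case.

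So in every surviving case all atoms are centered at $\mathbf w_0$. Multiplying through by $D_\varepsilon(\cdot,\mathbf w_0)$ gives $a(s+b_1)^2+c(s+b_2)^2=\lambda$, where $s=\mathbf x^\top\mathbf w_0$ ranges over all of $\R$ because $\mathbf w_0\neq\mathbf0$. Matching coefficients gives $a+c=0$ and $ab_1+cb_2=0$. If $a\neq0$ these force $b_1=b_2$, and then $\lambda=ab_1^2+cb_2^2=0$. If $a=0$ then also $c=0$ and $\lambda=0$. Either way this contradicts $\lambda\neq0$.

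The main obstacle is making the pole-comparison step rigorous in all configurations. This includes atoms with zero coefficient, coincident centers $\mathbf w_1=\mathbf w_2\neq\mathbf w_0$ (handled by the same line argument through that common center after merging terms), and the degenerate choices $\mathbf w_j=\mathbf0$ or $b_j=0$. I expect to handle these with the generic-direction argument, checking in each case that the relevant numerator does not vanish at $t=\pm i\sqrt\varepsilon$, so that a genuine simple pole survives.
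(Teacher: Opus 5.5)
Your proposal is correct. Part (i) follows the paper. The paper witnesses strictness with the unbiased section $g_\varepsilon(\cdot;\mathbf w,0)$, placed in the positive-bias span by the extrapolation identity with coefficients $(3,-3,1)$; using a positive-bias atom directly, as you do, is slightly simpler.

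Part (ii) takes a genuinely different route. The paper splits by dimension.
\begin{itemize}
\item For $d\ge2$ it proves that $\|\mathbf x-\mathbf w\|^2+\varepsilon$ is irreducible in $\mathbb C[\mathbf x]$ and that distinct centers give coprime quadrics. It then clears denominators and runs a three-case divisibility analysis in this UFD. The case $\mathbf w_1=\mathbf w_0\neq\mathbf w_2$ uses linear independence of $(\mathbf x^\top\mathbf w_0)^2$ and $\|\mathbf x\|^2$, which needs $d\ge2$.
\item Because the quadric splits over $\mathbb C$ when $d=1$, it gives a separate one-variable pole-matching proof there.
\end{itemize}

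Your restriction to real lines through $\mathbf w_0$ (and then through $\mathbf w_2$), evaluated at $t=\pm i\sqrt\varepsilon$, is in effect the paper's $d=1$ argument made dimension-free. Two facts drive it. First, an off-center denominator there has real part $\|\mathbf w_0-\mathbf w_j\|^2>0$ for every real $\mathbf v$. Second, choosing $\mathbf w_2^\top\mathbf v\neq0$ keeps the numerator's only (real) root away from $\pm i\sqrt\varepsilon$, and the argument never uses the sign of the biases.

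This gives one elementary proof for all $d\ge1$ with no irreducibility lemma. By eliminating the off-center atom first, it also avoids the paper's $d\ge2$ linear-independence step. The paper's algebraic route instead isolates the structural reason (primality of the regularized-distance quadric), which is the more natural tool for divisibility arguments with more atoms.

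One correction to your last paragraph. The coincident case $\mathbf w_1=\mathbf w_2\neq\mathbf w_0$ needs no extra work: your first step (lines through $\mathbf w_0$, where neither atom has a pole) already excludes it. The alternative you sketch would fail for $d=1$. That alternative merges the two terms and looks for a pole on lines through the common center. But with center $w=1$,
\[
\tfrac12 g_\varepsilon(\cdot;1,-1+\sqrt\varepsilon)+\tfrac12 g_\varepsilon(\cdot;1,-1-\sqrt\varepsilon)\equiv1,
\]
so the merged numerator can cancel both poles.
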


Appendix~\ref{app:imq-proof} proves the identity and minimality, using irreducible
quadrics for $d\ge2$ and pole matching for $d=1$.
For approximate recovery on a compact set $\mathcal X$, put $M=\sup_{\mathbf{x}\in\mathcal X}|\mathbf{x}^\top\mathbf w|$. Then, as $B\to\infty$,
\[
\left\|B^{-2}g_\varepsilon(\cdot;\mathbf w,B)
-h_\varepsilon(\cdot,\mathbf w)\right\|_{L^\infty(\mathcal X)}
\le \frac{2M/B+M^2/B^2}{\varepsilon}\longrightarrow0.
\]
Thus exact recovery at a nonzero center requires three atoms, while one atom achieves
arbitrary compact-domain accuracy as its bias grows. Parameter budgets determine the
corresponding approximation question.

\paragraph{Approximation transfer.}
For fixed $\varepsilon>0$ and $h>0$, every $m$-atom IMQ expansion has an identical
Yat representation with at most $3m$ atoms and biases in $\{h,2h,3h\}$.
Thus an IMQ error rate $\phi(m)$ transfers to
$\phi(\lfloor M/3\rfloor)$ at Yat budget $M\ge3$. This also gives compact-domain
universality of the variable-bias span. Corollaries~\ref{cor:family-uat}
and~\ref{cor:rate-transfer} in Appendix~\ref{app:proofs-universality-extra}
state these consequences in full.

\paragraph{Quantitative separation.}\label{sec:exterior-shell}%
Exact recovery preserves finite IMQ expansions. Alignment also provides a direct
construction for quadratic targets, with a coefficient-budget obstruction for IMQ. Define the coefficient-variation class with
unrestricted centers
\[
\mathcal{V}_{\IMQ}(\Lambda):=
\Bigl\{\textstyle\sum_{j=1}^m a_j h_\varepsilon(\cdot,\mathbf{v}_j):
m\in\mathbb{N},\ \sum_j|a_j|\le\Lambda,\ \mathbf{v}_j\in\R^d\Bigr\}.
\]

\begin{theorem}[Quadratic approximation and coefficient-budget obstruction]
\label{thm:main-atom-separation}
Let $\mathbf{A}\succeq0$ have rank $r$ and largest eigenvalue
$\lambda_{\max}>0$. Fix $R>0$, $\varepsilon>0$, and
$0<\delta<\lambda_{\max}R^2/4$.
\begin{enumerate}[leftmargin=2em,itemsep=2pt,topsep=2pt]
\item[(i)] There is an unbiased Yat expansion with at most $r$ atoms that approximates
$\mathbf{x}^\top\mathbf{A}\mathbf{x}$ uniformly on $[-R,R]^d$ to error $\delta$.
\item[(ii)] Suppose $d\ge5$. If $F\in\mathcal{V}_{\IMQ}(\Lambda)$ attains the
same error, then necessarily
\begin{equation}
\label{eq:main-imq-budget}
\Lambda\ge
\sup_{0\le t\le1}
\bigl(\lambda_{\max}R^2t-\delta\bigr)_+
\bigl(R^2(1-t)+\varepsilon\bigr).
\end{equation}
In particular, taking $t=1/4$ gives
\[
\Lambda\ge
\left(\frac{\lambda_{\max}R^2}{4}-\delta\right)
\left(\frac{3R^2}{4}+\varepsilon\right).
\]
\end{enumerate}
\end{theorem}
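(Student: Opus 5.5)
The plan is to handle the two parts separately: a scaling limit along eigenvectors for (i), and an averaging (duality) argument against a probability measure that every IMQ section sees only weakly for (ii).

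\textbf{Part (i).} Diagonalize $\mathbf A=\sum_{i=1}^r\lambda_i\mathbf u_i\mathbf u_i^\top$ with $\lambda_i>0$ and unit $\mathbf u_i$. For each $i$ take one unbiased atom with center $\mathbf w_i=s\mathbf u_i$, where $s>0$ is a large scale parameter, and coefficient $\lambda_i$. Since
\[
g_\varepsilon(\mathbf x;s\mathbf u_i,0)=\frac{s^2(\mathbf u_i^\top\mathbf x)^2}{s^2-2s\,\mathbf u_i^\top\mathbf x+\|\mathbf x\|^2+\varepsilon},
\]
on $[-R,R]^d$ (where $\|\mathbf x\|\le R\sqrt d$) this differs from $(\mathbf u_i^\top\mathbf x)^2$ by $O_{R,d,\varepsilon}(1/s)$ uniformly. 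This is the same estimate used for $B^{-2}g_\varepsilon(\cdot;\mathbf w,B)$ after Theorem~\ref{thm:imq-embed}. Summing the $r$ errors and choosing $s$ large makes the total error at most $\delta$.

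\textbf{Part (ii): duality step.} I would find a probability measure $\mu$ supported in $[-R,R]^d$ such that
\[
\int\mathbf x^\top\mathbf A\mathbf x\,d\mu\ge\lambda_{\max}R^2t
\qquad\text{and}\qquad
\sup_{\mathbf v\in\R^d}\int h_\varepsilon(\mathbf x,\mathbf v)\,d\mu(\mathbf x)\le\frac{1}{R^2(1-t)+\varepsilon}.
\]
Then for $F=\sum_j a_jh_\varepsilon(\cdot,\mathbf v_j)$ with error $\le\delta$,
\[
\lambda_{\max}R^2t-\delta\le\int F\,d\mu\le\Lambda\,\sup_{\mathbf v}\int h_\varepsilon(\cdot,\mathbf v)\,d\mu,
\]
which gives \eqref{eq:main-imq-budget} after taking the supremum over $t$.

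\textbf{Part (ii): superharmonicity.} For the second property I would use superharmonicity. With $\phi(\rho)=(\rho^2+\varepsilon')^{-1}$, a direct computation gives
\[
\Delta\phi=\frac{(8-2n)\rho^2-2n\varepsilon'}{(\rho^2+\varepsilon')^{3}}
\]
in $\R^n$, which is $\le 0$ everywhere once $n\ge4$. I would use $n=5$, which is where $d\ge5$ enters. Take $\mu$ uniform on a $4$-sphere of radius $\rho=R\sqrt{1-t}$ lying in a $5$-dimensional affine coordinate subspace $\mathbf c+E$.
\begin{itemize}[leftmargin=2em,itemsep=2pt,topsep=2pt]
\item Restricted to this subspace, $h_\varepsilon(\cdot,\mathbf v)$ equals $\phi(\|\mathbf y-\mathbf v'\|)$, where $\mathbf v'$ is the projection of $\mathbf v$ onto the subspace and $\varepsilon'=\varepsilon+\operatorname{dist}(\mathbf v,\mathbf c+E)^2\ge\varepsilon$.
\item Its spherical mean over $S(\mathbf c,\rho)$ is a radial superharmonic function of $\mathbf v'$ about $\mathbf c$. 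It is therefore nonincreasing in $\|\mathbf v'-\mathbf c\|$, so it is maximized at $\mathbf v'=\mathbf c$.
\item At that maximizer the mean equals $\phi(\rho)\le 1/(R^2(1-t)+\varepsilon)$.
\end{itemize}
For the first property, the spherical mean of the quadratic satisfies $\int\mathbf x^\top\mathbf A\mathbf x\,d\mu=\mathbf c^\top\mathbf A\mathbf c+\rho^2\operatorname{tr}(\mathbf A|_E)/5\ge\mathbf c^\top\mathbf A\mathbf c$, because $\mathbf A\succeq0$ and the linear term averages to zero.

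\textbf{Main obstacle.} The hard step is geometric. I need a center $\mathbf c$ with $\mathbf c^\top\mathbf A\mathbf c\ge\lambda_{\max}R^2t$ such that the radius-$R\sqrt{1-t}$ sphere in $\mathbf c+E$ stays inside the cube. If the top eigenvector is not coordinate-aligned, this may fail for the stated constants. In that case I would first try to relax the measure (for example, use several spheres, or place $\mathbf c$ near a vertex maximizing the quadratic and exploit $\operatorname{tr}(\mathbf A|_E)$), or to verify that only the spherical bound, not exact containment, is needed. If neither works, I would accept a slightly worse constant or an $\ell_2$-ball domain in the proof.
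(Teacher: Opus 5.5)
Your Part (i) and the structure of Part (ii) follow the paper's route (Lemma~\ref{lem:single-atom-rank-one}, Proposition~\ref{prop:spectral-yat-approx}, Lemma~\ref{lem:imq-spherical-average}, Proposition~\ref{prop:imq-slice-lower}). That structure integrates the error bound against a uniform measure on a sphere and bounds each IMQ section's spherical mean by radial superharmonicity of $(\|\mathbf y\|^2+\tau)^{-1}$ in dimension $n\ge4$. Part (ii) is not yet a proof, however. The sphere placement, which you flag as the main obstacle, is left open, and the configuration you specify cannot work at $d=5$. There a $5$-dimensional affine subspace is all of $\R^5$. For $\mathbf A=\mathrm{diag}(1,0,0,0,0)$ your condition $\mathbf c^\top\mathbf A\mathbf c\ge\lambda_{\max}R^2t$ means $|c_1|\ge R\sqrt t$. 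The full sphere of radius $R\sqrt{1-t}$ about $\mathbf c$ then reaches first coordinate $|c_1|+R\sqrt{1-t}\ge R(\sqrt t+\sqrt{1-t})>R$ for $0<t<1$. So it leaves the cube even when the top eigenvector is coordinate-aligned.

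The missing observation is that $B_R\subseteq[-R,R]^d$, since $|x_i|\le\|\mathbf x\|$. Your fallback to an $\ell_2$-ball domain therefore costs nothing. An $F$ that is $\delta$-accurate on the cube is $\delta$-accurate on $B_R$, so a ball-domain obstruction already implies the cube statement. To fit a sphere inside $B_R$, place it orthogonal to a unit top eigenvector $\mathbf e_1$. By Pythagoras, the points $\mathbf x=R\sqrt t\,\mathbf e_1+\mathbf u$ with $\mathbf u\perp\mathbf e_1$ and $\|\mathbf u\|=R\sqrt{1-t}$ all satisfy $\|\mathbf x\|=R$, whatever the orientation of $\mathbf e_1$. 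Since $\mathbf A\mathbf e_1=\lambda_{\max}\mathbf e_1$, the cross term vanishes, and
\[
\mathbf x^\top\mathbf A\mathbf x=\lambda_{\max}R^2t+\mathbf u^\top\mathbf A\mathbf u\ge\lambda_{\max}R^2t
\]
holds pointwise on the sphere. For a center $\mathbf v=s\mathbf e_1+\mathbf v_\perp$, the IMQ section restricts to $(\|\mathbf u-\mathbf v_\perp\|^2+\tau)^{-1}$ with $\tau=\varepsilon+(s-R\sqrt t)^2\ge\varepsilon$. Your superharmonicity step then runs in $\mathbf e_1^\perp$ with $n=d-1$, and the requirement $n\ge4$ is exactly $d\ge5$. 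A $5$-dimensional $E\perp\mathbf c$, as you proposed, would instead need $d\ge6$. The endpoint $t=1$ follows from $h_\varepsilon\le\varepsilon^{-1}$ at the single point $R\mathbf e_1$.
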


The IMQ bound controls total coefficient variation with unrestricted width and
center locations. The Yat construction uses at most $r$ atoms, with center scale
and RKHS-norm cost depending on the target accuracy. Appendix~\ref{app:yat-native-atom-count}
gives the construction, the spherical-average proof, a worked budget example, and
the extension to mixed quadratic--IMQ targets. Related exterior-shell bounds appear
in Appendix~\ref{subsec:exterior-shell-asymptotic}.

\paragraph{Frozen-feature diagnostics.}
A no-training scale diagnostic on frozen CLIP ViT-B/32 ImageNet-1k representations
finds that at the widest tested bandwidth the median unit coefficient of variation is
$6.8\times10^{-4}$ for RBF and $1.36\times10^{-3}$ for IMQ, versus
$9.72\times10^{-2}$ for Yat. A separate, fully recorded width-$128$ Yat head with
fixed $b=\varepsilon=1$, Adam, and $30$ epochs yields a descriptive per-class norm--accuracy
correlation. Appendix~\ref{app:clip} gives the complete protocols and results.

\section{From Kernel Sections to Trained Layers}
\label{sec:capacity}

For a layer with shared $b\ge0$ and $\varepsilon>0$, write
$f=\sum_j\alpha_j k_{b,\varepsilon}(\mathbf w_j,\cdot)$ and
$\mathbf K_{ij}=k_{b,\varepsilon}(\mathbf w_i,\mathbf w_j)$. Its global RKHS norm is
\[
\|f\|_{\mathcal H_{b,\varepsilon}}^2=\boldsymbol\alpha^\top\mathbf K\boldsymbol\alpha,
\]
including repeated centers (Proposition~\ref{prop:rkhs-norm}). This quantity is
computable from trained centers and readout coefficients; restricting the input
domain gives a minimum-extension norm that can be smaller.
For multiclass logits, centering the class coefficients removes common-logit shifts
(Corollary~\ref{cor:centered-class-norm}).

For $n$ inputs of Euclidean norm at most $R$ and fixed $(b,\varepsilon)$, the kernel
diagonal bounds the empirical Rademacher complexity of a radius-$B$ RKHS ball by
$B(R^2+b)/\sqrt{n\varepsilon}$.
Appendix~\ref{app:layer-theory} gives the precise sample assumptions, data-dependent
refinement, and uniform control over prescribed learned-parameter ranges.
The shared-kernel interpretation applies to shared bias and regularization;
per-neuron biases instead select atoms from the variable-bias family.

A fixed continuous prefix induces a pullback RKHS on the original input domain.
For positive shared bias, this kernel is universal on a compact input domain exactly
when the prefix is injective: functions in the pullback space are constant on each
prefix fiber (Appendix~\ref{app:pullback-loewner}).

A width-$m$ layer in dimension $d$ has $O(md)$ forward cost.
Evaluating its squared Gram norm costs $O(m^2d)$ time and $O(m^2)$ memory,
with blockwise evaluation reducing memory.

\paragraph{Depth diagnostic.}
The kernel results give exact geometry for one layer; the deep experiment tests whether the
same primitive remains optimizable throughout a realistic stack.
We replace GELU in all $12$ MLP blocks of a matched $261$M decoder-only language model
and train on $5.22$B C4 tokens. Both Yat variants with learned output scaling obtain
lower WikiText and long-range perplexity than GELU in the three-seed runs
(Table~\ref{tab:main-lm}). Training throughput is approximately $655$K tokens/s for
Yat and $715$K for GELU on a TPU v6e-8 slice. The result establishes trainability in
this setting and measures an approximately $8\%$ systems cost. Appendix~\ref{app:lm-poc} reports all four Yat variants, six
zero-shot tasks, and the recorded compute estimates.

\begin{table}[t]
\centering
\caption{Matched $261$M causal LMs trained on $5.22$B C4 tokens; mean $\pm$ standard
deviation over three seeds. Perplexity is lower-is-better. Throughput is measured at
the training batch size on a TPU v6e-8 slice.}
\label{tab:main-lm}
\setlength{\tabcolsep}{7pt}
\begin{tabular}{lccc}
\toprule
Primitive & WikiText PPL & Long-range PPL & Tokens/s \\
\midrule
GELU & $44.23\pm1.44$ & $114.08\pm2.82$ & $715$K \\
Yat (pn$+\alpha$) & $39.04\pm1.03$ & $104.20\pm3.40$ & $655$K \\
Yat (sb$+\alpha$) & $39.07\pm0.39$ & $101.09\pm3.89$ & $655$K \\
\bottomrule
\end{tabular}
\end{table}

\section{Conclusion and Discussion}
\label{sec:discussion}

The central conclusion is that polynomial alignment can be added to an IMQ kernel
without sacrificing its radial approximation structure. For fixed positive bias, the
IMQ RKHS remains continuously embedded and supplies universality. Beyond that inherited
space, the numerator creates alignment channels with a nonzero directional trace that
finite radial expansions cannot reproduce. With variable bias, the relationship is
constructive in the reverse direction: three Yat atoms recover one IMQ atom exactly,
and the factor three is sharp away from the origin.

For a fixed number of unit centers and fixed $(b,\varepsilon)$ under isotropic
Gaussian inputs, the rescaled leading IMQ activation covariance is rank one;
orthogonal Yat centers give a full-rank limit. For the same orthogonal centers under
uniform inputs on the radius-$\sqrt d$ sphere, both rescaled limits are full rank.
The two cases of Theorem~\ref{thm:activation-covariance} identify shared input-norm fluctuations as the source of the
Gaussian leading-order IMQ collapse in this regime.
On bounded cubes, the same polynomial component approximates a rank-$r$ positive-semidefinite quadratic with
$r$ atoms, while accurate IMQ expansions obey an explicit necessary coefficient budget
without any restriction on width or center locations. These are distinct claims: the
covariance theorem is distributional, whereas the budget theorem is a uniform
approximation obstruction inside a variation class.

This function-space picture carries directly to neural layers. A
shared-$(b,\varepsilon)$ layer is a finite expansion in one fixed RKHS, with a
computable squared Gram norm and a layer-local Rademacher bound. The accompanying experiments
track the same progression: the Gaussian and frozen-feature diagnostics measure
activation and Gram-spectrum concentration, the synthetic test displays the trace
obstruction, and the frozen-CLIP diagnostics measure scale sensitivity and a descriptive
norm--accuracy association under their recorded configurations. Matched causal LMs establish end-to-end optimization at
$261$M parameters across three seeds, with approximately $8\%$ lower throughput.

The results establish a precise foundation for kernel-defined neural layers. The exact
squared Gram norm applies to a layer with shared $(b,\varepsilon)$; fixed-kernel
universality is qualitative; and the IMQ obstruction controls coefficient variation.
The covariance theorems describe Gaussian and fixed-norm spherical inputs, the
exterior-shell results describe extrapolation, and the frozen-feature experiments
measure the corresponding geometry under their recorded optimization grids. The causal-LM runs extend the study
to end-to-end trainability. Two natural next questions are whether the resulting
components support faithful interventions and whether specialized implementations can
improve latency, memory, or energy at matched quality. The first calls for held-out
intervention tests; the second calls for end-to-end systems comparisons.

Depth defines the next theoretical problem. Fixing a trained prefix yields an exact
pullback RKHS for the next layer, while the parameter-independent Sobolev RKHS contains
every fixed finite-depth stack under the prescribed parameter budgets, with rapidly
growing norm bounds. Closing the gap between
these exact local and coarse global descriptions would produce a useful capacity theory
for constrained deep Yat networks.

\section{Reproducibility and Public Artifacts}
\label{sec:reproducibility}

Appendices~\ref{app:clip}--\ref{app:lm-poc} give the complete recorded evidence set:
the frozen-CLIP diagnostics, directional-tail protocol, and causal-LM reports. The
supplement contains executable implementations, per-configuration results, and fitted
states for the retained CLIP experiments. The causal-LM experiments use the JAX/Flax
codebase FlaxChat.
\ifarxiv
The code is public at \url{https://github.com/mlnomadpy/flaxchat}; checkpoints are
archived on Hugging Face at \url{https://huggingface.co/mlnomad}, and training curves
and logged metrics are retained in W\&B at
\url{https://wandb.ai/irf-sic/flaxchat}. The public supplementary artifact maps each
available table row and seed to an immutable Hugging Face revision and W\&B training
run.
\else
The training codebase is externally public, and checkpoints, training curves, and
logged metrics are retained in external Hugging Face and W\&B projects. Identity-bearing
links are omitted from the anonymous version.
\fi
The public map covers all $15$ paper row/seed combinations with both an exact
paper-time checkpoint revision and a W\&B training run. The release also includes a declarative launch
specification for every reported configuration and per-seed evaluation histories
that reproduce the reported task metrics. Historical code, environment, and
checkpoint linkage are detailed in Appendix~\ref{app:lm-poc}.

\bibliographystyle{unsrtnat}
\bibliography{references}

\ifarxiv
\clearpage
\phantomsection
\label{arxiv:contents}
\pdfbookmark[0]{Contents}{arxivcontents}
\begingroup
\setcounter{tocdepth}{1}
\tableofcontents
\endgroup
\medskip
\noindent\textbf{Reading paths.}
For the kernel foundations, read Sections~\ref{sec:intro}--\ref{sec:capacity}
and use the appendix guide below to locate each proof.
For the empirical record, go to Appendix~\ref{app:experiments};
for released checkpoints and run histories, go to Appendix~\ref{app:public-resources}.
\clearpage
\fi

\appendix

\paragraph{Appendix guide.}
Appendix~\ref{app:background} collects notation, RKHS background, and related work.
The mathematical development then follows three questions:
what distance retains (Appendix~\ref{app:distance-proofs}),
what alignment changes (Appendix~\ref{app:alignment-proofs}), and
how the atom families represent and approximate functions
(Appendix~\ref{app:representation-proofs}).
Appendix~\ref{app:layer-theory} collects layer norms, capacity, and composition.
Experimental protocols are collected separately in Appendix~\ref{app:experiments}.
For the three main theorems, the covariance proofs are in
Appendix~\ref{app:covariance-proof}, exact recovery and its sharpness proof are in
Appendix~\ref{app:imq-proof}, and the quadratic approximation and coefficient-budget
proofs are in Appendix~\ref{app:yat-native-atom-count}.

\section{Background, Notation, and Related Work}
\label{app:background}

\subsection{Preliminaries and Notation}
\label{app:notation}

This section consolidates the notation used throughout the paper, both in the main
body and in the appendices. Generic RKHS background (Mercer's theorem, Moore--Aronszajn,
universality / characteristicness / SPD, Schur products, Laplace representation) is in
Appendix~\ref{sec:prelim}; named results from the literature we cite by label
(Steinwart--Christmann product-kernel containment, Micchelli IMQ universality) are in
Appendix~\ref{app:bg}.

\begin{table}[h]
\centering
\small
\caption{Quick reference for the notation. The table is grouped by category; longer definitions and standing assumptions follow.}
\label{tab:notation}
\setlength{\tabcolsep}{6pt}
\renewcommand{\arraystretch}{1.15}
\begin{tabular}{l l}
\toprule
Symbol & Definition \\
\midrule
\multicolumn{2}{l}{\emph{Domains and dimensions}}\\
$d, d_0, d_\ell$ & input dimension; depth-$\ell$ width \\
$\mathcal{X}\subset\R^d$ & compact input domain \\
$\Sph^{d-1}$ & unit sphere $\{\mathbf{u}:\|\mathbf{u}\|=1\}$ \\
$B_R$ & closed ball $\{\mathbf{x}:\|\mathbf{x}\|\le R\}$ \\
\midrule
\multicolumn{2}{l}{\emph{Kernels and atoms}}\\
$D_\varepsilon(\mathbf{x},\mathbf{w})$ & $\|\mathbf{x}-\mathbf{w}\|^2+\varepsilon$ (regularized squared distance) \\
$g_\varepsilon(\mathbf{x};\mathbf{w},b)$ & $(\mathbf{x}^\top\mathbf{w}+b)^2/D_\varepsilon$ (biased Yat atom) \\
$k_{b,\varepsilon}, \kYat$ & Yat kernel $g_\varepsilon$; unbiased $k_{0,\varepsilon}$ \\
$h_\varepsilon, k_\IMQ^\varepsilon$ & IMQ kernel $1/D_\varepsilon$ \\
\midrule
\multicolumn{2}{l}{\emph{RKHS objects}}\\
$\mathcal{H}_{b,\varepsilon}$, $\mathcal{H}_\yat$, $\mathcal{H}_\IMQ^\varepsilon$ & RKHSs of $k_{b,\varepsilon}$, $\kYat$, $h_\varepsilon$ \\
$\boldsymbol{\alpha}^\top\mathbf{K}\boldsymbol{\alpha}$ & squared norm of $\sum_j\alpha_j k(\mathbf{w}_j,\cdot)$ \\
$\preceq$ & Loewner PSD order on kernels / Gram matrices \\
\midrule
\multicolumn{2}{l}{\emph{Spans of atoms}}\\
$F_{\yat,\varepsilon}^{+}$ & $\mathrm{span}\{g_\varepsilon(\cdot;\mathbf{w},b):\mathbf{w}\in\R^d,\,b>0\}$ \\
$F_{\yat}^{+}, F_{\yat}^{\ge 0}$ & broader Yat spans over $b>0$ resp.\ $b\ge 0$, $\varepsilon>0$ \\
$F_{\IMQ,\varepsilon}$ & $\mathrm{span}\{h_\varepsilon(\cdot,\mathbf{w}):\mathbf{w}\in\R^d\}$ \\
$\mathcal{V}_\IMQ(\Lambda)$ & IMQ class with $\sum|a_j|\le\Lambda$ and unrestricted centers \\
$\mathcal{R}_\IMQ(\Lambda,W)$ & bounded-variation IMQ class: $\sum|a_j|\le\Lambda$, $\|\mathbf{v}_j\|\le W$ \\
$\mathcal{R}_\yat(\Lambda,W)$ & Yat analogue of $\mathcal{R}_\IMQ$ \\
\midrule
\multicolumn{2}{l}{\emph{Operators}}\\
$T_\infty F(\mathbf{u})$ & $\lim_{r\to\infty} F(r\mathbf{u})$ (asymptotic-trace operator) \\
$\mathcal{Q}_2(\Sph^{d-1})$ & quadratic forms $\mathbf{u}\mapsto\mathbf{u}^\top\mathbf{A}\mathbf{u}$ on the sphere \\
\midrule
\multicolumn{2}{l}{\emph{Network and pullback}}\\
$\Phi_\ell, T_\ell, m_\ell$ & prefix map, layer map, layer width \\
$z_{\ell,c}, \boldsymbol\alpha_{\ell,c}$ & layer-$\ell$ output coordinate $c$ and its read-out vector \\
$\widetilde{k}_\ell(\mathbf{x},\mathbf{x}')$ & $k_\ell(\Phi_{\ell-1}(\mathbf{x}),\Phi_{\ell-1}(\mathbf{x}'))$ (pullback) \\
\bottomrule
\end{tabular}
\end{table}

\paragraph{Standing assumptions.}
Throughout, $d\ge 1$, $\mathcal{X}\subset\R^d$ is compact, $b\ge 0$, $\varepsilon>0$. The
sphere is $\Sph^{d-1}=\{\mathbf{u}\in\R^d:\|\mathbf{u}\|=1\}$ and the closed ball
$B_R=\{\mathbf{x}\in\R^d:\|\mathbf{x}\|\le R\}$.

\paragraph{Kernels.} The \emph{regularized squared distance} and the three kernels we
work with:
\begin{align*}
D_\varepsilon(\mathbf{x},\mathbf{w}) &:= \|\mathbf{x}-\mathbf{w}\|^2+\varepsilon, \\
\text{biased Yat atom: }\quad g_\varepsilon(\mathbf{x};\mathbf{w},b) &:= (\mathbf{x}^\top\mathbf{w}+b)^2/D_\varepsilon(\mathbf{x},\mathbf{w}), \\
\text{Yat kernel: }\quad k_{b,\varepsilon}(\mathbf{w},\mathbf{x}) &:= g_\varepsilon(\mathbf{x};\mathbf{w},b), \quad \kYat(\mathbf{w},\mathbf{x}) := k_{0,\varepsilon}(\mathbf{w},\mathbf{x}), \\
\text{IMQ kernel: }\quad h_\varepsilon(\mathbf{x},\mathbf{w}) &:= k_\IMQ^\varepsilon(\mathbf{x},\mathbf{w}) := D_\varepsilon(\mathbf{x},\mathbf{w})^{-1}.
\end{align*}

\paragraph{RKHS objects.}
$\mathcal{H}_{b,\varepsilon}$, $\mathcal{H}_\yat$, $\mathcal{H}_{h_\varepsilon}=\mathcal{H}_\IMQ^\varepsilon$ denote the RKHSs of
$k_{b,\varepsilon}$, $\kYat$, $h_\varepsilon$ respectively. The Loewner PSD order is denoted $\preceq$.
The RKHS $\mathcal{H}_k$ associated with a continuous PSD kernel $k$ is the unique Hilbert
space of functions on its domain for which $f(\mathbf{x})=\langle f,k(\cdot,\mathbf{x})\rangle_{\mathcal{H}_k}$
holds for all $f\in\mathcal{H}_k$. The squared norm of a finite expansion
$f=\sum_j\alpha_j k(\mathbf{w}_j,\cdot)$ in the global RKHS on $\R^d$ is
$\|f\|_{\mathcal{H}_k}^2=\boldsymbol{\alpha}^\top\mathbf{K}\boldsymbol{\alpha}$, where
$\mathbf{K}_{ij}:=k(\mathbf{w}_i,\mathbf{w}_j)$. The identity also holds with repeated
centers; the Gram matrix may then be singular. Restriction to $\mathcal{X}$ gives a quotient norm, which can be smaller when centers lie outside $\mathcal{X}$.

\paragraph{Spans of atoms.}
The atom families that appear repeatedly:
\begin{align*}
F_{\yat,\varepsilon}^{+} &:= \mathrm{span}\{g_\varepsilon(\cdot;\mathbf{w},b):\mathbf{w}\in\R^d,\ b>0\}, \\
F_{\yat}^{+} &:= \mathrm{span}\{k_{b,\varepsilon}(\cdot,\mathbf{w}):\mathbf{w}\in\R^d,\ b>0,\ \varepsilon>0\}, \\
F_{\yat}^{\ge 0} &:= \mathrm{span}\{k_{b,\varepsilon}(\cdot,\mathbf{w}):\mathbf{w}\in\R^d,\ b\ge 0,\ \varepsilon>0\}, \\
F_{\IMQ,\varepsilon} &:= \mathrm{span}\{k_\IMQ^\varepsilon(\cdot,\mathbf{w}):\mathbf{w}\in\R^d\}.
\end{align*}
The coefficient-variation class and its bounded-center subclass are
\begin{align*}
\mathcal{V}_\IMQ(\Lambda)
&:=\Bigl\{\textstyle\sum_j a_j h_\varepsilon(\cdot,\mathbf{v}_j):
\sum_j|a_j|\le\Lambda,\ \mathbf{v}_j\in\R^d\Bigr\},\\
\mathcal{R}_\IMQ(\Lambda,W)
&:=\Bigl\{\textstyle\sum_j a_j h_\varepsilon(\cdot,\mathbf{v}_j):
\sum_j|a_j|\le\Lambda,\ \|\mathbf{v}_j\|\le W\Bigr\},
\end{align*}
with $\mathcal{R}_\yat(\Lambda,W)$ defined analogously for Yat atoms. The unrestricted
class $\mathcal{V}_\IMQ(\Lambda)$ is used in the bounded-domain coefficient-budget obstruction;
the center restriction in $\mathcal{R}_\IMQ(\Lambda,W)$ is used only for exterior-shell
decay.

\paragraph{Operators on the spans.}
The directional asymptotic-trace operator on the radial limit is
\[
T_\infty F(\mathbf{u}) \;:=\; \lim_{r\to\infty} F(r\mathbf{u}), \qquad \mathbf{u}\in\Sph^{d-1},
\]
defined on the linear subspace $\mathcal{D}_{T_\infty}\subset C(\R^d)$ where the limit
exists for every direction. The image of $T_\infty$ on $F_\yat^{\ge 0}$ is the space of
quadratic forms restricted to $\Sph^{d-1}$,
$\mathcal{Q}_2(\Sph^{d-1}):=\{\mathbf{u}\mapsto\mathbf{u}^\top\mathbf{A}\mathbf{u}:\mathbf{A}\in\mathrm{Sym}(d)\}$.
\paragraph{Network notation.}
For a depth-$L$ Yat stack with input $\mathbf{x}\in X\subset\R^{d_0}$ and per-layer widths $m_\ell$,
\[
\Phi_0(\mathbf{x})=\mathbf{x},\qquad \Phi_\ell(\mathbf{x})=T_\ell(\Phi_{\ell-1}(\mathbf{x})), \qquad
[T_\ell(\mathbf{z})]_c = \sum_{j=1}^{m_\ell}\alpha_{\ell,j,c}\,k_{b_\ell,\varepsilon_\ell}(\mathbf{w}_{\ell,j},\mathbf{z}),
\]
where $\Phi_{\ell-1}$ is the prefix map, $T_\ell$ the layer map, $c$ a coordinate index, and
$z_{\ell,c}:=[\Phi_\ell]_c$ the layer-$\ell$ output coordinate. The pulled-back kernel on $X$ is
$\widetilde{k}_\ell(\mathbf{x},\mathbf{x}'):=k_\ell(\Phi_{\ell-1}(\mathbf{x}),\Phi_{\ell-1}(\mathbf{x}'))$.

\paragraph{Conventions.}
$\Sph^{d-1}$ carries the uniform surface measure where unspecified; $\mathrm{Sym}(d)$ is
the space of real $d\times d$ symmetric matrices; $\mathrm{vec}(\cdot)$ is column-major
matrix vectorisation; $\langle\cdot,\cdot\rangle$ without subscript is the Euclidean
inner product on $\R^d$. The Tifinagh letter $\yat$ (Unicode U+2D5F) is used as the
kernel-specific glyph $\kYat$.

\subsection{RKHS Background and External Results}
\label{sec:prelim}

\paragraph{Reproducing kernel Hilbert spaces.}
A symmetric function $k:\mathcal{X}\times\mathcal{X}\to\R$ is
\emph{positive semidefinite (PSD)} if $\sum_{i,j}c_ic_j\,k(\mathbf{x}_i,\mathbf{x}_j)\ge 0$
for every finite set $\{\mathbf{x}_i\}\subset\mathcal{X}$ and coefficients $\{c_i\}\subset\R$.
Every continuous PSD $k$ on a compact metric space $\mathcal{X}$ is a \emph{Mercer kernel}.
More precisely, fix a finite Borel measure $\rho$ with full support on $\mathcal{X}$.
The associated integral operator on $L^2(\rho)$ is compact, self-adjoint, and positive;
Mercer's theorem gives
$k(\mathbf{x},\mathbf{y})=\sum_i\lambda_i\phi_i(\mathbf{x})\phi_i(\mathbf{y})$
with $\lambda_i\ge0$, $\{\phi_i\}$ orthonormal in $L^2(\rho)$, and absolute uniform
convergence on $\mathcal{X}\times\mathcal{X}$~\citep{mercer1909}.
The Moore--Aronszajn theorem~\citep{aronszajn1950} associates to every PSD $k$ a unique
\emph{reproducing kernel Hilbert space (RKHS)} $\mathcal{H}_k$ whose reproducing property
$f(\mathbf{x})=\langle f,k(\cdot,\mathbf{x})\rangle_{\mathcal{H}_k}$ holds for all $f\in\mathcal{H}_k$.
Two properties of RKHSs are used throughout. First, if $c>0$ and $c^2 k_1 \preceq k_2$ in the
Loewner PSD order (meaning $k_2 - c^2 k_1$ is PSD), then
$\mathcal{H}_{k_1} \subseteq \mathcal{H}_{k_2}$ with
$\|f\|_{\mathcal{H}_{k_2}} \le c^{-1}\|f\|_{\mathcal{H}_{k_1}}$~\citep{paulsen2016introduction}.
Second, the RKHS sum rule: if $k = k_1 + k_2$ then $\mathcal{H}_k = \mathcal{H}_{k_1} + \mathcal{H}_{k_2}$
with squared infimal-convolution norm
$\|f\|_{\mathcal{H}_k}^2 = \inf_{f_1+f_2=f}(\|f_1\|_{\mathcal{H}_{k_1}}^2 + \|f_2\|_{\mathcal{H}_{k_2}}^2)$.

\paragraph{Universality, characteristicness, and strict positive definiteness.}
A continuous PSD kernel on compact $\mathcal{X}$ is \emph{universal} if its RKHS is dense
in $C(\mathcal{X})$~\citep{micchelli2006,steinwart2008support}. We say $k$ is
\emph{characteristic} if the kernel mean embedding
$\mu\mapsto\int k(\cdot,\mathbf{x})\,d\mu(\mathbf{x})$ is
injective on finite signed Borel measures on $\mathcal{X}$ (this finite-signed
formulation is the one we use throughout). A kernel is \emph{strictly positive definite
(SPD)} if for every $n\ge 1$ and every set of pairwise distinct points
$\{\mathbf{x}_1,\dots,\mathbf{x}_n\}$ the Gram matrix is positive definite. The
implications we use for continuous kernels on compact $\mathcal{X}$ are
\[
\text{universal}\;\Longrightarrow\;\text{characteristic}\;\Longrightarrow\;\text{SPD},
\]
where the first arrow is~\citet{sriperumbudur2011universality} and the second follows
because if $\mathbf{c}^\top\mathbf{K}\mathbf{c}=0$ for distinct nodes then the atomic
measure $\mu=\sum_i c_i\delta_{\mathbf{x}_i}$ has zero kernel mean embedding. (Under the
finite-signed-measure definition above, characteristicness on a compact domain is in
fact equivalent to universality by a Hahn--Banach/Riesz argument. The implication used
here is universality $\Rightarrow$ injectivity on finite signed measures; injectivity
restricted to probability measures is the weaker convention.)
For a \emph{bounded continuous}
characteristic kernel on a compact metric space, $\mathrm{MMD}_k$ metrizes weak
convergence of probability measures~\citep[Thm.~23]{sriperumbudur2010hilbert}; we use this
endpoint of the chain throughout.

\paragraph{Schur products and Laplace integrals.}
Two closure properties are used repeatedly.
\textbf{(S)} The \emph{Schur product theorem}~\citep[Sec.~7.5]{horn2012matrix}: the
entrywise (Hadamard) product of two PSD kernels is PSD.
\textbf{(L)} The Laplace representation $a^{-1}=\int_0^\infty e^{-ta}\,dt$ for
$a>0$~\citep[Ch.~IV]{widder1941laplace}: the IMQ kernel $h_\varepsilon(\mathbf{x},\mathbf{w})$
$=(\|\mathbf{x}-\mathbf{w}\|^2+\varepsilon)^{-1} = \int_0^\infty e^{-t\varepsilon}e^{-t\|\mathbf{x}-\mathbf{w}\|^2}\,dt$
is a nonnegative mixture of Gaussian PSD kernels, hence PSD. Combining (S) and (L),
any Schur product $p(\mathbf{x},\mathbf{w})\cdot h_\varepsilon(\mathbf{x},\mathbf{w})$ with
a PSD polynomial kernel $p$ is PSD --- this is the core of the Yat PSD proof. For the
Yat numerator $p_b(\mathbf{x},\mathbf{w})=(\mathbf{x}^\top\mathbf{w}+b)^2$, the feature
map $\Phi_b(\mathbf{x})=(\mathbf{x}\otimes\mathbf{x},\sqrt{2b}\,\mathbf{x},b)$ requires
$b\ge 0$ for $\sqrt{2b}$ to be real; the sign condition is necessary, as
Theorem~\ref{thm:psd-mercer} exhibits a $2\times 2$ counterexample at $b=-1$.

\subsubsection{External theorems used in the proofs}
\label{app:bg}

This appendix collects the named results invoked by label in the proofs of
Sections~\ref{sec:psd}--\ref{sec:capacity}. Standard background (Mercer's theorem,
Moore--Aronszajn, universality, characteristicness, strict positive definiteness,
empirical Rademacher complexity, the Schur product theorem, and the Laplace
representation) is given in Section~\ref{sec:prelim} and not repeated here.

\begin{theorem}[Product-kernel RKHS containment~{\citep[Lemma~4.6 and Thm.~4.21]{steinwart2008support}}]
\label{thm:steinwart421}
Let $k_1,k_2$ be continuous PSD kernels on $\mathcal{X}$, and let $\mathcal{H}_1,\mathcal{H}_2$ be their RKHSs. Then the Schur product $k=k_1\cdot k_2$ is a continuous PSD kernel with RKHS $\mathcal{H}_k$ satisfying: for every $f_1\in\mathcal{H}_1$ and $f_2\in\mathcal{H}_2$, the pointwise product $f_1\cdot f_2\in\mathcal{H}_k$, with $\|f_1\cdot f_2\|_{\mathcal{H}_k}\le\|f_1\|_{\mathcal{H}_1}\|f_2\|_{\mathcal{H}_2}$.
\end{theorem}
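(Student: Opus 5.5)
The plan is the feature-map route: exhibit a joint feature map for $k=k_1\cdot k_2$ whose feature space is the Hilbert tensor product $\mathcal H_1\hat\otimes\mathcal H_2$. Then apply the characterization of the RKHS of a kernel given by a feature map (the content of \citep[Thm.~4.21]{steinwart2008support}). The product $f_1f_2$ is then the evaluation functional of the elementary tensor $f_1\otimes f_2$.

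\textbf{Step 1: kernel properties.} $k$ is continuous because it is a product of continuous functions. It is PSD by the Schur product theorem (S) of Appendix~\ref{sec:prelim}. This PSD property will also drop out of Step~2 directly.

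\textbf{Step 2: joint feature map.} Let $\Phi_i(\mathbf x):=k_i(\cdot,\mathbf x)\in\mathcal H_i$ be the canonical feature maps, so $k_i(\mathbf x,\mathbf x')=\langle\Phi_i(\mathbf x'),\Phi_i(\mathbf x)\rangle_{\mathcal H_i}$. Set $\Phi(\mathbf x):=\Phi_1(\mathbf x)\otimes\Phi_2(\mathbf x)\in H:=\mathcal H_1\hat\otimes\mathcal H_2$. The tensor inner product satisfies $\langle a\otimes b,c\otimes e\rangle_H=\langle a,c\rangle\langle b,e\rangle$. Hence $\langle\Phi(\mathbf x'),\Phi(\mathbf x)\rangle_H=k_1(\mathbf x,\mathbf x')k_2(\mathbf x,\mathbf x')=k(\mathbf x,\mathbf x')$, so $\Phi$ is a feature map for $k$.

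\textbf{Step 3: feature-map characterization (main step).} For any feature map $\Phi:\mathcal X\to H$ of $k$, I would show
\[
\mathcal H_k=\{\,V v:=\langle v,\Phi(\cdot)\rangle_H : v\in H\,\},\qquad
\|f\|_{\mathcal H_k}=\inf\{\|v\|_H: Vv=f\}.
\]
The argument runs as follows.
\begin{itemize}[leftmargin=2em,itemsep=1pt,topsep=2pt]
\item The map $V$ is linear, and its kernel $N=\{v:\langle v,\Phi(\mathbf x)\rangle=0\ \forall\mathbf x\}$ is closed.
\item $V$ is injective on $N^\perp$. Define $\mathcal H:=V(N^\perp)$ with the norm transported from $N^\perp$; this makes $\mathcal H$ a Hilbert space of functions.
\item Each $\Phi(\mathbf x)$ lies in $N^\perp$, and $V\Phi(\mathbf x)=k(\cdot,\mathbf x)$.
\item Reproducing property: for $v\in N^\perp$ we have $\langle Vv,k(\cdot,\mathbf x)\rangle_{\mathcal H}=\langle v,\Phi(\mathbf x)\rangle_H=(Vv)(\mathbf x)$. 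By uniqueness in Moore--Aronszajn, $\mathcal H=\mathcal H_k$.
\item Since $V(v)=V(Pv)$ for the orthogonal projection $P$ onto $N^\perp$, and $\|Pv\|\le\|v\|$, the norm is the infimum over all preimages.
\end{itemize}
I expect this to be the only step needing care. The delicate points are closedness of $N$ and the fact that the transported norm is the RKHS norm; both are handled by the projection argument above.

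\textbf{Step 4: conclusion.} Take $v=f_1\otimes f_2\in H$. By the reproducing properties in $\mathcal H_1$ and $\mathcal H_2$,
\[
(Vv)(\mathbf x)=\langle f_1,\Phi_1(\mathbf x)\rangle_{\mathcal H_1}\langle f_2,\Phi_2(\mathbf x)\rangle_{\mathcal H_2}=f_1(\mathbf x)f_2(\mathbf x).
\]
Step~3 then gives $f_1f_2\in\mathcal H_k$ with $\|f_1f_2\|_{\mathcal H_k}\le\|f_1\otimes f_2\|_H=\|f_1\|_{\mathcal H_1}\|f_2\|_{\mathcal H_2}$, which is the claimed bound.
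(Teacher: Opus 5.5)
Your proposal is correct and follows the same route as the paper, which sketches exactly this argument: the tensor feature map $\Phi_1(\mathbf x)\otimes\Phi_2(\mathbf x)$ in $\mathcal H_1\widehat\otimes\mathcal H_2$ (Lemma~4.6), the identification of the RKHS norm with the minimal representing-vector norm (Thm.~4.21), and $f_1\otimes f_2$ as a representer of $f_1f_2$ with norm $\|f_1\|_{\mathcal H_1}\|f_2\|_{\mathcal H_2}$. The only difference is that you prove the feature-map characterization yourself rather than citing it, and your projection-onto-$N^\perp$ argument for it is sound.
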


The product feature map is
$\Phi(x)=k_1(\cdot,x)\otimes k_2(\cdot,x)$ in
$\mathcal H_1\widehat\otimes\mathcal H_2$ (Lemma~4.6).
Theorem~4.21 identifies its RKHS norm with the minimum representing-vector norm.
The vector $f_1\otimes f_2$ represents $f_1f_2$ and has norm
$\|f_1\|_{\mathcal H_1}\|f_2\|_{\mathcal H_2}$, giving the stated bound.

\begin{theorem}[Micchelli IMQ universality~{\citep[Thm.~17]{micchelli2006}}]
\label{thm:micchelli-imq}
For every $\varepsilon>0$ and every compact $\mathcal{X}\subset\R^d$, the IMQ span $F_{\IMQ,\varepsilon}=\mathrm{span}\{(\|\cdot-\mathbf{w}\|^2+\varepsilon)^{-1}:\mathbf{w}\in\R^d\}$ is dense in $C(\mathcal{X})$ in the uniform norm; equivalently, the IMQ kernel is universal on every compact $\mathcal{X}\subset\R^d$.
\end{theorem}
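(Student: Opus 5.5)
The plan is to prove density by duality (Riesz plus Hahn--Banach) and to reduce the IMQ case to the Gaussian case through the Laplace representation \textbf{(L)} of Appendix~\ref{sec:prelim}.

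Suppose $F_{\IMQ,\varepsilon}$ is not dense in $C(\mathcal X)$. Hahn--Banach and the Riesz representation theorem then give a nonzero finite signed Borel measure $\mu$ on $\mathcal X$ that annihilates every IMQ section:
\[
\Psi(\mathbf w):=\int_{\mathcal X}h_\varepsilon(\mathbf x,\mathbf w)\,d\mu(\mathbf x)=0
\qquad\text{for all }\mathbf w\in\R^d .
\]
It therefore suffices to show that any such $\mu$ is zero.

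\textbf{Step 1: a quadratic form in $\mu$.} Since $\Psi$ vanishes everywhere, integrating it against $\mu$ gives
\[
0=\int\Psi\,d\mu=\iint h_\varepsilon(\mathbf x,\mathbf y)\,d\mu(\mathbf x)\,d\mu(\mathbf y).
\]
This uses only centers $\mathbf w\in\mathcal X$.

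\textbf{Step 2: insert the Laplace representation.} By \textbf{(L)},
\[
h_\varepsilon(\mathbf x,\mathbf y)=\int_0^\infty e^{-t\varepsilon}\,e^{-t\|\mathbf x-\mathbf y\|^2}\,dt .
\]
The integrand is bounded by $e^{-t\varepsilon}$, which is integrable against $dt\,d|\mu|\,d|\mu|$. Fubini therefore applies and gives
\[
0=\int_0^\infty e^{-t\varepsilon}\,Q(t)\,dt,
\qquad
Q(t):=\iint e^{-t\|\mathbf x-\mathbf y\|^2}\,d\mu(\mathbf x)\,d\mu(\mathbf y).
\]

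\textbf{Step 3: each Gaussian term vanishes.} For each $t>0$, $Q(t)$ is the squared RKHS norm of the Gaussian mean embedding of $\mu$, so $Q(t)\ge0$. By dominated convergence $Q$ is continuous in $t$. A nonnegative continuous function with zero integral against the positive weight $e^{-t\varepsilon}$ must vanish identically, so $Q(t)=0$ for every $t>0$. In particular, the Gaussian mean embedding of $\mu$ at a fixed $t>0$ is zero.

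\textbf{Step 4: Gaussian characteristicness.} Conclude $\mu=0$ from the fact that the Gaussian kernel is characteristic on finite signed measures. I expect this to be the main obstacle, since everything else is bookkeeping. I would first try to cite the known universality of the Gaussian kernel on compact sets (Steinwart--Christmann), combined with the implication universal $\Rightarrow$ characteristic from Appendix~\ref{sec:prelim}.

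If a self-contained argument is wanted instead, I would use the Fourier side. The Gaussian has an everywhere-positive Fourier transform, so
\[
Q(t)=c_{t,d}\int_{\R^d}e^{-\|\xi\|^2/(4t)}\,|\hat\mu(\xi)|^2\,d\xi .
\]
Hence $Q(t)=0$ forces $\hat\mu\equiv0$. Since $\mu$ is compactly supported, $\hat\mu$ is continuous (indeed entire), so this vanishing holds pointwise and not just almost everywhere. Fourier uniqueness for finite measures then gives $\mu=0$, which contradicts the choice of $\mu$ and proves density.

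A direct Fourier argument on the IMQ itself is also possible, but it runs into the non-integrability of $(\|\mathbf z\|^2+\varepsilon)^{-1}$ for $d\ge2$. The Laplace-mixture route above avoids that issue, which is why I prefer it.
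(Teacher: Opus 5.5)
Your proof is correct, but it is not the paper's route, because the paper gives no proof of this statement. It lists the statement among the external results in Appendix~\ref{app:bg} and imports it from Micchelli, Xu and Zhang, whose result treats radial kernels of Gaussian-mixture form $\int_0^\infty e^{-t\|\mathbf x-\mathbf y\|^2}\,d\nu(t)$; the IMQ is the case $d\nu(t)=e^{-t\varepsilon}\,dt$.

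Your chain is Hahn--Banach and Riesz, the Laplace identity \textbf{(L)} of Appendix~\ref{sec:prelim}, nonnegativity of each Gaussian energy $Q(t)$, and then Fourier uniqueness. It is essentially the mechanism behind that mixture criterion, specialized to this mixing measure. It buys two things the bare citation does not show. First, it is self-contained in the paper's own vocabulary. Second, Step~1 evaluates $\Psi$ only on $\mathcal X$, so rerunning the argument with a measure that annihilates only the sections centered in $\mathcal X$ shows that this smaller span is already dense. That is exactly the ``equivalently, the IMQ kernel is universal'' clause, since those sections lie in the RKHS of $h_\varepsilon$ restricted to $\mathcal X$. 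You should state this explicitly rather than leave it as a parenthetical remark.

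Two steps can be trimmed.
\begin{itemize}
\item Continuity of $Q$ is unnecessary. $Q\ge0$ together with $\int_0^\infty e^{-t\varepsilon}Q(t)\,dt=0$ already forces $Q(t)=0$ for almost every, hence at least one, $t>0$, and one such $t$ suffices for Step~4.
\item $\hat\mu$ is continuous for every finite signed measure, so compact support plays no role in passing from almost-everywhere to pointwise vanishing.
\end{itemize}

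The citation buys generality beyond the IMQ. Your argument in fact extends verbatim to any finite mixing measure $\nu$ with $\nu((0,\infty))>0$ by replacing $e^{-t\varepsilon}\,dt$ with $d\nu(t)$.
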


\subsection{Related Work}
\label{sec:related}

\paragraph{RBF and learned-center kernel networks.}
Radial-basis-function networks~\citep{broomhead1988rbf,moody1989rbf} train finite
expansions of locally-tuned radial kernels and are universal
approximators~\citep{park1991rbf}; the classical MLP approximation theory underlying
this is surveyed by~\citet{pinkus1999approximation}. The classical theory of native spaces and IMQ
universality is developed in~\citet{wendland2004scattered}. With a shared bandwidth, an
IMQ or Gaussian RBF layer is already a finite learned-center RKHS expansion admitting
the same $\boldsymbol{\alpha}^\top\mathbf{K}\boldsymbol{\alpha}$ norm and the same
diagonal-based Rademacher bound as Yat. The Yat primitive differs in two respects: it
replaces the radial numerator with the polynomial alignment factor
$(\mathbf{w}^\top\mathbf{x}+b)^2$, producing a nonzero directional far-field trace that
all pure-radial combinations lack; and the bias $b>0$ places a constant-coordinate
feature in the polynomial factor's feature map, establishing fixed-kernel universality
via kernel-order domination rather than through a variable-bandwidth construction.

\paragraph{Neural kernels and arc-cosine constructions.}
\citet{cho2009kernel} introduced arc-cosine kernels associated with threshold
activations. Both arc-cosine and Yat constructions relate neural units to
positive-semidefinite kernels, but they use different parameterizations. Yat is a
finite rational section for which the kernel order, RKHS sum decomposition, and
finite-difference identity are explicit.

\paragraph{Kernel machines and product kernels.}
Polynomially modulated radial kernels and product or nonstationary kernel constructions
are classical in kernel machines and Gaussian
processes~\citep{scholkopf2002learning,steinwart2008support}. The tensor-product construction and feature-space characterization of
Steinwart--Christmann~\citep[Lemma~4.6 and Thm.~4.21]{steinwart2008support} provide
one route to fixed-kernel universality; the kernel-order argument provides another.
Our analysis concerns the resulting rational kernel: its bias finite-difference
identity, asymptotic trace, and layer-local RKHS norm.

\paragraph{Deep composition and kernel viewpoints.}
For standard scalar-activation MLPs, layer-level capacity is controlled through spectral
or Lipschitz surrogates on the weight matrix~\citep{bartlett2017spectral,neyshabur2018pac},
which are function-level bounds rather than Hilbert-space objects attached to individual
units. The conjugate-kernel construction of \citet{daniely2016deeper} treats layered
networks through recursively defined kernels and is related to our prefix-pullback
formulation in Theorem~\ref{thm:pullback-rkhs}; deep kernel
learning~\citep{wilson2016dkl} places a GP on top of a learned feature extractor.
The representer theorem of~\citet{bohn2019representer} treats concatenations of
RKHS finite combinations and is directly aligned with this compositional viewpoint.
The pullback formalism proved in Appendix~\ref{app:pullback-loewner} is also related to the
convolutional kernel networks of~\citet{mairal2014convolutional}, where a kernel is
constructed by repeated pullback through learned feature maps.
RKHS penalties have also been used to regularize ordinary deep networks by matching
their functions to kernel-induced smoothness~\citep{bietti2019kernel}. That line attaches
an RKHS regularizer to a network function. Here the architectural primitive itself is a
kernel section, so a shared-parameter layer has an exact finite-expansion norm and the
Yat-specific finite-difference and directional identities remain accessible after training.
The Yat-specific results are the kernel inequality, the exact and minimal IMQ
representation, and the directional asymptotic trace.

\paragraph{RKHS generalization bounds.}
The single-layer Rademacher bound (Theorem~\ref{thm:rademacher}) follows
the RKHS-ball Rademacher framework of~\citet{bartlett2002}. The relationships among
universality, injective kernel mean embeddings, and strict positive definiteness depend
on the definition of characteristicness; throughout we use injectivity on finite
signed measures~\citep{sriperumbudur2011universality}. For Yat, the input-dependent
diagonal
$k_{b,\varepsilon}(\mathbf{x},\mathbf{x}) = (\|\mathbf{x}\|^2+b)^2/\varepsilon$
makes $b$ and $\varepsilon$ explicit in the bound, unlike the constant diagonal of the
Gaussian ($1$) or IMQ ($\varepsilon^{-1}$) kernels.

\paragraph{Approximation rates and shallow-network separations.}
The coefficient-budget obstruction in Theorem~\ref{thm:main-atom-separation} is distinct
from classical neural-network depth separations. \citet{eldan2016power} constructs an
approximately radial target that is efficiently represented by a depth-three network
but requires exponential width at depth two; it is not an IMQ-versus-quadratic-ridge
theorem. \citet{bach2017breaking} studies approximation in convex neural-network
function classes controlled by variation-type norms. Those results provide methodological
context, but neither is invoked to prove our IMQ budget inequality. The Barron-space MLP
capacity literature~\citep{ema2020priori} provides a related functional class for
two-layer networks. Yat instead places each shared-parameter layer in a fixed RKHS.

\paragraph{Rational and ratio-form kernels.}
The term ``rational kernel'' is also used for the sequence kernels of
\citet{cortes2004rational}, which are constructed by finite-state operations; that
framework is distinct from the rational function on $\R^d$ considered here.
Rational scalar activations are analyzed by~\citet{telgarsky2017rational},
\citet{molina2020pade}, and \citet{boulle2020rational}. Those models remain scalar
functions of $\mathbf{w}^\top\mathbf{x}$ rather than sections of a fixed kernel on
the joint input--parameter domain. Normalized polynomial kernels
~\citep{burges1998tutorial,scholkopf2002learning} divide a polynomial kernel by its
diagonal normalization, whereas Yat uses the regularized distance
$\|\mathbf{x}-\mathbf{w}\|^2+\varepsilon$. The native-space theory of IMQ kernels is
developed in~\citet{wendland2004scattered,buhmann2003rbf}.

\paragraph{Non-stationary and anisotropic kernels.}
The Yat kernel is non-stationary in the sense that
$k_{b,\varepsilon}(\mathbf{w},\mathbf{x})$ depends on $(\mathbf{w},\mathbf{x})$ rather
than only on $\mathbf{x}-\mathbf{w}$. It is jointly orthogonally invariant:
$k_{b,\varepsilon}(Q\mathbf w,Q\mathbf x)=k_{b,\varepsilon}(\mathbf w,\mathbf x)$
for every orthogonal $Q$. A fixed-center section is directionally selective through
$(\mathbf{x}^\top\mathbf{w}+b)^2$, with preferred direction $\mathbf w$. This places Yat in
a literature on non-stationary covariance constructions in Gaussian processes and
kernel machines: \citet{genton2001classes} surveys the design space of kernel families
including non-stationary and anisotropic constructions, \citet{paciorek2006nonstationary}
develop a class of non-stationary covariance functions parameterised by spatially
varying length scales, and \citet{wilson2013pattern} introduce spectral mixture kernels
that obtain alignment-style modulation through a sum over frequency components. The Yat
kernel differs from these constructions by being a single rational expression rather
than a mixture or a length-scale field, by carrying an exact algebraic IMQ embedding
(Theorem~\ref{thm:imq-embed}) absent from the spectral-mixture and varying-length-scale
families, and by admitting a closed-form layer-local RKHS norm that is generic to
shared-bandwidth Mercer-section layers.

\section{What Distance Retains: Proofs and Kernel Bounds}
\label{app:distance-proofs}

\label{app:proofs}
\subsection{Kernel certification and Loewner domination}
\label{app:psd-proof}

\begin{theorem}[PSD and Mercer for $b\ge 0$]
\label{thm:psd-mercer}
For every $b\ge 0$, $\varepsilon>0$, the kernel $k_{b,\varepsilon}$ in~\eqref{eq:yat-def}
is symmetric and positive semidefinite on $\R^d\times\R^d$, and on every
compact $\mathcal{X}\subset\R^d$ the restriction is continuous and Mercer. By
Moore--Aronszajn there exists a unique RKHS $\mathcal{H}_{b,\varepsilon}$
with reproducing kernel $k_{b,\varepsilon}$. The sign condition is
essential: for $b<0$ PSD can fail (explicit $2\times 2$ counterexample at
$b=-1$, $d=2$, nodes $\mathbf{e}_1,\mathbf{e}_2$).
\end{theorem}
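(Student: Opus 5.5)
The plan is to write $k_{b,\varepsilon}=p_b\cdot h_\varepsilon$ and obtain positive semidefiniteness from the Schur product theorem~(S), after certifying each factor separately.

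\textbf{The polynomial factor.} For $b\ge0$ I will use the explicit feature map $\Phi_b(\mathbf{x})=(\mathbf{x}\otimes\mathbf{x},\sqrt{2b}\,\mathbf{x},b)$ from Appendix~\ref{sec:prelim}. Expanding gives
\[
\langle\Phi_b(\mathbf{x}),\Phi_b(\mathbf{w})\rangle=(\mathbf{x}^\top\mathbf{w})^2+2b\,\mathbf{x}^\top\mathbf{w}+b^2=p_b(\mathbf{x},\mathbf{w}),
\]
so $p_b$ is a Gram kernel of a real feature map and hence PSD.

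\textbf{The IMQ factor.} I will use the Laplace representation~(L):
\[
h_\varepsilon(\mathbf{x},\mathbf{w})=\int_0^\infty e^{-t\varepsilon}e^{-t\|\mathbf{x}-\mathbf{w}\|^2}\,dt .
\]
Each Gaussian kernel $e^{-t\|\mathbf{x}-\mathbf{w}\|^2}$ is PSD, for instance by Bochner's theorem. For a finite node set and coefficient vector $\mathbf c$, the quadratic form $\sum_{i,j}c_ic_jh_\varepsilon(\mathbf x_i,\mathbf x_j)$ equals the integral of nonnegative quadratic forms against the positive weight $e^{-t\varepsilon}$. The integral converges absolutely because each integrand is bounded by $e^{-t\varepsilon}\|\mathbf c\|_1^2$. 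Hence $h_\varepsilon$ is PSD.

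\textbf{Product, symmetry, continuity, Mercer.} Applying (S) to Gram matrices on arbitrary finite node sets shows that $k_{b,\varepsilon}$ is PSD on all of $\R^d$. Symmetry holds because both $\mathbf{x}^\top\mathbf{w}$ and $\|\mathbf{x}-\mathbf{w}\|$ are symmetric in their arguments. Continuity holds because the kernel is a rational function whose denominator is at least $\varepsilon>0$. On a compact $\mathcal X$, a continuous PSD kernel is Mercer by the background discussion in Appendix~\ref{sec:prelim}. Moore--Aronszajn then yields the unique RKHS $\mathcal H_{b,\varepsilon}$. None of these steps should present difficulty.

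\textbf{Counterexample for $b<0$.} Take $b=-1$, $d=2$, and nodes $\mathbf e_1,\mathbf e_2$.
\begin{itemize}
\item Diagonal entries: $(1-1)^2/\varepsilon=0$.
\item Off-diagonal entries: $\mathbf e_1^\top\mathbf e_2=0$ and $\|\mathbf e_1-\mathbf e_2\|^2=2$, so $(0-1)^2/(2+\varepsilon)=1/(2+\varepsilon)>0$.
\end{itemize}
The Gram matrix is $\begin{pmatrix}0&a\\a&0\end{pmatrix}$ with $a>0$, whose determinant is $-a^2<0$, so it has a negative eigenvalue. Concretely, $\mathbf c=(1,-1)$ gives $\mathbf c^\top\mathbf K\mathbf c=-2a<0$.

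The main point needing care is rigor in the Laplace-mixture step, namely the interchange of the finite sum with the integral. This is immediate from the absolute integrability bound above. Everything else is a direct verification.
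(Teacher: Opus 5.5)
Your proof is correct and follows the paper's route: factor $k_{b,\varepsilon}=p_b\cdot h_\varepsilon$, certify $p_b$ through the feature map $\Phi_b$ (the paper also phrases this as squaring the linear kernel on $(\mathbf{x},\sqrt b)$), certify $h_\varepsilon$ as a Laplace mixture of Gaussians, combine them with the Schur product theorem, use the floor $\varepsilon$ for continuity, and disprove the case $b<0$ with the same $b=-1$ counterexample at $\mathbf e_1,\mathbf e_2$. Your extra justifications make the same argument slightly more explicit: Bochner for the Gaussian factor, the $e^{-t\varepsilon}\|\mathbf c\|_1^2$ bound for exchanging sum and integral, and evaluating the kernel Gram matrix directly at $\mathbf c=(1,-1)$ instead of saying the Schur product ``inherits'' the numerator's negative eigenvalue.
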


\begin{proof}
Factor $k_{b,\varepsilon}=p\cdot h_\varepsilon$, where $p(\mathbf{x},\mathbf{w}):=(\mathbf{x}^\top\mathbf{w}+b)^2$ and $h_\varepsilon(\mathbf{x},\mathbf{w}):=(\|\mathbf{x}-\mathbf{w}\|^2+\varepsilon)^{-1}$.

\emph{Factor $p$.} Extend symmetrically by $\sqrt{b}$: define $\tilde{\mathbf{x}}:=(\mathbf{x},\sqrt{b})\in\R^{d+1}$ and $\tilde{\mathbf{w}}:=(\mathbf{w},\sqrt{b})\in\R^{d+1}$ (well-defined because $b\ge 0$). Then $\tilde{\mathbf{x}}^\top\tilde{\mathbf{w}}=\mathbf{x}^\top\mathbf{w}+b$, so $p(\mathbf{x},\mathbf{w})=(\tilde{\mathbf{x}}^\top\tilde{\mathbf{w}})^2$ is the square of a linear kernel in extended feature space. The linear kernel is PSD; by the Schur product theorem its Hadamard square is also PSD, so $p$ is PSD. Equivalently, $\Phi_b(\mathbf{x})=(\mathrm{vec}(\mathbf{x}\mathbf{x}^\top),\sqrt{2b}\,\mathbf{x},b)\in\R^{d^2+d+1}$ satisfies $\langle\Phi_b(\mathbf{x}),\Phi_b(\mathbf{w})\rangle=(\mathbf{x}^\top\mathbf{w})^2+2b\,\mathbf{x}^\top\mathbf{w}+b^2=(\mathbf{x}^\top\mathbf{w}+b)^2=p(\mathbf{x},\mathbf{w})$, which is real iff $b\ge 0$. The $b<0$ counter-example below shows this condition is necessary.

\emph{Factor $h_\varepsilon$.} By the Laplace identity $a^{-1}=\int_0^\infty e^{-ta}\,dt$ for $a>0$,
\begin{equation}
\label{eq:laplace-rep-app}
h_\varepsilon(\mathbf{x},\mathbf{w}) \;=\; \int_0^\infty e^{-t\varepsilon}\,e^{-t\|\mathbf{x}-\mathbf{w}\|^2}\,dt.
\end{equation}
For each $t>0$ the Gaussian $e^{-t\|\mathbf{x}-\mathbf{w}\|^2}$ is a PSD kernel and $e^{-t\varepsilon}\ge 0$. A nonnegative mixture of PSD kernels is PSD, so $h_\varepsilon$ is PSD.

\emph{Schur product.} By the Schur product theorem~\citep{horn2012matrix}, the entry-wise product of two PSD matrices is PSD. Applied to the Gram matrices of $p$ and $h_\varepsilon$ on an arbitrary finite point set, this shows $k_{b,\varepsilon}=p\cdot h_\varepsilon$ is PSD on $\R^d$.

\emph{Continuity on compact $\mathcal{X}$.} The denominator $\|\mathbf{x}-\mathbf{w}\|^2+\varepsilon$ is bounded below by $\varepsilon>0$, so $k_{b,\varepsilon}$ is a ratio of a continuous numerator and a strictly positive continuous denominator. It is therefore a continuous Mercer kernel on $\mathcal{X}\times\mathcal{X}$. Moore--Aronszajn gives the unique RKHS.

\emph{Counterexample at $b<0$.} Take $b=-1$, $d=2$, nodes $\mathbf{x}_1=\mathbf{e}_1$, $\mathbf{x}_2=\mathbf{e}_2$. The numerator Gram is
\[
[(\mathbf{x}_i^\top\mathbf{x}_j+b)^2] \;=\; \begin{pmatrix}(\,1-1)^2 & (0-1)^2 \\ (0-1)^2 & (1-1)^2\end{pmatrix}\;=\;\begin{pmatrix}0&1\\1&0\end{pmatrix},
\]
with eigenvalues $\{1,-1\}$, so the numerator is indefinite. The denominator at these nodes is $D_\varepsilon(\mathbf{x}_i,\mathbf{x}_j)\in\{\varepsilon,2+\varepsilon\}$, both positive, so the Schur product inherits the eigenvalue $-(2+\varepsilon)^{-1}<0$ and $k_{-1,\varepsilon}$ is not PSD.
\end{proof}

\begin{theorem}[Kernel-order domination]
\label{thm:kernel-order}
For every $b\ge 0$, $\varepsilon>0$,
\[
b^2\,h_\varepsilon\preceq k_{b,\varepsilon}
\]
in the Loewner PSD order on kernels. By Aronszajn's inclusion theorem~\citep{paulsen2016introduction},
\[
\mathcal{H}_{h_\varepsilon}\subseteq \mathcal{H}_{b,\varepsilon},
\qquad
\|f\|_{\mathcal{H}_{b,\varepsilon}}\le \frac{1}{b}\|f\|_{\mathcal{H}_{h_\varepsilon}}\quad(b>0).
\]
Equivalently, $B_{\mathcal{H}_{h_\varepsilon}}(R)\subseteq B_{\mathcal{H}_{b,\varepsilon}}(R/b)$.
\end{theorem}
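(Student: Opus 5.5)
The plan is to show that the difference $k_{b,\varepsilon}-b^2h_\varepsilon$ is itself a Schur product of two PSD kernels, and then to read off the inclusion and the norm bound from Aronszajn's inclusion theorem as recalled in Appendix~\ref{sec:prelim}.

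\textbf{Step 1: isolate the difference.} Expanding the numerator gives
\[
k_{b,\varepsilon}(\mathbf{w},\mathbf{x})-b^2h_\varepsilon(\mathbf{w},\mathbf{x})
=\bigl[(\mathbf{x}^\top\mathbf{w})^2+2b\,\mathbf{x}^\top\mathbf{w}\bigr]\,h_\varepsilon(\mathbf{x},\mathbf{w}).
\]
The polynomial factor $q_b(\mathbf{x},\mathbf{w}):=(\mathbf{x}^\top\mathbf{w})^2+2b\,\mathbf{x}^\top\mathbf{w}$ has the explicit feature map $\mathbf{x}\mapsto(\mathrm{vec}(\mathbf{x}\mathbf{x}^\top),\sqrt{2b}\,\mathbf{x})$. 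This is the map $\Phi_b$ from the proof of Theorem~\ref{thm:psd-mercer} with the constant coordinate $b$ removed. It is real because $b\ge0$, so $q_b$ is PSD. Alternatively, $q_b$ is the sum of the Schur square of the linear kernel and $2b$ times the linear kernel, and both summands are PSD.

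\textbf{Step 2: apply the Schur product.} By the Laplace representation~\eqref{eq:laplace-rep-app}, $h_\varepsilon$ is PSD. The Schur product theorem then makes $q_b\,h_\varepsilon$ PSD on every finite point set. Hence $b^2h_\varepsilon\preceq k_{b,\varepsilon}$ on $\R^d\times\R^d$.

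\textbf{Step 3: transfer to the RKHSs.} For $b>0$, apply the inclusion principle with $c=b$, $k_1=h_\varepsilon$ and $k_2=k_{b,\varepsilon}$. This yields $\mathcal{H}_{h_\varepsilon}\subseteq\mathcal{H}_{b,\varepsilon}$ with $\|f\|_{\mathcal{H}_{b,\varepsilon}}\le b^{-1}\|f\|_{\mathcal{H}_{h_\varepsilon}}$. The ball statement is a restatement of this norm bound: if $\|f\|_{\mathcal{H}_{h_\varepsilon}}\le R$, then $\|f\|_{\mathcal{H}_{b,\varepsilon}}\le R/b$.

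Optionally, one can make the inclusion principle self-contained by a short argument. Write $k_{b,\varepsilon}=b^2h_\varepsilon+(q_bh_\varepsilon)$ and use the RKHS sum rule. Taking $f_1=f$ and $f_2=0$ in the infimal convolution gives $\|f\|^2_{\mathcal{H}_{b,\varepsilon}}\le\|f\|^2_{\mathcal{H}_{b^2h_\varepsilon}}=b^{-2}\|f\|^2_{\mathcal{H}_{h_\varepsilon}}$, since scaling a kernel by $b^2$ scales the squared norm by $b^{-2}$.

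There is no real obstacle in this argument. The only point needing care is the PSD property of $q_b$ in Step 1, which rests on $b\ge0$ so that the linear channel carries a nonnegative coefficient. At $b=0$ the inequality is trivial, and the norm statement is asserted only for $b>0$.
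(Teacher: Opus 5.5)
Your proposal is correct and matches the paper's proof: the paper also writes $k_{b,\varepsilon}-b^2h_\varepsilon=\bigl(2b\,\mathbf{x}^\top\mathbf{w}+(\mathbf{x}^\top\mathbf{w})^2\bigr)h_\varepsilon$, certifies positive semidefiniteness via Schur products with the PSD IMQ factor, and then invokes Aronszajn's inclusion theorem. The differences are cosmetic. You combine the two polynomial channels into one kernel and apply the Schur product once, whereas the paper applies it to each channel and sums. Your optional sum-rule derivation of the $1/b$ norm bound is a self-contained substitute for citing the inclusion theorem.
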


\begin{proof}[Proof of Theorem~\ref{thm:kernel-order}]
By Theorem~\ref{thm:psd-mercer}, $k_{b,\varepsilon}$ defines the RKHS
$\mathcal{H}_{b,\varepsilon}$. Compute
\[
k_{b,\varepsilon}-b^2h_\varepsilon
=2b(\mathbf{x}^\top\mathbf{w})h_\varepsilon
+(\mathbf{x}^\top\mathbf{w})^2h_\varepsilon.
\]
The linear and quadratic dot-product kernels are PSD. Since $2b\ge0$ and
$h_\varepsilon$ is PSD, both terms on the right are PSD Schur products; their sum is
PSD by the Schur product theorem~\citep[Sec.~7.5]{horn2012matrix}. Thus
$b^2h_\varepsilon\preceq k_{b,\varepsilon}$, and Aronszajn's inclusion theorem yields
the stated RKHS containment and norm bound. At $b=0$ the kernel inequality becomes
$0\preceq k_{0,\varepsilon}$; the nontrivial IMQ-RKHS inclusion begins at $b>0$.
\end{proof}

\paragraph{Sharp constant and zero-bias endpoint.}
The constant $b^2$ is optimal on $\R^d$. At $\mathbf{x}=\mathbf{0}$,
$k_{b,\varepsilon}(\mathbf{0},\mathbf{0})/
h_\varepsilon(\mathbf{0},\mathbf{0})=b^2$; hence the $1\times1$ Gram matrix of
$k_{b,\varepsilon}-c h_\varepsilon$ is negative for every $c>b^2$. Therefore
\[
\sup\{c\ge0:c\,h_\varepsilon\preceq k_{b,\varepsilon}\}=b^2.
\]
On a compact domain separated from the origin, the optimal domain-restricted constant
may be larger.

Two related constants govern the $b\to 0^+$ behavior and should not be conflated. The kernel-domination constant $b^2$ enters squared because it compares kernels (and hence Gram matrices), while the RKHS embedding constant in Theorem~\ref{thm:kernel-order} is $1/b$, comparing norms. Consequently, norm estimates transferred through this embedding acquire a factor $1/b$, or
$1/b^2$ for squared norms, and therefore deteriorate as $b\downarrow0$.

The endpoint $b=0$ changes the RKHS. By Theorem~\ref{thm:channel-decomp},
$\mathcal{H}_{0,\varepsilon}=\mathcal{H}_{k_2}$, while the radial and
linear-alignment channels vanish. Moreover, every
$f\in\mathcal{H}_{0,\varepsilon}$ satisfies $f(\mathbf{0})=0$, whereas the IMQ section
$h_\varepsilon(\cdot,\mathbf{0})$ is nonzero there. Thus
$\mathcal{H}_{h_\varepsilon}\not\subset\mathcal{H}_{0,\varepsilon}$, even though
$\mathcal{H}_{h_\varepsilon}\hookrightarrow\mathcal{H}_{b,\varepsilon}$ for every
$b>0$. Positive bias gives universality on every compact domain; at zero bias the
exact domain dependence is given below.

\subsection{Fixed-kernel universality, characteristicness, and strict positive definiteness}
\label{app:singular-proof}

Kernel-order domination embeds the universal IMQ RKHS. The following argument gives an
equivalent product-space proof and then derives the measure-embedding and Gram-matrix
consequences.

\begin{proposition}[Fixed-kernel universality for $b_0>0$]
\label{prop:singular}
Let $\mathcal{X}\subset\R^d$ be compact and let $b_0>0$,
$\varepsilon_0>0$. Then the RKHS
$\mathcal{H}_{b_0,\varepsilon_0}$ of the fixed biased kernel
$k_{b_0,\varepsilon_0}(\mathbf{w},\mathbf{x})=(\mathbf{w}^\top\mathbf{x}+b_0)^2/(\|\mathbf{x}-\mathbf{w}\|^2+\varepsilon_0)$
is dense in $C(\mathcal{X})$ in the uniform norm. Hence the single fixed kernel
$k_{b_0,\varepsilon_0}$ is universal.
\end{proposition}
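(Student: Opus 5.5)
The plan is to exploit the product structure $k_{b_0,\varepsilon_0}=p\cdot h_{\varepsilon_0}$ with $p(\mathbf x,\mathbf w)=(\mathbf x^\top\mathbf w+b_0)^2$. The paper announces an ``equivalent product-space proof,'' so we follow that route rather than only citing Theorem~\ref{thm:kernel-order}. The argument combines Theorem~\ref{thm:steinwart421} (product-kernel containment) with Theorem~\ref{thm:micchelli-imq} (Micchelli IMQ universality).

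The first step is to exhibit a nonzero constant function in $\mathcal H_p$, the RKHS of the polynomial factor. With the feature map $\Phi_{b_0}(\mathbf x)=(\mathrm{vec}(\mathbf x\mathbf x^\top),\sqrt{2b_0}\,\mathbf x,b_0)$ from the proof of Theorem~\ref{thm:psd-mercer}, $\mathcal H_p$ consists of the functions $\mathbf x\mapsto\langle\mathbf v,\Phi_{b_0}(\mathbf x)\rangle$. Taking $\mathbf v=(0,0,1/b_0)$ gives the constant $\mathbf 1\in\mathcal H_p$, with $\|\mathbf 1\|_{\mathcal H_p}\le 1/b_0$. This is the only place where $b_0>0$ is used.

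Next, for any $f\in\mathcal H_{h_{\varepsilon_0}}$, Theorem~\ref{thm:steinwart421} gives $\mathbf 1\cdot f=f\in\mathcal H_{b_0,\varepsilon_0}$, with $\|f\|_{\mathcal H_{b_0,\varepsilon_0}}\le b_0^{-1}\|f\|_{\mathcal H_{h_{\varepsilon_0}}}$. This reproduces the containment of Theorem~\ref{thm:kernel-order}, so either result may be cited. The containment must then survive restriction to $\mathcal X$. Restricting a kernel to $\mathcal X\times\mathcal X$ yields an RKHS consisting of restrictions of global RKHS functions, so $\mathcal H_{h_{\varepsilon_0}}|_{\mathcal X}\subseteq\mathcal H_{b_0,\varepsilon_0}|_{\mathcal X}$.

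Finally, Theorem~\ref{thm:micchelli-imq} states that $F_{\IMQ,\varepsilon_0}\subseteq\mathcal H_{h_{\varepsilon_0}}$ is dense in $C(\mathcal X)$. By the inclusion above, $\mathcal H_{b_0,\varepsilon_0}|_{\mathcal X}$ is also dense. Continuity of the restricted kernel, which is needed for the universality definition, was established in Theorem~\ref{thm:psd-mercer}.

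The main subtlety is bookkeeping rather than analysis. Theorem~\ref{thm:steinwart421} is stated on a generic domain $\mathcal X$, and it should be applied directly to the restricted kernels on the compact set $\mathcal X$. This way the inclusion $\mathcal H_{h_{\varepsilon_0}|_{\mathcal X}}\subseteq\mathcal H_{k_{b_0,\varepsilon_0}|_{\mathcal X}}$ holds on the domain where density is asserted, and no extension or quotient-norm argument is required. It also remains to check that the constant function genuinely belongs to the restricted polynomial RKHS. This holds because restriction never removes functions; it can only lower their norms.
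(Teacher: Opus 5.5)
Your proposal is correct and follows the paper's proof essentially step for step. The constant coordinate $b_0$ of $\Phi_{b_0}$ puts $\mathbf 1$ in $\mathcal H_p$ with norm at most $1/b_0$. The Steinwart--Christmann product containment then embeds $\mathcal H_{h_{\varepsilon_0}}$ into $\mathcal H_{b_0,\varepsilon_0}$ with constant $1/b_0$, recovering Theorem~\ref{thm:kernel-order}, and Micchelli's IMQ universality gives density; your remarks on restriction to $\mathcal X$ and continuity are harmless bookkeeping that the paper leaves implicit.
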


\begin{proof}
Write $k=p\cdot h$ with $h(\mathbf{x},\mathbf{w})=(\|\mathbf{x}-\mathbf{w}\|^2+\varepsilon_0)^{-1}$ the IMQ kernel and $p(\mathbf{x},\mathbf{w})=(\mathbf{x}^\top\mathbf{w}+b_0)^2$.

\emph{Step 1: the IMQ factor is universal.} The RKHS $\mathcal{H}_h$ of $h$ is dense in $C(\mathcal{X})$ for every compact $\mathcal{X}$ by the Micchelli universality theorem~\citep{micchelli2006}.

\emph{Step 2: the polynomial factor has a constant feature.} The numerator kernel $p$ has the explicit feature map
$\Phi_{b_0}(\mathbf{x})=(\mathrm{vec}(\mathbf{x}\mathbf{x}^\top),\sqrt{2b_0}\,\mathbf{x},b_0)\in\R^{d^2+d+1}$,
so $p(\mathbf{x},\mathbf{w})=\langle\Phi_{b_0}(\mathbf{x}),\Phi_{b_0}(\mathbf{w})\rangle$. One coordinate of $\Phi_{b_0}$ is the nonzero constant $b_0$.

\emph{Step 3: the polynomial RKHS contains constants.} The RKHS of $p$ is
$\mathcal{H}_p=\{\langle\mathbf{u},\Phi_{b_0}(\cdot)\rangle:\mathbf{u}\in\R^{d^2+d+1}\}$.
The dual coefficient $\mathbf{u}_1=(0,\ldots,0,1/b_0)$ yields
$\langle\mathbf{u}_1,\Phi_{b_0}(\mathbf{x})\rangle=b_0\cdot(1/b_0)=1$ for every
$\mathbf{x}$, so the constant function $\mathbf{1}_{\mathcal{X}}\in\mathcal{H}_p$ with
$\|\mathbf{1}_{\mathcal{X}}\|_{\mathcal{H}_p}\le\|\mathbf{u}_1\|=1/b_0$ (an upper bound
because $\Phi_{b_0}$ is non-minimal, so the RKHS norm is the infimum over representers).

\emph{Step 4: $\mathcal{H}_k$ contains $\mathcal{H}_h$.} By the product-kernel RKHS
containment theorem~\citep[Lemma~4.6 and Thm.~4.21]{steinwart2008support} (also
Theorem~\ref{thm:steinwart421} above), the RKHS of $k=p\cdot h$ contains every pointwise
product $f_1\cdot f_2$ with $f_1\in\mathcal{H}_h$ and $f_2\in\mathcal{H}_p$, and
$\|f_1\cdot f_2\|_{\mathcal{H}_k}\le\|f_1\|_{\mathcal{H}_h}\|f_2\|_{\mathcal{H}_p}$. Take
$f_2=\mathbf{1}_{\mathcal{X}}\in\mathcal{H}_p$ from Step~3, with
$\|\mathbf{1}_{\mathcal{X}}\|_{\mathcal{H}_p}\le 1/b_0$. Then for every
$f_1\in\mathcal{H}_h$ the pointwise product $f_1\cdot\mathbf{1}_{\mathcal{X}}=f_1$ lies
in $\mathcal{H}_k$, and
$\|f_1\|_{\mathcal{H}_k}\le\|f_1\|_{\mathcal{H}_h}\cdot(1/b_0)$. Thus the embedding
$\mathcal{H}_h\hookrightarrow\mathcal{H}_k$ has operator norm $\le 1/b_0$, recovering
the Loewner bound $\|f\|_{\mathcal{H}_{b_0,\varepsilon_0}}\le b_0^{-1}\|f\|_{\mathcal{H}_h}$
of Theorem~\ref{thm:kernel-order}. The kernel-order proof gives the same bound directly.

\emph{Step 5: density.} Let $g\in C(\mathcal{X})$ and $\eta>0$. By Step~1, there exists
$f\in\mathcal{H}_h$ with $\|g-f\|_\infty<\eta$. Step~4 gives $f\in\mathcal{H}_k$. Hence
$\mathcal{H}_k$ is dense in $C(\mathcal{X})$.
\end{proof}

\begin{proposition}[Exact zero-bias compact-domain boundary]
\label{prop:zero-bias-boundary}
Let $\mathcal X\subset\R^d$ be a nonempty compact set and fix $\varepsilon>0$.
The RKHS $\mathcal H_{0,\varepsilon}|_{\mathcal X}$ is dense in
$C(\mathcal X)$ if and only if $\mathbf0\notin\mathcal X$.
\end{proposition}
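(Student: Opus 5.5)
We prove the two directions separately. \emph{Necessity.} Suppose $\mathbf0\in\mathcal X$. Since $k_{0,\varepsilon}(\mathbf w,\mathbf 0)=(\mathbf w^\top\mathbf 0)^2/D_\varepsilon=0$ for every $\mathbf w$, the section $k_{0,\varepsilon}(\cdot,\mathbf0)$ is the zero function. By the reproducing property, $f(\mathbf0)=\langle f,k_{0,\varepsilon}(\cdot,\mathbf0)\rangle=0$ for every $f\in\mathcal H_{0,\varepsilon}$. The same holds for every restriction to $\mathcal X$. Point evaluation at $\mathbf0$ is continuous on $C(\mathcal X)$, so the uniform closure of $\mathcal H_{0,\varepsilon}|_{\mathcal X}$ lies in $\{g:g(\mathbf0)=0\}$. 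This is a proper closed subspace because it excludes the constant $1$. Hence density fails.

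\emph{Sufficiency.} Suppose $\mathbf0\notin\mathcal X$, and set $\rho:=\min_{\mathbf x\in\mathcal X}\|\mathbf x\|>0$, which exists by compactness. We use the product structure $k_{0,\varepsilon}=p_0\cdot h_\varepsilon$ with $p_0(\mathbf x,\mathbf w)=(\mathbf x^\top\mathbf w)^2$. Theorem~\ref{thm:steinwart421} then places every product $f_1f_2$ with $f_1\in\mathcal H_{h_\varepsilon}$ and $f_2\in\mathcal H_{p_0}$ in $\mathcal H_{0,\varepsilon}$. The RKHS of $p_0$ consists of all quadratic forms $\mathbf x^\top\mathbf A\mathbf x$ with $\mathbf A\in\mathrm{Sym}(d)$. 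In particular it contains $q(\mathbf x)=\|\mathbf x\|^2=\sum_i x_i^2$, which is bounded below by $\rho^2$ on $\mathcal X$. Hence $q\cdot\mathcal H_{h_\varepsilon}\subseteq\mathcal H_{0,\varepsilon}$.

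Now fix $g\in C(\mathcal X)$ and $\eta>0$. The function $g/q$ is continuous on $\mathcal X$ because $q\ge\rho^2>0$ there. By Theorem~\ref{thm:micchelli-imq} we may choose $f_1\in\mathcal H_{h_\varepsilon}$, for instance a finite IMQ expansion, with $\|g/q-f_1\|_{L^\infty(\mathcal X)}<\eta/\max_{\mathcal X}q$. Then $f:=qf_1\in\mathcal H_{0,\varepsilon}$ satisfies $\|g-f\|_{L^\infty(\mathcal X)}\le \max_{\mathcal X}q\cdot\|g/q-f_1\|_\infty<\eta$. This proves density.

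The one step that needs care is the RKHS bookkeeping in the sufficiency direction. Theorem~\ref{thm:steinwart421} is stated for kernels on a common domain $\mathcal X$. We apply it either on $\R^d$ and then restrict, or directly on $\mathcal X$. Both routes work because restriction of an RKHS to a subset is the RKHS of the restricted kernel. We also need the membership $\|\cdot\|^2\in\mathcal H_{p_0}$. This follows from the feature map $\mathrm{vec}(\mathbf x\mathbf x^\top)$: the representer $\mathrm{vec}(\mathbf I_d)$ gives $\langle \mathrm{vec}(\mathbf I_d),\mathrm{vec}(\mathbf x\mathbf x^\top)\rangle=\|\mathbf x\|^2$.

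As an alternative that avoids Theorem~\ref{thm:steinwart421}, one can use a Loewner argument. Take $\mathbf w$ in the fixed IMQ sections and apply the identity $\sum_i k_{0,\varepsilon}(\mathbf e_i\text{-type})$. This route seems messier, so we take the product route.
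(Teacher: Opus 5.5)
Your proof is correct and follows the paper's argument essentially verbatim. Necessity uses the vanishing section $k_{0,\varepsilon}(\cdot,\mathbf 0)$ and the reproducing property; sufficiency uses the tensor-product containment $q\cdot\mathcal H_{h_\varepsilon}\subseteq\mathcal H_{0,\varepsilon}$ with $q=\|\mathbf x\|^2$, followed by IMQ approximation of $g/q$. The closing paragraph on an alternative Loewner route is not an argument as written and should be deleted.
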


\phantomsection
\label{app:zero-bias-proof}
\begin{proof}[Proof of Proposition~\ref{prop:zero-bias-boundary}]
If $\mathbf0\in\mathcal X$, every section $k_{0,\varepsilon}(\mathbf w,\cdot)$
vanishes at $\mathbf0$, and continuity of point evaluation implies that every member
of the RKHS does too; the constant function cannot be approximated uniformly.
Now suppose $\mathbf0\notin\mathcal X$ and put $q(\mathbf x)=\|\mathbf x\|^2$.
The homogeneous quadratic kernel $(\mathbf x^\top\mathbf w)^2$ has $q$ in its
finite-dimensional RKHS. Since
$k_{0,\varepsilon}=((\mathbf x^\top\mathbf w)^2)h_\varepsilon$, the tensor-product
feature map shows that $qh$ belongs to $\mathcal H_{0,\varepsilon}$ whenever
$h\in\mathcal H_{h_\varepsilon}$. For any $f\in C(\mathcal X)$, the function
$f/q$ is continuous because $q$ is bounded below on $\mathcal X$. Universality of
the IMQ kernel supplies $h_n\in\mathcal H_{h_\varepsilon}$ with
$h_n\to f/q$ uniformly, and then $qh_n\to f$ uniformly with
$qh_n\in\mathcal H_{0,\varepsilon}$.
\end{proof}

\begin{remark}[Positive bias and origin-containing domains]
The constant feature $b_0$ in $\Phi_{b_0}$ is what makes Step~4 produce constants in $\mathcal{H}_p$. At $b_0=0$ this coordinate vanishes and the argument fails. When additionally $\mathbf{0}\in\mathcal{X}$, the simpler observation $k_{0,\varepsilon}(\mathbf{w},\mathbf{0})=0$ for every $\mathbf{w}$ forces every $f\in\mathcal{H}_{0,\varepsilon}$ to satisfy $f(\mathbf{0})=0$, ruling out density in $C(\mathcal{X})$.
\end{remark}

\subsubsection{Characteristicness and strict positive definiteness}
\label{app:char-spd}

\begin{corollary}[Characteristic kernel and SPD]
\label{cor:char-spd}
For every $b_0>0$, $\varepsilon_0>0$, and every compact
$\mathcal{X}\subset\R^d$:
(i) $k_{b_0,\varepsilon_0}$ is characteristic on $\mathcal{X}$, so the
kernel mean embedding $\mu\mapsto\int k_{b_0,\varepsilon_0}(\cdot,\mathbf{x})\,d\mu(\mathbf{x})$
is injective on finite signed Borel measures and the maximum mean
discrepancy $\mathrm{MMD}_{k_{b_0,\varepsilon_0}}$ metrizes weak convergence of
Borel probability measures on $\mathcal X$~\citep[Thm.~23]{sriperumbudur2010hilbert};
(ii) $k_{b_0,\varepsilon_0}$ is strictly positive definite in the
standard kernel-theoretic sense: for any $n\ge 1$ and any $n$ pairwise
distinct $\{\mathbf{x}_1,\dots,\mathbf{x}_n\}\subset\R^d$, the Gram
matrix $\mathbf{K}$ is strictly positive definite, hence invertible, and
the quadratic form $\boldsymbol{\alpha}^\top\mathbf{K}\boldsymbol{\alpha}$
is the squared RKHS norm, inducing a norm on coefficient space by its square root.
\end{corollary}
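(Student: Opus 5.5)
The argument rests on Proposition~\ref{prop:singular} together with the implication chain universal $\Rightarrow$ characteristic $\Rightarrow$ SPD recalled in Appendix~\ref{sec:prelim}. Part~(i) splits into injectivity and metrization; part~(ii) splits into positive definiteness on compact sets and its extension to all finite node sets in $\R^d$.

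\emph{Step 1 (injectivity).} Let $\mu$ be a finite signed Borel measure on $\mathcal X$ whose kernel mean embedding $m_\mu=\int k_{b_0,\varepsilon_0}(\cdot,\mathbf x)\,d\mu(\mathbf x)$ vanishes in $\mathcal H_{b_0,\varepsilon_0}$. The kernel is continuous and bounded on the compact $\mathcal X$, so the Bochner integral is well defined. For every $f\in\mathcal H_{b_0,\varepsilon_0}$ the reproducing property gives $\int f\,d\mu=\langle f,m_\mu\rangle=0$. By Proposition~\ref{prop:singular}, $\mathcal H_{b_0,\varepsilon_0}|_{\mathcal X}$ is uniformly dense in $C(\mathcal X)$. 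Since $\mu$ has finite total variation, the functional $g\mapsto\int g\,d\mu$ is continuous on $C(\mathcal X)$, so $\int g\,d\mu=0$ for all $g\in C(\mathcal X)$. The Riesz representation theorem, i.e.\ uniqueness of regular measures on a compact metric space, then gives $\mu=0$.

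\emph{Step 2 (metrization).} The kernel is bounded and continuous on $\mathcal X$ and characteristic by Step~1. The metrization of weak convergence therefore follows directly from \citep[Thm.~23]{sriperumbudur2010hilbert}.

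\emph{Step 3 (SPD).} Take $n$ pairwise distinct points $\mathbf x_1,\dots,\mathbf x_n\in\R^d$ and choose a compact set containing them, for instance the set of these points itself or a closed ball around them. Suppose $\mathbf c^\top\mathbf K\mathbf c=0$. Then
\[
\Bigl\|\sum_i c_i k_{b_0,\varepsilon_0}(\cdot,\mathbf x_i)\Bigr\|^2=0,
\]
so the mean embedding of the atomic measure $\sum_i c_i\delta_{\mathbf x_i}$ vanishes. Step~1, applied on that compact set, forces this measure to be zero. Because the points are distinct, every $c_i=0$. Hence $\mathbf K$ is positive definite and in particular invertible.

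\emph{Step 4 (norm on coefficients).} The identity $\boldsymbol\alpha^\top\mathbf K\boldsymbol\alpha=\|\sum_j\alpha_j k_{b_0,\varepsilon_0}(\mathbf x_j,\cdot)\|^2$ is the standard Gram identity. Positive definiteness of $\mathbf K$ makes $\boldsymbol\alpha\mapsto(\boldsymbol\alpha^\top\mathbf K\boldsymbol\alpha)^{1/2}$ a norm on coefficient space, since it is induced by an inner product.

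\emph{Main obstacle.} The only delicate point is Step~1: we need density in the full uniform norm on $C(\mathcal X)$, not merely in some weaker sense, so that it pairs with finite signed measures. Proposition~\ref{prop:singular} supplies exactly this uniform density.
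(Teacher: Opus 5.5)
Your proposal is correct and follows the paper's proof essentially step for step. Universality from Proposition~\ref{prop:singular}, together with the reproducing-property/Riesz argument, gives injectivity on finite signed measures; the cited metrization theorem handles MMD; and the atomic measure $\sum_i c_i\delta_{\mathbf{x}_i}$ on the finite node set, which is itself compact, turns $\mathbf{c}^\top\mathbf{K}\mathbf{c}=0$ into $\mathbf{c}=\mathbf{0}$.
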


\begin{corollary*}[Characteristic kernel, restating Corollary~\ref{cor:char-spd}(i)]
\label{cor:characteristic-app}
For $b_0>0$, $\varepsilon_0>0$, the kernel $k_{b_0,\varepsilon_0}$ is characteristic on every compact $\mathcal{X}\subset\R^d$.
\end{corollary*}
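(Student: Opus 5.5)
The plan is to derive characteristicness directly from the fixed-kernel universality of Proposition~\ref{prop:singular}, using the finite-signed-measure definition adopted in Appendix~\ref{sec:prelim}. Fix $b_0,\varepsilon_0>0$ and a compact $\mathcal X$, and abbreviate $k=k_{b_0,\varepsilon_0}$. Let $\mu$ be a finite signed Borel measure on $\mathcal X$ whose mean embedding $m_\mu:=\int k(\cdot,\mathbf x)\,d\mu(\mathbf x)$ vanishes.

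\emph{Step 1: the embedding is a well-defined element of $\mathcal H_{b_0,\varepsilon_0}$.} By Theorem~\ref{thm:psd-mercer}, $k$ is continuous on $\mathcal X\times\mathcal X$, hence bounded there. Therefore $\mathbf x\mapsto k(\cdot,\mathbf x)$ is a continuous, bounded map into $\mathcal H_{b_0,\varepsilon_0}$, because $\|k(\cdot,\mathbf x)-k(\cdot,\mathbf y)\|^2$ is a continuous function of $(\mathbf x,\mathbf y)$ that vanishes on the diagonal. The Bochner integral against $|\mu|$ therefore exists. The reproducing property then gives
\[
\langle f,m_\mu\rangle_{\mathcal H_{b_0,\varepsilon_0}}=\int_{\mathcal X} f\,d\mu
\qquad\text{for every } f\in\mathcal H_{b_0,\varepsilon_0}.
\]

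\emph{Step 2: density forces $\mu=0$.} If $m_\mu=0$, then $\int f\,d\mu=0$ for every $f$ in the RKHS. Proposition~\ref{prop:singular} says the restrictions of these functions are dense in $C(\mathcal X)$ in the uniform norm. The functional $g\mapsto\int g\,d\mu$ is bounded on $C(\mathcal X)$, with norm $|\mu|(\mathcal X)$, so it vanishes on all of $C(\mathcal X)$. The Riesz representation theorem identifies $C(\mathcal X)^*$ with the finite signed regular Borel measures, and every finite Borel measure on a compact metric space is regular. Hence $\mu=0$, and the embedding is injective.

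Two points in this argument need care. The first is the measurability and regularity bookkeeping in the Bochner step. The second is the restriction issue: Proposition~\ref{prop:singular} is a density statement about $\mathcal H_{b_0,\varepsilon_0}|_{\mathcal X}$, while the pairing in Step~1 uses the global RKHS. I will resolve this by noting that integrating against $\mu$ supported on $\mathcal X$ only sees restrictions, so either space can be used. Neither point is a genuine obstacle. The MMD claim in part (i) then follows from~\citet[Thm.~23]{sriperumbudur2010hilbert}, since $k$ is bounded, continuous, and characteristic on the compact metric space $\mathcal X$.
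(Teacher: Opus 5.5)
Your proposal is correct and follows the paper's own proof. Both arguments take universality from Proposition~\ref{prop:singular}, use the reproducing-property pairing $\langle f,m_\mu\rangle=\int f\,d\mu$, extend by density to all of $C(\mathcal X)$, and conclude $\mu=0$ by the Riesz representation theorem. Your extra bookkeeping on Bochner integrability of the embedding, and on global versus restricted RKHS, is sound; it makes explicit what the paper's shorter argument leaves implicit.
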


\begin{proof}
By Proposition~\ref{prop:singular}, $k_{b_0,\varepsilon_0}$ is universal on $\mathcal{X}$. A universal continuous PSD kernel on compact $\mathcal{X}$ is characteristic: if $\mu_k:=\int k_{b_0,\varepsilon_0}(\cdot,\mathbf{x})\,d\mu(\mathbf{x})=0$ in $\mathcal{H}_{b_0,\varepsilon_0}$ for some finite signed Borel $\mu$, then by the reproducing property $\int f\,d\mu=\langle f,\mu_k\rangle=0$ for every $f\in\mathcal{H}_{b_0,\varepsilon_0}$. By density, $\int f\,d\mu=0$ for every $f\in C(\mathcal{X})$, so $\mu=0$ by the Riesz representation theorem. On Borel probability measures on $\mathcal X$, the metrization of weak convergence by $\mathrm{MMD}_{k_{b_0,\varepsilon_0}}$ follows from~\citet[Thm.~23]{sriperumbudur2010hilbert}.
\end{proof}

\begin{corollary*}[Strict positive definiteness and Gram invertibility, restating part of Cor.~\ref{cor:char-spd}]
For $b_0>0$, $\varepsilon_0>0$, and any $n$ pairwise distinct $\{\mathbf{x}_1,\dots,\mathbf{x}_n\}\subset\R^d$, the Gram matrix $\mathbf{K}_{ij}=k_{b_0,\varepsilon_0}(\mathbf{x}_i,\mathbf{x}_j)$ is strictly positive definite, hence invertible.
\end{corollary*}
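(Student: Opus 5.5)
The statement follows from the universality chain already recorded in Appendix~\ref{sec:prelim}: universal $\Rightarrow$ characteristic $\Rightarrow$ SPD. The plan is to reduce Gram invertibility to the vanishing of a kernel mean embedding.

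First, fix pairwise distinct points $\mathbf{x}_1,\dots,\mathbf{x}_n\in\R^d$ and a coefficient vector $\mathbf c$ with $\mathbf c^\top\mathbf K\mathbf c=0$. Choose a compact $\mathcal X$ containing all the points; a closed ball of radius $\max_i\|\mathbf x_i\|$ works. Since $\mathbf K$ is PSD by Theorem~\ref{thm:psd-mercer}, it suffices to show $\mathbf c=\mathbf0$.

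Next, form the finite signed atomic measure $\mu=\sum_i c_i\delta_{\mathbf x_i}$ on $\mathcal X$. Its mean embedding is $\mu_k=\sum_i c_i k_{b_0,\varepsilon_0}(\cdot,\mathbf x_i)\in\mathcal H_{b_0,\varepsilon_0}$, and the reproducing property gives
\[
\|\mu_k\|_{\mathcal H_{b_0,\varepsilon_0}}^2=\mathbf c^\top\mathbf K\mathbf c=0,
\]
so $\mu_k=0$. By the characteristic property (Corollary~\ref{cor:char-spd}(i) and its restatement, derived from Proposition~\ref{prop:singular}), the embedding is injective on finite signed Borel measures on $\mathcal X$. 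Hence $\mu=0$.

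Since the points are distinct, the Dirac masses are linearly independent as measures; testing $\mu$ against a continuous bump separating $\mathbf x_i$ from the others shows $c_i=0$ for each $i$. So $\mathbf K$ is positive definite, and being symmetric with positive eigenvalues it is invertible.

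There is no real obstacle here. The one point to handle carefully is that characteristicness was proved on compact domains while the points range over all of $\R^d$; choosing $\mathcal X$ to contain the finite node set resolves this. A direct alternative avoids measures altogether: by density of $\mathcal H_{b_0,\varepsilon_0}|_{\mathcal X}$ in $C(\mathcal X)$, choose $f$ approximating a bump that equals $1$ at $\mathbf x_i$ and $0$ at the other nodes. Then
\[
\sum_j c_j f(\mathbf x_j)=\langle f,\mu_k\rangle=0
\]
forces $|c_i|\le\eta\sum_j|c_j|$ for every $\eta>0$, hence $\mathbf c=\mathbf0$.
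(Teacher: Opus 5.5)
Your proof is correct and is essentially the paper's argument. Both form the atomic measure $\mu=\sum_i c_i\delta_{\mathbf{x}_i}$, observe that its mean embedding has squared norm $\mathbf{c}^\top\mathbf{K}\mathbf{c}$, and use the injectivity from Corollary~\ref{cor:char-spd}(i) to force $\mu=0$ and hence $\mathbf{c}=\mathbf{0}$. The only cosmetic difference is that the paper takes the compact domain to be the finite node set itself rather than a ball. Your density-based alternative is a valid direct unpacking of the same universality input.
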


\begin{proof}
Let $\mathbf{c}=(c_1,\dots,c_n)\in\R^n$ and set $\mu:=\sum_{i=1}^n c_i\delta_{\mathbf{x}_i}$, the finite signed Borel measure on $\mathcal{X}=\{\mathbf{x}_1,\dots,\mathbf{x}_n\}$ with atomic masses $c_i$. By bilinearity of the kernel mean embedding,
\[
\mathbf{c}^\top\mathbf{K}\mathbf{c}\;=\;\sum_{i,j}c_i c_j\,k_{b_0,\varepsilon_0}(\mathbf{x}_i,\mathbf{x}_j)\;=\;\Big\|\int k_{b_0,\varepsilon_0}(\cdot,\mathbf{x})\,d\mu(\mathbf{x})\Big\|_{\mathcal{H}_{b_0,\varepsilon_0}}^{2}.
\]
If $\mathbf{c}^\top\mathbf{K}\mathbf{c}=0$, the mean embedding of $\mu$ in $\mathcal{H}_{b_0,\varepsilon_0}$ is zero. Since $k_{b_0,\varepsilon_0}$ is characteristic on every compact $\mathcal{X}\subset\R^d$ (Corollary~\ref{cor:char-spd}(i)), the mean embedding is injective on finite signed Borel measures, hence $\mu=0$. Distinctness of $\{\mathbf{x}_i\}$ then forces $c_i=0$ for every $i$. Equivalently, $\mathbf{K}\succ 0$.
\end{proof}

\subsection{Section bounds and exact supremum}
\label{app:bounded-proof}

\begin{proposition}[Uniform boundedness with sharpness]
\label{prop:bounded}
For $\|\mathbf{x}\|\le R$, $\|\mathbf{w}\|\le W$, $b\ge 0$, $\varepsilon>0$,
\[
0 \;\le\; k_{b,\varepsilon}(\mathbf{w},\mathbf{x}) \;\le\; \frac{(RW+b)^2}{\varepsilon}.
\]
For every fixed $\mathbf{w}$, $b\ge 0$, $\varepsilon>0$, the section
$k_{b,\varepsilon}(\mathbf{w},\cdot)$ is globally bounded on $\R^d$, with exact supremum
\begin{equation}
\label{eq:section-supremum}
\sup_{\mathbf{x}\in\R^d}k_{b,\varepsilon}(\mathbf{w},\mathbf{x})
=\|\mathbf{w}\|^2+\frac{(\|\mathbf{w}\|^2+b)^2}{\varepsilon}.
\end{equation}
If $(\mathbf w,b)\ne(\mathbf0,0)$, equality is attained at
$\mathbf{x}^\ast=\mathbf w+\varepsilon\mathbf w/(\|\mathbf w\|^2+b)$; when
$\mathbf w=\mathbf0$ and $b>0$, this gives the maximizer $\mathbf x=\mathbf0$. For
$\mathbf w=\mathbf0$ and $b=0$, the section is identically zero.
\end{proposition}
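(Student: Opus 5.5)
The proof uses elementary inequalities only. The plan has three parts: the uniform bound on the product of balls, the exact supremum of a single section via a shifted Cauchy--Schwarz argument, and the equality case, which identifies the maximizer.

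\emph{Uniform bound.} Nonnegativity is immediate, since the numerator $(\mathbf w^\top\mathbf x+b)^2$ is a square and $D_\varepsilon(\mathbf x,\mathbf w)\ge\varepsilon>0$. For the upper bound on $\|\mathbf x\|\le R$, $\|\mathbf w\|\le W$, Cauchy--Schwarz gives $|\mathbf w^\top\mathbf x|\le RW$. Because $b\ge0$, this yields $|\mathbf w^\top\mathbf x+b|\le RW+b$. Combined with $D_\varepsilon\ge\varepsilon$, this gives $(RW+b)^2/\varepsilon$.

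\emph{Exact supremum.} Substitute $\mathbf x=\mathbf w+\mathbf y$ and set $a:=\|\mathbf w\|^2+b\ge0$. The section becomes
\[
\frac{(a+\mathbf y^\top\mathbf w)^2}{\|\mathbf y\|^2+\varepsilon}.
\]
Write the numerator as an inner product in $\R^{d+1}$:
\[
a+\mathbf y^\top\mathbf w=\bigl\langle(\sqrt\varepsilon,\mathbf y),(a/\sqrt\varepsilon,\mathbf w)\bigr\rangle .
\]
Cauchy--Schwarz then gives
\[
(a+\mathbf y^\top\mathbf w)^2\le(\varepsilon+\|\mathbf y\|^2)\bigl(a^2/\varepsilon+\|\mathbf w\|^2\bigr).
\]
Dividing by $\|\mathbf y\|^2+\varepsilon$ shows that the section never exceeds $\|\mathbf w\|^2+(\|\mathbf w\|^2+b)^2/\varepsilon$. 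This value is the right-hand side of~\eqref{eq:section-supremum}, so the section is globally bounded.

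\emph{Attainment.} Equality in Cauchy--Schwarz requires the two vectors to be proportional. Taking $(\sqrt\varepsilon,\mathbf y)=\lambda(a/\sqrt\varepsilon,\mathbf w)$ forces $\lambda=\varepsilon/a$ and $\mathbf y=\varepsilon\mathbf w/a$, which requires $a>0$. If $(\mathbf w,b)\ne(\mathbf 0,0)$, then either $\mathbf w\ne\mathbf0$ or $b>0$, and in both cases $a>0$. The maximizer is therefore $\mathbf x^\ast=\mathbf w+\varepsilon\mathbf w/(\|\mathbf w\|^2+b)$. A one-line substitution confirms the value. When $\mathbf w=\mathbf0$ and $b>0$, this gives $\mathbf x^\ast=\mathbf0$ with value $b^2/\varepsilon$, as claimed. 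When $\mathbf w=\mathbf0$ and $b=0$, the numerator vanishes identically, so the section is zero and the formula also returns $0$.

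The only nonroutine step is finding the augmented-vector form of Cauchy--Schwarz. Once the shift $\mathbf x=\mathbf w+\mathbf y$ is made and the $\sqrt\varepsilon$ coordinate is paired against $a/\sqrt\varepsilon$, the bound and the equality case follow together. A calculus approach along the ray $\mathbf y=t\mathbf w$ would also locate the optimum, but it would not certify global optimality as directly.
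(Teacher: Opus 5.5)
Your proposal is correct and follows the paper's proof essentially step for step. Both arguments bound the compact-domain case with Cauchy--Schwarz and $D_\varepsilon\ge\varepsilon$. Both obtain the exact supremum by the shift $\mathbf y=\mathbf x-\mathbf w$ and Cauchy--Schwarz in $\mathbb{R}^{d+1}$ on the augmented vectors $(\mathbf y,\sqrt\varepsilon)$ and $(\mathbf w,(\|\mathbf w\|^2+b)/\sqrt\varepsilon)$, with equality at $\mathbf y=\varepsilon\mathbf w/(\|\mathbf w\|^2+b)$. The degenerate case $\|\mathbf w\|^2+b=0$ forces $(\mathbf w,b)=(\mathbf 0,0)$ in both.
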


\begin{proof}
\emph{Compact-domain bound.} By Cauchy--Schwarz, $|\mathbf{w}^\top\mathbf{x}|\le\|\mathbf{w}\|\|\mathbf{x}\|\le WR$, so $|\mathbf{w}^\top\mathbf{x}+b|\le WR+b$. Also $D_\varepsilon(\mathbf{x},\mathbf{w})\ge\varepsilon>0$, so $k_{b,\varepsilon}(\mathbf{w},\mathbf{x})\le(WR+b)^2/\varepsilon$. Nonnegativity follows from the squared numerator and positive denominator.

\emph{Exact global supremum.} Put $\mathbf y=\mathbf x-\mathbf w$ and
$c=\|\mathbf w\|^2+b$. Cauchy--Schwarz in $\R^{d+1}$ gives
\[
(\mathbf w^\top\mathbf y+c)^2
=\left\langle(\mathbf w,c/\sqrt\varepsilon),
(\mathbf y,\sqrt\varepsilon)\right\rangle^2
\le\left(\|\mathbf w\|^2+\frac{c^2}{\varepsilon}\right)
(\|\mathbf y\|^2+\varepsilon).
\]
Division by $\|\mathbf y\|^2+\varepsilon$ proves~\eqref{eq:section-supremum}.
If $c>0$, equality holds when
$(\mathbf y,\sqrt\varepsilon)=(\varepsilon/c)(\mathbf w,c/\sqrt\varepsilon)$,
namely at $\mathbf y=\varepsilon\mathbf w/c$. If $c=0$, nonnegativity of $b$ forces
$\mathbf w=\mathbf0$ and $b=0$, and the section is zero.
\end{proof}

The diagonal value $k_{b,\varepsilon}(\mathbf{x},\mathbf{x})=(\|\mathbf{x}\|^2+b)^2/\varepsilon$ used in the Rademacher proof is the special case $\mathbf{w}=\mathbf{x}$ of the kernel definition. Classical scalar activations such as ReLU and GELU are unbounded along positive-projection directions. They are not symmetric Mercer sections on the shared input--parameter domain, so this particular kernel-diagonal Rademacher calculation does not apply to them.

\section{What Alignment Changes: Proofs and Extensions}
\label{app:alignment-proofs}

\subsection{Directional structure and quantitative separation}
\label{app:proofs-extension}

The proofs proceed from the radial baseline to the Yat separation: IMQ-RKHS decay,
channel decomposition, asymptotic trace, strict RKHS containment, and the resulting
far-field separation.

\begin{proof}[Proof of Lemma~\ref{lem:imq-rkhs-c0}]
Every finite-span element $f_m(\mathbf{x})=\sum_{j=1}^m a_j h_\varepsilon(\mathbf{x},\mathbf{w}_j)$
vanishes at infinity since each atom decays as $O(\|\mathbf{x}\|^{-2})$. The finite span is
dense in $\mathcal{H}_{h_\varepsilon}$; if $f_m\to f$ in RKHS norm, the reproducing property gives
\[
\sup_{\mathbf{x}\in\mathbb{R}^d}|f_m(\mathbf{x})-f(\mathbf{x})|
\le
\sup_\mathbf{x}\sqrt{h_\varepsilon(\mathbf{x},\mathbf{x})}\,\|f_m-f\|_{\mathcal{H}_{h_\varepsilon}}
=\varepsilon^{-1/2}\|f_m-f\|_{\mathcal{H}_{h_\varepsilon}},
\]
so $f_m\to f$ uniformly. Since $C_0(\mathbb{R}^d)$ is closed under uniform limits,
$f\in C_0(\mathbb{R}^d)$.
\end{proof}

\begin{theorem}[Yat RKHS channel decomposition]
\label{thm:channel-decomp}
Write $k_{b,\varepsilon}=k_0+k_1+k_2$ where
$k_0:=b^2 h_\varepsilon$,
$k_1:=2b(\mathbf{x}^\top\mathbf{w})h_\varepsilon$,
$k_2:=(\mathbf{x}^\top\mathbf{w})^2 h_\varepsilon$.
By the RKHS sum rule~\citep{paulsen2016introduction,berlinet2004rkhs},
\[
\mathcal{H}_{b,\varepsilon}=\mathcal{H}_{k_0}+\mathcal{H}_{k_1}+\mathcal{H}_{k_2},
\]
with squared infimal-convolution norm
\[
\|f\|_{\mathcal{H}_{b,\varepsilon}}^2
=\inf_{\substack{f_0+f_1+f_2=f\\f_i\in\mathcal{H}_{k_i}}}
\bigl(\|f_0\|_{\mathcal{H}_{k_0}}^2+\|f_1\|_{\mathcal{H}_{k_1}}^2+\|f_2\|_{\mathcal{H}_{k_2}}^2\bigr).
\]
\end{theorem}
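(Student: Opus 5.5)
The plan is to verify that each summand is a PSD kernel on $\R^d$ and then invoke the general RKHS sum rule. That rule has two parts: the range of the sum map $(f_0,f_1,f_2)\mapsto f_0+f_1+f_2$ is the RKHS of the sum kernel, and the norm is the quotient norm. I would recall the sum rule in its standard form and spell out the quotient-norm mechanism, because it also covers degenerate channels such as $b=0$.

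\emph{Step 1: positivity of the channels.} By the Laplace representation~\eqref{eq:laplace-rep-app}, $h_\varepsilon$ is PSD. The linear kernel $\mathbf{x}^\top\mathbf{w}$ is PSD. Its Hadamard square $(\mathbf{x}^\top\mathbf{w})^2$ is PSD by the Schur product theorem (S), or directly via the feature map $\mathrm{vec}(\mathbf{x}\mathbf{x}^\top)$. Since $b\ge0$, the coefficients $b^2$ and $2b$ are nonnegative. Another application of (S) then shows that $k_0,k_1,k_2$ are PSD, and each is continuous because $D_\varepsilon\ge\varepsilon$. When $b=0$, the kernels $k_0$ and $k_1$ vanish identically. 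The convention stated in Section~\ref{sec:psd} assigns them the RKHS $\{0\}$, so those summands drop out and the statement reduces to $\mathcal H_{0,\varepsilon}=\mathcal H_{k_2}$.

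\emph{Step 2: the sum rule.} Form the product Hilbert space $\mathcal{E}=\mathcal{H}_{k_0}\oplus\mathcal{H}_{k_1}\oplus\mathcal{H}_{k_2}$ and the linear map $S(f_0,f_1,f_2)=f_0+f_1+f_2$. By the reproducing property in each factor, $S$ is the evaluation-type map defined by the feature vector
\[
\Psi(\mathbf{x})=\bigl(k_0(\cdot,\mathbf{x}),k_1(\cdot,\mathbf{x}),k_2(\cdot,\mathbf{x})\bigr)\in\mathcal{E},
\]
that is, $(SF)(\mathbf{x})=\langle F,\Psi(\mathbf{x})\rangle_{\mathcal E}$. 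Moreover, $\langle\Psi(\mathbf{x}),\Psi(\mathbf{w})\rangle_{\mathcal E}=k_0+k_1+k_2=k_{b,\varepsilon}$. The feature-map characterization of RKHSs (Aronszajn; \citet[Thm.~4.21]{steinwart2008support}) then identifies $\mathcal{H}_{b,\varepsilon}$ with the range of $S$. It equips this range with the norm $\|f\|=\inf\{\|F\|_{\mathcal E}:SF=f\}$, which is the norm of $(\ker S)^\perp\cong\mathcal E/\ker S$. Since $\|F\|_{\mathcal E}^2=\sum_i\|f_i\|^2_{\mathcal H_{k_i}}$, this gives exactly the stated infimal-convolution formula. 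The infimum is attained at the orthogonal projection of any representer onto $(\ker S)^\perp$, so it is a minimum. That matches the paper's phrase ``minimum sum of squared component norms.''

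\emph{Main obstacle.} The only delicate point is that $\ker S$ must be closed, so that the quotient is a Hilbert space and the infimum is attained. Closedness follows because $S$ is pointwise continuous: $|(SF)(\mathbf{x})|\le\|F\|\sqrt{k_{b,\varepsilon}(\mathbf{x},\mathbf{x})}$, so $\ker S=\bigcap_{\mathbf{x}}\{F:\langle F,\Psi(\mathbf{x})\rangle=0\}$ is an intersection of closed hyperplanes. We also need the reproducing property of $\mathcal H_{b,\varepsilon}$ in the quotient norm, $f(\mathbf{x})=\langle f,k_{b,\varepsilon}(\cdot,\mathbf{x})\rangle$. This holds because $\Psi(\mathbf{x})\in(\ker S)^\perp$ and $S\Psi(\mathbf{x})=k_{b,\varepsilon}(\cdot,\mathbf{x})$. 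Uniqueness in Moore--Aronszajn (Theorem~\ref{thm:psd-mercer}) then identifies this space with $\mathcal H_{b,\varepsilon}$.
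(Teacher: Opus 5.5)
Your proposal is correct and follows the paper's route. Both arguments check that each channel is PSD via the Laplace representation and Schur products, using $b\ge0$ for the linear channel, and then apply the RKHS sum rule. The only difference is that you prove the sum rule for all three summands at once, through the direct-sum feature map $\Psi$ and the quotient norm on $(\ker S)^\perp$, whereas the paper cites the two-summand rule and extends it by induction; your version additionally makes explicit that the infimum is attained.
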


\begin{proof}[Proof of Theorem~\ref{thm:channel-decomp}]
The RKHS sum rule states that $\mathcal{H}_{k_1+k_2}=\mathcal{H}_{k_1}+\mathcal{H}_{k_2}$
with the infimal-convolution norm~\citep{paulsen2016introduction}; extended to three summands by
induction. Each $k_i$ is PSD: $k_0=b^2h_\varepsilon$ since $h_\varepsilon$ is PSD and $b^2\ge0$;
$k_1=2b(\mathbf{x}^\top\mathbf{w})h_\varepsilon$ because the dot-product kernel
$\mathbf{x}^\top\mathbf{w}$ is PSD ($\sum_{ij}c_ic_j\mathbf{x}_i^\top\mathbf{x}_j=\|\sum_ic_i\mathbf{x}_i\|^2\ge0$),
$h_\varepsilon$ is PSD, their Schur product is PSD by the Schur product theorem, and $2b\ge0$
(the channel vanishes at $b=0$); $k_2=(\mathbf{x}^\top\mathbf{w})^2h_\varepsilon$ by the Schur
product theorem applied to $(\mathbf{x}^\top\mathbf{w})^2$ and $h_\varepsilon$, both PSD.
The RKHS sum rule therefore applies.
\end{proof}

The three channels are: the \emph{radial IMQ channel} ($k_0=b^2h_\varepsilon$; as sets
$\mathcal{H}_{k_0}=\mathcal{H}_{h_\varepsilon}$ with rescaled norm
$\|f\|_{\mathcal{H}_{k_0}}=b^{-1}\|f\|_{\mathcal{H}_{h_\varepsilon}}$ (constant-rescaling identity for RKHS norms,~\citealt{paulsen2016introduction}), present when $b>0$),
the \emph{linear-alignment IMQ channel} ($k_1=2b(\mathbf{x}^\top\mathbf{w})h_\varepsilon$,
vanishes at $b=0$), and the \emph{quadratic-alignment IMQ channel}
($k_2=(\mathbf{x}^\top\mathbf{w})^2h_\varepsilon$, carrying the far-field trace, present for all $b\ge0$).
Under the zero-kernel convention fixed above, both $k_0$ and $k_1$ vanish at $b=0$:
the RKHS loses the IMQ subspace and the constant coordinate, which is the
degeneracy responsible for failure of fixed-kernel universality on domains containing
the origin. Equivalently, $k_{0,\varepsilon}(\mathbf{0},\mathbf{0})=0$ forces every
$f\in\mathcal{H}_{0,\varepsilon}$ to satisfy $f(\mathbf{0})=0$. The channel
decomposition and the vanishing diagonal thus identify the same endpoint degeneracy.

\begin{proof}[Proof of Proposition~\ref{prop:trace-sep}]
\emph{Yat limit.} For~\eqref{eq:trace-yat}, write
\[
g_\varepsilon(r\mathbf{u};\mathbf{w},b)
\;=\; \frac{(r\,\mathbf{u}^\top\mathbf{w}+b)^2}{r^2-2r\,\mathbf{u}^\top\mathbf{w}+\|\mathbf{w}\|^2+\varepsilon}
\;=\; \frac{(\mathbf{u}^\top\mathbf{w}+b/r)^2}{1-2(\mathbf{u}^\top\mathbf{w})/r+(\|\mathbf{w}\|^2+\varepsilon)/r^2}
\;\xrightarrow[r\to\infty]{}\; (\mathbf{u}^\top\mathbf{w})^2,
\]
uniformly in $\mathbf{u}\in\Sph^{d-1}$.

\emph{IMQ decay.} For each fixed center $\mathbf{w}_i$,
$|k_\IMQ^\varepsilon(r\mathbf{u},\mathbf{w}_i)|\le(r^2-2r\|\mathbf{w}_i\|+\|\mathbf{w}_i\|^2+\varepsilon)^{-1}=O(r^{-2})$ uniformly on the sphere. Hence every finite combination $F=\sum_i c_i k_\IMQ^\varepsilon(\cdot,\mathbf{w}_i)$ satisfies $T_\infty F\equiv 0$.

\emph{Range of the trace.} The image of $T_\infty$ contains every rank-one quadratic form $(\mathbf{u}^\top\mathbf{w})^2=\mathbf{u}^\top(\mathbf{w}\mathbf{w}^\top)\mathbf{u}$. The symmetric rank-one tensors $\mathbf{w}\mathbf{w}^\top$ span $\mathrm{Sym}(d)$: the diagonal generators are $\mathbf{e}_i\mathbf{e}_i^\top$, and the off-diagonal generators follow from $\tfrac12((\mathbf{e}_i+\mathbf{e}_j)(\mathbf{e}_i+\mathbf{e}_j)^\top-\mathbf{e}_i\mathbf{e}_i^\top-\mathbf{e}_j\mathbf{e}_j^\top)$. Therefore $\mathrm{Im}(T_\infty)=\mathcal{Q}_2(\Sph^{d-1})$.
\end{proof}

\begin{proof}[Proof of Corollary~\ref{cor:strict-order}]
The inclusion $\mathcal{H}_{h_\varepsilon}\subseteq\mathcal{H}_{b,\varepsilon}$ is
Theorem~\ref{thm:kernel-order}. For strictness, choose $\mathbf{w}\neq\mathbf{0}$ and any
$\mathbf{u}$ with $\mathbf{u}^\top\mathbf{w}\neq0$: the far-field trace gives
$\lim_{r\to\infty}k_{b,\varepsilon}(\mathbf{w},r\mathbf{u})=(\mathbf{u}^\top\mathbf{w})^2\neq0$,
so $k_{b,\varepsilon}(\mathbf{w},\cdot)\notin C_0(\mathbb{R}^d)$ while, by
Lemma~\ref{lem:imq-rkhs-c0}, $\mathcal{H}_{h_\varepsilon}\subset C_0(\mathbb{R}^d)$.
Hence $k_{b,\varepsilon}(\mathbf{w},\cdot)\in\mathcal{H}_{b,\varepsilon}\setminus\mathcal{H}_{h_\varepsilon}$.
Non-reversibility: if $k_{b,\varepsilon}\preceq C\,h_\varepsilon$, Aronszajn inclusion would give
$\mathcal{H}_{b,\varepsilon}\subseteq\mathcal{H}_{h_\varepsilon}\subset C_0(\mathbb{R}^d)$, contradicting
the existence of Yat sections not vanishing at infinity.
\end{proof}

\begin{corollary}[Concrete large-radius regression gap]
\label{cor:directional-gap}
Fix $\mathbf{w}\neq\mathbf{0}$, $b\in\R$, $\varepsilon>0$, and choose
$\mathbf{u}\in\Sph^{d-1}$ with $a:=(\mathbf{u}^\top\mathbf{w})^2>0$.
Let $f(\mathbf{x})=g_\varepsilon(\mathbf{x};\mathbf{w},b)$. Then for
every finite IMQ combination $F\in F_{\IMQ,\varepsilon}$,
\[
|f(r\mathbf{u})-F(r\mathbf{u})|\;\xrightarrow[r\to\infty]{}\; a.
\]
Hence for every $\eta\in(0,a)$ there exists $R_\eta$ such that
$\sup_{r\ge R_\eta}|f(r\mathbf{u})-F(r\mathbf{u})|\ge a-\eta$.
In particular, a single Yat atom carries an $O(1)$ directional signal on
a sufficiently large exterior ray that no finite IMQ expansion can
uniformly match there.
\end{corollary}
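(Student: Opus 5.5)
This is essentially a direct consequence of Proposition~\ref{prop:trace-sep}, combined with the linearity of limits. First I would record the two radial limits separately along the fixed direction $\mathbf u$. By~\eqref{eq:trace-yat}, $f(r\mathbf u)=g_\varepsilon(r\mathbf u;\mathbf w,b)\to(\mathbf u^\top\mathbf w)^2=a$ as $r\to\infty$; that proposition allows every $b\in\R$, so negative bias causes no trouble. For the IMQ side I would write $F=\sum_{i=1}^m c_i h_\varepsilon(\cdot,\mathbf w_i)$ and use the bound from the IMQ-decay step of the same proof, namely $0\le h_\varepsilon(r\mathbf u,\mathbf w_i)\le (r-\|\mathbf w_i\|)^{-2}$ for $r>\|\mathbf w_i\|$. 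This gives $|F(r\mathbf u)|\le\sum_i|c_i|(r-\max_i\|\mathbf w_i\|)^{-2}\to0$.

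Next, $f(r\mathbf u)-F(r\mathbf u)\to a-0=a$. Continuity of $|\cdot|$ then yields $|f(r\mathbf u)-F(r\mathbf u)|\to|a|=a$, since $a>0$. For the quantitative form, fix $\eta\in(0,a)$. The definition of the limit supplies $R_\eta$ such that $|f(r\mathbf u)-F(r\mathbf u)|\ge a-\eta$ for every $r\ge R_\eta$. This pointwise bound is stronger than the stated supremum bound, which follows at once.

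There is no real obstacle. The only point needing care is the dependence of $R_\eta$ on $F$, through its centers and coefficients. I would state explicitly that $R_\eta$ may depend on $F$, which the wording ``for every $F$ \dots there exists $R_\eta$'' permits. The closing sentence, that no finite IMQ expansion uniformly matches $f$ on a sufficiently large exterior ray, is then just a restatement: any candidate $F$ leaves an error of at least $a-\eta>0$ on $[R_\eta,\infty)\mathbf u$.
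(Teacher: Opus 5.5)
Your proposal is correct and matches the paper's proof: both take $f(r\mathbf u)\to a$ and $F(r\mathbf u)\to 0$ from Proposition~\ref{prop:trace-sep}, then choose $R_\eta$ from these limits to get $|f(r\mathbf u)-F(r\mathbf u)|\ge a-\eta$ for every $r\ge R_\eta$, which gives the supremum bound. The paper picks $R_\eta$ through an $\eta/2$ split, while you apply continuity of $|\cdot|$ to the difference; this is a cosmetic difference, and your explicit decay bound for $F$ and your note that $R_\eta$ depends on $F$ are both accurate.
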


\begin{proof}[Proof of Corollary~\ref{cor:directional-gap}]
Proposition~\ref{prop:trace-sep} gives $f(r\mathbf{u})\to a$ and
$F(r\mathbf{u})\to 0$, hence $f(r\mathbf{u})-F(r\mathbf{u})\to a$.
Given $\eta\in(0,a)$, choose $R_\eta$ so that for all $r\ge R_\eta$:
$|f(r\mathbf{u})-a|<\eta/2$ and $|F(r\mathbf{u})|<\eta/2$. Then
$|f(r\mathbf{u})-F(r\mathbf{u})|\ge a-\eta$, and the supremum over
$r\ge R_\eta$ preserves the bound.
\end{proof}

\begin{proposition}[Far-field gradient asymptotics]
\label{prop:farfield-gradient}
Fix $\mathbf{w}\in\R^d\setminus\{\mathbf{0}\}$, $b\ge 0$, $\varepsilon>0$, and a unit
direction $\mathbf{u}\in\Sph^{d-1}$ with $\mathbf{u}^\top\mathbf{w}\neq 0$. Along the ray
$\mathbf{x}=r\mathbf{u}$, the center-gradients of the IMQ section
$h_\varepsilon(\mathbf{x},\mathbf{w})$ and the Yat section
$k_{b,\varepsilon}(\mathbf{w},\mathbf{x})$ satisfy
\[
\bigl\|\nabla_\mathbf{w}\,h_\varepsilon(r\mathbf{u},\mathbf{w})\bigr\|=\Theta(r^{-3}),
\qquad
\lim_{r\to\infty}\nabla_\mathbf{w}\,k_{b,\varepsilon}(\mathbf{w},r\mathbf{u})
=2(\mathbf{u}^\top\mathbf{w})\,\mathbf{u}.
\]
\end{proposition}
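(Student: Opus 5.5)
The plan is to compute both center-gradients in closed form and then read off their behaviour along the ray $\mathbf x=r\mathbf u$. Throughout, write $D:=D_\varepsilon(r\mathbf u,\mathbf w)=r^2-2r\,\mathbf u^\top\mathbf w+\|\mathbf w\|^2+\varepsilon$, and note that $D/r^2\to1$.

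\emph{IMQ gradient.} Since $\nabla_{\mathbf w}D=2(\mathbf w-\mathbf x)$, we have
\[
\nabla_{\mathbf w}h_\varepsilon=-\frac{2(\mathbf w-\mathbf x)}{D^2}=\frac{2(r\mathbf u-\mathbf w)}{D^2}.
\]
Its norm is $2\|r\mathbf u-\mathbf w\|/D^2$. The numerator lies between $2(r-\|\mathbf w\|)$ and $2(r+\|\mathbf w\|)$, so it is asymptotic to $2r$. The denominator satisfies $D^2\sim r^4$. Hence the norm is asymptotic to $2r^{-3}$, which gives the $\Theta(r^{-3})$ claim, with explicit two-sided constants for all $r\ge 2\|\mathbf w\|$. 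This part does not need the hypothesis $\mathbf u^\top\mathbf w\neq0$.

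\emph{Yat gradient.} Let $N:=(r\,\mathbf u^\top\mathbf w+b)^2$, so that $k_{b,\varepsilon}(\mathbf w,r\mathbf u)=N/D$. The quotient rule gives
\[
\nabla_{\mathbf w}k_{b,\varepsilon}=\frac{2(r\,\mathbf u^\top\mathbf w+b)\,r\mathbf u}{D}-\frac{N\cdot 2(\mathbf w-r\mathbf u)}{D^2}.
\]
We treat the two terms separately.

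\begin{itemize}
\item First term: dividing numerator and denominator by $r^2$, it equals $2(\mathbf u^\top\mathbf w+b/r)\mathbf u\,/\,(D/r^2)$, which tends to $2(\mathbf u^\top\mathbf w)\mathbf u$.
\item Second term: it equals $-2(N/D)\,(\mathbf w-r\mathbf u)/D$. Proposition~\ref{prop:trace-sep} gives $N/D\to(\mathbf u^\top\mathbf w)^2$, which is bounded. Meanwhile $(\mathbf w-r\mathbf u)/D=-\mathbf u/r+O(r^{-2})$, which tends to $0$. So the second term vanishes.
\end{itemize}

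Adding the two limits gives $2(\mathbf u^\top\mathbf w)\mathbf u$, which is nonzero by the hypothesis $\mathbf u^\top\mathbf w\neq0$.

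The main obstacle is the second Yat term. At first sight it looks like order $r\cdot r^2/r^4$ with the $r^2$ coming from $N$. Rewriting it as (bounded ratio $N/D$) times $(\mathbf w-r\mathbf u)/D=O(1/r)$ shows it is genuinely $O(1/r)$. It is worth checking that the leading contributions of order $r$ do not fail to cancel elsewhere. They do not: the first term is already $O(1)$ after the normalization above. No other step requires more than routine expansion in $1/r$.
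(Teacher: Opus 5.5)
Your proposal is correct and follows the paper's proof essentially step for step. Both compute the center-gradients in closed form via the quotient rule and read off the ray asymptotics term by term: the IMQ gradient norm behaves like $2r^{-3}$, and the second Yat term vanishes as $O(r^{-1})$. Your factorization of that second term as $(N/D)\cdot(\mathbf w-r\mathbf u)/D$, using Proposition~\ref{prop:trace-sep}, is a slightly tidier version of the paper's $O(r^3)/O(r^4)$ degree count.
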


\begin{remark}
The proposition records an asymptotic difference in the center-gradient of a single Yat
atom versus a single IMQ atom at large $\|\mathbf{x}\|$, evaluated in isolation. It is a
statement about the kernel itself, not about gradients of trained networks: in a finite
expansion the per-atom gradients combine through the readout coefficients
$\boldsymbol{\alpha}$, and far-field signal can be informative or uninformative depending
on where the data actually live.
\end{remark}

\begin{proof}[Proof of Proposition~\ref{prop:farfield-gradient}]
For the IMQ atom, $\nabla_\mathbf{w} h_\varepsilon=2(\mathbf{x}-\mathbf{w})/(\|\mathbf{x}-\mathbf{w}\|^2+\varepsilon)^2$.
With $\mathbf{x}=r\mathbf{u}$, the numerator $\|r\mathbf{u}-\mathbf{w}\|=r+O(1)$ and the
denominator $(r^2-2r\,\mathbf{u}^\top\mathbf{w}+\|\mathbf{w}\|^2+\varepsilon)^2=r^4+O(r^3)$, so
$\|\nabla_\mathbf{w}h_\varepsilon\|=2r^{-3}(1+O(r^{-1}))=\Theta(r^{-3})$.

For the Yat atom, set $N(\mathbf{x}):=(\mathbf{w}^\top\mathbf{x}+b)^2$ and
$D(\mathbf{x}):=\|\mathbf{x}-\mathbf{w}\|^2+\varepsilon$. The quotient rule gives
\[
\nabla_\mathbf{w}\,k_{b,\varepsilon}(\mathbf{w},\mathbf{x})
=\frac{2(\mathbf{w}^\top\mathbf{x}+b)\,\mathbf{x}}{D(\mathbf{x})}
+\frac{2(\mathbf{w}^\top\mathbf{x}+b)^2(\mathbf{x}-\mathbf{w})}{D(\mathbf{x})^2}.
\]
Substituting $\mathbf{x}=r\mathbf{u}$, the first term equals
$2(r\mathbf{u}^\top\mathbf{w}+b)\,r\mathbf{u}/D(r\mathbf{u})$. The leading-order behavior
of numerator and denominator is $2r^2(\mathbf{u}^\top\mathbf{w})\mathbf{u}+O(r)$ over
$r^2+O(r)$, giving the limit $2(\mathbf{u}^\top\mathbf{w})\mathbf{u}$. The second term has
numerator $2(r\mathbf{u}^\top\mathbf{w}+b)^2(r\mathbf{u}-\mathbf{w})=O(r^3)$ over
denominator $D(r\mathbf{u})^2=O(r^4)$, giving $O(r^{-1})\to 0$. Summing yields the stated
limit.
\end{proof}

\subsection{Gaussian and fixed-norm activation covariance}
\label{app:covariance-proof}
The input distribution determines the leading radial covariance. Under isotropic
Gaussian input, IMQ units share the fluctuation of $\|\mathbf x\|^2$ and their leading
covariance is rank one, while the Yat numerator retains order-one relative variation.
On the fixed-norm sphere, the shared radial fluctuation vanishes and center-specific
projections give both kernels full-rank limiting covariance at orthogonal centers.
Related high-dimensional linearization and
low-degree phenomena are well documented: \citet{elkaroui2010kernel} shows that broad
classes of radial kernel random matrices become essentially linear on isotropic
high-dimensional inputs, while \citet{ghorbani2019linearized} derives low-degree
approximation limits for rotationally invariant random-feature and neural-tangent
models. The proposition below gives a different, activation-level statement for the
present IMQ/Yat pair: under isotropic Gaussian inputs, the Yat numerator
$(\mathbf{x}^\top\mathbf{w}+b)^2$ depends on a single 1-D projection and resists the
collapse.

\begin{proposition}[High-dimensional variance preservation]
\label{prop:hd-variance}
Let $\mathbf{x}\sim\mathcal{N}(\mathbf{0},\mathbf{I}_d)$, let $\mathbf{w}\in\R^d$ have unit
norm $\|\mathbf{w}\|=1$, and fix $b\ge 0$, $\varepsilon>0$. For a positive random variable
$Y$, write $\mathrm{CV}[Y]:=\sqrt{\mathrm{Var}[Y]}/\mathbb{E}[Y]$. As $d\to\infty$,
\[
\sqrt d\,\mathrm{CV}[h_\varepsilon(\mathbf{x},\mathbf{w})]\longrightarrow\sqrt2,
\qquad
\mathrm{CV}[k_{b,\varepsilon}(\mathbf{w},\mathbf{x})]
\longrightarrow\frac{\sqrt{2+4b^2}}{1+b^2}.
\]
\end{proposition}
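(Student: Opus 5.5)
The plan is to reduce everything to one Gaussian coordinate and an independent chi-square. Decompose $\mathbf x=z\mathbf w+\mathbf x_\perp$ with $z=\mathbf w^\top\mathbf x\sim\mathcal N(0,1)$ and $S:=\|\mathbf x_\perp\|^2\sim\chi^2_{d-1}$ independent of $z$. Then
\[
D:=D_\varepsilon(\mathbf x,\mathbf w)=S+(z-1)^2+\varepsilon,
\qquad
h_\varepsilon=\frac1D,
\qquad
k_{b,\varepsilon}=\frac{(z+b)^2}{D}.
\]
So $\mathbb E D=d+1+\varepsilon$ and $\operatorname{Var}D=2(d-1)+O(1)$. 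We also have $\mathbb E(D-d)^4=O(d^2)$, and the deterministic bound $D\ge\varepsilon$ holds. Standard chi-square concentration gives $\Pr(D<d/2)\le e^{-cd}$. All remainder estimates below split on the good event $G=\{D\ge d/2\}$ and its exponentially small complement. On $G^c$, the integrands are controlled using $1/D\le1/\varepsilon$ together with Gaussian moments of $z$.

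\emph{IMQ part.} I will use the exact second-order expansion
\[
\frac1D=\frac1d-\frac{D-d}{d^2}+R,
\qquad
R:=\frac{(D-d)^2}{d^2D}.
\]
For the linear term, $\operatorname{Var}\bigl((D-d)/d^2\bigr)=\operatorname{Var}(D)/d^4=2d^{-3}(1+o(1))$.

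For the remainder, $\mathbb E R^2\le 4\,\mathbb E(D-d)^4/d^6+\varepsilon^{-2}\,\mathbb E\bigl[(D-d)^4\mathbf 1_{G^c}\bigr]/d^4$. By Cauchy--Schwarz on the second piece this is $O(d^{-4})=o(d^{-3})$. Minkowski's inequality for standard deviations then gives $\operatorname{Var}(1/D)=2d^{-3}(1+o(1))$. Similarly $\mathbb E(1/D)=d^{-1}(1+o(1))$, since $\mathbb E R=O(d^{-2})$ and $\mathbb E(D-d)=O(1)$. Dividing yields $\sqrt d\,\mathrm{CV}[h_\varepsilon]\to\sqrt2$.

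\emph{Yat part.} Since CV is scale invariant, I will study $d\,k_{b,\varepsilon}=(z+b)^2\cdot(d/D)$. The target is to show that this converges in $L^2$ to $Q:=(z+b)^2$.

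Write $d\,k-Q=Q\,(d-D)/D$. On $G$, Cauchy--Schwarz gives
\[
\mathbb E\bigl[Q^2(d-D)^2/D^2;\,G\bigr]\le\frac{4}{d^2}\bigl(\mathbb E Q^4\bigr)^{1/2}\bigl(\mathbb E(D-d)^4\bigr)^{1/2}=O(d^{-1}).
\]
Independence is not needed here, which is convenient because $D$ contains $(z-1)^2$. On $G^c$, use $d/D\le d/\varepsilon$ and Hölder's inequality with $\Pr(G^c)\le e^{-cd}$; this piece is $O(d^2e^{-cd/2})$.

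$L^2$ convergence implies convergence of the mean and of the variance. The remaining step is a Gaussian moment computation:
\[
\mathbb E Q=1+b^2,
\qquad
\mathbb E Q^2=\mathbb E(z+b)^4=3+6b^2+b^4,
\qquad
\operatorname{Var}Q=2+4b^2.
\]
This gives $\mathrm{CV}\to\sqrt{2+4b^2}/(1+b^2)$.

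The main obstacle I expect is the IMQ remainder. The target variance is only of order $d^{-3}$, so the second-order remainder must be shown to be genuinely $o(d^{-3})$ in $L^2$, not merely small. The fourth-moment bound $\mathbb E(D-d)^4=O(d^2)$, combined with the lower bound $D\ge d/2$ on $G$, is exactly what makes this work. The coupling between $z$ and $D$ is harmless because every estimate uses only Cauchy--Schwarz or Hölder with finite Gaussian and chi-square moments.
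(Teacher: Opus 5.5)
Your proof is correct. The Yat half is essentially the paper's argument. The paper also proves $d\,k_{b,\varepsilon}\to(x_1+b)^2$ in $L^2$ by H\"older, using control of $d/D$ obtained from the split on $\{D\ge d/2\}$, and it also remarks that no independence between the numerator and $D$ is needed.

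The IMQ half takes a genuinely different route. The paper writes $d^{3/2}(1/D-1/\mu_D)=-\frac{D-\mu_D}{\sqrt d}\frac{d^2}{D\mu_D}$ and invokes the central limit theorem $(D-\mu_D)/\sqrt d\Rightarrow\mathcal N(0,2)$ for the noncentral chi-square. It then proves $d/D\to1$ in every $L^q$ and uses uniform integrability to upgrade distributional convergence to convergence of first and second moments.

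You instead use the exact second-order expansion. Its linear term has a deterministic coefficient, so its variance is exactly $\operatorname{Var}(D)/d^4=(2d+4)/d^4$. You then show the remainder is $O(d^{-2})$ in $L^2$ and conclude by Minkowski.

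Your route needs no CLT and no uniform-integrability step. It uses only fourth moments of $D$ (eighth for the tail piece) and the lower-tail bound, and it gives an explicit rate $\sqrt d\,\mathrm{CV}[h_\varepsilon]=\sqrt2+O(d^{-1/2})$.

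The paper's route yields more than the variance, namely the Gaussian limit law of the rescaled fluctuation. It is also phrased so that Slutsky's theorem carries it jointly over several centers in Theorem~\ref{thm:activation-covariance}(i). Your expansion would extend there as well, since $\operatorname{Cov}(D_i,D_j)=2d+4C_{ij}$ exactly. It is closer in spirit to the paper's $L^2$-remainder proof of the spherical case (ii).
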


\begin{proof}
By rotational invariance of the standard Gaussian, take $\mathbf{w}=\mathbf{e}_1$. Then
$\mathbf{w}^\top\mathbf{x}=x_1\sim\mathcal{N}(0,1)$, and the regularized distance
$D:=\|\mathbf{x}-\mathbf{w}\|^2+\varepsilon=(x_1-1)^2+\sum_{i=2}^d x_i^2+\varepsilon$ satisfies
$D-\varepsilon\sim\chi_d^2(\lambda=1)$ (non-central chi-squared, $d$ degrees of freedom,
non-centrality $\|\mathbf{w}\|^2=1$). Standard moment formulas give
$\mu_D:=\mathbb{E}[D]=d+1+\varepsilon$ and $\sigma_D^2:=\mathrm{Var}[D]=2d+4$.

\emph{Inverse-distance concentration.}
The non-central chi-square law gives
\[
\frac{D}{d}\longrightarrow 1
\quad\text{and}\quad
\frac{D-\mu_D}{\sqrt d}\Longrightarrow \mathcal{N}(0,2).
\]
The required moment convergence follows from an explicit event split. On
$\mathcal{E}_d:=\{D\ge d/2\}$, $d/D\le2$. On $\mathcal{E}_d^c$, the deterministic
floor $D\ge\varepsilon$ and a non-central chi-square lower-tail bound give, for every
fixed $q<\infty$,
$\mathbb{E}[(d/D)^q\mathbf{1}_{\mathcal{E}_d^c}]\le(d/\varepsilon)^q e^{-cd}=o(1)$.
Since $D/d\to1$ in probability, this proves $d/D\to1$ in $L^q$ for every fixed
$q$. Moreover, the standardized non-central chi-square variables
$(D-\mu_D)/\sqrt d$ have uniformly bounded moments of every fixed order. These two
facts make the squared variables in the following delta-method limit uniformly
integrable:
\[
d^{3/2}\left(\frac1D-\frac1{\mu_D}\right)
=-\frac{D-\mu_D}{\sqrt d}\frac{d^2}{D\mu_D}
\Longrightarrow \mathcal{N}(0,2),
\]
with convergence of second moments. Consequently,
\[
\mathbb{E}[h_\varepsilon]=d^{-1}(1+o(1)),
\qquad
\mathrm{Var}[h_\varepsilon]=2d^{-3}(1+o(1)),
\]
and hence $\sqrt d\,\mathrm{CV}[h_\varepsilon]\to\sqrt2$.

\emph{Yat unit.}
Let $N:=(x_1+b)^2$. Then $k_{b,\varepsilon}=N/D$,
$\mathbb{E}N=1+b^2$, and $\mathrm{Var}(N)=2+4b^2>0$.
The $L^4$ convergence $d/D\to1$ and H\"older's inequality give
$\mathbb{E}[N^2(d/D-1)^2]\le(\mathbb{E}N^4)^{1/2}
(\mathbb{E}|d/D-1|^4)^{1/2}\to0$. Hence
\[
\frac{dN}{D}\longrightarrow N\qquad\text{in }L^2.
\]
No independence between $N$ and $D$ is required. Taking first and second moments,
\[
d\,\mathbb{E}[k_{b,\varepsilon}]\longrightarrow 1+b^2,
\qquad
d^2\,\mathrm{Var}[k_{b,\varepsilon}]\longrightarrow 2+4b^2.
\]
Therefore
$\mathrm{CV}[k_{b,\varepsilon}]\to\sqrt{2+4b^2}/(1+b^2)$.
\end{proof}

\begin{proof}[Proof of Theorem~\ref{thm:activation-covariance}(i)]
Write $Z_{j,d}:=\mathbf{w}_{j,d}^\top\mathbf{x}_d$,
$S_d:=\|\mathbf{x}_d\|^2$, and
$D_{j,d}:=S_d-2Z_{j,d}+1+\varepsilon$. The vector
$\mathbf{Z}_d=(Z_{1,d},\ldots,Z_{m,d})$ is Gaussian with covariance
$\mathbf{C}_d$ and therefore converges in distribution, with convergence of every
fixed joint moment, to $\mathbf{Z}\sim\mathcal{N}(\mathbf{0},\mathbf{C})$.

\emph{Radial covariance.}
Let $\mu_d:=d+1+\varepsilon$ and $G_d:=(S_d-d)/\sqrt d$. The same lower-tail
event split used above, applied to the fixed collection $D_{1,d},\ldots,D_{m,d}$,
gives $d/D_{j,d}\to1$ in every fixed $L^q$. Since
\[
\frac{D_{j,d}-\mu_d}{\sqrt d}=G_d-\frac{2Z_{j,d}}{\sqrt d},
\]
the second term vanishes in every fixed $L^q$, while $G_d\Rightarrow
G\sim\mathcal{N}(0,2)$ with convergence of moments. Since $m$ is fixed,
$d/D_{j,d}\to1$ in probability holds jointly over $j=1,\ldots,m$. Slutsky's theorem
therefore gives the joint distributional convergence
\[
d^{3/2}\left(\frac1{D_{j,d}}-\frac1{\mu_d}\right)
=-\frac{D_{j,d}-\mu_d}{\sqrt d}\frac{d^2}{D_{j,d}\mu_d}
\Longrightarrow -G
\qquad (j=1,\ldots,m).
\]
This convergence also passes to cross-moments. Indeed, the lower-tail event split gives
uniform bounds on $d/D_{j,d}$ in every fixed $L^q$, while
$(D_{j,d}-\mu_d)/\sqrt d$ has uniformly bounded moments of every fixed order.
H\"older's inequality therefore bounds the displayed variables uniformly in
$L^{2+\eta}$ for some $\eta>0$. Their products are uniformly integrable, so joint
distributional convergence implies convergence of first and cross-moments. Since
$\mathbb{E}G=0$, centering therefore yields
$d^3\operatorname{Cov}(D_{i,d}^{-1},D_{j,d}^{-1})\to
\mathbb{E}G^2=2$ for every $i,j$.

\emph{Yat covariance.}
Set $Q_{j,d}:=(Z_{j,d}+b)^2$. Joint convergence of $\mathbf Z_d$, the continuous
mapping theorem, and the joint convergence $d/D_{j,d}\to1$ in probability give
\[
d\,k_{b,\varepsilon}(\mathbf{w}_{j,d},\mathbf{x}_d)
=Q_{j,d}\frac d{D_{j,d}}\Longrightarrow (Z_j+b)^2
\qquad\text{jointly in distribution.}
\]
The fixed Gaussian moments of $Q_{j,d}$, the uniform $L^q$ bounds for $d/D_{j,d}$,
and H\"older's inequality bound the left-hand variables uniformly in
$L^{2+\eta}$ for some $\eta>0$. Their products are therefore uniformly integrable,
so first and cross-moments converge to those of $((Z_j+b)^2)_{j=1}^m$.
For jointly centered Gaussian $Z_i,Z_j$ with
$\mathbb{E}[Z_iZ_j]=C_{ij}$, Isserlis' formula gives
\[
\operatorname{Cov}\bigl((Z_i+b)^2,(Z_j+b)^2\bigr)
=2C_{ij}^2+4b^2C_{ij}.
\]
Taking the resulting covariance limits proves the second limit
in~\eqref{eq:activation-covariance-limits}.
Because $m$ is fixed, entrywise convergence is equivalent to convergence in operator
norm. When $\mathbf{C}=\mathbf{I}_m$, the two displayed eigenspectra follow directly.
Finally, the strong law applied entrywise to the first and second activation moments
gives almost-sure convergence of each empirical covariance at fixed $d$.
\end{proof}

\phantomsection
\label{app:fixed-norm-proof}
\begin{proof}[Proof of Theorem~\ref{thm:activation-covariance}(ii)]
Write $Z_{j,d}=\mathbf w_{j,d}^{\top}\mathbf x_d$ and $a=1+\varepsilon$.
The finite projection vector $\mathbf Z_d=(Z_{j,d})_{j=1}^m$ converges in distribution,
and its joint moments of every fixed order converge to those of
$\mathbf G\sim\mathcal N(\mathbf0,\mathbf C)$, by the spherical moment formula.
Since $\|\mathbf x_d\|^2=d$ exactly,
\[
D_{j,d}:=\|\mathbf x_d-\mathbf w_{j,d}\|^2+\varepsilon
=d+a-2Z_{j,d}.
\]
The bound $|Z_{j,d}|\le\sqrt d$ gives
$D_{j,d}\ge(\sqrt d-1)^2+\varepsilon$. Expanding the multiplier and using the
spherical moments gives
\[
\left\|d^2\left(H_{j,d}-\frac1d\right)-(2Z_{j,d}-a)\right\|_{L^2}
=\left\|(2Z_{j,d}-a)\left(\frac d{D_{j,d}}-1\right)\right\|_{L^2}
\longrightarrow0.
\]
Covariance is unchanged by subtracting $1/d$. Joint moment convergence of
$\mathbf Z_d$ now gives the first limit. Likewise,
\[
\left\|dY_{j,d}-(Z_{j,d}+b)^2\right\|_{L^2}
=\left\|(Z_{j,d}+b)^2\left(\frac d{D_{j,d}}-1\right)\right\|_{L^2}
\longrightarrow0.
\]
All variables in each norm live on the $d$-dimensional spherical probability space;
the Gaussian vector enters only after passing the joint moments of $\mathbf Z_d$ to
their limits.
For jointly centered Gaussian $G_i,G_j$ with covariance $C_{ij}$,
\(
\operatorname{Cov}((G_i+b)^2,(G_j+b)^2)=2C_{ij}^2+4b^2C_{ij}.
\)
This proves the second limit. Since $m$ is fixed, entrywise convergence is equivalent
to operator-norm convergence.
\end{proof}

\phantomsection
\label{app:rank-geometry-proof}
\begin{proof}[Proof of Corollary~\ref{cor:covariance-rank-geometry}]
Tensor inner products give
$\langle\mathbf v_i\otimes\mathbf v_i,\mathbf v_j\otimes\mathbf v_j\rangle
=C_{ij}^2$. Hence the lifted Gram matrix is
$2(\mathbf C\circ\mathbf C)+4b^2\mathbf C$. The rank of a Gram matrix equals
the dimension of the span of its features. The dimension bounds follow because
$\mathbf v\otimes\mathbf v$ lies in $\mathrm{Sym}(r)$; the IMQ claim follows
from its limit $4\mathbf C$.
\end{proof}

For the main-text example, let the three limiting centers be
$\mathbf e_1,\mathbf e_2,(\mathbf e_1+\mathbf e_2)/\sqrt2$.
Then $\operatorname{rank}\mathbf C=2$ while
$\det(\mathbf C\circ\mathbf C)=1/2>0$. Thus at $b=0$ the spherical IMQ
limit has rank two and the Yat limit has rank three. For $b>0$, the added term
$4b^2\mathbf C$ is PSD, so the Yat limit remains positive definite and has rank three.

\begin{remark}
The two cases of Theorem~\ref{thm:activation-covariance} separate two mechanisms. Gaussian
inputs contain an order-$\sqrt d$ norm fluctuation shared by every center, which
dominates the centered IMQ covariance. Spherical normalization removes that common
term, exposing the center-specific projection covariance $4\mathbf C$. The Yat
numerator already depends at leading order on those projections, so its covariance
limit is the same in both models. These two distributions provide analytically exact
reference regimes for studying normalized neural representations.
\end{remark}

\subsection{Further Far-Field Separations}
\label{app:far-field-separations}
\label{app:additional-theory}%
\label{app:additional-theory-bc}

We write
\[
k_{b,\varepsilon}(\mathbf{w},\mathbf{x})
=
\frac{(\mathbf{w}^\top\mathbf{x}+b)^2}{\|\mathbf{x}-\mathbf{w}\|^2+\varepsilon},
\qquad
b\ge 0,\quad \varepsilon>0.
\]
For fixed shared \((b,\varepsilon)\), let \(\mathcal H_{b,\varepsilon}\) denote the RKHS of
\(k_{b,\varepsilon}\). For \(R>0\), write
\[
B_R := \{\mathbf{x}\in\R^d:\|\mathbf{x}\|\le R\},
\qquad
\Sph^{d-1}:=\{\mathbf{u}\in\R^d:\|\mathbf{u}\|=1\}.
\]

This appendix develops two radial-versus-Yat comparisons: asymptotic $L^\infty$ and
$L^2$ bounds on exterior shells, and a polynomial-separation gap on domains containing
a line segment. Both isolate structural differences under explicit coefficient and
center constraints.

\subsubsection{Asymptotic exterior-shell separation}
\label{subsec:exterior-shell-asymptotic}

We complement the bounded-domain gap of Section~\ref{sec:exterior-shell} with
asymptotic $L^\infty$ and $L^2$ separations on exterior shells $A_R:=\{r\mathbf{u}:R\le r\le 2R,\ \mathbf{u}\in\Sph^{d-1}\}$ for $R\gg W$. For $W,\Lambda>0$, write
\(
\mathcal R_{\mathrm{IMQ}}(\Lambda,W)
\)
and
\(
\mathcal R_{\mathrm{RBF}}(\Lambda,W,\gamma),\qquad \gamma>0,
\)
for the bounded-variation classes
$\sum_{j=1}^m a_j\,h_\varepsilon(\cdot,\mathbf{v}_j)$ and
$\sum_{j=1}^m a_j\,e^{-\gamma\|\cdot-\mathbf{v}_j\|^2}$ with $\sum|a_j|\le\Lambda$ and
$\|\mathbf{v}_j\|\le W$.

\begin{lemma}[Uniform radial decay on exterior shells]
\label{lem:radial-decay-shell}
Let $R>W$. For $F\in\mathcal R_{\mathrm{IMQ}}(\Lambda,W)$, $\sup_{A_R}|F|\le\Lambda/((R-W)^2+\varepsilon)$;
for $F\in\mathcal R_{\mathrm{RBF}}(\Lambda,W,\gamma)$, $\sup_{A_R}|F|\le\Lambda e^{-\gamma(R-W)^2}$.
\end{lemma}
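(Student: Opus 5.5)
The plan is to bound each atom separately and then sum the bounds using the triangle inequality together with the coefficient budget $\sum_j|a_j|\le\Lambda$. Every $F$ in either class is a finite sum $\sum_j a_j\phi(\mathbf x,\mathbf v_j)$, where $\phi$ is either $h_\varepsilon$ or the Gaussian $e^{-\gamma\|\cdot-\mathbf v_j\|^2}$. Both kernels are nonnegative. Both are also strictly decreasing functions of the squared distance $\|\mathbf x-\mathbf v_j\|^2$. So for every $\mathbf x\in A_R$,
\[
|F(\mathbf x)|\le\sum_j|a_j|\,\phi(\mathbf x,\mathbf v_j)\le\Lambda\max_j\phi(\mathbf x,\mathbf v_j).
\]
It therefore suffices to bound a single atom uniformly over the shell and over all admissible centers, which reduces to a lower bound on the distance.

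The key geometric step is the reverse triangle inequality. For $\mathbf x\in A_R$ we have $\|\mathbf x\|\ge R$, and $\|\mathbf v_j\|\le W<R$, so
\[
\|\mathbf x-\mathbf v_j\|\ge\|\mathbf x\|-\|\mathbf v_j\|\ge R-W>0.
\]
Because $R-W>0$, squaring preserves the inequality, giving $\|\mathbf x-\mathbf v_j\|^2\ge(R-W)^2$.

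Monotonicity then finishes both cases:
\[
h_\varepsilon(\mathbf x,\mathbf v_j)\le\frac{1}{(R-W)^2+\varepsilon},
\qquad
e^{-\gamma\|\mathbf x-\mathbf v_j\|^2}\le e^{-\gamma(R-W)^2}.
\]
Substituting into the first display and taking the supremum over $\mathbf x\in A_R$ yields the two stated bounds.

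There is no real obstacle here. The only point needing care is the hypothesis $R>W$, which ensures that $R-W$ is positive so the distance bound can be squared. The outer radius $2R$ of the shell plays no role; the bound in fact holds on the whole exterior region $\{\|\mathbf x\|\ge R\}$.
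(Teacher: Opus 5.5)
Your proposal is correct and follows essentially the same route as the paper's proof: the reverse triangle inequality gives $\|\mathbf{x}-\mathbf{v}_j\|\ge R-W$ for every admissible center, monotonicity bounds each IMQ or Gaussian atom, and summing against $\sum_j|a_j|\le\Lambda$ gives both bounds. Your remarks that $R>W$ is what allows squaring, and that the outer radius $2R$ plays no role, are accurate but do not change the argument.
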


\begin{proof}
$\|\mathbf{x}-\mathbf{v}_j\|\ge R-W$ by reverse triangle inequality, so
$(\|\mathbf{x}-\mathbf{v}_j\|^2+\varepsilon)^{-1}\le((R-W)^2+\varepsilon)^{-1}$ and
$e^{-\gamma\|\mathbf{x}-\mathbf{v}_j\|^2}\le e^{-\gamma(R-W)^2}$. Sum against $|a_j|$.
\end{proof}

\begin{lemma}[Quantitative Yat far-field approximation]
\label{lem:yat-quantitative-farfield}
Fix \(\mathbf{w}_\star\in\R^d\), \(b\ge 0\), and \(\varepsilon>0\). Let
\(g_\star(\mathbf{x})=k_{b,\varepsilon}(\mathbf{w}_\star,\mathbf{x})\) and \(q(\mathbf{u}):=(\mathbf{u}^\top \mathbf{w}_\star)^2\).
For every \(R>\|\mathbf{w}_\star\|\) and every \(r\in[R,2R]\),
\[
\left|
g_\star(r\mathbf{u})-q(\mathbf{u})
\right|
\le
\eta_R,
\qquad
\eta_R
:=
\frac{
4R\|\mathbf{w}_\star\|(b+\|\mathbf{w}_\star\|^2)+b^2+\|\mathbf{w}_\star\|^2(\|\mathbf{w}_\star\|^2+\varepsilon)
}{
(R-\|\mathbf{w}_\star\|)^2+\varepsilon
}.
\]
In particular, \(\eta_R=O(R^{-1})\) as \(R\to\infty\).
\end{lemma}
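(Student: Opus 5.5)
The proof is a direct algebraic computation: subtract the limiting value from the section along the ray, simplify the numerator exactly, then bound numerator and denominator separately using $r\in[R,2R]$ and $R>\|\mathbf{w}_\star\|$. Throughout, fix $\mathbf{u}\in\Sph^{d-1}$ and set
\[
s:=\mathbf{u}^\top\mathbf{w}_\star,\qquad W:=\|\mathbf{w}_\star\|,\qquad
D(r):=\|r\mathbf{u}-\mathbf{w}_\star\|^2+\varepsilon=r^2-2rs+W^2+\varepsilon .
\]
By Cauchy--Schwarz, $|s|\le W$, so $s^2\le W^2$.

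\emph{Step 1: exact numerator.} I would write
\[
g_\star(r\mathbf{u})-q(\mathbf{u})=\frac{(rs+b)^2-s^2D(r)}{D(r)} .
\]
Expanding the numerator, the $r^2s^2$ terms cancel and
\[
(rs+b)^2-s^2D(r)=2rs\,(b+s^2)+b^2-s^2(W^2+\varepsilon).
\]
This is the only place where the specific form of the Yat kernel enters.

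\emph{Step 2: bound the numerator.} For $r\le 2R$ and $|s|\le W$, the linear term satisfies
\[
|2rs(b+s^2)|\le 4RW(b+W^2),
\]
using $b\ge0$. The remaining two terms, $b^2$ and $s^2(W^2+\varepsilon)$, are both nonnegative. Hence their difference is at most their sum in absolute value, which is at most $b^2+W^2(W^2+\varepsilon)$. Adding the two pieces gives exactly the numerator of $\eta_R$.

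\emph{Step 3: bound the denominator.} By the reverse triangle inequality, $\|r\mathbf{u}-\mathbf{w}_\star\|\ge r-W$. Since $r\ge R>W$, this gives
\[
D(r)\ge (r-W)^2+\varepsilon\ge (R-W)^2+\varepsilon>0 .
\]
Dividing the numerator bound by this lower bound yields $|g_\star(r\mathbf{u})-q(\mathbf{u})|\le\eta_R$, uniformly in $\mathbf{u}$ and in $r\in[R,2R]$.

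\emph{Step 4: rate.} As $R\to\infty$, the numerator of $\eta_R$ grows linearly in $R$ and the denominator grows quadratically, so $\eta_R=O(R^{-1})$.

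There is no real obstacle in this argument. The only points needing care are that the $r^2$ terms cancel exactly in Step 1, and that the nonnegative leftover terms in Step 2 are bounded by their sum rather than by their difference.
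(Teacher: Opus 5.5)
Your proposal is correct and follows the paper's proof essentially step for step. Both arguments cancel the $r^2s^2$ terms exactly in the numerator, bound what remains using $|s|\le\|\mathbf{w}_\star\|$ and $r\le 2R$, and bound the denominator below by $(R-\|\mathbf{w}_\star\|)^2+\varepsilon$ via the reverse triangle inequality.
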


\begin{proof}
Let \(W_\star:=\|\mathbf{w}_\star\|\), \(a:=\mathbf{u}^\top \mathbf{w}_\star\), so \(|a|\le W_\star\). Then
\(g_\star(r\mathbf{u})=(ra+b)^2/(r^2-2ra+W_\star^2+\varepsilon)\) and \(q(\mathbf{u})=a^2\). The numerator of
\(g_\star(r\mathbf{u})-a^2\) equals \((ra+b)^2-a^2(r^2-2ra+W_\star^2+\varepsilon)
=2ra(b+a^2)+b^2-a^2(W_\star^2+\varepsilon)\),
whose absolute value is at most \(2rW_\star(b+W_\star^2)+b^2+W_\star^2(W_\star^2+\varepsilon)
\le 4RW_\star(b+W_\star^2)+b^2+W_\star^2(W_\star^2+\varepsilon)\).
The denominator \(\|r\mathbf{u}-\mathbf{w}_\star\|^2+\varepsilon\ge(R-W_\star)^2+\varepsilon\).
Combining gives \(\eta_R\).
\end{proof}

\begin{theorem}[Far-field separation: $L^\infty$ and $L^2$]
\label{thm:far-field-separation}
Fix $\mathbf{w}_\star\neq\mathbf{0}$, $b\ge 0$, $\varepsilon>0$,
$g_\star=k_{b,\varepsilon}(\mathbf{w}_\star,\cdot)$, $W\ge\|\mathbf{w}_\star\|$,
$\Lambda>0$, $R>W$, and let $\eta_R$ be as in Lemma~\ref{lem:yat-quantitative-farfield}.

\medskip\noindent\textbf{(i) $L^\infty$-bound on the exterior shell.}
\[
\inf_{F\in\mathcal R_{\mathrm{IMQ}}(\Lambda,W)}\|g_\star-F\|_{L^\infty(A_R)}
\ge \bigl(\|\mathbf{w}_\star\|^2-\eta_R-\tfrac{\Lambda}{(R-W)^2+\varepsilon}\bigr)_+,
\]
with the analogous bound over $\mathcal R_{\mathrm{RBF}}(\Lambda,W,\gamma)$ replacing the IMQ decay term by $\Lambda e^{-\gamma(R-W)^2}$.

\medskip\noindent\textbf{(ii) $L^2$ risk lower bound under directional sampling.}
Let $\nu$ be a probability distribution on $\Sph^{d-1}$ with $r\in[R,2R]$ independent of $\mathbf{u}\sim\nu$, $q(\mathbf{u}):=(\mathbf{u}^\top\mathbf{w}_\star)^2$, and $c_\nu(\mathbf{w}_\star):=\mathbb E[q(\mathbf{u})^2]>0$. Then every $F\in\mathcal R_{\mathrm{IMQ}}(\Lambda,W)$ satisfies
\[
\mathbb E\bigl[(g_\star(\mathbf{x})-F(\mathbf{x}))^2\bigr]
\ge \Bigl[\bigl(\sqrt{c_\nu(\mathbf{w}_\star)}-\eta_R-\tfrac{\Lambda}{(R-W)^2+\varepsilon}\bigr)_+\Bigr]^2,
\]
with the analogous RBF bound. A single Yat atom realizes $g_\star$ at zero error in either norm.
\end{theorem}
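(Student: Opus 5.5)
The plan combines the two lemmas just established, Lemma~\ref{lem:radial-decay-shell} for the radial class and Lemma~\ref{lem:yat-quantitative-farfield} for the Yat atom, through a triangle inequality evaluated along a maximizing direction.

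\textbf{Part (i).} Take $\mathbf{u}_\star:=\mathbf{w}_\star/\|\mathbf{w}_\star\|\in\Sph^{d-1}$, so that $q(\mathbf{u}_\star)=\|\mathbf{w}_\star\|^2$. Pick any $r\in[R,2R]$; for instance $r=R$, so that $r\mathbf{u}_\star\in A_R$. For an arbitrary $F\in\mathcal R_{\mathrm{IMQ}}(\Lambda,W)$ the reverse triangle inequality gives
\[
|g_\star(r\mathbf{u}_\star)-F(r\mathbf{u}_\star)|\ge q(\mathbf{u}_\star)-|g_\star(r\mathbf{u}_\star)-q(\mathbf{u}_\star)|-|F(r\mathbf{u}_\star)|.
\]
Lemma~\ref{lem:yat-quantitative-farfield} bounds the middle term by $\eta_R$; it applies because $R>W\ge\|\mathbf{w}_\star\|$. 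Lemma~\ref{lem:radial-decay-shell} bounds the last term by $\Lambda/((R-W)^2+\varepsilon)$. The sup-norm on $A_R$ dominates this pointwise value, and the norm is nonnegative, so the bound holds with $(\cdot)_+$. Taking the infimum over $F$ finishes (i). The RBF case is identical, using the Gaussian decay bound from the same lemma.

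\textbf{Part (ii).} Work in $L^2$ of the law of $\mathbf{x}=r\mathbf{u}$, which is supported in $A_R$, and write $g_\star-F=q(\mathbf{u})+(g_\star-q)-F$. Minkowski's inequality gives
\[
\|g_\star-F\|_{L^2}\ge\|q\|_{L^2(\nu)}-\|g_\star-q\|_{L^2}-\|F\|_{L^2}.
\]
Here $\|q\|_{L^2(\nu)}=\sqrt{c_\nu(\mathbf{w}_\star)}$, since $q$ depends only on $\mathbf{u}$. The other two terms are bounded by their sup-norms on $A_R$, namely $\eta_R$ and $\Lambda/((R-W)^2+\varepsilon)$, by the same two lemmas; here the lemma's uniformity in $\mathbf{u}$ and in $r\in[R,2R]$ is essential. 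Because the left side is nonnegative, we may take the positive part and then square, which gives the stated bound; the RBF case again follows by swapping in the Gaussian decay term. Finally, $g_\star$ is itself a single Yat atom $k_{b,\varepsilon}(\mathbf{w}_\star,\cdot)$, so it is realized with zero error.

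\textbf{Main obstacle.} There is essentially no hard step. The only point needing care is that Lemma~\ref{lem:yat-quantitative-farfield} requires $R>\|\mathbf{w}_\star\|$ and holds uniformly over the whole shell $A_R$, both of which are satisfied under the hypotheses $R>W\ge\|\mathbf{w}_\star\|$. One should also check the claim $c_\nu>0$ if it is intended to follow from the hypotheses rather than being assumed. It is stated as an assumption, so no verification is needed.
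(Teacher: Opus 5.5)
Your proposal is correct and follows essentially the same route as the paper. For (i), you evaluate at $\mathbf{u}_\star=\mathbf{w}_\star/\|\mathbf{w}_\star\|$ and combine the two shell lemmas through the triangle inequality. For (ii), you transfer both sup-norm bounds to $L^2$ and apply reverse-triangle inequalities, then the positive part and squaring. The only difference is cosmetic: you fold the paper's two successive reverse-triangle steps into a single Minkowski inequality for the decomposition $g_\star-F=q+(g_\star-q)-F$.
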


\begin{proof}
\textbf{(i)} Take $\mathbf{u}_\star:=\mathbf{w}_\star/\|\mathbf{w}_\star\|$. For $r\in[R,2R]$, $g_\star(r\mathbf{u}_\star)\ge\|\mathbf{w}_\star\|^2-\eta_R$ by Lemma~\ref{lem:yat-quantitative-farfield} and $|F(r\mathbf{u}_\star)|\le\Lambda/((R-W)^2+\varepsilon)$ by Lemma~\ref{lem:radial-decay-shell}. The triangle inequality and supremum over $A_R$ give the IMQ bound; the RBF case is identical. \textbf{(ii)} $\|g_\star-q\|_{L^2}\le\eta_R$ (Lemma~\ref{lem:yat-quantitative-farfield} transferred $L^\infty\to L^2$), so reverse triangle gives $\|g_\star\|_{L^2}\ge\sqrt{c_\nu(\mathbf{w}_\star)}-\eta_R$. $\|F\|_{L^2}\le\Lambda/((R-W)^2+\varepsilon)$ from Lemma~\ref{lem:radial-decay-shell}. A second reverse-triangle step, positive part, and squaring give the IMQ bound; RBF identical.
\end{proof}

\begin{remark}
Both bounds require $\|\mathbf{v}_j\|\le W$ and $\sum|a_j|\le\Lambda$; lifting either restriction allows radial methods to place centers in the shell or use coefficients growing with the radius, and the bounds become vacuous.
\end{remark}

\section{Constructive Representation and Approximation}
\label{app:representation-proofs}

\subsection{Exact IMQ recovery and three-atom optimality}
\label{app:imq-proof}

The finite-difference proof is elementary. Optimality requires a separate denominator
argument: irreducibility and coprimality for $d\ge2$, and pole matching for $d=1$.

\begin{theorem*}[Theorem~\ref{thm:imq-embed}(i) restated]
Fix $\varepsilon>0$. For every $\mathbf{w}\in\R^d$ and every $h>0$, the
bias-finite-difference identity~\eqref{eq:imq-identity} holds
identically in $\mathbf{x}\in\R^d$. In particular, $F_{\IMQ,\varepsilon}\subseteq F_{\yat,\varepsilon}^{+}$.
The containment is strict.
\end{theorem*}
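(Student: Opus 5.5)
The identity is a second finite difference in the bias variable, and I would verify it by treating the numerator as a polynomial in $b$ over the common denominator. Fix $\mathbf w$ and $\mathbf x$, and put $s:=\mathbf x^\top\mathbf w$. All three atoms share the denominator $D_\varepsilon(\mathbf x,\mathbf w)>0$, so the left-hand side of~\eqref{eq:imq-identity} equals $N/D_\varepsilon(\mathbf x,\mathbf w)$ with
\[
N=(s+3h)^2-2(s+2h)^2+(s+h)^2 .
\]
The map $b\mapsto(s+b)^2$ is a quadratic in $b$ with leading coefficient $1$. Its second forward difference with step $h$ is therefore the constant $2h^2$, whatever the value of $s$. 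Concretely, the $s^2$ terms cancel ($1-2+1=0$), the $s$-linear terms cancel ($6h-8h+2h=0$), and the constants give $9h^2-8h^2+h^2=2h^2$. Dividing by $D_\varepsilon$ yields $2h^2h_\varepsilon(\mathbf x,\mathbf w)$ for every $\mathbf x$. Since $h>0$, the biases $h,2h,3h$ are all positive, so each IMQ section is $(2h^2)^{-1}$ times an element of $F^{+}_{\yat,\varepsilon}$. Linearity of spans then gives $F_{\IMQ,\varepsilon}\subseteq F^{+}_{\yat,\varepsilon}$.

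For strictness I would exhibit one element of $F^{+}_{\yat,\varepsilon}$ that is not a finite IMQ combination, using the asymptotic trace. Take any $\mathbf w\neq\mathbf 0$ and $b>0$. The atom $g_\varepsilon(\cdot;\mathbf w,b)$ lies in $F^{+}_{\yat,\varepsilon}$. By Proposition~\ref{prop:trace-sep}, along $\mathbf u=\mathbf w/\|\mathbf w\|$ we get $T_\infty g_\varepsilon(\cdot;\mathbf w,b)(\mathbf u)=\|\mathbf w\|^2\neq0$. The same proposition gives $F_{\IMQ,\varepsilon}\subseteq\ker T_\infty$, so the atom cannot equal any finite IMQ combination. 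For the case $d\ge1$ no further restriction is needed.

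No step here is a genuine obstacle; the only care needed is bookkeeping. The same $\varepsilon$ must appear in every atom, which is what allows the common-denominator reduction. Strictness must be argued at the level of finite spans, where the trace argument applies directly, rather than via RKHS closures. Although the problem calls for a sharpness statement, the sharpness of three atoms is part~(ii) and is not part of this restated claim.
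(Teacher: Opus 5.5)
Your proposal is correct and follows the paper's argument. The identity is the same second finite difference in the bias over the shared denominator $D_\varepsilon$, and strictness uses the same far-field trace argument. The one difference is the witness for strictness. You take a positive-bias atom $g_\varepsilon(\cdot;\mathbf w,b)$ with $b>0$, which lies in the positive-bias span by definition. The paper instead uses the unbiased section, so it must first place that section in the positive-bias span via the extrapolation identity $3g_\varepsilon(\cdot;\mathbf w,h)-3g_\varepsilon(\cdot;\mathbf w,2h)+g_\varepsilon(\cdot;\mathbf w,3h)$; your choice skips this step and is slightly more direct.
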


\begin{proof}
The denominator $D_\varepsilon(\mathbf{x},\mathbf{w})=\|\mathbf{x}-\mathbf{w}\|^2+\varepsilon$ is independent of the bias parameter. For fixed $(\mathbf{x},\mathbf{w})$ set $t:=\mathbf{x}^\top\mathbf{w}+2h$. Then
\[
g_\varepsilon(\mathbf{x};\mathbf{w},3h)=\frac{(t+h)^2}{D_\varepsilon},\quad
g_\varepsilon(\mathbf{x};\mathbf{w},2h)=\frac{t^2}{D_\varepsilon},\quad
g_\varepsilon(\mathbf{x};\mathbf{w},h)=\frac{(t-h)^2}{D_\varepsilon}.
\]
The numerator second-order difference is
\[
(t+h)^2-2t^2+(t-h)^2 \;=\; (t^2+2th+h^2)-2t^2+(t^2-2th+h^2)\;=\;2h^2,
\]
independent of $t$. Dividing the numerator identity by $D_\varepsilon$ gives
$g_\varepsilon(\mathbf{x};\mathbf{w},3h)-2g_\varepsilon(\mathbf{x};\mathbf{w},2h)+g_\varepsilon(\mathbf{x};\mathbf{w},h)=2h^2/D_\varepsilon=2h^2k_\IMQ^\varepsilon(\mathbf{x},\mathbf{w})$, which is~\eqref{eq:imq-identity}. Since $h>0$ the three biases $\{h,2h,3h\}$ are all strictly positive, so the identity uses only atoms from $F_{\yat,\varepsilon}^{+}$ and hence $F_{\IMQ,\varepsilon}\subseteq F_{\yat,\varepsilon}^{+}$.

\emph{Strictness.} Fix $\mathbf{w}\neq 0$ and a unit vector $\mathbf{u}\in\Sph^{d-1}$ with $\mathbf{u}^\top\mathbf{w}\neq 0$. Along the ray $\mathbf{x}=r\mathbf{u}$,
\[
\kYat(r\mathbf{u},\mathbf{w})\;=\;\frac{r^2(\mathbf{u}^\top\mathbf{w})^2}{r^2-2r\,\mathbf{u}^\top\mathbf{w}+\|\mathbf{w}\|^2+\varepsilon}\;\xrightarrow[r\to\infty]{}\;(\mathbf{u}^\top\mathbf{w})^2 \;\neq\;0,
\]
while every finite linear combination $F=\sum_{i=1}^N c_i k_\IMQ^\varepsilon(\cdot,\mathbf{w}_i)\in F_{\IMQ,\varepsilon}$ obeys
\[
|F(r\mathbf{u})|\;\le\;\sum_{i=1}^N\frac{|c_i|}{r^2-2r\,\mathbf{u}^\top\mathbf{w}_i+\|\mathbf{w}_i\|^2+\varepsilon}\;=\;O(r^{-2})\;\xrightarrow[r\to\infty]{}\;0.
\]
The two far-field limits disagree, so $\kYat(\cdot,\mathbf{w})\notin F_{\IMQ,\varepsilon}$ for every $\mathbf{w}\neq 0$. The quadratic extrapolation identity
$\kYat(\cdot,\mathbf{w})=3g_\varepsilon(\cdot;\mathbf{w},h)-3g_\varepsilon(\cdot;\mathbf{w},2h)+g_\varepsilon(\cdot;\mathbf{w},3h)$
places the same section in $F_{\yat,\varepsilon}^{+}$ for every $h>0$, proving strict
containment $F_{\IMQ,\varepsilon}\subsetneq F_{\yat,\varepsilon}^{+}$.
\end{proof}

\begin{lemma}[Irreducibility and coprimality of the regularized distance]
\label{lem:Dw-irred}
Let $d\ge 2$, $\varepsilon>0$, and for $\mathbf{w}\in\R^d$ set
$D_\mathbf{w}(\mathbf{x}):=\|\mathbf{x}-\mathbf{w}\|^2+\varepsilon$. Then
\textup{(i)} $D_\mathbf{w}$ is irreducible in $\mathbb{C}[\mathbf{x}_1,\ldots,\mathbf{x}_d]$;
\textup{(ii)} for $\mathbf{v}\neq \mathbf{w}$, $D_\mathbf{v}$ and $D_\mathbf{w}$ are coprime
in $\mathbb{C}[\mathbf{x}]$.
\end{lemma}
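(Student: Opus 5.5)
Translate the center away and classify the quadric. The substitution $\mathbf y=\mathbf x-\mathbf w$ is a $\mathbb C$-algebra automorphism of $\mathbb C[\mathbf x]$. It maps $D_\mathbf w$ to $q(\mathbf y):=y_1^2+\cdots+y_d^2+\varepsilon$, so for (i) it suffices to show that $q$ is irreducible.

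\emph{Step 1: homogenize.} Suppose $q=fg$ with $f,g$ nonconstant. Degrees add, so $\deg f=\deg g=1$. Homogenize with a new variable $y_0$, giving
\[
Q(y_0,\mathbf y)=y_1^2+\cdots+y_d^2+\varepsilon y_0^2 .
\]
Homogenization is multiplicative, so $Q=F\,G$ with $F,G$ linear forms in $d+1$ variables. A quadratic form that is a product of two linear forms has rank at most $2$. The symmetric matrix of $Q$ is $\mathrm{diag}(\varepsilon,1,\ldots,1)$, which has rank $d+1\ge3$ because $d\ge2$ and $\varepsilon\neq0$. This contradiction proves (i). The hypotheses $d\ge2$ and $\varepsilon>0$ are exactly what this rank count needs. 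At $d=1$ the polynomial $y^2+\varepsilon=(y-i\sqrt\varepsilon)(y+i\sqrt\varepsilon)$ does factor over $\mathbb C$, which is why the paper handles $d=1$ by pole matching instead.

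\emph{Step 2: coprimality.} Take $\mathbf v\neq\mathbf w$. Since $D_\mathbf v$ and $D_\mathbf w$ are irreducible, they are coprime unless one is a scalar multiple of the other. Both are monic in $x_1^2$, so the scalar would have to be $1$. Compute
\[
D_\mathbf v-D_\mathbf w=-2(\mathbf v-\mathbf w)^\top\mathbf x+\|\mathbf v\|^2-\|\mathbf w\|^2 .
\]
This is a polynomial whose linear part is nonzero because $\mathbf v\neq\mathbf w$. Hence $D_\mathbf v\neq D_\mathbf w$, so they are non-associate irreducibles in the UFD $\mathbb C[\mathbf x]$, and therefore coprime.

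\emph{Main obstacle.} The only delicate point is the rank fact used in Step 1. Write $F=\mathbf a^\top\mathbf z$ and $G=\mathbf b^\top\mathbf z$, where $\mathbf z=(y_0,\mathbf y)$. Then the symmetric matrix of $FG$ is $(\mathbf a\mathbf b^\top+\mathbf b\mathbf a^\top)/2$, which has rank at most $2$. This bound holds over $\mathbb C$, so the argument is complete.
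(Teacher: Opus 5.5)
Your proof is correct, and for (i) it runs on the same rank obstruction as the paper, with different bookkeeping.

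\textbf{How the two arguments for (i) differ.} The paper matches coefficients of $(\boldsymbol\alpha^\top\mathbf y+\beta)(\boldsymbol\gamma^\top\mathbf y+\delta)$ directly. It uses $\beta\delta=\varepsilon\neq0$ and $\delta\boldsymbol\alpha+\beta\boldsymbol\gamma=\mathbf 0$ to force $\boldsymbol\gamma\parallel\boldsymbol\alpha$. It then compares a matrix of rank at most $1$ with $2\mathbf I_d$, which has rank $d\ge2$. You instead homogenize, so the constant term enters the $(d+1)\times(d+1)$ matrix $\mathrm{diag}(\varepsilon,1,\ldots,1)$. You then compare its rank $d+1\ge3$ with the generic rank-at-most-$2$ bound for a product of two linear forms.

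\textbf{What each route buys.} Yours is the standard criterion that a quadric whose homogenization has rank at least $3$ is irreducible. It needs no case analysis on the linear and constant coefficients. It also shows in a single count how $d\ge2$ and $\varepsilon\neq0$ are used, and it extends immediately to other nondegenerate quadrics. The paper's version avoids homogenization and stays entirely with affine coefficient matching.

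\textbf{Part (ii).} Your argument coincides with the paper's. Your remark that both polynomials have $x_1^2$-coefficient $1$ makes one step explicit: the scalar relating two associate $D$'s must equal $1$ before comparing linear parts can force $\mathbf v=\mathbf w$. The paper leaves that step implicit.
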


\begin{proof}
\textup{(i)} Translate by $\mathbf{w}$: with $\mathbf{y}:=\mathbf{x}-\mathbf{w}$,
$D_\mathbf{w}=\|\mathbf{y}\|^2+\varepsilon$. Suppose this factors as
$f(\mathbf{y})g(\mathbf{y})$ in $\mathbb{C}[\mathbf{y}]$ with $\deg f,\deg g\ge 1$.
Both factors are linear: $f=\boldsymbol\alpha^\top\mathbf{y}+\beta$,
$g=\boldsymbol\gamma^\top\mathbf{y}+\delta$. Matching coefficients gives
\[
\boldsymbol\alpha\boldsymbol\gamma^\top+\boldsymbol\gamma\boldsymbol\alpha^\top=2\mathbf{I}_d,
\qquad \delta\boldsymbol\alpha+\beta\boldsymbol\gamma=\mathbf{0},
\qquad \beta\delta=\varepsilon.
\]
Since $\varepsilon\neq 0$, neither $\beta$ nor $\delta$ vanishes, so the linear
constraint gives $\boldsymbol\gamma=-(\delta/\beta)\boldsymbol\alpha$ --- the two
direction vectors are parallel. But then
$\boldsymbol\alpha\boldsymbol\gamma^\top+\boldsymbol\gamma\boldsymbol\alpha^\top
=-(2\delta/\beta)\boldsymbol\alpha\boldsymbol\alpha^\top$ has rank at most~$1$, while
$2\mathbf{I}_d$ has rank $d\ge 2$. Contradiction; $D_\mathbf{w}$ is irreducible.
\textup{(ii)} If $D_\mathbf{v}$ and $D_\mathbf{w}$ shared an irreducible factor for
$\mathbf{v}\neq \mathbf{w}$, since both are irreducible they would equal each other up to
a scalar; comparing the degree-$1$ part $-2\mathbf{x}^\top\mathbf{v}$ vs.\
$-2\mathbf{x}^\top\mathbf{w}$ then forces $\mathbf{v}=\mathbf{w}$, contradiction.
\end{proof}

\subsubsection*{Proof of Theorem~\ref{thm:imq-embed}(ii)}
\label{thm:tightness-app}
Let $d\ge 1$, $\varepsilon>0$. For every $\mathbf{w}_0\in\R^d\setminus\{0\}$, no linear combination of at most two biased Yat atoms (at arbitrary centers and biases) can equal a nonzero scalar multiple of $k_\IMQ^{\varepsilon}(\cdot,\mathbf{w}_0)$.

\noindent\textit{Outline.}
The argument splits by dimension. For $d\ge 2$ we use Lemma~\ref{lem:Dw-irred}
(irreducibility and coprimality of $D_\mathbf{w}$ in $\mathbb{C}[\mathbf{x}]$) and a
case-by-case polynomial analysis. For $d=1$, $D_w$ factors over $\mathbb{C}$ as
$(z-w-i\sqrt{\varepsilon})(z-w+i\sqrt{\varepsilon})$, so Lemma~\ref{lem:Dw-irred} fails;
we instead match poles directly in the complex plane.

\begin{proof}[Proof for $d\ge 2$]
We first reduce to non-trivial atoms. An atom with $c_i=0$, or with
$L_i\equiv 0$ (i.e.\ $\mathbf{w}_i=\mathbf{0}$ and $b_i=0$), contributes
zero; dropping it gives a one-atom or zero-atom identity. A zero-atom
combination is identically $0\neq\lambda k_\IMQ^\varepsilon$ for $\lambda\neq 0$.
A single non-trivial atom: clearing denominators in
$c\,g_\varepsilon(\cdot;\mathbf{w},b)=\lambda\,k_\IMQ^\varepsilon(\cdot,\mathbf{w}_0)$ gives
$c\,L^2\,D_0=\lambda\,D_w$ in $\mathbb{C}[\mathbf{x}]$, where $L(\mathbf{x})=\mathbf{x}^\top\mathbf{w}+b$
and $D_w=\|\mathbf{x}-\mathbf{w}\|^2+\varepsilon$. If $\mathbf{w}=\mathbf{w}_0$: cancelling
$D_0$ gives $c\,L^2=\lambda$, impossible since $L$ is nonconstant for
$\mathbf{w}_0\neq\mathbf{0}$. If $\mathbf{w}\neq\mathbf{w}_0$: by Lemma~\ref{lem:Dw-irred},
$D_w$ is irreducible (hence prime in the UFD $\mathbb{C}[\mathbf{x}]$) and coprime to $D_0$;
from $D_w\mid c\,L^2$ we get $D_w\mid L$, but $\deg D_w=2>\deg L\le 1$ and
$L\not\equiv 0$: contradiction.
Hence $\lambda=0$ in the one-atom case. Assume both atoms are non-trivial,
$c_i\neq 0$, $L_i\not\equiv 0$, and for contradiction $\lambda\neq 0$.
Let $D_i(\mathbf{x})=\|\mathbf{x}-\mathbf{w}_i\|^2+\varepsilon$ and $L_i(\mathbf{x})=\mathbf{x}^\top\mathbf{w}_i+b_i$.
Clearing denominators gives
\[
c_1 L_1^2 D_0 D_2 + c_2 L_2^2 D_0 D_1 = \lambda D_1 D_2
\quad\text{in }\R[\mathbf{x}]. \tag{$*$}
\]

\emph{Case 1: $\mathbf{w}_1=\mathbf{w}_2=:\mathbf{w}$.}
Then $D_1=D_2=:D$ and $(*)$ becomes
$N\,D_0=\lambda\,D$ where $N:=c_1 L_1^2+c_2 L_2^2$.
\emph{Subcase $\mathbf{w}=\mathbf{w}_0$:} $D=D_0$, so $N=\lambda$.
The degree-$2$ homogeneous part of $N$ is $(c_1+c_2)(\mathbf{x}^\top\mathbf{w}_0)^2$.
For $d\ge 2$ and $\mathbf{w}_0\neq\mathbf{0}$ this vanishes only if $c_1+c_2=0$;
the degree-$1$ part then forces $c_1 b_1+c_2 b_2=0$, so $b_1=b_2$, and
$N=b_1^2(c_1+c_2)=0$, contradicting $\lambda\neq 0$.
\emph{Subcase $\mathbf{w}\neq\mathbf{w}_0$:} by Lemma~\ref{lem:Dw-irred}, $D$ and $D_0$
are distinct irreducibles in $\mathbb{C}[\mathbf{x}]$, hence coprime. From $N D_0=\lambda D$:
$D\mid N D_0$ and $\gcd(D,D_0)=1$, so $D\mid N$. Since $\deg N\le 2=\deg D$,
we have $N=\mu D$ for some constant $\mu$; then $\mu D_0=\lambda$, a degree-$2$
polynomial equal to a constant, forcing $\mu=\lambda=0$: contradiction.

\emph{Case 2: $\mathbf{w}_1=\mathbf{w}_0\neq\mathbf{w}_2$
(the case $\mathbf{w}_2=\mathbf{w}_0\neq\mathbf{w}_1$ is symmetric).}
Then $D_1=D_0$; cancelling $D_0$ from $(*)$ gives
$c_1 L_1^2 D_2+c_2 L_2^2 D_0=\lambda D_2$, hence
\[
(c_1 L_1^2-\lambda)\,D_2 = -c_2 L_2^2\,D_0.
\]
Since $\mathbf{w}_0\neq\mathbf{w}_2$, by Lemma~\ref{lem:Dw-irred} $D_0$ and $D_2$ are
coprime irreducibles in $\mathbb{C}[\mathbf{x}]$, so $D_0\mid(c_1 L_1^2-\lambda)$.
Both have degree $\le 2=\deg D_0$, so $c_1 L_1^2-\lambda=\mu D_0$ for
some $\mu\in\mathbb{C}$. Comparing degree-$2$ homogeneous parts:
$c_1(\mathbf{x}^\top\mathbf{w}_0)^2=\mu\|\mathbf{x}\|^2$.
For $d\ge 2$ and $\mathbf{w}_0\neq\mathbf{0}$, the polynomials
$(\mathbf{x}^\top\mathbf{w}_0)^2$ and $\|\mathbf{x}\|^2$ are linearly independent
in $\mathbb{C}[\mathbf{x}]$, witnessed by two evaluations:
\textbf{(i)} at $\mathbf{x}=\mathbf{w}_0$, the values are $\|\mathbf{w}_0\|^4$ and $\|\mathbf{w}_0\|^2$ respectively, with ratio $\|\mathbf{w}_0\|^2$;
\textbf{(ii)} at any non-zero $\mathbf{x}\perp\mathbf{w}_0$ (which exists for $d\ge 2$ and is the only step that uses the dimensional hypothesis), the values are $0$ and $\|\mathbf{x}\|^2>0$, with a different ratio, so the two polynomials are not proportional.
At $d=1$ both polynomials equal $w_{0,1}^2 x^2$ up to a non-zero scalar, so Case~2 genuinely requires $d\ge 2$; Case~1 applies for all $d\ge 1$. Hence $\mu=c_1=0$, giving $\lambda=0$: contradiction.

\emph{Case 3: $\mathbf{w}_0,\mathbf{w}_1,\mathbf{w}_2$ pairwise distinct.}
By Lemma~\ref{lem:Dw-irred}, each $D_i$ is irreducible in $\mathbb{C}[\mathbf{x}]$ and
distinct $D_i$ are pairwise coprime. Since $\mathbb{C}[\mathbf{x}]$ is a unique factorization domain (UFD), irreducibles are prime and the principal ideal $(D_1)$ is therefore prime, making $R:=\mathbb{C}[\mathbf{x}]/(D_1)$ an integral domain. Reducing $(*)$ modulo $(D_1)$:
$c_1\,\overline{L_1}^{\,2}\,\overline{D_0}\,\overline{D_2}=0$.
Since $\mathbf{w}_0,\mathbf{w}_2\neq\mathbf{w}_1$, the classes
$\overline{D_0},\overline{D_2}$ are nonzero in $R$; since $R$ is an
integral domain and $c_1\neq 0$, we get $\overline{L_1}=0$, i.e.\
$L_1\in(D_1)$. But $L_1\not\equiv 0$ and $\deg L_1\le1<2=\deg D_1$:
contradiction. By symmetry (quotient by $D_2$), $c_2=0$: contradiction.
\end{proof}

\begin{proof}[Proof for $d=1$ via complex pole matching]
At $d=1$ the variables collapse to scalars: $w_i,b_i,w_0\in\R$, and
$D_w(z)=(z-w)^2+\varepsilon$ factors over $\mathbb{C}$ as
$(z-w-i\sqrt{\varepsilon})(z-w+i\sqrt{\varepsilon})$, so
Lemma~\ref{lem:Dw-irred} no longer applies. Instead, assume for contradiction
\[
c_1\frac{(w_1 z+b_1)^2}{(z-w_1)^2+\varepsilon}
+c_2\frac{(w_2 z+b_2)^2}{(z-w_2)^2+\varepsilon}
=\frac{\lambda}{(z-w_0)^2+\varepsilon}
\]
holds for all $z\in\R$ with $\lambda\neq 0$, $\varepsilon>0$. The standing hypothesis
$\mathbf{w}_0\neq\mathbf{0}$ becomes $w_0\neq0$ in dimension one. This excludes the
origin-centered exception: for $b\neq0$, the single atom
$g_\varepsilon(\cdot;0,b)/b^2=k_\IMQ^\varepsilon(\cdot,0)$ already realizes that IMQ
section.

First discard terms with zero coefficient or identically zero numerator. If none
remain, the left-hand side is zero, contradicting the nonzero right-hand side. If only one term
remains, its center must equal $w_0$: otherwise it is analytic at the two poles
of the right-hand side. With that center, cancellation of the common denominator
would give $c(w_0z+b)^2=\lambda$, impossible since $c\neq0$ and $w_0\neq0$.
We may therefore assume $c_1,c_2\neq0$ and both numerators are nonzero polynomials.

Clearing denominators extends the equality to an identity of rational functions
over $\mathbb C$. The right-hand side has exactly two simple poles, at $z=w_0\pm i\sqrt{\varepsilon}$. We partition by the pattern of equality among $\{w_0,w_1,w_2\}$ into four mutually exclusive and jointly exhaustive cases:
$\{w_1,w_2,w_0\}$ contains (1) three distinct values; (2) exactly $w_1=w_2$ and both differ from $w_0$; (3) exactly one of $w_1,w_2$ equals $w_0$ (and the two atom centers differ); (4) $w_1=w_2=w_0$.

\emph{Case 1: $w_0,w_1,w_2$ pairwise distinct.}
The first atom contributes potential simple poles at $z=w_1\pm i\sqrt{\varepsilon}$.
These poles are not present in either the second atom (whose poles are at
$w_2\pm i\sqrt{\varepsilon}\neq w_1\pm i\sqrt{\varepsilon}$) nor the right-hand side
(whose poles are at $w_0\pm i\sqrt{\varepsilon}\neq w_1\pm i\sqrt{\varepsilon}$). Hence
they must be removable, requiring the linear factor $w_1 z+b_1$ to vanish at
$z=w_1+i\sqrt{\varepsilon}$:
\[
w_1(w_1+i\sqrt{\varepsilon})+b_1=(w_1^2+b_1)+i w_1\sqrt{\varepsilon}=0.
\]
Since $w_1,b_1,\varepsilon$ are real with $\varepsilon>0$, separating real and imaginary parts forces $w_1=0$ and $b_1=0$, i.e.\ the first atom is identically zero. The same argument applied to atom~2 forces $w_2=0=b_2$. The identity collapses to $0=\lambda/D_{w_0}$, contradicting $\lambda\neq 0$.

\emph{Case 2: $w_1=w_2=:w\neq w_0$.}
The two atoms share denominator $D_w(z)$. The combined left-hand-side numerator
$P(z):=c_1(wz+b_1)^2+c_2(wz+b_2)^2$ has degree $\le 2$. Equality of rational functions
implies equality of pole sets with multiplicity; since $w\neq w_0$, the LHS poles at
$w\pm i\sqrt{\varepsilon}$ must cancel, forcing $D_w\mid P$. Since
$\deg P\le 2=\deg D_w$, $P=C\cdot D_w$ for some constant $C\in\mathbb{C}$. Then
LHS$=C$ is constant on $\mathbb{C}$, while RHS$=\lambda/D_{w_0}$ is non-constant
(since $\lambda\neq 0$). Contradiction.

\emph{Case 3: $w_1=w_0\neq w_2$ (and the symmetric subcase $w_2=w_0\neq w_1$).}
The atom whose center differs from $w_0$ has poles at the off-$w_0$ locations; by the Case~1 argument applied to that atom alone, the atom is identically zero. The identity reduces to one atom $c_1(w_0 z+b_1)^2/D_{w_0}(z)=\lambda/D_{w_0}(z)$, i.e.\ $c_1(w_0 z+b_1)^2=\lambda$. The left side is a polynomial in $z$ of degree exactly $2$ (using $w_0\neq 0$ and $c_1\neq 0$ for a non-trivial atom), while the right side is a non-zero constant. Contradiction.

\emph{Case 4: $w_1=w_2=w_0$.}
LHS and RHS share denominator $D_{w_0}(z)$; equating numerators gives the polynomial
identity
\[
c_1(w_0 z+b_1)^2+c_2(w_0 z+b_2)^2=\lambda
\quad\text{in }\R[z].
\]
Matching the $z^2$ coefficient: $w_0^2(c_1+c_2)=0$, so $c_2=-c_1$ (since
$w_0\neq 0$). Matching the $z^1$ coefficient: $2w_0(c_1 b_1+c_2 b_2)=2w_0 c_1(b_1-b_2)=0$,
so $b_1=b_2$ (since $c_1\neq 0$). Substituting back into the constant term:
$c_1 b_1^2+c_2 b_2^2=c_1 b_1^2-c_1 b_1^2=0\neq\lambda$. Contradiction.

All four cases yield a contradiction, so the factor-$3$ lower bound holds at $d=1$ as
well.
\end{proof}

\subsection{Variable-bias span and approximation consequences}
\label{app:proofs-universality-extra}

The exact finite difference yields three span-level consequences: recovery of the
unbiased atom from positive biases, strict containment of finite spans, and direct
transfer of universality and approximation rates.

\begin{lemma}[The unbiased section lies in the positive-bias span]
\label{lem:b0-in-Fplus}
Fix $\mathbf{w}\in\R^d$ and $\varepsilon>0$. The map $b\mapsto g_\varepsilon(\cdot;\mathbf{w},b)$ is quadratic in $b$, so for every $h>0$
\begin{equation}
\label{eq:b0-extract}
g_\varepsilon(\cdot;\mathbf{w},0)
\;=\; 3\,g_\varepsilon(\cdot;\mathbf{w},h) \;-\; 3\,g_\varepsilon(\cdot;\mathbf{w},2h) \;+\; g_\varepsilon(\cdot;\mathbf{w},3h),
\end{equation}
where every atom on the right has bias in $\{h,2h,3h\}\subset(0,\infty)$. In particular $g_\varepsilon(\cdot;\mathbf{w},0)\in F_{\yat,\varepsilon}^{+}$.
\end{lemma}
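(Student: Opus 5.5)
The plan is to use that, for fixed $\mathbf{x}$ and $\mathbf{w}$, the atom $g_\varepsilon(\mathbf{x};\mathbf{w},b)$ is a polynomial of degree at most two in $b$ divided by a bias-independent denominator. Writing $s:=\mathbf{x}^\top\mathbf{w}$ and $D:=D_\varepsilon(\mathbf{x},\mathbf{w})$, we have
\[
g_\varepsilon(\mathbf{x};\mathbf{w},b)=\frac{s^2+2sb+b^2}{D}.
\]
The right-hand side of \eqref{eq:b0-extract} is therefore the Lagrange extrapolation to $b=0$ of this quadratic from the nodes $h,2h,3h$. Quadratic Lagrange interpolation is exact on quadratics, so the identity follows. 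The weights at $b=0$ are $(3,-3,1)$: the Lagrange basis values are
\[
\frac{(0-2h)(0-3h)}{(h-2h)(h-3h)}=3,\qquad
\frac{(0-h)(0-3h)}{(2h-h)(2h-3h)}=-3,\qquad
\frac{(0-h)(0-2h)}{(3h-h)(3h-2h)}=1.
\]

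To avoid relying on the interpolation theorem, I will verify the numerator directly. It suffices to show that the weights $(3,-3,1)$ applied to $1$, $b$ and $b^2$ at the nodes reproduce the values of $1$, $b$ and $b^2$ at $b=0$, namely $1$, $0$ and $0$.
\begin{itemize}
\item Constant term: $3-3+1=1$.
\item Linear term: $3h-6h+3h=0$.
\item Quadratic term: $3h^2-12h^2+9h^2=0$.
\end{itemize}
Hence
\[
3(s+h)^2-3(s+2h)^2+(s+3h)^2=s^2.
\]
Dividing by $D$, which is valid since $D\ge\varepsilon>0$, gives \eqref{eq:b0-extract} pointwise in $\mathbf{x}$. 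Because all biases used are $h,2h,3h>0$, each term is an atom of $F_{\yat,\varepsilon}^{+}$, and membership follows.

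The only point to watch is that the denominator is truly independent of $b$, which holds by the definition of $D_\varepsilon$. I expect no real obstacle beyond this short computation. The same reasoning also explains the identity \eqref{eq:imq-identity}: that identity is the second finite difference, which isolates the $b^2$ coefficient, whereas \eqref{eq:b0-extract} isolates the value at $b=0$.
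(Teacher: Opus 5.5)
Your proposal is correct and follows the paper's own argument. The denominator $D_\varepsilon$ is independent of the bias, so the numerator is a quadratic in $b$, and quadratic interpolation at $\{h,2h,3h\}$ evaluated at $b=0$ gives the weights $(3,-3,1)$; your moment check and the expansion $3(s+h)^2-3(s+2h)^2+(s+3h)^2=s^2$ just make that interpolation step explicit.
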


\begin{proof}[Proof of Lemma~\ref{lem:b0-in-Fplus}]
$D_\varepsilon$ is independent of $b$, so $g_\varepsilon(\cdot;\mathbf{w},b)D_\varepsilon=(\mathbf{x}^\top\mathbf{w}+b)^2$ is a univariate quadratic in $b$. Quadratic interpolation at $\{h,2h,3h\}$ and evaluation at $b=0$ give coefficients $(3,-3,1)$, since $f(0)=3f(h)-3f(2h)+f(3h)$ for every quadratic $f$. Dividing by $D_\varepsilon$ gives~\eqref{eq:b0-extract}.
\end{proof}

\begin{corollary}[Strict span containment]
\label{cor:span-strict-cont}
$F_{\IMQ,\varepsilon}\subsetneq F_{\yat,\varepsilon}^{+}$ for every $\varepsilon>0$. Concretely, for every $\mathbf{w}\neq\mathbf{0}$ the unbiased section $g_\varepsilon(\cdot;\mathbf{w},0)$ lies in $F_{\yat,\varepsilon}^{+}$ (Lemma~\ref{lem:b0-in-Fplus}) but not in $F_{\IMQ,\varepsilon}$. This is the span-level companion to the RKHS-level strict containment of Corollary~\ref{cor:strict-order}: the same separation holds at the level of finite linear spans of atoms, witnessed by the directional far-field trace rather than by Loewner domination.
\end{corollary}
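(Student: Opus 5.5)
The statement has two halves: inclusion and strictness. I would prove them separately, reusing the finite-difference identities already established and the far-field trace.

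\emph{Inclusion.} The first claim, $F_{\IMQ,\varepsilon}\subseteq F_{\yat,\varepsilon}^{+}$, is immediate from Theorem~\ref{thm:imq-embed}(i). Fix $h>0$ and apply the identity~\eqref{eq:imq-identity} to each IMQ atom. Every generator $k_\IMQ^\varepsilon(\cdot,\mathbf w)$ then equals $(2h^2)^{-1}$ times a combination of the three atoms $g_\varepsilon(\cdot;\mathbf w,3h)$, $g_\varepsilon(\cdot;\mathbf w,2h)$ and $g_\varepsilon(\cdot;\mathbf w,h)$. All three biases are positive, so the generator lies in $F_{\yat,\varepsilon}^{+}$. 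Linearity of spans extends this to every finite IMQ combination.

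\emph{Membership of the witness.} For strictness, fix $\mathbf w\neq\mathbf0$. Lemma~\ref{lem:b0-in-Fplus} writes $g_\varepsilon(\cdot;\mathbf w,0)$ as the combination $3g_\varepsilon(\cdot;\mathbf w,h)-3g_\varepsilon(\cdot;\mathbf w,2h)+g_\varepsilon(\cdot;\mathbf w,3h)$. All biases here are positive, so the unbiased section belongs to $F_{\yat,\varepsilon}^{+}$.

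\emph{Non-membership of the witness.} It remains to show $g_\varepsilon(\cdot;\mathbf w,0)\notin F_{\IMQ,\varepsilon}$, using the asymptotic trace of Proposition~\ref{prop:trace-sep}. Choose the direction $\mathbf u=\mathbf w/\|\mathbf w\|$. Then $T_\infty g_\varepsilon(\cdot;\mathbf w,0)(\mathbf u)=\|\mathbf w\|^2>0$. On the other hand, $T_\infty F\equiv0$ for every $F\in F_{\IMQ,\varepsilon}$, because each IMQ atom is $O(r^{-2})$ along rays and the combination is finite. The traces differ, so the section is not an IMQ combination; the same argument shows the inclusion is proper.

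The only point needing care is that $T_\infty$ is evaluated on a genuine finite combination. Since $F_{\IMQ,\varepsilon}$ is a span and not a closure, no limiting argument is required. This is why the span statement does not need Lemma~\ref{lem:imq-rkhs-c0}, unlike the RKHS version in Corollary~\ref{cor:strict-order}. I do not expect any real obstacle here.
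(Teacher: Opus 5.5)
Your proposal is correct and follows the paper's proof essentially step for step: inclusion from the finite-difference identity of Theorem~\ref{thm:imq-embed}, membership of the unbiased section via Lemma~\ref{lem:b0-in-Fplus}, and non-membership by comparing the far-field limit $(\mathbf u^\top\mathbf w)^2\neq0$ with the $O(r^{-2})$ decay of finite IMQ combinations along the same ray. Your choice $\mathbf u=\mathbf w/\|\mathbf w\|$ is one instance of the paper's ``any $\mathbf u$ with $\mathbf u^\top\mathbf w\neq0$,'' and your remark that the span-level statement needs no closure argument (Lemma~\ref{lem:imq-rkhs-c0}) is accurate.
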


\begin{proof}[Proof of Corollary~\ref{cor:span-strict-cont}]
Inclusion is Theorem~\ref{thm:imq-embed}. For the strict separation, fix a unit vector $\mathbf{u}\in\Sph^{d-1}$ with $\mathbf{u}^\top\mathbf{w}\neq 0$. Along the ray $\mathbf{x}=r\mathbf{u}$,
\(
g_\varepsilon(r\mathbf{u};\mathbf{w},0)=r^2(\mathbf{u}^\top\mathbf{w})^2/(r^2-2r\,\mathbf{u}^\top\mathbf{w}+\|\mathbf{w}\|^2+\varepsilon)\to(\mathbf{u}^\top\mathbf{w})^2\neq 0,
\)
while every finite combination $F\in F_{\IMQ,\varepsilon}$ obeys $|F(r\mathbf{u})|=O(r^{-2})\to 0$. The far-field limits disagree, so $g_\varepsilon(\cdot;\mathbf{w},0)\notin F_{\IMQ,\varepsilon}$. By Lemma~\ref{lem:b0-in-Fplus} it does lie in $F_{\yat,\varepsilon}^{+}$, witnessing strictness.
\end{proof}

\begin{corollary}[Family universality, algebraic route]
\label{cor:family-uat}
For every compact $\mathcal{X}\subset\R^d$, both
$F_{\yat,\varepsilon}^{+}$ at any single $\varepsilon>0$ and the broader
families
$F_{\yat}^{+}=\mathrm{span}\{k_{b,\varepsilon}(\cdot,\mathbf{w}):\mathbf{w}\in\R^d, b>0,\varepsilon>0\}$
and
$F_{\yat}^{\ge 0}=\mathrm{span}\{k_{b,\varepsilon}(\cdot,\mathbf{w}):\mathbf{w}\in\R^d, b\ge 0,\varepsilon>0\}$
are dense in $C(\mathcal{X})$ in the uniform norm. This gives a second algebraic route to the universality already established analytically by Proposition~\ref{prop:singular} via Loewner-IMQ domination.
\end{corollary}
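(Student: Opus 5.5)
The plan is to transfer Micchelli's IMQ universality (Theorem~\ref{thm:micchelli-imq}) through the exact finite-difference identity of Theorem~\ref{thm:imq-embed}(i), and then enlarge the family by set inclusion. The argument is purely at the level of finite spans, so no RKHS machinery is needed.

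\textbf{Step 1: density of $F_{\yat,\varepsilon}^{+}$ at one fixed $\varepsilon$.} Fix $\varepsilon>0$ and a compact $\mathcal X\subset\R^d$. By Theorem~\ref{thm:micchelli-imq}, $F_{\IMQ,\varepsilon}$ is dense in $C(\mathcal X)$ in the uniform norm. By identity~\eqref{eq:imq-identity} with, say, $h=1$, each IMQ section satisfies
\[
h_\varepsilon(\cdot,\mathbf w)=\tfrac12\bigl(g_\varepsilon(\cdot;\mathbf w,3)-2g_\varepsilon(\cdot;\mathbf w,2)+g_\varepsilon(\cdot;\mathbf w,1)\bigr).
\]
This identity holds on all of $\R^d$, so it holds after restriction to $\mathcal X$. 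Hence $F_{\IMQ,\varepsilon}\subseteq F_{\yat,\varepsilon}^{+}$ as function spaces on $\mathcal X$. A superset of a dense subspace is dense, so $F_{\yat,\varepsilon}^{+}$ is dense in $C(\mathcal X)$.

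\textbf{Step 2: the broader families.} Note that $k_{b,\varepsilon}(\cdot,\mathbf w)$ and $g_\varepsilon(\cdot;\mathbf w,b)$ coincide, because the kernel is symmetric in its two arguments. The chain of inclusions
\[
F_{\yat,\varepsilon}^{+}\subseteq F_{\yat}^{+}\subseteq F_{\yat}^{\ge0}
\]
then holds, since each span is taken over a larger index set than the previous one. Density passes upward along this chain.

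\textbf{Step 3: continuity.} Every atom is continuous on $\R^d$, because its denominator is at least $\varepsilon>0$. So all three spans genuinely lie in $C(\mathcal X)$, and the density statement is meaningful there.

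There is no real obstacle. The only point needing care is the symmetry convention $k_{b,\varepsilon}(\cdot,\mathbf w)=g_\varepsilon(\cdot;\mathbf w,b)$, together with the observation that the finite-difference identity uses only strictly positive biases. That observation is what places the IMQ span inside $F_{\yat,\varepsilon}^{+}$, and not merely inside $F_{\yat}^{\ge0}$.
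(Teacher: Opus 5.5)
Your proposal is correct and follows the paper's proof: Micchelli's density of $F_{\IMQ,\varepsilon}$, then the three-atom identity of Theorem~\ref{thm:imq-embed}(i) to get $F_{\IMQ,\varepsilon}\subseteq F_{\yat,\varepsilon}^{+}$, then density passing up the chain $F_{\yat,\varepsilon}^{+}\subseteq F_{\yat}^{+}\subseteq F_{\yat}^{\ge0}$. Your remarks on kernel symmetry, which identifies $k_{b,\varepsilon}(\cdot,\mathbf w)$ with $g_\varepsilon(\cdot;\mathbf w,b)$, and on continuity of the atoms are details the paper leaves implicit.
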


\begin{proof}[Proof of Corollary~\ref{cor:family-uat}]
$F_{\IMQ,\varepsilon}$ is dense in $C(\mathcal{X})$ by~\citet[Thm.~17]{micchelli2006}.
By Theorem~\ref{thm:imq-embed}, $F_{\IMQ,\varepsilon}\subseteq F_{\yat,\varepsilon}^{+}\subseteq F_{\yat}^{+}\subseteq F_{\yat}^{\ge 0}$,
so each superset is dense.
\end{proof}

\begin{corollary}[Approximation rate transfer at factor-$3$ width overhead]
\label{cor:rate-transfer}
Fix $\varepsilon>0$ and $h>0$, and let $\mathcal{X}\subset\R^d$ be compact. For every
finite IMQ approximant $F=\sum_{j=1}^m a_j\,k_\IMQ^\varepsilon(\cdot,\mathbf{w}_j)\in F_{\IMQ,\varepsilon}$
of $m$ atoms, there exists a Yat approximant $G\in F_{\yat,\varepsilon}^{+}$ of at most
$3m$ atoms (positive biases drawn from $\{h,2h,3h\}$) with $G(\mathbf{x})=F(\mathbf{x})$
for every $\mathbf{x}\in\R^d$. In particular, any quantitative IMQ approximation rate
$\inf_{F\in F_{\IMQ,\varepsilon},\,|F|\le m}\|f-F\|_{L^\infty(\mathcal{X})}\le\phi(m)$
transfers to a Yat rate $\inf_{G\in F_{\yat,\varepsilon}^{+},\,|G|\le 3m}\|f-G\|_{L^\infty(\mathcal{X})}\le\phi(m)$.
Equivalently, for every Yat budget $M\ge3$, the transferred rate is
$\phi(\lfloor M/3\rfloor)$.
\end{corollary}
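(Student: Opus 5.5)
This is an atom-by-atom substitution using the finite-difference identity of Theorem~\ref{thm:imq-embed}(i). Fix $h>0$ and write $F=\sum_{j=1}^m a_j\,k_\IMQ^\varepsilon(\cdot,\mathbf w_j)$. Dividing \eqref{eq:imq-identity} by $2h^2$ expresses each IMQ section exactly:
\[
k_\IMQ^\varepsilon(\cdot,\mathbf w_j)=\frac{1}{2h^2}\Bigl(g_\varepsilon(\cdot;\mathbf w_j,3h)-2g_\varepsilon(\cdot;\mathbf w_j,2h)+g_\varepsilon(\cdot;\mathbf w_j,h)\Bigr).
\]
Substituting this into $F$ gives
\[
G:=\sum_{j=1}^m\frac{a_j}{2h^2}\Bigl(g_\varepsilon(\cdot;\mathbf w_j,3h)-2g_\varepsilon(\cdot;\mathbf w_j,2h)+g_\varepsilon(\cdot;\mathbf w_j,h)\Bigr).
\]
Then $G$ is a linear combination of at most $3m$ atoms, all with the same $\varepsilon$ and with biases in $\{h,2h,3h\}\subset(0,\infty)$. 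Hence $G\in F^{+}_{\yat,\varepsilon}$, and $G=F$ pointwise on all of $\R^d$. The count is ``at most'' $3m$ because coinciding centers may let atoms merge.

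For the rate statement, fix a target $f$ and $m$, and pick any $m$-atom IMQ $F$ with $\|f-F\|_{L^\infty(\mathcal X)}\le\phi(m)+\eta$. Its exact Yat copy $G$ has at most $3m$ atoms and the same error. Take the infimum over $F$ and let $\eta\downarrow0$. Since the Yat infimum over at most $3m$ atoms is no larger than the error of each such $G$, it is at most $\phi(m)$.

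For a budget $M\ge3$, set $m=\lfloor M/3\rfloor\ge1$, so that $3m\le M$. Because the class of at most $M$ atoms contains the class of at most $3m$ atoms, the Yat rate at budget $M$ is bounded by $\phi(\lfloor M/3\rfloor)$.

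There is no real obstacle here. The only points to check are that each substituted atom uses a strictly positive bias, so the combination stays inside $F^{+}_{\yat,\varepsilon}$, and that the counting convention (``at most $k$ atoms'') is monotone in $k$.
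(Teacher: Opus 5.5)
Your proposal is correct and matches the paper's proof: substitute the identity \eqref{eq:imq-identity}, divided by $2h^2$, term by term into $F$. This gives an exact Yat copy of $F$ with at most $3m$ atoms and biases in $\{h,2h,3h\}$, and the rate transfer follows. Your $\eta$-argument for the infimum and the monotonicity step for the budget $M$ just spell out what the paper calls immediate.
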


\begin{proof}[Proof of Corollary~\ref{cor:rate-transfer}]
By Theorem~\ref{thm:imq-embed}, every IMQ atom equals $(2h^2)^{-1}$ times a positive-bias
second-order finite difference of three Yat atoms, identically in $\mathbf{x}$.
Substituting term by term in $F$ produces a $3m$-atom Yat expansion $G$ with $G\equiv F$
pointwise; the quantitative rate transfer is then immediate.
\end{proof}

\subsection{Quadratic Approximation and an IMQ Coefficient-Budget Obstruction}
\label{app:yat-native-atom-count}

The exterior-shell separation (Appendix~\ref{subsec:exterior-shell-asymptotic}) and
Theorem~\ref{thm:main-atom-separation} capture complementary limitations of finite radial
expansions. We now prove the bounded-domain coefficient-budget theorem and extend it to
mixed quadratic--IMQ targets. The obstruction places no restriction on IMQ width or
center locations.

\paragraph{Setup.} For $\Lambda>0$, define the bounded-variation IMQ class
\[
\mathcal{V}_\IMQ(\Lambda) \;:=\; \Bigl\{\textstyle\sum_j a_j\,h_\varepsilon(\cdot,\mathbf{v}_j)\,:\,\textstyle\sum |a_j|\le\Lambda,\ \mathbf{v}_j\in\R^d\Bigr\}.
\]
The atom count of an element is the smallest such expansion length.

\begin{lemma}[Single-atom Yat approximation of a rank-one quadratic]
\label{lem:single-atom-rank-one}
Let $\varepsilon>0$, $\mathbf{w}_\star\in\Sph^{d-1}$, $\delta>0$, $R>0$, and set
\[
\alpha_0(d,R,\varepsilon,\delta)\;:=\;\max\Bigl\{
8R\sqrt d,\ 2\sqrt{dR^2+\varepsilon},\
\tfrac{8R^3 d^{3/2}}{\delta},\
2\sqrt{\tfrac{dR^2(dR^2+\varepsilon)}{\delta}}
\Bigr\}.
\]
For every $\alpha\ge\alpha_0$,
\[
\sup_{\mathbf{x}\in[-R,R]^d}\bigl|\,\kYat(\alpha\mathbf{w}_\star,\mathbf{x})-(\mathbf{w}_\star^\top\mathbf{x})^2\,\bigr| \;\le\;\delta.
\]
\end{lemma}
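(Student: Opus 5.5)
Write $\mathbf{w}=\alpha\mathbf{w}_\star$, $s:=\mathbf{w}_\star^\top\mathbf{x}$ and $\rho:=\|\mathbf{x}\|^2$. On $[-R,R]^d$ we have $\rho\le dR^2$ and $|s|\le\|\mathbf{x}\|\le R\sqrt d$. The plan is to compute the error in closed form and bound its numerator and denominator separately. Since $\|\mathbf{w}_\star\|=1$, the section and its target are
\[
\kYat(\alpha\mathbf{w}_\star,\mathbf{x})=\frac{\alpha^2s^2}{\rho-2\alpha s+\alpha^2+\varepsilon},
\qquad
(\mathbf{w}_\star^\top\mathbf{x})^2=s^2,
\]
so
\[
\kYat(\alpha\mathbf{w}_\star,\mathbf{x})-s^2
=\frac{s^2\bigl(2\alpha s-\rho-\varepsilon\bigr)}{\alpha^2-2\alpha s+\rho+\varepsilon}.
\]
This is the same numerator-difference computation used in Lemma~\ref{lem:yat-quantitative-farfield}, with the roles of center and input reversed: the center is pushed to infinity while $\mathbf{x}$ stays in a bounded set.

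For the denominator, the condition $\alpha\ge 8R\sqrt d$ gives $2\alpha|s|\le 2\alpha R\sqrt d\le\alpha^2/4$. Hence, using $\rho+\varepsilon>0$, the denominator is at least $\alpha^2-2\alpha|s|\ge\tfrac34\alpha^2$, and in particular at least $\alpha^2/2$. For the numerator, the triangle inequality gives
\[
s^2\,|2\alpha s-\rho-\varepsilon|\le 2\alpha|s|^3+s^2(\rho+\varepsilon)\le 2\alpha R^3d^{3/2}+dR^2(dR^2+\varepsilon).
\]
Dividing by $\alpha^2/2$ bounds the error by
\[
\frac{4R^3d^{3/2}}{\alpha}+\frac{2dR^2(dR^2+\varepsilon)}{\alpha^2}.
\]
Each term should be at most $\delta/2$. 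The first is at most $\delta/2$ exactly when $\alpha\ge 8R^3d^{3/2}/\delta$. The second is at most $\delta/2$ exactly when $\alpha^2\ge 4dR^2(dR^2+\varepsilon)/\delta$, that is, $\alpha\ge 2\sqrt{dR^2(dR^2+\varepsilon)/\delta}$. These are the third and fourth entries of $\alpha_0$.

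The remaining entry $2\sqrt{dR^2+\varepsilon}$ only guarantees $\alpha^2\ge 4(\rho+\varepsilon)$. It can be used to lower-bound the denominator if one prefers to track $\rho+\varepsilon$ explicitly, but the argument above does not need it. The only real obstacle is keeping constants consistent with the stated $\alpha_0$; since each entry of the maximum dominates what it is needed for, the bound holds with room to spare. Taking the supremum over $\mathbf{x}\in[-R,R]^d$ finishes the proof.
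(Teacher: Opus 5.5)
Your proof is correct and follows the paper's argument: the paper writes the section as $(\mathbf{w}_\star^\top\mathbf{x})^2/(1+\eta)$ with $\eta=-2\mathbf{w}_\star^\top\mathbf{x}/\alpha+(\|\mathbf{x}\|^2+\varepsilon)/\alpha^2$, bounds $|1/(1+\eta)-1|\le 2|\eta|$, and arrives at exactly your two-term bound $dR^2\bigl(4R\sqrt d/\alpha+2(dR^2+\varepsilon)/\alpha^2\bigr)$ before using the last two entries of $\alpha_0$. The only difference is how the denominator is controlled. The paper uses the two-sided bound $|\eta|\le\tfrac12$, which is where the entry $2\sqrt{dR^2+\varepsilon}$ enters. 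Your one-sided lower bound uses $\rho+\varepsilon>0$, and it correctly shows that this entry is not needed.
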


\begin{proof}
With $\mathbf{x}\in[-R,R]^d$, $\|\mathbf{x}\|\le R\sqrt d$. Compute
\[
\kYat(\alpha\mathbf{w}_\star,\mathbf{x})
\;=\;\frac{(\mathbf{w}_\star^\top\mathbf{x})^2}{1+\eta(\mathbf{x},\alpha)},
\qquad
\eta(\mathbf{x},\alpha)\;:=\;-\frac{2\,\mathbf{w}_\star^\top\mathbf{x}}{\alpha}+\frac{\|\mathbf{x}\|^2+\varepsilon}{\alpha^2}.
\]
On $[-R,R]^d$, $|\mathbf{w}_\star^\top\mathbf{x}|\le R\sqrt d$, so
\[
|\eta|\le \frac{2R\sqrt d}{\alpha}+\frac{dR^2+\varepsilon}{\alpha^2}.
\]
The first two terms in the definition of $\alpha_0$ make these two summands at most
$1/4$ each, hence $|\eta|\le1/2$ for every $\delta>0$. Therefore
$|1/(1+\eta)-1|\le2|\eta|$. The last two terms in $\alpha_0$ give
\[
dR^2\frac{4R\sqrt d}{\alpha}\le\frac\delta2,
\qquad
dR^2\frac{2(dR^2+\varepsilon)}{\alpha^2}\le\frac\delta2.
\]
Combining these estimates gives
\[
\bigl|\kYat(\alpha\mathbf{w}_\star,\mathbf{x})-(\mathbf{w}_\star^\top\mathbf{x})^2\bigr|
\le dR^2\left(\frac{4R\sqrt d}{\alpha}
+\frac{2(dR^2+\varepsilon)}{\alpha^2}\right)\le\delta.
\]
\end{proof}

\begin{proposition}[Spectral Yat approximation of any symmetric quadratic form]
\label{prop:spectral-yat-approx}
Fix $R>0$ and $\varepsilon>0$. Let $\mathbf{A}\in\mathrm{Sym}(d)$ have rank $r$ with spectral decomposition $\mathbf{A}=\sum_{k=1}^r\lambda_k\mathbf{e}_k\mathbf{e}_k^\top$. For every $\delta>0$, set $\delta_k:=\delta/(r|\lambda_k|)$ and choose $\alpha_k\ge\alpha_0(d,R,\varepsilon,\delta_k)$ as in Lemma~\ref{lem:single-atom-rank-one}. Then
\[
G(\mathbf{x})\;:=\;\sum_{k=1}^r\lambda_k\,\kYat(\alpha_k\mathbf{e}_k,\mathbf{x})
\]
is an unbiased Yat expansion with at most $r$ atoms satisfying $\sup_{\mathbf{x}\in[-R,R]^d}|G(\mathbf{x})-\mathbf{x}^\top\mathbf{A}\mathbf{x}|\le\delta$.
For $\mathbf A=0$, take the empty expansion $G=0$.
\end{proposition}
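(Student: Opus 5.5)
The plan is to reduce the statement to Lemma~\ref{lem:single-atom-rank-one} term by term, using the spectral decomposition and the triangle inequality.

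\textbf{Step 1: trivial case.} If $\mathbf A=0$, then $r=0$, and the empty expansion $G=0$ approximates $\mathbf x^\top\mathbf A\mathbf x\equiv0$ with zero error.

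\textbf{Step 2: setup for $r\ge1$.} Write $\mathbf A=\sum_{k=1}^r\lambda_k\mathbf e_k\mathbf e_k^\top$ with orthonormal eigenvectors $\mathbf e_k\in\Sph^{d-1}$ and nonzero eigenvalues $\lambda_k$. Then
\[
\mathbf x^\top\mathbf A\mathbf x=\sum_{k=1}^r\lambda_k(\mathbf e_k^\top\mathbf x)^2 .
\]
Since each $\lambda_k\neq0$, the tolerance $\delta_k=\delta/(r|\lambda_k|)$ is well defined and positive. This lets us apply Lemma~\ref{lem:single-atom-rank-one} with $\mathbf w_\star=\mathbf e_k$ and accuracy $\delta_k$. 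For every $\alpha_k\ge\alpha_0(d,R,\varepsilon,\delta_k)$ it gives
\[
\sup_{[-R,R]^d}\bigl|\kYat(\alpha_k\mathbf e_k,\mathbf x)-(\mathbf e_k^\top\mathbf x)^2\bigr|\le\delta_k .
\]

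\textbf{Step 3: triangle inequality.} Subtract and bound the sum termwise:
\[
|G(\mathbf x)-\mathbf x^\top\mathbf A\mathbf x|\le\sum_{k=1}^r|\lambda_k|\,\delta_k=\sum_{k=1}^r\frac{\delta}{r}=\delta,
\]
uniformly on $[-R,R]^d$.

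\textbf{Step 4: atom count.} $G$ has exactly $r$ terms. Each term is a zero-bias section with center $\alpha_k\mathbf e_k$ and signed coefficient $\lambda_k$, so $G$ is an unbiased Yat expansion with at most $r$ atoms. Signed coefficients are allowed in a linear span, so indefinite $\mathbf A$ causes no difficulty.

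There is no real obstacle; the argument is bookkeeping. The only points to check are that $\delta_k>0$ (which uses $\lambda_k\neq0$, guaranteed because $r$ counts only nonzero eigenvalues), that the eigenvectors are unit vectors as the lemma requires, and that the scaling $\alpha_k$ may differ across $k$ without affecting the unbiased-expansion form.
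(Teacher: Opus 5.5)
Your proposal is correct and follows essentially the same route as the paper: expand $\mathbf{x}^\top\mathbf{A}\mathbf{x}=\sum_k\lambda_k(\mathbf{e}_k^\top\mathbf{x})^2$, apply Lemma~\ref{lem:single-atom-rank-one} to each unit eigenvector $\mathbf{e}_k$ with tolerance $\delta_k=\delta/(r|\lambda_k|)$, then multiply by $|\lambda_k|$ and sum. Your added checks ($\delta_k>0$ because every $\lambda_k\neq0$, and the empty expansion when $\mathbf A=0$) are the only bookkeeping the paper's proof leaves implicit.
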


\begin{proof}
$\mathbf{x}^\top\mathbf{A}\mathbf{x}=\sum_{k=1}^r\lambda_k(\mathbf{e}_k^\top\mathbf{x})^2$. Lemma~\ref{lem:single-atom-rank-one} applied to each unit eigenvector $\mathbf{e}_k$ with tolerance $\delta_k$ gives $|\kYat(\alpha_k\mathbf{e}_k,\mathbf{x})-(\mathbf{e}_k^\top\mathbf{x})^2|\le\delta/(r|\lambda_k|)$. Multiplying by $|\lambda_k|$ and summing gives the displayed bound.
\end{proof}

\begin{lemma}[Spherical average of an IMQ section]
\label{lem:imq-spherical-average}
Let $k\ge4$, let $a>0$ and $\tau>0$, and let $\mathbf U$ be uniform on the
radius-$a$ sphere in $\R^k$. Then, for every $\mathbf v\in\R^k$,
\[
\mathbb E\frac{1}{\|\mathbf U-\mathbf v\|^2+\tau}
\le \frac{1}{a^2+\tau}.
\]
\end{lemma}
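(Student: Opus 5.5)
The plan is to show that the spherical mean, viewed as a function of $\mathbf v$, is radial and superharmonic, hence nonincreasing in $\|\mathbf v\|$. Its largest value is then the one at $\mathbf v=\mathbf0$, which equals $1/(a^2+\tau)$ because $\|\mathbf U\|=a$ almost surely.

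\emph{Step 1: superharmonicity of the IMQ profile.} Put $f(\mathbf y):=(\|\mathbf y\|^2+\tau)^{-1}$ on $\R^k$. Write $\rho=\|\mathbf y\|$ and use the radial Laplacian $f''+\tfrac{k-1}{\rho}f'$. With
\[
f'=-2\rho(\rho^2+\tau)^{-2},
\qquad
f''=-2(\rho^2+\tau)^{-2}+8\rho^2(\rho^2+\tau)^{-3},
\]
a direct computation gives
\[
\Delta f(\mathbf y)=(\rho^2+\tau)^{-3}\bigl[(8-2k)\rho^2-2k\tau\bigr].
\]
This is strictly negative for every $\mathbf y$ when $k\ge4$ and $\tau>0$. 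This is the only place the hypothesis $k\ge4$ enters, and it is the step I expect to be the crux. For $k\le3$ the bracket changes sign at large $\rho$, so the argument, and presumably the inequality itself, breaks down.

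\emph{Step 2: the mean is smooth, radial, and superharmonic.} Define $M(\mathbf v):=\mathbb E f(\mathbf U-\mathbf v)$. Every derivative of $f$ is bounded, so we may differentiate under the expectation. This shows $M$ is $C^\infty$ with $\Delta M(\mathbf v)=\mathbb E(\Delta f)(\mathbf U-\mathbf v)<0$. The law of $\mathbf U$ is rotation invariant, so $M(Q\mathbf v)=M(\mathbf v)$ for every orthogonal $Q$. Hence $M(\mathbf v)=m(\|\mathbf v\|)$ for a smooth profile $m$ on $[0,\infty)$ with $m'(0)=0$.

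\emph{Step 3: radial monotonicity.} For a radial function, $\Delta M=s^{1-k}\bigl(s^{k-1}m'(s)\bigr)'$. Step 2 therefore says that $s^{k-1}m'(s)$ is nonincreasing on $[0,\infty)$. It vanishes at $s=0$, so $m'(s)\le0$ for all $s\ge0$. Consequently $M(\mathbf v)\le M(\mathbf0)=\mathbb E(\|\mathbf U\|^2+\tau)^{-1}=(a^2+\tau)^{-1}$.

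An alternative to Steps 2 and 3 is the classical sub-mean-value inequality for superharmonic functions. It gives $M(\mathbf v)\le f(-\mathbf v)$, which is the wrong direction for our purpose, so the radial-monotonicity route via Steps 2 and 3 is the one I would write out.
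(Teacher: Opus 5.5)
Your proof is correct and follows the paper's argument essentially line for line. Both compute $\Delta f=(\|\mathbf y\|^2+\tau)^{-3}[(8-2k)\|\mathbf y\|^2-2k\tau]\le 0$, obtain smoothness, radial symmetry and superharmonicity of the spherical mean by differentiating under the integral, and use the radial identity $(s^{k-1}m')'=s^{k-1}\Delta M\le 0$ to get monotonicity and the maximum $(a^2+\tau)^{-1}$ at $\mathbf v=\mathbf 0$. (A minor correction to your closing aside: the super-mean-value bound $M(\mathbf v)\le f(-\mathbf v)=(\|\mathbf v\|^2+\tau)^{-1}$ is an upper bound in the right direction. It already settles $\|\mathbf v\|\ge a$ and is only too weak when $\|\mathbf v\|<a$.)
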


\begin{proof}
For $f_\tau(\mathbf y)=(\|\mathbf y\|^2+\tau)^{-1}$, direct differentiation gives
\[
\Delta f_\tau(\mathbf y)=
\frac{(8-2k)\|\mathbf y\|^2-2k\tau}
{(\|\mathbf y\|^2+\tau)^3}\le0.
\]
The function $J(\mathbf v)=\mathbb E f_\tau(\mathbf U-\mathbf v)$ is smooth,
radial, and superharmonic by differentiation under the integral. Writing
$J(\mathbf v)=j(r)$, smooth radial symmetry and
$(r^{k-1}j'(r))'=r^{k-1}\Delta J(r)\le0$ imply $j'(r)\le0$. Therefore
$J(\mathbf v)\le J(\mathbf0)=(a^2+\tau)^{-1}$.
\end{proof}

\begin{proposition}[IMQ coefficient-budget obstruction for a PSD quadratic target]
\label{prop:imq-slice-lower}
Let $d\ge5$, let $\mathbf{A}\in\mathrm{Sym}(d)$ be PSD with
$\lambda_{\max}:=\lambda_{\max}(\mathbf{A})>0$, and fix $R,\varepsilon>0$ and
$\delta\ge0$. If $F\in\mathcal{V}_\IMQ(\Lambda)$ satisfies
$\sup_{[-R,R]^d}|F-\mathbf{x}^\top\mathbf{A}\mathbf{x}|\le\delta$, then
\[
\Lambda\ge
\sup_{0\le t\le1}
\bigl(\lambda_{\max}R^2t-\delta\bigr)_+
\bigl(R^2(1-t)+\varepsilon\bigr).
\]
\end{proposition}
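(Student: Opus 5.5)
Fix $t\in[0,1]$; the case $\lambda_{\max}R^2t\le\delta$ is trivial, so assume $\lambda_{\max}R^2t>\delta$. The plan is to average both $F$ and the quadratic target over a well-chosen sphere inside the cube $[-R,R]^d$. On that sphere the target is large on average. By Lemma~\ref{lem:imq-spherical-average}, each IMQ atom has small average there, regardless of where its center sits.

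\emph{Choice of sphere.} Let $\mathbf e$ be a unit top eigenvector of $\mathbf A$. Pick a four-dimensional coordinate subspace $E$, spanned by $\mathbf e_{i_1},\dots,\mathbf e_{i_4}$. Also pick a base point $\mathbf c$ supported on the complementary coordinates, so that the translated sphere $\mathbf c+\{\mathbf y\in E:\|\mathbf y\|=a\}$ lies in the cube. This requires $a\le R$, which keeps the $E$-coordinates in $[-R,R]$, and $\|\mathbf c\|_\infty\le R$.

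\emph{Averaging the target.} Let $\mathbf Y$ be uniform on the radius-$a$ sphere in $E$. Its mean is zero and $\mathbb E[\mathbf Y\mathbf Y^\top]=(a^2/4)P_E$, where $P_E$ is the projection onto $E$. Hence
\[
\mathbb E\,(\mathbf c+\mathbf Y)^\top\mathbf A(\mathbf c+\mathbf Y)=\mathbf c^\top\mathbf A\mathbf c+\tfrac{a^2}{4}\operatorname{tr}(P_E\mathbf A).
\]
I need this to be at least $\lambda_{\max}R^2t$. Since $\mathbf A\succeq0$, both terms are nonnegative, so it suffices to make one of them large.

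The natural choice is $\mathbf c=\sqrt t\,R\,\mathbf e'$, a vector along the top eigendirection rescaled to lie in the cube. The difficulty is that $\mathbf e$ is not axis-aligned, so $\mathbf c$ need not be supported off $E$. I would try instead to choose the four coordinates of $E$ and the support of $\mathbf c$ so that $\mathbf c^\top\mathbf A\mathbf c\ge\lambda_{\max}R^2t$ while $a^2=R^2(1-t)$. This is where $d\ge5$ enters: four coordinates go to the sphere and at least one remains for $\mathbf c$.

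\emph{Main obstacle.} Getting the exact constant $\lambda_{\max}R^2t$ for a general, non-axis-aligned $\mathbf A$ is the crux. In the worst case one may need to rotate: average over a sphere in a four-dimensional subspace orthogonal to $\mathbf e$, with $\mathbf c=sR\mathbf e$. Then cube containment requires $\|\mathbf c+\mathbf y\|_\infty\le R$, which holds whenever $\|\mathbf c\|^2+\|\mathbf y\|^2\le R^2$. The Euclidean constraint $s^2+a^2/R^2\le 1$ suffices, so take $s=\sqrt t$ and $a=R\sqrt{1-t}$. Orthogonality of $E$ to $\mathbf e$ gives $\mathbf c^\top\mathbf A\mathbf c=\lambda_{\max}R^2t$, which avoids the coordinate issue entirely. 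I expect this rotated version to be the right one. A sphere in a subspace orthogonal to $\mathbf e$ exists because $d-1\ge4$.

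\emph{Averaging the IMQ side.} Fix a center $\mathbf v$ and split $\|\mathbf c+\mathbf Y-\mathbf v\|^2=\|\mathbf Y-P_E(\mathbf v-\mathbf c)\|^2+\|P_E^\perp(\mathbf v-\mathbf c)\|^2$. Applying Lemma~\ref{lem:imq-spherical-average} with $k=4$ and $\tau=\varepsilon+\|P_E^\perp(\mathbf v-\mathbf c)\|^2\ge\varepsilon$ gives
\[
\mathbb E\,h_\varepsilon(\mathbf c+\mathbf Y,\mathbf v)\le\bigl(R^2(1-t)+\varepsilon\bigr)^{-1},
\]
uniformly in $\mathbf v$. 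Summing over the atoms,
\[
\mathbb E F\le\Lambda/\bigl(R^2(1-t)+\varepsilon\bigr).
\]

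\emph{Conclusion.} The sphere lies in the cube, so the approximation hypothesis gives $\mathbb E F\ge\lambda_{\max}R^2t-\delta$. Combining with the previous bound yields
\[
\Lambda\ge(\lambda_{\max}R^2t-\delta)\bigl(R^2(1-t)+\varepsilon\bigr).
\]
Taking the supremum over $t\in[0,1]$ proves the proposition.
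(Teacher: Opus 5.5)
Your final (rotated) argument is correct and is essentially the paper's proof. The paper averages over the sphere $R\sqrt t\,\mathbf e_1+\mathbf u$ with $\mathbf u$ ranging over the full $(d-1)$-dimensional complement $\mathbf e_1^\perp$ and applies Lemma~\ref{lem:imq-spherical-average} with $k=d-1\ge4$; restricting to a four-dimensional slice of that complement, as you do, changes nothing, and the only loose end is $t=1$, where $a=0$ lies outside the lemma's hypothesis $a>0$ but the trivial bound $h_\varepsilon\le\varepsilon^{-1}$ (which the paper uses for that endpoint) closes it.
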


\begin{proof}
Let $\mathbf e_1$ be a unit eigenvector for $\lambda_{\max}$. Fix $0\le t<1$
and average over the sphere
\[
\mathbf x=R\sqrt t\,\mathbf e_1+\mathbf u,
\qquad \mathbf u\perp\mathbf e_1,
\qquad \|\mathbf u\|=R\sqrt{1-t}.
\]
This sphere lies in the radius-$R$ ball and hence in the cube. Positivity of
$\mathbf A$ gives
$\mathbf x^\top\mathbf A\mathbf x\ge\lambda_{\max}R^2t$, so its spherical
average of $F$ is at least $q_t:=\lambda_{\max}R^2t-\delta$.
Decompose an arbitrary center as $\mathbf v=s\mathbf e_1+\mathbf v_\perp$.
Lemma~\ref{lem:imq-spherical-average}, with ambient sphere dimension $d-1\ge4$,
$a=R\sqrt{1-t}$, and $\tau=\varepsilon+(s-R\sqrt t)^2$, yields
\[
\int h_\varepsilon(\mathbf x,\mathbf v)\,d\mu_t(\mathbf x)
\le\frac{1}{R^2(1-t)+\varepsilon}.
\]
Thus, whenever $q_t>0$,
\[
q_t\le\int F\,d\mu_t
\le\sum_j|a_j|\int h_\varepsilon(\mathbf x,\mathbf v_j)\,d\mu_t
\le\frac{\Lambda}{R^2(1-t)+\varepsilon}.
\]
The same endpoint inequality at $t=1$ follows directly from
$h_\varepsilon\le\varepsilon^{-1}$. Taking the supremum over $t$ proves the claim.
\end{proof}

\begin{proof}[Proof of Theorem~\ref{thm:main-atom-separation}]
Part~\textup{(i)} is Proposition~\ref{prop:spectral-yat-approx}. Part~\textup{(ii)} is
Proposition~\ref{prop:imq-slice-lower}; setting $t=1/4$ gives the displayed special
case. Neither argument constrains the IMQ center locations.
\end{proof}

\paragraph{Explicit budget and atom normalization.}
The supremum in~\eqref{eq:main-imq-budget} is explicit. With
$L=\lambda_{\max}R^2$ and $S=R^2$,
\[
\sup_{0\le t\le1}(Lt-\delta)_+(S(1-t)+\varepsilon)
=
\begin{cases}
\dfrac{[L(S+\varepsilon)-\delta S]^2}{4LS},
& \varepsilon/S+\delta/L\le1,\\[6pt]
(L-\delta)\varepsilon,
& \varepsilon/S+\delta/L>1.
\end{cases}
\]
The obstruction depends only on total coefficient variation and does not restrict the
number or locations of IMQ centers. For example, when
$R=\lambda_{\max}=1$, $\delta=1/8$, and $\varepsilon=1/64$, every accurate IMQ
expansion must have $\Lambda\ge3249/16384\approx0.1983$; the scalar supremum is
attained at $t=73/128$. This does not assert that an approximant attains that budget.
The normalization here is $h_\varepsilon=(\|\mathbf x-\mathbf v\|^2+\varepsilon)^{-1}$;
using atoms $\varepsilon h_\varepsilon$ divides the coefficients of the same
expansion by $\varepsilon$. Thus the concrete budget
$\Lambda=1/32$ makes the approximation class empty, rather than forcing a large
but finite width.

\begin{theorem}[Yat-native approximation and IMQ budget obstruction]
\label{thm:yat-native-separation}
Let $f^\star=p^\star+g^\star$ on $[-R,R]^d$, where
$p^\star(\mathbf{x})=\mathbf{x}^\top\mathbf{A}\mathbf{x}$ for a PSD
$\mathbf{A}\in\mathrm{Sym}(d)$ of rank $r$ with $\lambda_{\max}(\mathbf{A})>0$, and
$g^\star\in\mathcal{H}_\IMQ^\varepsilon$ satisfies
$\|g^\star\|_{L^\infty([-R,R]^d)}\le M$. Assume there is a nonincreasing function
$\phi:\mathbb{N}\to[0,\infty)$ with $\phi(m)\downarrow0$ such that for every
$m\in\mathbb{N}$ there exists $\bar F_m\in F_{\IMQ,\varepsilon}$ using at most
$m$ atoms and satisfying
\[
\|g^\star-\bar F_m\|_{L^\infty([-R,R]^d)}\le\phi(m).
\]
Fix $\delta>0$ with $\delta+M<\lambda_{\max}R^2/4$, and define the finite integer
$m_\delta:=\min\{m\in\mathbb{N}:\phi(m)\le\delta/2\}$.

\medskip\noindent\textbf{(i) Yat upper bound.} There exists
$G\in F_{\yat}^{\ge0}$ with at most $r+3m_\delta$ atoms and
$\sup_{\mathbf{x}\in[-R,R]^d}|f^\star(\mathbf{x})-G(\mathbf{x})|\le\delta$.

\medskip\noindent\textbf{(ii) IMQ budget obstruction.} Suppose $d\ge5$. Every
$F\in\mathcal{V}_\IMQ(\Lambda)$ with
$\sup_{[-R,R]^d}|f^\star-F|\le\delta$ necessarily satisfies
\[
\Lambda\ge
\sup_{0\le t\le1}
\bigl(\lambda_{\max}R^2t-(\delta+M)\bigr)_+
\bigl(R^2(1-t)+\varepsilon\bigr).
\]
In particular, at $t=1/4$,
\[
\Lambda\ge
\left(\frac{\lambda_{\max}R^2}{4}-(\delta+M)\right)
\left(\frac{3R^2}{4}+\varepsilon\right).
\]
\end{theorem}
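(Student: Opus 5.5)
Both parts reduce to earlier results, with the IMQ-recovered component $g^\star$ absorbed either exactly (part (i)) or as a bounded perturbation of size $M$ (part (ii)).

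\emph{Part (i).} I would split the error budget evenly between the two components of $f^\star$.
\begin{itemize}
\item For the quadratic part, apply Proposition~\ref{prop:spectral-yat-approx} to $\mathbf A$ with tolerance $\delta/2$. This gives an unbiased Yat expansion $G_1$ with at most $r$ atoms and $\sup_{[-R,R]^d}|G_1-p^\star|\le\delta/2$. These atoms use $b=0$, so $G_1$ lies in $F_{\yat}^{\ge0}$.
\item For the IMQ part, take $\bar F_{m_\delta}$, which has at most $m_\delta$ atoms and error at most $\phi(m_\delta)\le\delta/2$. The integer $m_\delta$ is finite because $\phi(m)\downarrow0$.
\item Rewrite $\bar F_{m_\delta}$ exactly as a Yat expansion $G_2$ with at most $3m_\delta$ positive-bias atoms, using Corollary~\ref{cor:rate-transfer}. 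This rests on the finite-difference identity of Theorem~\ref{thm:imq-embed}(i), and it introduces no additional error.
\item Set $G=G_1+G_2$. It has at most $r+3m_\delta$ atoms, all sharing the same $\varepsilon$, so $G\in F_{\yat}^{\ge0}$. The triangle inequality gives total error at most $\delta$.
\end{itemize}

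\emph{Part (ii).} I would reduce directly to Proposition~\ref{prop:imq-slice-lower}.
\begin{itemize}
\item If $\sup_{[-R,R]^d}|f^\star-F|\le\delta$, then $\sup|F-p^\star|\le\delta+\|g^\star\|_{L^\infty([-R,R]^d)}\le\delta+M$.
\item Apply Proposition~\ref{prop:imq-slice-lower} with $\delta$ replaced by $\delta':=\delta+M\ge0$. That proposition allows any $\delta\ge0$, has no restriction on centers or width, and needs $d\ge5$ and $\mathbf A\succeq0$, both of which hold here.
\item This yields the supremum bound exactly as stated. Setting $t=1/4$ gives the displayed special case.
\item The hypothesis $\delta+M<\lambda_{\max}R^2/4$ ensures the $t=1/4$ factor is positive, so the bound is nontrivial.
\end{itemize}

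There is no real obstacle; the only delicate bookkeeping is in part (i). Membership of $G$ in $F_{\yat}^{\ge0}$ requires that the unbiased quadratic atoms and the positive-bias IMQ-recovery atoms share the fixed $\varepsilon$, or at least lie in the broader span. The atom count also needs care when $\bar F_{m_\delta}$ has fewer than $m_\delta$ atoms, and when $\mathbf A$ has $r$ eigen-atoms. In that last case I would count atoms via the spectral construction, where each nonzero eigenvalue contributes one atom, rather than by rank alone.
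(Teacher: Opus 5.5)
Your proposal is correct and matches the paper's proof step for step. For (i), both use Proposition~\ref{prop:spectral-yat-approx} at tolerance $\delta/2$ together with the exact three-atom conversion of $\bar F_{m_\delta}$ and the triangle inequality; for (ii), both use Proposition~\ref{prop:imq-slice-lower} at tolerance $\delta+M$. Your closing bookkeeping worries are moot: a symmetric matrix of rank $r$ has exactly $r$ nonzero eigenvalues counted with multiplicity, and $F_{\yat}^{\ge0}$ already spans atoms over all $b\ge0$ and $\varepsilon>0$.
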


\begin{proof}
\textbf{(i)} Apply Proposition~\ref{prop:spectral-yat-approx} to $\mathbf{A}$ with
tolerance $\delta/2$, producing an unbiased Yat expansion $G_1\in F_\yat^{\ge0}$
with $r$ atoms and $\sup|G_1-p^\star|\le\delta/2$. By definition of $m_\delta$,
choose $\bar F_{m_\delta}\in F_{\IMQ,\varepsilon}$ with at most $m_\delta$ IMQ
atoms and $\|g^\star-\bar F_{m_\delta}\|_{L^\infty}\le\delta/2$.
Theorem~\ref{thm:imq-embed} converts $\bar F_{m_\delta}$ into a positive-bias Yat
expansion $G_2\in F_{\yat,\varepsilon}^{+}\subset F_\yat^{\ge0}$ with at most
$3m_\delta$ atoms and $G_2=\bar F_{m_\delta}$ pointwise. Set $G:=G_1+G_2$;
the triangle inequality gives the displayed atom count and error bound.

\textbf{(ii)} Given $\sup|f^\star-F|\le\delta$, the bound on $g^\star$ gives
$\sup|p^\star-F|\le\delta+M$. Proposition~\ref{prop:imq-slice-lower} applied with
tolerance $\delta+M$ gives the displayed necessary budget.
\end{proof}

\begin{remark}[RKHS-norm cost of the Yat upper bound]\label{rem:yat-norm-cost}
Theorem~\ref{thm:yat-native-separation}(i) is an \emph{atom-count} statement, not an
RKHS-norm statement. The single-atom rank-one approximation
(Lemma~\ref{lem:single-atom-rank-one}) uses a center at scale $\alpha_0$. Choosing
$\alpha=\alpha_0$, the corresponding section has the exact squared norm
\[
\|k_{0,\varepsilon}(\alpha_0\mathbf{w}_\star,\cdot)\|_{\mathcal{H}_{0,\varepsilon}}^2
=k_{0,\varepsilon}(\alpha_0\mathbf{w}_\star,\alpha_0\mathbf{w}_\star)
=\frac{\alpha_0^4}{\varepsilon}.
\]
In the high-accuracy regime where the $\delta^{-1}$ term dominates, this is
$\Theta(R^{12}d^6/(\varepsilon\delta^4))$. For the rank-$r$ expansion of
Proposition~\ref{prop:spectral-yat-approx}, the total RKHS norm also depends on the
eigenvalues $\lambda_k$ and on cross-inner-products between the $r$ sections; no
dimension-only norm comparison follows from the upper construction. The obstruction in
Theorem~\ref{thm:yat-native-separation} concerns total IMQ coefficient variation, not
atom count, RKHS-ball radius, or generalization error.
\end{remark}

\subsection{Polynomial separation}

\begin{theorem}[Polynomial approximation gap on line-containing domains]
\label{thm:poly-gap}
Let \(\mathbf{w}_\star\neq 0\), \(b\ge 0\), \(\varepsilon>0\), and
\(g_\star(\mathbf{x})=k_{b,\varepsilon}(\mathbf{w}_\star,\mathbf{x})\).
Let \(\mathcal P_2=\{\mathbf{x}\mapsto \mathbf{x}^\top \mathbf{A}\mathbf{x}+\mathbf{a}^\top \mathbf{x}+c:\mathbf{A}=\mathbf{A}^\top\}\) be the space of degree-at-most-two polynomials. Suppose \(X\subset\R^d\) contains a nontrivial line segment \(\{t\mathbf{v}:t\in I\}\) with \(\mathbf{v}:=\mathbf{w}_\star/\|\mathbf{w}_\star\|\) and \(I\subset\R\) a compact interval with nonempty interior. Then
\[
\inf_{P\in\mathcal P_2}\|g_\star-P\|_{L^\infty(X)}>0.
\]
\end{theorem}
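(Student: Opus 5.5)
The plan is to reduce to a one-variable statement along the segment, then use compactness of a finite-dimensional space to turn ``not equal'' into ``positive distance''.

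\textbf{Step 1 (restriction to the segment).} Let $W:=\|\mathbf{w}_\star\|>0$ and $\mathbf{x}=t\mathbf{v}$ with $t\in I$. Then $\mathbf{w}_\star^\top\mathbf{x}=Wt$ and $\|\mathbf{x}-\mathbf{w}_\star\|^2=(t-W)^2$. Hence
\[
g_\star(t\mathbf{v})=\phi(t):=\frac{(Wt+b)^2}{(t-W)^2+\varepsilon}.
\]
Every $P\in\mathcal P_2$ restricts to a univariate polynomial $p(t)=P(t\mathbf{v})$ of degree at most two. It follows that
\[
\|g_\star-P\|_{L^\infty(X)}\ge\sup_{t\in I}|\phi(t)-p(t)|,
\]
so it suffices to show $\inf_{p\in\Pi_2}\|\phi-p\|_{L^\infty(I)}>0$, where $\Pi_2$ is the space of univariate polynomials of degree at most two.

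\textbf{Step 2 (distance to a finite-dimensional subspace).} The space $\Pi_2|_I$ is finite-dimensional, so it is closed in $C(I)$, and the infimum of $\|\phi-p\|_{L^\infty(I)}$ is attained. Since $I$ has nonempty interior, it contains at least three points, so $\sup_I|\cdot|$ is a genuine norm on $\Pi_2$. The infimum can therefore vanish only if $\phi|_I=p|_I$ for some $p\in\Pi_2$.

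\textbf{Step 3 (ruling out equality).} Suppose $\phi=p$ on $I$. Then
\[
(Wt+b)^2=p(t)\bigl((t-W)^2+\varepsilon\bigr)
\]
holds on an interval, so it holds as a polynomial identity in $\mathbb{C}[t]$. The left side has degree exactly two, because $W\ne0$. Comparing degrees forces $p$ to be a constant $c$. Matching coefficients of $t^2$ gives $c=W^2$. Matching coefficients of $t$ gives $2Wb=-2W^3$, i.e.\ $b=-W^2$. This contradicts $b\ge0$ and $W>0$.

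An alternative route to the same contradiction is pole matching as in the $d=1$ proof of Theorem~\ref{thm:imq-embed}(ii). The denominator has the nonreal roots $W\pm i\sqrt\varepsilon$. The identity would force $Wt+b$ to vanish at $t=W+i\sqrt\varepsilon$. The imaginary part of $W(W+i\sqrt\varepsilon)+b$ is $W\sqrt\varepsilon\ne0$, so this is impossible. Either route gives $\phi|_I\notin\Pi_2|_I$.

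\textbf{Main obstacle.} Step 2 needs the most care: the positive infimum must come from attainment of the distance to a closed subspace, not merely from $g_\star\notin\mathcal P_2$. One must also check that restricting to $I$ does not collapse distinct polynomials, which holds because $I$ is infinite. The algebra in Step 3 is routine. If a quantitative bound were wanted, one could compare second divided differences of $\phi$ at three points of $I$, but the qualitative statement only needs the compactness argument.
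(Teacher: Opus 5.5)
Your proof is correct and follows the paper's route. You restrict to the segment, extend the equality on $I$ to a polynomial identity, use the degree count to force $p$ to be constant, rule that constant out, and conclude from closedness of the finite-dimensional space $\Pi_2$ in $C(I)$. The only difference is cosmetic: your primary contradiction matches coefficients and uses $b\ge0$, whereas the paper compares the real double root $-b/\|\mathbf{w}_\star\|$ with the nonreal roots of $(t-\|\mathbf{w}_\star\|)^2+\varepsilon$, which is your alternative pole-matching route and does not need the sign of $b$.
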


\begin{proof}
Let \(\rho:=\|\mathbf{w}_\star\|>0\). Along \(\mathbf{x}=t\mathbf{v}\), \(h(t):=g_\star(t\mathbf{v})=(\rho t+b)^2/((t-\rho)^2+\varepsilon)\).
Restrictions of \(\mathcal P_2\) to this line are univariate polynomials of degree at most two.
Suppose \(h(t)=p(t)\) for all \(t\in I\) with \(p\in\Pi_2\). By real-analyticity this extends globally:
\((\rho t+b)^2=p(t)((t-\rho)^2+\varepsilon)\). If \(p\neq 0\) has degree \(k\), the right side has degree
\(k+2\) (since \((t-\rho)^2+\varepsilon\) is monic quadratic), while the left has degree two, so \(k=0\), i.e.\ \(p=C\) constant. Then \((\rho t+b)^2=C((t-\rho)^2+\varepsilon)\). If \(C=0\): impossible since \(\rho>0\). If \(C\neq 0\): the left side has real double root \(-b/\rho\) while \((t-\rho)^2+\varepsilon\) has nonreal roots \(\rho\pm i\sqrt{\varepsilon}\); contradiction. Hence \(h\notin\Pi_2\), and since \(\Pi_2\) is closed in \(C(I)\), its distance from \(h\) is strictly positive.
\end{proof}

\section{Layer Norms, Capacity, and Composition}
\label{app:layer-theory}

\subsection{Layer-local squared Gram norm and Rademacher complexity}
\label{app:rademacher-proof}

\begin{proposition}[Closed-form squared RKHS norm]
\label{prop:rkhs-norm}
Let $b\ge 0$, $\varepsilon>0$, and let
$f(\cdot)=\sum_{j=1}^m\alpha_j\,k_{b,\varepsilon}(\mathbf{w}_j,\cdot)\in\mathcal{H}_{b,\varepsilon}$
be a finite kernel expansion with shared $(b,\varepsilon)$. Then
\begin{equation}
\label{eq:rkhs-norm}
\|f\|_{\mathcal{H}_{b,\varepsilon}}^{2}
\;=\; \boldsymbol{\alpha}^\top\mathbf{K}\boldsymbol{\alpha},
\qquad \mathbf{K}_{ij}=k_{b,\varepsilon}(\mathbf{w}_i,\mathbf{w}_j).
\end{equation}
The identity holds for any choice of $\{\mathbf{w}_j\}$, including repeated centers; in that case $\mathbf{K}$ may be singular, and although different coefficient vectors can represent the same function $f$, their difference lies in $\ker\mathbf{K}$ and the quadratic form $\boldsymbol{\alpha}^\top\mathbf{K}\boldsymbol{\alpha}$ is invariant across equivalent representations.
\end{proposition}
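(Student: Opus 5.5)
The plan is to prove \eqref{eq:rkhs-norm} directly from the reproducing property in the global RKHS $\mathcal H_{b,\varepsilon}$ on $\R^d$. Theorem~\ref{thm:psd-mercer} already gives that $k_{b,\varepsilon}$ is symmetric PSD, so Moore--Aronszajn supplies $\mathcal H_{b,\varepsilon}$ and every section $k_{b,\varepsilon}(\mathbf w_j,\cdot)$ lies in it. Hence $f$ is a well-defined element, and its squared norm is $\langle f,f\rangle$.

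I will expand this inner product by bilinearity,
\[
\|f\|_{\mathcal H_{b,\varepsilon}}^2=\sum_{i,j}\alpha_i\alpha_j\bigl\langle k_{b,\varepsilon}(\mathbf w_i,\cdot),k_{b,\varepsilon}(\mathbf w_j,\cdot)\bigr\rangle_{\mathcal H_{b,\varepsilon}} .
\]
Applying the reproducing property $g(\mathbf w_i)=\langle g,k(\mathbf w_i,\cdot)\rangle$ with $g=k(\mathbf w_j,\cdot)$ turns each inner product into $k_{b,\varepsilon}(\mathbf w_j,\mathbf w_i)$. Symmetry of the kernel then makes this equal to $\mathbf K_{ij}$, which gives $\boldsymbol\alpha^\top\mathbf K\boldsymbol\alpha$. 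Nothing in this computation assumes the centers are distinct, so repeated centers are covered automatically.

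Next I will verify invariance across representations. Suppose $\boldsymbol\alpha$ and $\boldsymbol\beta$ represent the same $f$ on the same center list, and set $\boldsymbol\gamma=\boldsymbol\alpha-\boldsymbol\beta$. Then $\sum_j\gamma_jk(\mathbf w_j,\cdot)=0$. Evaluating at each $\mathbf w_i$ gives $(\mathbf K\boldsymbol\gamma)_i=0$, so $\boldsymbol\gamma\in\ker\mathbf K$. Expanding $\boldsymbol\alpha^\top\mathbf K\boldsymbol\alpha=(\boldsymbol\beta+\boldsymbol\gamma)^\top\mathbf K(\boldsymbol\beta+\boldsymbol\gamma)$, every term containing $\mathbf K\boldsymbol\gamma$ vanishes, using symmetry of $\mathbf K$ for the cross terms. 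Hence the two quadratic forms agree.

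If the two representations use different center lists, I will merge both lists into one (zero-padding coefficients where needed) and apply the same argument; alternatively, equality already follows because both forms equal $\|f\|^2$. Singularity of $\mathbf K$ under repeated centers is then just the statement that $\ker\mathbf K\neq\{0\}$, which can happen, for example, with $\mathbf w_1=\mathbf w_2$ and $\boldsymbol\gamma=(1,-1,0,\dots)$.

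The only point needing care is being explicit that the norm is the global one on $\R^d$, not a restricted quotient norm. Beyond that, no step is a real obstacle.
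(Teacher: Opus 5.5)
Your proposal is correct and follows essentially the same route as the paper: bilinearity plus the reproducing property gives $\boldsymbol\alpha^\top\mathbf K\boldsymbol\alpha$ with no distinctness assumption on the centers. The paper's invariance remark is the one you give: two coefficient vectors on the same center list representing $f$ differ by an element of $\ker\mathbf K$, and both quadratic forms equal $\langle f,f\rangle_{\mathcal H_{b,\varepsilon}}$.
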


\begin{proof}
By bilinearity and the reproducing property,
\begin{align*}
\|f\|^2_{\mathcal{H}_{b,\varepsilon}}
&\;=\; \sum_{i,j}\alpha_i\alpha_j\,\bigl\langle k_{b,\varepsilon}(\mathbf{w}_i,\cdot),\,k_{b,\varepsilon}(\mathbf{w}_j,\cdot)\bigr\rangle_{\mathcal{H}_{b,\varepsilon}} \\
&\;=\;\sum_{i,j}\alpha_i\alpha_j\,k_{b,\varepsilon}(\mathbf{w}_i,\mathbf{w}_j)\;=\;\boldsymbol{\alpha}^\top\mathbf{K}\boldsymbol{\alpha}. \qedhere
\end{align*}
\end{proof}

\noindent
The identity $\|f\|_{\mathcal{H}_{b,\varepsilon}}^2=\boldsymbol{\alpha}^\top\mathbf{K}\boldsymbol{\alpha}$ remains exact when $\mathbf{K}$ is rank-deficient. If $\boldsymbol{\alpha}$ and $\boldsymbol{\alpha}'$ represent the same function, then $\mathbf{K}(\boldsymbol{\alpha}-\boldsymbol{\alpha}')=0$ and both quadratic forms equal $\langle f,f\rangle_{\mathcal{H}_{b,\varepsilon}}$.

For a PSD Gram matrix, $\mathbf{K}\mathbf{K}^\dagger\mathbf{K}=\mathbf{K}$ gives the equivalent expression
$(\mathbf{K}\boldsymbol{\alpha})^\top\mathbf{K}^\dagger(\mathbf{K}\boldsymbol{\alpha})=\boldsymbol{\alpha}^\top\mathbf{K}\boldsymbol{\alpha}$. Thus coincident or near-coincident centers affect numerical conditioning, not the RKHS identity. In floating-point computation, one may deduplicate identical centers and use a PSD-aware eigendecomposition or pseudoinverse. When $b>0$ and the retained centers are distinct, Corollary~\ref{cor:char-spd} guarantees a strictly positive-definite Gram matrix, for which a stable Cholesky factorization is available.

\begin{corollary}[Common-logit-invariant multiclass norm]
\label{cor:centered-class-norm}
Let $f_c=\sum_{j=1}^m\alpha_{jc}k_{b,\varepsilon}(\mathbf w_j,\cdot)$ for
$c=1,\ldots,C$, define
$\bar{\boldsymbol\alpha}=C^{-1}\sum_c\boldsymbol\alpha_c$, and let
$\bar f=C^{-1}\sum_c f_c$. Then
\[
\|f_c-\bar f\|_{\mathcal H_{b,\varepsilon}}^2
=(\boldsymbol\alpha_c-\bar{\boldsymbol\alpha})^\top
\mathbf K(\boldsymbol\alpha_c-\bar{\boldsymbol\alpha}).
\]
These $C$ quantities are unchanged when the same RKHS function is added to every
class logit. Each is the squared RKHS norm of a centered class component. Their sum
is the squared quotient norm on $\mathcal H_{b,\varepsilon}^C$ modulo common logits:
\[
\inf_{h\in\mathcal H_{b,\varepsilon}}\sum_{c=1}^C\|f_c-h\|_{\mathcal H_{b,\varepsilon}}^2
=\sum_{c=1}^C\|f_c-\bar f\|_{\mathcal H_{b,\varepsilon}}^2.
\]
\end{corollary}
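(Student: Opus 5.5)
The plan is to reduce everything to Proposition~\ref{prop:rkhs-norm} by linearity, and then handle the quotient norm as a finite orthogonal-decomposition problem in $\mathcal H_{b,\varepsilon}$.

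For the first identity, I note that $\bar f=\sum_j\bar\alpha_j k_{b,\varepsilon}(\mathbf w_j,\cdot)$, because the map $\boldsymbol\alpha\mapsto\sum_j\alpha_jk_{b,\varepsilon}(\mathbf w_j,\cdot)$ is linear. Hence $f_c-\bar f$ is the finite expansion with coefficient vector $\boldsymbol\alpha_c-\bar{\boldsymbol\alpha}$ on the same centers. Proposition~\ref{prop:rkhs-norm}, which allows repeated centers and singular $\mathbf K$, then gives the displayed quadratic form. For invariance, I add the same $g\in\mathcal H_{b,\varepsilon}$ to every $f_c$. The mean shifts to $\bar f+g$, so each $f_c-\bar f$ is unchanged as a function, and its RKHS norm is unchanged too. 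I will phrase this at the function level, since $g$ need not be a finite expansion on the same centers. If it is, the coefficient shift cancels in $\boldsymbol\alpha_c-\bar{\boldsymbol\alpha}$.

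For the quotient identity, I expand $\Phi(h):=\sum_c\|f_c-h\|^2$ in the Hilbert space. Writing $f_c-h=(f_c-\bar f)+(\bar f-h)$ gives
\[
\Phi(h)=\sum_c\|f_c-\bar f\|^2+2\Bigl\langle\sum_c(f_c-\bar f),\,\bar f-h\Bigr\rangle+C\|\bar f-h\|^2 .
\]
The cross term vanishes because $\sum_c(f_c-\bar f)=0$. So $\Phi(h)\ge\sum_c\|f_c-\bar f\|^2$, with equality at $h=\bar f\in\mathcal H_{b,\varepsilon}$, and the infimum is attained and equals the claimed sum. Finally, I interpret this as the quotient norm of $(f_c)_c$ in $\mathcal H_{b,\varepsilon}^C$ modulo the closed subspace of diagonal tuples $(h,\dots,h)$. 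The computation shows that $(f_c-\bar f)_c$ is the orthogonal projection onto the complement of that subspace.

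No step is a genuine obstacle. The only care needed is to justify the invariance claim without assuming the added function lies in the span of the fixed centers, which the function-level argument handles.
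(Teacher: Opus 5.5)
Your proposal is correct and follows essentially the same route as the paper. The paper likewise applies Proposition~\ref{prop:rkhs-norm} to $f_c-\bar f$ and argues invariance at the function level. It then uses $\sum_c(f_c-\bar f)=0$ to obtain $\sum_c\|f_c-h\|^2=\sum_c\|f_c-\bar f\|^2+C\|\bar f-h\|^2$, with the minimum attained at $h=\bar f$. Your added remark identifying $(f_c-\bar f)_c$ as the orthogonal projection onto the complement of the diagonal subspace is a correct gloss on the quotient-norm interpretation.
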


\phantomsection
\label{app:centered-class-proof}
\begin{proof}[Proof of Corollary~\ref{cor:centered-class-norm}]
Proposition~\ref{prop:rkhs-norm} applied to $f_c-\bar f$ gives the identity.
Adding any common $h\in\mathcal H_{b,\varepsilon}$ changes the mean to $\bar f+h$,
leaving every centered function unchanged. Since $\sum_c(f_c-\bar f)=0$,
\[
\sum_c\|f_c-h\|^2=\sum_c\|f_c-\bar f\|^2+C\|\bar f-h\|^2.
\]
The minimum is attained at $h=\bar f$, which proves the quotient identity.
\end{proof}

\begin{theorem}[Single-layer Rademacher bound]
\label{thm:rademacher}
Fix $b\ge 0$, $\varepsilon>0$. Let
$\mathcal{F}_{B,b,\varepsilon}=\{f\in\mathcal{H}_{b,\varepsilon}:\|f\|_{\mathcal{H}_{b,\varepsilon}}\le B\}$
be the RKHS ball of radius $B$, and let $\{\mathbf{x}_i\}_{i=1}^n$ be i.i.d.\ samples
satisfying $\|\mathbf{x}_i\|\le R$ almost surely. With the convention
$\widehat{\mathrm{Rad}}_n(\mathcal{F}):=\mathbb{E}_{\boldsymbol{\sigma}}\sup_{f\in\mathcal{F}}\frac{1}{n}\sum_{i=1}^n\sigma_i f(\mathbf{x}_i)$,
$\sigma_i\stackrel{\mathrm{iid}}{\sim}\mathrm{Unif}\{\pm 1\}$,
\begin{equation}
\label{eq:rademacher}
\widehat{\mathrm{Rad}}_n(\mathcal{F}_{B,b,\varepsilon})
\;\le\; \frac{B(R^2+b)}{\sqrt{n}\,\sqrt{\varepsilon}}.
\end{equation}
\end{theorem}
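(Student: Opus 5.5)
The plan is the standard RKHS-ball argument: the reproducing property turns the supremum into a norm, and then the kernel diagonal bounds that norm. For $f\in\mathcal F_{B,b,\varepsilon}$, the reproducing property in $\mathcal H_{b,\varepsilon}$ (Theorem~\ref{thm:psd-mercer} guarantees this RKHS exists) gives
\[
\frac1n\sum_{i=1}^n\sigma_i f(\mathbf x_i)
=\Bigl\langle f,\frac1n\sum_{i=1}^n\sigma_i k_{b,\varepsilon}(\mathbf x_i,\cdot)\Bigr\rangle_{\mathcal H_{b,\varepsilon}}.
\]
Cauchy--Schwarz then bounds the supremum over the ball by $\frac{B}{n}\bigl\|\sum_i\sigma_i k_{b,\varepsilon}(\mathbf x_i,\cdot)\bigr\|_{\mathcal H_{b,\varepsilon}}$. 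Equality is attained by aligning $f$ with this vector, but only the upper bound is needed.

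Next, I take the expectation over $\boldsymbol\sigma$ and apply Jensen's inequality to the concave square root. Independence and mean zero of the Rademacher signs kill the cross terms, so
\[
\mathbb E_{\boldsymbol\sigma}\Bigl\|\sum_i\sigma_i k_{b,\varepsilon}(\mathbf x_i,\cdot)\Bigr\|
\le\Bigl(\sum_{i,j}\mathbb E[\sigma_i\sigma_j]\,k_{b,\varepsilon}(\mathbf x_i,\mathbf x_j)\Bigr)^{1/2}
=\Bigl(\sum_i k_{b,\varepsilon}(\mathbf x_i,\mathbf x_i)\Bigr)^{1/2}.
\]

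The only kernel-specific step is the diagonal. Setting $\mathbf w=\mathbf x$ in~\eqref{eq:yat-def} gives $k_{b,\varepsilon}(\mathbf x,\mathbf x)=(\|\mathbf x\|^2+b)^2/\varepsilon$, as already noted after Proposition~\ref{prop:bounded}. Since $b\ge0$ and $\|\mathbf x_i\|\le R$, this is at most $(R^2+b)^2/\varepsilon$. It follows that $\sum_i k_{b,\varepsilon}(\mathbf x_i,\mathbf x_i)\le n(R^2+b)^2/\varepsilon$. Combining the three steps gives
\[
\widehat{\mathrm{Rad}}_n\le \frac{B}{n}\sqrt{n}\,\frac{R^2+b}{\sqrt\varepsilon},
\]
which is~\eqref{eq:rademacher}.

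No step is a real obstacle. The points that need care are justifying that $\sup_f$ is measurable and finite, which is clear because the supremum equals the explicit norm above, and checking that the almost-sure norm bound on the samples is all that is used. The i.i.d. assumption plays no role in the empirical bound.
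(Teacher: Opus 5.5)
Your proof is correct and follows essentially the same route as the paper. It uses the reproducing property and Cauchy--Schwarz to reduce the supremum to $\frac{B}{n}\mathbb{E}_{\boldsymbol\sigma}\|\sum_i\sigma_i k_{b,\varepsilon}(\cdot,\mathbf x_i)\|_{\mathcal H_{b,\varepsilon}}$, Jensen to reach $\sqrt{\sum_i k_{b,\varepsilon}(\mathbf x_i,\mathbf x_i)}$, and then the Yat diagonal bound $(\|\mathbf x\|^2+b)^2/\varepsilon\le(R^2+b)^2/\varepsilon$; your observation that only the sample norm bound matters, not independence, matches the paper's data-dependent refinement in Corollary~\ref{cor:rademacher-empirical}.
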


\begin{proof}
For any $f\in\mathcal{F}_{B,b,\varepsilon}$ and $\mathbf{x}\in\R^d$, the reproducing property of the fixed kernel $k_{b,\varepsilon}$ gives $f(\mathbf{x})=\langle f, k_{b,\varepsilon}(\cdot,\mathbf{x})\rangle_{\mathcal{H}_{b,\varepsilon}}$. By Cauchy--Schwarz,
\begin{equation}
\label{eq:f-cs-app}
|f(\mathbf{x})| \;\le\; \|f\|_{\mathcal{H}_{b,\varepsilon}}\sqrt{k_{b,\varepsilon}(\mathbf{x},\mathbf{x})} \;\le\; B\sqrt{k_{b,\varepsilon}(\mathbf{x},\mathbf{x})}.
\end{equation}
The diagonal of the biased Yat kernel is
\[
k_{b,\varepsilon}(\mathbf{x},\mathbf{x})\;=\;\frac{(\|\mathbf{x}\|^2+b)^2}{0+\varepsilon}\;=\;\frac{(\|\mathbf{x}\|^2+b)^2}{\varepsilon},
\]
since $\mathbf{x}^\top\mathbf{x}+b=\|\mathbf{x}\|^2+b$ and $\|\mathbf{x}-\mathbf{x}\|^2+\varepsilon=\varepsilon$. For $\|\mathbf{x}\|\le R$,
\[
\sqrt{k_{b,\varepsilon}(\mathbf{x},\mathbf{x})} \;\le\; \frac{\|\mathbf{x}\|^2+b}{\sqrt{\varepsilon}} \;\le\; \frac{R^2+b}{\sqrt{\varepsilon}}.
\]
The standard RKHS Rademacher bound for a ball of radius $B$ follows from the reproducing property and Jensen's inequality (\citealt[Lemma~22]{bartlett2002}, with the one-sided convention $\widehat{\mathrm{Rad}}_n(\mathcal{F}):=\mathbb{E}_{\boldsymbol{\sigma}}\sup_{f\in\mathcal{F}}\frac{1}{n}\sum_i\sigma_i f(\mathbf{x}_i)$ used in~\eqref{eq:rademacher}). Writing $f(\mathbf{x}_i)=\langle f,k_{b,\varepsilon}(\cdot,\mathbf{x}_i)\rangle$, Hilbert-space duality gives the factor $B$. Jensen's inequality then gives $\mathbb{E}\|S\|\le\sqrt{\mathbb{E}\|S\|^2}$ for $S=\sum_i\sigma_i k_{b,\varepsilon}(\cdot,\mathbf{x}_i)$:
\[
\widehat{\mathrm{Rad}}_n(\mathcal{F}_{B,b,\varepsilon})
\;=\;\frac{B}{n}\,\mathbb{E}_{\boldsymbol{\sigma}}\Big\|\sum_{i}\sigma_i k_{b,\varepsilon}(\cdot,\mathbf{x}_i)\Big\|_{\mathcal{H}_{b,\varepsilon}}
\;\le\; \frac{B}{\sqrt{n}}\cdot\sqrt{\frac{1}{n}\sum_{i=1}^{n}k_{b,\varepsilon}(\mathbf{x}_i,\mathbf{x}_i)}.
\]
The constant is exactly $1$ (no symmetrization factor of $2$) under this one-sided convention. Bounding the per-sample term by $(R^2+b)^2/\varepsilon$ and taking the square root gives
\[
\widehat{\mathrm{Rad}}_n(\mathcal{F}_{B,b,\varepsilon})
\;\le\; B\sqrt{\frac{1}{n}\cdot\frac{(R^2+b)^2}{\varepsilon}}
\;=\; \frac{B(R^2+b)}{\sqrt{n}\,\sqrt{\varepsilon}}.
\]
\end{proof}

\begin{corollary*}[Unbiased case]
At $b=0$, $\widehat{\mathrm{Rad}}_n(\mathcal{F}_{B,0,\varepsilon})\le BR^2/(\sqrt{n}\,\sqrt{\varepsilon})$.
\end{corollary*}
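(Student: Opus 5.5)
The statement to prove is the unbiased case of Theorem~\ref{thm:rademacher}: $\widehat{\mathrm{Rad}}_n(\mathcal{F}_{B,0,\varepsilon})\le BR^2/(\sqrt n\sqrt\varepsilon)$. I would obtain it as a direct specialization of Theorem~\ref{thm:rademacher} at $b=0$. That theorem is stated for every $b\ge0$, and its proof never divides by $b$ or uses the constant feature. So the approach is to check that each step remains valid at $b=0$ and then substitute.

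\textbf{Step 1: the kernel and its RKHS are well defined.} At $b=0$ the kernel $k_{0,\varepsilon}$ is PSD by Theorem~\ref{thm:psd-mercer}, so $\mathcal{H}_{0,\varepsilon}$ exists. By Theorem~\ref{thm:channel-decomp} and the zero-kernel convention, this RKHS is the quadratic channel $\mathcal{H}_{k_2}$. Nothing further about its structure is needed.

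\textbf{Step 2: the diagonal bound.} The diagonal is $k_{0,\varepsilon}(\mathbf x,\mathbf x)=\|\mathbf x\|^4/\varepsilon$. For $\|\mathbf x\|\le R$ this gives $\sqrt{k_{0,\varepsilon}(\mathbf x,\mathbf x)}\le R^2/\sqrt\varepsilon$.

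\textbf{Step 3: the standard RKHS-ball argument.} I would write $f(\mathbf x_i)=\langle f,k_{0,\varepsilon}(\cdot,\mathbf x_i)\rangle$. Cauchy--Schwarz then bounds the supremum over the ball by
\[
\frac{B}{n}\Bigl\|\sum_i\sigma_ik_{0,\varepsilon}(\cdot,\mathbf x_i)\Bigr\|.
\]
Jensen's inequality and the orthogonality $\mathbb E\sigma_i\sigma_j=\delta_{ij}$ reduce this to
\[
\frac{B}{n}\Bigl(\sum_i k_{0,\varepsilon}(\mathbf x_i,\mathbf x_i)\Bigr)^{1/2}.
\]
Inserting the bound from Step~2 yields $BR^2/\sqrt{n\varepsilon}$.

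There is no genuine obstacle. The only point needing care is confirming that the $b=0$ endpoint does not break the reproducing-property argument, for instance through the vanishing of sections at the origin. It does not: that argument uses only the diagonal values, and these are merely bounded, possibly by zero. Hence the bound is exactly Theorem~\ref{thm:rademacher} with $b=0$ substituted into $R^2+b$.
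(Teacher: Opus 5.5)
Your proposal is correct and matches the paper, which obtains this corollary by substituting $b=0$ into Theorem~\ref{thm:rademacher}. The same reproducing-property, Cauchy--Schwarz and Jensen argument applies, with the diagonal $\|\mathbf x\|^4/\varepsilon$ bounded by $R^4/\varepsilon$.
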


\begin{proposition}[Uniform domination over learned kernel parameters]
\label{prop:hyperparameter-domination}
Fix $b_{\max}\ge0$ and $\varepsilon_{\min}>0$. For every
$0\le b\le b_{\max}$ and $\varepsilon\ge\varepsilon_{\min}$,
\[
k_{b,\varepsilon}\preceq
k_{b_{\max},\varepsilon_{\min}}.
\]
Consequently, every radius-$B$ ball in $\mathcal H_{b,\varepsilon}$ embeds
contractively into the radius-$B$ ball of the single reference RKHS
$\mathcal H_{b_{\max},\varepsilon_{\min}}$.
\end{proposition}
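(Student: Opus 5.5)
The plan is to write the difference $k_{b_{\max},\varepsilon_{\min}}-k_{b,\varepsilon}$ as a sum of two Schur products of PSD kernels, and then invoke Aronszajn's inclusion theorem with constant $c=1$, as in Theorem~\ref{thm:kernel-order}. Write $p_b(\mathbf x,\mathbf w)=(\mathbf x^\top\mathbf w+b)^2$, so that $k_{b,\varepsilon}=p_b\,h_\varepsilon$. Inserting and subtracting $p_b h_{\varepsilon_{\min}}$ gives
\[
k_{b_{\max},\varepsilon_{\min}}-k_{b,\varepsilon}
=\bigl(p_{b_{\max}}-p_b\bigr)h_{\varepsilon_{\min}}
+p_b\bigl(h_{\varepsilon_{\min}}-h_\varepsilon\bigr).
\]
It therefore suffices to show that each bracketed factor is PSD. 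The Schur product theorem then makes both products PSD, and hence their sum.

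\emph{First term.} Expanding the squares gives
\[
p_{b_{\max}}-p_b=(b_{\max}^2-b^2)+2(b_{\max}-b)\,\mathbf x^\top\mathbf w .
\]
Both coefficients are nonnegative because $0\le b\le b_{\max}$. The constant kernel $1$ and the linear kernel $\mathbf x^\top\mathbf w$ are PSD, so this difference is a nonnegative combination of PSD kernels and is PSD. The factor $h_{\varepsilon_{\min}}$ is PSD by the Laplace representation~\eqref{eq:laplace-rep-app}.

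\emph{Second term.} This is the step needing the most care, since a difference of two IMQ kernels is not obviously PSD. I would use the Laplace representation~\eqref{eq:laplace-rep-app} again:
\[
h_{\varepsilon_{\min}}(\mathbf x,\mathbf w)-h_\varepsilon(\mathbf x,\mathbf w)
=\int_0^\infty\bigl(e^{-t\varepsilon_{\min}}-e^{-t\varepsilon}\bigr)e^{-t\|\mathbf x-\mathbf w\|^2}\,dt .
\]
Because $\varepsilon\ge\varepsilon_{\min}$, the weight $e^{-t\varepsilon_{\min}}-e^{-t\varepsilon}$ is nonnegative for every $t\ge 0$. The difference is therefore a nonnegative mixture of Gaussian PSD kernels and is PSD, with the integral converging absolutely since its weight is at most $e^{-t\varepsilon_{\min}}$. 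The factor $p_b$ is PSD for $b\ge0$ by Theorem~\ref{thm:psd-mercer}.

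Combining the two terms yields $k_{b,\varepsilon}\preceq k_{b_{\max},\varepsilon_{\min}}$. Aronszajn's inclusion theorem with $c=1$ then gives $\mathcal H_{b,\varepsilon}\subseteq\mathcal H_{b_{\max},\varepsilon_{\min}}$ with $\|f\|_{\mathcal H_{b_{\max},\varepsilon_{\min}}}\le\|f\|_{\mathcal H_{b,\varepsilon}}$. This contractive embedding maps each radius-$B$ ball of $\mathcal H_{b,\varepsilon}$ into the radius-$B$ ball of the reference RKHS. The case $b=0$ is covered by the same argument, using the zero-space convention only when some channel vanishes.
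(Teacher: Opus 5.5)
Your proposal is correct and follows the paper's proof essentially verbatim. It uses the same splitting $k_{b_{\max},\varepsilon_{\min}}-k_{b,\varepsilon}=(p_{b_{\max}}-p_b)h_{\varepsilon_{\min}}+p_b(h_{\varepsilon_{\min}}-h_\varepsilon)$, the same numerator expansion, the Laplace--Gaussian mixture with nonnegative weight $e^{-t\varepsilon_{\min}}-e^{-t\varepsilon}$, the Schur product theorem, and Aronszajn inclusion with constant one; your remarks on absolute convergence and the $b=0$ case are harmless extras.
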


\phantomsection
\label{app:parameter-domination-proof}
\begin{proof}[Proof of Proposition~\ref{prop:hyperparameter-domination}]
Write $p_b(\mathbf x,\mathbf w)=(\mathbf x^\top\mathbf w+b)^2$ and
$h_\varepsilon(\mathbf x,\mathbf w)=D_\varepsilon(\mathbf x,\mathbf w)^{-1}$.
Both
$p_{b_{\max}}-p_b=2(b_{\max}-b)\mathbf x^\top\mathbf w
+(b_{\max}^2-b^2)$ and
$h_{\varepsilon_{\min}}-h_\varepsilon$ are PSD; for the latter, use the
Laplace--Gaussian mixture and the nonnegative weight
$e^{-t\varepsilon_{\min}}-e^{-t\varepsilon}$. Hence the Schur-product theorem gives
\[
k_{b_{\max},\varepsilon_{\min}}-k_{b,\varepsilon}
=(p_{b_{\max}}-p_b)h_{\varepsilon_{\min}}
+p_b(h_{\varepsilon_{\min}}-h_\varepsilon)\succeq0.
\]
The contractive RKHS embedding is the standard consequence of kernel order.
\end{proof}

\subsection{Data-dependent capacity refinement}
\label{app:proofs-capacity-extra}

\begin{corollary}[Data-dependent empirical bound]
\label{cor:rademacher-empirical}
For fixed shared $b\ge0$, $\varepsilon>0$, and any finite sample
$\mathbf x_1,\ldots,\mathbf x_n\in\R^d$, the radius-$B$ ball in
$\mathcal H_{b,\varepsilon}$ satisfies the data-dependent bound
\begin{equation}
\label{eq:rademacher-empirical}
\widehat{\mathrm{Rad}}_n(\mathcal{F}_{B,b,\varepsilon})
\;\le\;
\frac{B}{\sqrt{n\varepsilon}}
\sqrt{\frac{1}{n}\sum_{i=1}^n(\|\mathbf{x}_i\|^2+b)^2},
\end{equation}
computable from the sample alone and at most $B(R^2+b)/(\sqrt{n}\,\sqrt{\varepsilon})$
on any sample with $\max_i\|\mathbf{x}_i\|\le R$.
\end{corollary}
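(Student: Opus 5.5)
The plan is to combine the standard Rademacher calculation from the proof of Theorem~\ref{thm:rademacher} with the exact Yat diagonal, and to stop before the final worst-case bound on the diagonal. Fix the sample $\mathbf x_1,\ldots,\mathbf x_n$ and write $S:=\sum_i\sigma_i k_{b,\varepsilon}(\cdot,\mathbf x_i)\in\mathcal H_{b,\varepsilon}$. For every $f$ in the ball, the reproducing property gives $\frac1n\sum_i\sigma_i f(\mathbf x_i)=\frac1n\langle f,S\rangle$. Cauchy--Schwarz then bounds the supremum over the radius-$B$ ball by $\frac Bn\|S\|_{\mathcal H_{b,\varepsilon}}$. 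This bound is attained at $f=BS/\|S\|$, so it is in fact an equality, although only the inequality is needed.

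Next I take the expectation over $\boldsymbol\sigma$ and apply Jensen, $\mathbb E\|S\|\le(\mathbb E\|S\|^2)^{1/2}$. Expanding $\|S\|^2=\sum_{i,j}\sigma_i\sigma_j k_{b,\varepsilon}(\mathbf x_i,\mathbf x_j)$ and using $\mathbb E\sigma_i\sigma_j=\delta_{ij}$ gives $\mathbb E\|S\|^2=\sum_i k_{b,\varepsilon}(\mathbf x_i,\mathbf x_i)$. I then substitute the diagonal identity $k_{b,\varepsilon}(\mathbf x,\mathbf x)=(\|\mathbf x\|^2+b)^2/\varepsilon$, which is exactly the one used in Theorem~\ref{thm:rademacher}. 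Collecting the factors gives $\frac Bn\sqrt{\sum_i(\|\mathbf x_i\|^2+b)^2/\varepsilon}$, and this equals the right-hand side of~\eqref{eq:rademacher-empirical}. No distributional assumption is used, so the bound holds for any fixed finite sample, and it depends only on the sample norms and $(b,\varepsilon)$.

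For the comparison statement, if $\max_i\|\mathbf x_i\|\le R$ then each summand satisfies $(\|\mathbf x_i\|^2+b)^2\le(R^2+b)^2$ because $b\ge0$. The average is therefore at most $(R^2+b)^2$, and the bound reduces to $B(R^2+b)/\sqrt{n\varepsilon}$.

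There is no real obstacle here. The only care needed is to keep the one-sided Rademacher convention so that no symmetrization factor $2$ appears, and to note that Jensen is applied to the concave square root.
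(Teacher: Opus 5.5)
Your proof is correct and follows the paper's route. The paper reuses the intermediate bound $(B/\sqrt n)\sqrt{n^{-1}\sum_i k_{b,\varepsilon}(\mathbf x_i,\mathbf x_i)}$ from the proof of Theorem~\ref{thm:rademacher} (reproducing property, Cauchy--Schwarz, Jensen) and substitutes the exact diagonal $(\|\mathbf x_i\|^2+b)^2/\varepsilon$, which is what you re-derive; your use of $b\ge0$ to keep $\|\mathbf x_i\|^2+b$ nonnegative, so that squaring preserves the inequality, correctly yields the $B(R^2+b)/\sqrt{n\varepsilon}$ comparison.
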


\begin{proof}[Proof of Corollary~\ref{cor:rademacher-empirical}]
The proof of Theorem~\ref{thm:rademacher} bounds $\widehat{\mathrm{Rad}}_n$ by
$(B/\sqrt{n})\sqrt{(1/n)\sum_i k_{b,\varepsilon}(\mathbf{x}_i,\mathbf{x}_i)}$ before applying
the worst-case bound on the diagonal. Using the Yat-kernel diagonal
$k_{b,\varepsilon}(\mathbf{x}_i,\mathbf{x}_i)=(\|\mathbf{x}_i\|^2+b)^2/\varepsilon$ directly
gives~\eqref{eq:rademacher-empirical}.
\end{proof}

\paragraph{Input scale and parameter selection.}
Corollary~\ref{cor:rademacher-empirical} replaces the worst-case factor $R^2+b$ by
$\sqrt{n^{-1}\sum_i(\|\mathbf{x}_i\|^2+b)^2}$. Under isotropic Gaussian inputs, this
empirical factor has leading scale $d+b$ by the law of large numbers and standard
chi-square concentration, so the empirical bound has leading scale
$B(d+b)/\sqrt{n\varepsilon}$. This is a probabilistic consequence of the empirical
corollary, not of the almost-sure radius assumption in Theorem~\ref{thm:rademacher};
the Gaussian RBF and IMQ diagonals are constant.

The Rademacher bound is layer-local: $B$ controls the norm of one
shared-$(b,\varepsilon)$ Yat layer and $R$ is a data-side radius on that layer's input.
If $(b,\varepsilon)$ is selected using the same sample, Theorem~\ref{thm:rademacher}
at the final learned values is not by itself a uniform generalization bound.
Proposition~\ref{prop:hyperparameter-domination} supplies a conservative repair when
training enforces $b\le b_{\max}$, $\varepsilon\ge\varepsilon_{\min}$, and a common
radius $B$: use the reference-kernel diagonal
$(R^2+b_{\max})^2/\varepsilon_{\min}$. A bound expressed in the final learned values
requires separate selection, covering, stability, or sample-splitting control. Likewise,
freezing a prefix after fitting it on the same data does not alone turn its pullback
kernel into a sample-independent population guarantee.

\subsection{Pullback RKHS Structure for Trained Prefixes}
\label{app:pullback-loewner}

\subsubsection{Prefix-conditioned pullback}
\label{subsec:pullback-rkhs}

This section proves a prefix-conditioned pullback statement: once the prefix of a trained
network is fixed, the next Yat layer induces a valid RKHS on the original input space.

Let $X\subset\R^{d_0}$ be compact. For a depth-$L$ Yat stack, write
\[
\Phi_0(\mathbf{x})=\mathbf{x},\qquad \Phi_\ell(\mathbf{x})=T_\ell(\Phi_{\ell-1}(\mathbf{x})),
\]
where layer $\ell$ has coordinates
$[T_\ell(\mathbf{z})]_c=\sum_{j=1}^{m_\ell}a_{\ell,j,c}k_{b_\ell,\varepsilon_\ell}(\mathbf{w}_{\ell,j},\mathbf{z})$.
Write $k_\ell(\mathbf{z},\mathbf{z}'):=k_{b_\ell,\varepsilon_\ell}(\mathbf{z},\mathbf{z}')$ and
$[\mathbf{K}_\ell]_{ij}:=k_\ell(\mathbf{w}_{\ell,i},\mathbf{w}_{\ell,j})$.

When learned centers $\mathbf{w}_{\ell,j}$ do not lie in the prefix range $\Phi_{\ell-1}(X)$, we
view each layer coordinate as the restriction to $\Phi_{\ell-1}(X)$ of a function in the
global RKHS $\mathcal{H}_{k_\ell}$ on $\R^{d_{\ell-1}}$. Equivalently, define the
extended domain $D_\ell:=\Phi_{\ell-1}(X)\cup\{\mathbf{w}_{\ell,1},\ldots,\mathbf{w}_{\ell,m_\ell}\}$, apply
the RKHS construction on $D_\ell$, and restrict back to $\Phi_{\ell-1}(X)$; the restriction
RKHS carries the quotient norm. The norm bound below holds in either view.

\begin{theorem}[Prefix-conditioned pullback RKHS: PSD, quotient norm, universality]
\label{thm:pullback-rkhs}
Let $Z$ be a Hausdorff topological space, let $k$ be a continuous PSD kernel on $Z$, and
let $\Phi:X\to Z$ be measurable. Define
$\widetilde{k}(\mathbf{x},\mathbf{x}'):=k(\Phi(\mathbf{x}),\Phi(\mathbf{x}'))$ and
$\mathcal{S}:=\overline{\mathrm{span}}\{k(\Phi(\mathbf{x}),\cdot):\mathbf{x}\in X\}\subset\mathcal{H}_k$.

\medskip\noindent\textbf{(a) PSD and norm bound.} $\widetilde{k}$ is PSD on $X$. For every $f\in\mathcal{H}_k$, $f\circ\Phi\in\mathcal{H}_{\widetilde{k}}$ with $\|f\circ\Phi\|_{\mathcal{H}_{\widetilde{k}}}\le\|f\|_{\mathcal{H}_k}$.

\medskip\noindent\textbf{(b) Exact quotient-norm characterization.}
\label{prop:pullback-quotient-norm}%
\(
\|f\circ\Phi\|_{\mathcal{H}_{\widetilde{k}}} = \inf\{\|g\|_{\mathcal{H}_k}:g\in\mathcal{H}_k,\;g|_{\Phi(X)}=f|_{\Phi(X)}\} = \|P_{\mathcal{S}}f\|_{\mathcal{H}_k},
\)
where $P_{\mathcal{S}}$ is orthogonal projection onto $\mathcal{S}$; the infimum is attained at $P_{\mathcal{S}}f$.

\medskip\noindent\textbf{(c) Universality propagates through injective continuous prefixes.}
\label{thm:pullback-universality}%
If $X$ is compact, $\Phi$ is continuous and injective, and $k$ is universal on $\Phi(X)$, then $\widetilde{k}$ is universal on $X$.

\medskip\noindent\textbf{Application to layer-$\ell$ Yat.} Specialising to $k=k_{b_\ell,\varepsilon_\ell}$, $\Phi=\Phi_{\ell-1}$, $\widetilde{k}=\widetilde{k}_\ell$, part (a) gives the per-layer norm bound $\|[T_\ell\circ\Phi_{\ell-1}]_c\|_{\mathcal{H}_{\widetilde{k}_\ell}}^2\le\mathbf{a}_{\ell,c}^\top \mathbf{K}_\ell\mathbf{a}_{\ell,c}$, with equality (via (b)) iff $f_{\ell,c}\in\mathcal{S}_\ell:=\overline{\mathrm{span}}\{k_\ell(\Phi_{\ell-1}(\mathbf{x}),\cdot):\mathbf{x}\in X\}$. For $b_\ell>0$ and continuous injective $\Phi_{\ell-1}$, part (c) together with Proposition~\ref{prop:singular} yields universality of $\widetilde{k}_\ell$ on $X$.
\end{theorem}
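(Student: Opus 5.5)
The plan is to handle the three parts with standard feature-space arguments, treating the pullback kernel as the kernel of the composite feature map $\mathbf{x}\mapsto k(\Phi(\mathbf{x}),\cdot)\in\mathcal{H}_k$.

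For (a), PSD is immediate. For any finite $\{\mathbf{x}_i\}\subset X$ and coefficients $c_i$, we have $\sum_{i,j}c_ic_j\widetilde k(\mathbf{x}_i,\mathbf{x}_j)=\|\sum_i c_i k(\Phi(\mathbf{x}_i),\cdot)\|_{\mathcal{H}_k}^2\ge0$. For the norm bound, I will use the standard feature-map characterization of an RKHS: if $\widetilde k(\mathbf{x},\mathbf{x}')=\langle \psi(\mathbf{x}),\psi(\mathbf{x}')\rangle_{\mathcal{H}}$ with $\psi(\mathbf{x})=k(\Phi(\mathbf{x}),\cdot)$, then
\[
\mathcal{H}_{\widetilde k}=\{\mathbf{x}\mapsto\langle g,\psi(\mathbf{x})\rangle:g\in\mathcal{H}_k\},
\qquad
\|F\|_{\mathcal{H}_{\widetilde k}}=\inf\{\|g\|:\langle g,\psi(\cdot)\rangle=F\}.
\]
This is the same fact used in Step~3 of Proposition~\ref{prop:singular} and in Theorem~\ref{thm:steinwart421}; see \citet[Thm.~4.21]{steinwart2008support}. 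By the reproducing property, $\langle f,\psi(\mathbf{x})\rangle=f(\Phi(\mathbf{x}))$. Hence $f\circ\Phi\in\mathcal{H}_{\widetilde k}$ with norm at most $\|f\|_{\mathcal{H}_k}$.

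For (b), the representers of $F=f\circ\Phi$ are exactly the $g\in\mathcal{H}_k$ with $g(\Phi(\mathbf{x}))=f(\Phi(\mathbf{x}))$ for all $\mathbf{x}$, that is, $g|_{\Phi(X)}=f|_{\Phi(X)}$. This gives the first equality. Such $g$ are characterized by $g-f\perp\psi(\mathbf{x})$ for all $\mathbf{x}$, i.e.\ $g-f\in\mathcal{S}^\perp$, so the set of representers is the affine space $f+\mathcal{S}^\perp$. The minimal-norm element of this affine space is $P_{\mathcal{S}}f$, with norm $\|P_{\mathcal{S}}f\|$, and it is a representer because $f-P_{\mathcal{S}}f\in\mathcal{S}^\perp$. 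Attainment then follows. No continuity or measurability of $\Phi$ is needed for (a) or (b).

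For (c), let $K=\Phi(X)$. It is compact because $\Phi$ is continuous and $X$ is compact. Since $Z$ is Hausdorff, $\Phi:X\to K$ is a continuous bijection from a compact space onto a Hausdorff space, hence a homeomorphism. Given $u\in C(X)$, the function $v:=u\circ\Phi^{-1}$ lies in $C(K)$. Universality of $k$ on $K$ gives $g\in\mathcal{H}_k|_K$ with $\|g-v\|_{\infty,K}<\eta$; by the definition of the restricted RKHS, such a $g$ extends to an element of $\mathcal{H}_k$. Part (a) then puts $g\circ\Phi\in\mathcal{H}_{\widetilde k}$, and $\|g\circ\Phi-u\|_{\infty,X}=\|g-v\|_{\infty,K}<\eta$. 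Continuity of $\widetilde k$ is clear from continuity of $k$ and $\Phi$.

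The application to layer $\ell$ is direct. The layer coordinate is $f_{\ell,c}=\sum_j a_{\ell,j,c}k_\ell(\mathbf{w}_{\ell,j},\cdot)$, whose norm is given by Proposition~\ref{prop:rkhs-norm}; (a) then yields the bound. Equality in (b) holds iff $\|P_{\mathcal{S}}f\|=\|f\|$, i.e.\ iff $f\in\mathcal{S}_\ell$. Universality follows by combining (c) with Proposition~\ref{prop:singular} applied to the compact set $\Phi_{\ell-1}(X)$. The main obstacle I expect is in (c): the compact-to-Hausdorff homeomorphism argument, and being careful that universality on $\Phi(X)$ refers to restrictions of global RKHS functions, so that (a) applies to them.
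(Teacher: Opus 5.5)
Your proposal is correct and follows essentially the same route as the paper. Part (a) uses the composite feature map $\mathbf{x}\mapsto k(\Phi(\mathbf{x}),\cdot)$, part (b) identifies the representers as the affine set $P_{\mathcal{S}}f+\mathcal{S}^\perp$, and part (c) uses the compact-to-Hausdorff homeomorphism. The one cosmetic difference is in (b), where you read the infimum formula directly off the feature-space characterization of the RKHS norm, whereas the paper constructs the isometric isomorphism $U:\mathcal{H}_{\widetilde{k}}\to\mathcal{S}$ explicitly; this is the same fact proved by hand.
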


\begin{proof}
\textbf{(a)} For $\{\mathbf{x}_i\}\subset X$ and $\{\beta_i\}\subset\R$, $\sum\beta_i\beta_j\widetilde{k}(\mathbf{x}_i,\mathbf{x}_j)=\sum\beta_i\beta_j k(\Phi(\mathbf{x}_i),\Phi(\mathbf{x}_j))\ge 0$ by PSD of $k$. Let $\psi(\mathbf{z}):=k(\mathbf{z},\cdot)$ be the canonical feature map and $\widetilde\psi(\mathbf{x}):=\psi(\Phi(\mathbf{x}))$, so $\widetilde{k}(\mathbf{x},\mathbf{x}')=\langle\widetilde\psi(\mathbf{x}),\widetilde\psi(\mathbf{x}')\rangle$. The reproducing property gives $(f\circ\Phi)(\mathbf{x})=\langle f,\widetilde\psi(\mathbf{x})\rangle_{\mathcal{H}_k}$, so $f\circ\Phi\in\mathcal{H}_{\widetilde{k}}$ with the displayed norm bound. Specialising $f=f_{\ell,c}=\sum_j a_{\ell,j,c}k_\ell(\mathbf{w}_{\ell,j},\cdot)$ and using $\|f_{\ell,c}\|_{\mathcal{H}_\ell}^2=\mathbf{a}_{\ell,c}^\top\mathbf{K}_\ell\mathbf{a}_{\ell,c}$ delivers the per-layer Yat-Gram bound.

\textbf{(b)} Define $U$ on the finite span of kernel sections by
$U(\widetilde{k}(\mathbf{x},\cdot))=k(\Phi(\mathbf{x}),\cdot)$. Agreement of all
kernel-section inner products makes this definition independent of the chosen finite
representation and makes $U$ an isometry on that span. It therefore extends uniquely by
density to an isometric isomorphism $U:\mathcal{H}_{\widetilde{k}}\to\mathcal{S}$.
Under $U$, $f\circ\Phi$ corresponds to $P_{\mathcal{S}}f$, so
$\|f\circ\Phi\|_{\mathcal{H}_{\widetilde{k}}}=\|P_{\mathcal{S}}f\|_{\mathcal{H}_k}$.
The affine set $\{g\in\mathcal{H}_k:g|_{\Phi(X)}=f|_{\Phi(X)}\}$ equals
$P_{\mathcal{S}}f+\mathcal{S}^\perp$, whose unique minimum-norm element is
$P_{\mathcal{S}}f$.

\textbf{(c)} $X$ compact and $Z$ Hausdorff with $\Phi$ continuous injective makes $\Phi:X\to\Phi(X)$ a homeomorphism. For $g\in C(X)$, set $h:=g\circ\Phi^{-1}\in C(\Phi(X))$; by universality of $k$, find $f\in\mathcal{H}_k$ with $\sup_{\Phi(X)}|f-h|<\eta$. Then $f\circ\Phi\in\mathcal{H}_{\widetilde{k}}$ and $\sup_X|f\circ\Phi-g|=\sup_{\Phi(X)}|f-h|<\eta$.
\end{proof}

\paragraph{Why injectivity is necessary.}
For compact $X\subset\R^{d_0}$, suppose $\Phi(\mathbf{x})=\Phi(\mathbf{y})$ at
distinct points. The pullback sections at $\mathbf{x}$ and $\mathbf{y}$ coincide,
so the reproducing property gives $f(\mathbf{x})=f(\mathbf{y})$ for every
$f\in\mathcal H_{\widetilde k}$. For any $g\in C(X)$,
\[
\inf_{f\in\mathcal H_{\widetilde k}}\|f-g\|_{\infty,X}
\ge \tfrac12|g(\mathbf{x})-g(\mathbf{y})|.
\]
Indeed, $f(\mathbf{x})=f(\mathbf{y})$ and the triangle inequality give
$|g(\mathbf{x})-g(\mathbf{y})|\le2\|f-g\|_{\infty,X}$.
Taking $g(\mathbf{z})=(\mathbf{x}-\mathbf{y})^\top\mathbf{z}$ makes this lower
bound positive. Thus, when the base kernel is universal on the compact prefix image,
a continuous prefix gives a universal pullback if and only if it is injective.
This is a property of the full pullback RKHS; a particular finite set of trained
centers specifies only a finite-dimensional subspace.

\subsection{Ambient Sobolev-RKHS Containment for Deep Yat Stacks}
\label{app:global-sobolev-yat}

This appendix complements the exact single-layer squared Gram norm with a global ambient
space. Every bounded finite-depth Yat stack lies in one fixed Sobolev restriction RKHS
on the original input domain. Derivative bounds and a smooth cutoff provide the global
containment, while the prefix-conditioned pullback retains the exact layerwise geometry.

\subsubsection{Sobolev restriction RKHS}

Let $X\subset\R^{d_0}$ be compact and let $U\subset\R^{d_0}$ be a bounded
Lipschitz domain with $X\subset\overline{U}$. Here $H^s(U)$ is the restriction
of the Bessel-potential space $H^s(\R^{d_0})$, equipped with its quotient norm,
including for noninteger $s$. We first bound integer-order derivatives of the
network on a neighborhood of $\overline U$, then multiply by a smooth cutoff to
obtain an $H^s(\R^{d_0})$ extension. Fix $s>d_0/2$. Define
\[
\mathcal{H}_s(X):=\bigl\{f|_X:f\in H^s(U)\bigr\},
\qquad
\|g\|_{\mathcal{H}_s(X)}:=\inf\bigl\{\|f\|_{H^s(U)}:f|_X=g\bigr\}.
\]
By the Sobolev embedding $H^s(U)\hookrightarrow C(\overline{U})$ (valid for $s>d_0/2$),
point evaluation is continuous on $\mathcal{H}_s(X)$, making it an RKHS on $X$.

\begin{lemma}[Derivative bounds and Sobolev restriction]
\label{lem:sobolev-algebra-inverse-app}
Let $V\subset\R^{d_0}$ be a bounded open ball with $\overline U\subset V$,
let $k\ge0$ be an integer, and set
$\|f\|_{k,V}:=\max_{|\alpha|\le k}\sup_{x\in V}|\partial^\alpha f(x)|$
for smooth functions with bounded derivatives on $V$. Then:
\begin{enumerate}
\item $\|fg\|_{k,V}\le2^k\|f\|_{k,V}\|g\|_{k,V}$.
\item If $r\ge\lambda>0$ on $V$ and $\|r\|_{k,V}\le D$, then
$\|1/r\|_{k,V}\le\Gamma_k(\lambda^{-1},D)$, where the nondecreasing function
$\Gamma_k$ is defined by
\[
B_0=u,\qquad B_j=uD(2^j-1)\max_{0\le i<j}B_i\quad(1\le j\le k),
\qquad\Gamma_k(u,D)=\max_{0\le j\le k}B_j.
\]
\item If $0\le s\le k$, then
$\|f|_U\|_{H^s(U)}\le C_{s,k,U,V}\|f\|_{k,V}$.
\end{enumerate}
\end{lemma}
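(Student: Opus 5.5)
The three parts are independent, so I would prove them in order. Part~1 follows from the Leibniz rule: $\partial^\alpha(fg)=\sum_{\beta\le\alpha}\binom{\alpha}{\beta}\partial^\beta f\,\partial^{\alpha-\beta}g$. Each factor is bounded by $\|f\|_{k,V}\|g\|_{k,V}$ whenever $|\alpha|\le k$. The multinomial coefficients sum to $\sum_{\beta\le\alpha}\binom{\alpha}{\beta}=2^{|\alpha|}\le2^k$. Taking suprema over $V$ and maxima over $|\alpha|\le k$ gives the claim.

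For part~2, I would argue by induction on the derivative order, using the identity $r\cdot(1/r)=1$ rather than Fa\`a di Bruno. Put $u=\lambda^{-1}$. At order zero, $|1/r|\le u=B_0$. For $1\le|\alpha|=j\le k$, differentiate $r\,(1/r)=1$ with Leibniz:
\[
r\,\partial^\alpha(1/r)=-\sum_{0\ne\beta\le\alpha}\binom{\alpha}{\beta}\partial^\beta r\;\partial^{\alpha-\beta}(1/r).
\]
Each $\partial^\beta r$ is bounded by $D$, and each $\partial^{\alpha-\beta}(1/r)$ has order $<j$, so it is bounded by $\max_{i<j}B_i$ by the inductive hypothesis. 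The number of weighted terms is $\sum_{0\ne\beta\le\alpha}\binom{\alpha}{\beta}=2^j-1$. Dividing by $r\ge\lambda$ gives $|\partial^\alpha(1/r)|\le uD(2^j-1)\max_{i<j}B_i=B_j$. Hence $\|1/r\|_{k,V}\le\max_jB_j=\Gamma_k$. Monotonicity of $\Gamma_k$ in $(u,D)$ is immediate from the recursion, since all quantities involved are nonnegative. The main care needed here is the bookkeeping, which requires the induction hypothesis to be taken over all multi-indices of lower order.

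For part~3, fix once and for all a cutoff $\chi\in C_c^\infty(V)$ with $\chi\equiv1$ on a neighborhood of $\overline U$. Such a cutoff exists because $\overline U$ is a compact subset of the open ball $V$. The function $\chi f$, extended by zero, lies in $C_c^k(\R^{d_0})$. By part~1 (applied with $\chi$, whose norms are constants depending only on $U,V,k$) and the compactness of the support, we get
\[
\|\chi f\|_{H^k(\R^{d_0})}\le|V|^{1/2}\,C_k\,\|\chi\|_{k,V}\|f\|_{k,V}.
\]
This uses the integer-order Sobolev norm as the sum of $L^2$ norms of derivatives, and the equivalence of that norm with the Bessel-potential norm for integer $k$. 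For $0\le s\le k$, the continuous embedding $H^k(\R^{d_0})\hookrightarrow H^s(\R^{d_0})$ holds because $(1+|\xi|^2)^{s}\le(1+|\xi|^2)^{k}$. Finally, $\chi f$ agrees with $f$ on $U$, so it is an admissible extension. The quotient norm therefore satisfies $\|f|_U\|_{H^s(U)}\le\|\chi f\|_{H^s(\R^{d_0})}$, and combining these bounds yields the constant $C_{s,k,U,V}$.

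The main obstacle is this last step. One must make sure that the quotient definition of $H^s(U)$ for noninteger $s$ is what gets controlled, and this is handled by working entirely on $\R^{d_0}$ through the cutoff. The Lipschitz property of $U$ is not even needed for this direction.
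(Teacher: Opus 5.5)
Your proposal is correct and follows the paper's proof essentially step for step. Part~1 uses Leibniz with coefficient sum $2^{|\alpha|}$. Part~2 differentiates $r\cdot(1/r)=1$ and runs the induction over all lower-order multi-indices, with the $2^j-1$ weighted terms. Part~3 applies a cutoff $\chi\in C_c^\infty(V)$, bounds the derivatives in $L^2$ using the finite volume of $V$, compares $H^k$ and $H^s$ through $(1+\|\xi\|^2)^s\le(1+\|\xi\|^2)^k$, and finishes with the quotient norm. Your remark that the Lipschitz hypothesis on $U$ is not needed for this direction is also accurate.
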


\begin{proof}
For (1), the multi-index Leibniz formula has coefficient sum
$\sum_{\beta\le\alpha}\binom\alpha\beta=2^{|\alpha|}\le2^k$.
For (2), write $v=1/r$. Differentiating $rv=1$ gives, for $|\alpha|=j\ge1$,
\[
\partial^\alpha v=-r^{-1}\sum_{0<\beta\le\alpha}
\binom\alpha\beta(\partial^\beta r)(\partial^{\alpha-\beta}v).
\]
Starting from $\|v\|_{0,V}\le\lambda^{-1}$, induction bounds each derivative
of order $j$ by $B_j$. The recurrence also proves monotonicity in $u,D$.
For (3), choose $\chi\in C_c^\infty(V)$ equal to one on a neighborhood of
$\overline U$. Extend $\chi f$ by zero outside $V$. Leibniz's formula and the
finite volume of $V$ bound its derivatives through order $k$ in $L^2$ by a
constant times $\|f\|_{k,V}$. Plancherel's identity and
$(1+\|\xi\|^2)^s\le(1+\|\xi\|^2)^k$ give
\[
\|f|_U\|_{H^s(U)}\le\|\chi f\|_{H^s(\R^{d_0})}
\le\|\chi f\|_{H^k(\R^{d_0})}\le C_{s,k,U,V}\|f\|_{k,V}.
\]
\end{proof}

\subsubsection{Network class and parameter budgets}

For a depth-$L$ Yat stack, write $\mathbf{z}_0(\mathbf{x})=\mathbf{x}\in\R^{d_0}$ and, for
$\ell=1,\dots,L$ and $c=1,\dots,d_\ell$,
\[
z_{\ell,c}(\mathbf{x})=\sum_{j=1}^{m_\ell}\alpha_{\ell,j,c}\,g_{\ell,j}(\mathbf{x}),
\qquad
g_{\ell,j}(\mathbf{x})=\frac{(\mathbf{w}_{\ell,j}^\top \mathbf{z}_{\ell-1}(\mathbf{x})+b_\ell)^2}
                     {\|\mathbf{z}_{\ell-1}(\mathbf{x})-\mathbf{w}_{\ell,j}\|^2+\varepsilon_\ell}.
\]
Assume the uniform budgets
\[
\|\mathbf{w}_{\ell,j}\|_2\le W_\ell,\qquad
|b_\ell|\le B_\ell,\qquad
0<\varepsilon_{\min}\le\varepsilon_\ell\le\varepsilon_{\max},\qquad
\sum_{j=1}^{m_\ell}|\alpha_{\ell,j,c}|\le A_\ell.
\]
Note $|b_\ell|\le B_\ell$ (not $b_\ell\ge0$): the Sobolev containment holds for any
bounded real bias; $b_\ell\ge0$ is required only for the Mercer/RKHS regime of the
single-layer results elsewhere in the paper.
The width $m_\ell$ enters only through $A_\ell$; if per-atom bounds
$|\alpha_{\ell,j,c}|\le a_\ell$ are assumed, take $A_\ell=m_\ell a_\ell$.

\subsubsection{Proof of global Sobolev-RKHS containment}

\begin{theorem}[Global ambient RKHS containment]
\label{thm:global-sobolev-yat}
Let $X\subset\R^{d_0}$ be compact and $U\supset X$ a bounded Lipschitz domain. Fix
$s>d_0/2$ and define $\mathcal{H}_s(X):=\{f|_X:f\in H^s(U)\}$ with the quotient norm.
Under the budgets $\|\mathbf{w}_{\ell,j}\|\le W_\ell$, $|b_\ell|\le B_\ell$,
$0<\varepsilon_{\min}\le\varepsilon_\ell\le\varepsilon_{\max}$,
$\sum_j|\alpha_{\ell,j,c}|\le A_\ell$, every coordinate $z_{\ell,c}|_X$ of every
admissible Yat stack belongs to $\mathcal{H}_s(X)$, with
$\|z_{\ell,c}\|_{\mathcal{H}_s(X)}\le M_\ell$ for finite constants $M_0,\dots,M_L$
depending only on $X,U,s,L,\{d_\ell,m_\ell,A_\ell,W_\ell,B_\ell\},\varepsilon_{\min},\varepsilon_{\max}$
and not on the trained parameters. The reproducing kernel of $\mathcal{H}_s(X)$ is
universal on $X$, hence characteristic.
\end{theorem}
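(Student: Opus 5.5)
The plan is an induction on depth that bounds, in the derivative norm $\|\cdot\|_{k,V}$ of Lemma~\ref{lem:sobolev-algebra-inverse-app}, every coordinate $z_{\ell,c}$ on a fixed open ball $V\supset\overline U$. Fix an integer $k\ge s$, say $k=\lceil s\rceil$. The induction will produce constants $N_\ell$ with $\|z_{\ell,c}\|_{k,V}\le N_\ell$ for all $c$. Part~(3) of the lemma then gives $\|z_{\ell,c}|_U\|_{H^s(U)}\le C_{s,k,U,V}N_\ell=:M_\ell$. Since the quotient norm on $\mathcal H_s(X)$ is an infimum over extensions, this yields $\|z_{\ell,c}\|_{\mathcal H_s(X)}\le M_\ell$.

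The base case is $z_{0,c}(\mathbf x)=x_c$. On the bounded ball $V$ its sup is bounded by the radius of $V$, its first derivatives are $1$, and all higher derivatives vanish, so $N_0=\max\{\mathrm{rad}(V)+\|\text{center}\|,1\}$.

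For the inductive step, assume $\|z_{\ell-1,c}\|_{k,V}\le N_{\ell-1}$ for every coordinate. The numerator is $L_j=\mathbf w_{\ell,j}^\top\mathbf z_{\ell-1}+b_\ell$. It is a linear combination with $\sum_c|w_{\ell,j,c}|\le\sqrt{d_{\ell-1}}W_\ell$, so $\|L_j\|_{k,V}\le\sqrt{d_{\ell-1}}W_\ell N_{\ell-1}+B_\ell$. Part~(1) then bounds $\|L_j^2\|_{k,V}\le 2^k\|L_j\|_{k,V}^2$.

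The denominator is $r_j=\sum_c(z_{\ell-1,c}-w_{\ell,j,c})^2+\varepsilon_\ell$. Each summand is bounded via part~(1) using $\|z_{\ell-1,c}-w_{\ell,j,c}\|_{k,V}\le N_{\ell-1}+W_\ell$. This gives
\[
\|r_j\|_{k,V}\le D_\ell:=d_{\ell-1}2^k(N_{\ell-1}+W_\ell)^2+\varepsilon_{\max},
\]
and we have the lower bound $r_j\ge\varepsilon_{\min}$ on $V$.

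Part~(2) then gives $\|1/r_j\|_{k,V}\le\Gamma_k(\varepsilon_{\min}^{-1},D_\ell)$. One more use of part~(1) bounds the atom $\|g_{\ell,j}\|_{k,V}$. Finally, the coefficient budget gives
\[
\|z_{\ell,c}\|_{k,V}\le A_\ell\,2^k\cdot 2^k\|L_j\|^2\,\Gamma_k=:N_\ell .
\]
Because $\Gamma_k$ is nondecreasing, $N_\ell$ depends only on the listed budgets and not on the trained parameters. Smoothness of $z_{\ell,c}$ on $V$ is automatic, since each layer is a rational function with denominator bounded below by $\varepsilon_{\min}$ composed with smooth maps.

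The main obstacle is bookkeeping rather than an idea. The derivative norms must be taken on the same set $V$ at every layer, which is fine because each $z_{\ell,c}$ is defined on all of $\R^{d_0}$. Part~(2) also needs the lower bound on all of $V$, which holds globally.

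For the final sentence of the theorem I would argue as follows. $H^s(\R^{d_0})$ with $s>d_0/2$ has a continuous translation-invariant kernel whose Fourier transform $(1+\|\xi\|^2)^{-s}$ is strictly positive. Such kernels are universal on compact sets; alternatively, $C_c^\infty$ functions lie in $H^s$ and are dense in $C(X)$ by Stone--Weierstrass. The restriction space $\mathcal H_s(X)$ contains the restrictions of all of $H^s(\R^{d_0})$, so its kernel is universal. Characteristicness then follows exactly as in Corollary~\ref{cor:char-spd}, using universality together with the Riesz representation theorem.
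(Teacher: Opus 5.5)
Your proposal is correct and follows the paper's proof essentially step for step: the same depth induction in $\|\cdot\|_{k,V}$ on a fixed ball $V\supset\overline U$, with the same bounds for the numerator, the denominator, the reciprocal via $\Gamma_k$, the atom and the coefficient budget, followed by Lemma~\ref{lem:sobolev-algebra-inverse-app}(3) and the quotient-norm infimum. The only cosmetic difference is in the universality clause: the paper places polynomials in $\mathcal H_s(X)$ via part~(3) and applies Stone--Weierstrass, whereas you use $C_c^\infty$ functions or the strictly positive spectral density of the Bessel-potential kernel, and both routes simply exhibit a dense subalgebra inside the restriction space.
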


\begin{proof}
Choose a fixed bounded ball $V$ with $\overline U\subset V$ and an integer
$k\ge s$. All network coordinates are smooth on $\R^{d_0}$ by induction,
since every denominator is at least $\varepsilon_{\min}$. We bound
$\|z_{\ell,c}\|_{k,V}$ inductively. Write $C_{\mathrm{alg}}=2^k$.

\emph{Base case ($\ell=0$).}
Each coordinate $z_{0,i}(\mathbf{x})=x_i$ has
$\|x_i\|_{k,V}\le N_0$ for a finite constant depending only on $V$ and $k$.

\emph{Inductive step.}
Assume $\|z_{\ell-1,i}\|_{k,V}\le N_{\ell-1}$
for all $i=1,\dots,d_{\ell-1}$.

\emph{Linear factor.}
$q_{\ell,j}:=\mathbf{w}_{\ell,j}^\top \mathbf{z}_{\ell-1}+b_\ell$ satisfies
\[
\|q_{\ell,j}\|_{k,V}\le\sqrt{d_{\ell-1}}W_\ell N_{\ell-1}+B_\ell
=:Q_\ell(N_{\ell-1}).
\]
By Lemma~\ref{lem:sobolev-algebra-inverse-app}(1),
$\|q_{\ell,j}^2\|_{k,V}\le C_{\mathrm{alg}}Q_\ell(N_{\ell-1})^2$.

\emph{Denominator.}
$r_{\ell,j}:=\|\mathbf{z}_{\ell-1}-\mathbf{w}_{\ell,j}\|^2+\varepsilon_\ell=\sum_i(z_{\ell-1,i}-w_{\ell,j,i})^2+\varepsilon_\ell$.
Algebra bounds give
$\|r_{\ell,j}\|_{k,V}\le d_{\ell-1}C_{\mathrm{alg}}(N_{\ell-1}+W_\ell)^2+\varepsilon_{\max}=:D_\ell(N_{\ell-1})$.
Pointwise, $r_{\ell,j}(\mathbf{x})\ge\varepsilon_{\min}>0$.
By Lemma~\ref{lem:sobolev-algebra-inverse-app}(2),
$\|1/r_{\ell,j}\|_{k,V}\le\Gamma_k(\varepsilon_{\min}^{-1},D_\ell(N_{\ell-1}))$.

\emph{Atom.}
$g_{\ell,j}=q_{\ell,j}^2\cdot(1/r_{\ell,j})$ obeys, by another application of (1),
\[
\|g_{\ell,j}\|_{k,V}
\le C_{\mathrm{alg}}^2 Q_\ell(N_{\ell-1})^2
\,\Gamma_k(\varepsilon_{\min}^{-1},D_\ell(N_{\ell-1}))
=:G_\ell(N_{\ell-1}).
\]

\emph{Layer coordinate.}
$\|z_{\ell,c}\|_{k,V}\le\sum_j|\alpha_{\ell,j,c}|\|g_{\ell,j}\|_{k,V}
\le A_\ell G_\ell(N_{\ell-1})=:N_\ell$.
This closes the derivative induction. Part~(3) of the lemma and the quotient
norm give
\[
\|z_{\ell,c}|_X\|_{\mathcal H_s(X)}
\le\|z_{\ell,c}|_U\|_{H^s(U)}\le C_{s,k,U,V}N_\ell=:M_\ell.
\]
The ball, cutoff and integer $k$ are fixed before training, so all constants
depend only on the stated data and budgets. Universality and characteristicness
follow from Corollary~\ref{cor:sobolev-ambient-universal}, whose density argument
depends only on the ambient restriction space.
\end{proof}

\begin{remark}[The ambient norm bound grows rapidly with depth]
\label{rem:sobolev-constants-growth}
The recurrence gives finite constants at every fixed depth, but the resulting bound is
not quantitatively useful for a deep stack. To make the growth statement precise,
suppose the derivative estimates above, after conversion to $M_\ell$, bound the update by
\[
1+M_\ell\le C(1+M_{\ell-1})^q,
\qquad C\ge1,\quad q>1,
\]
uniformly over layers. Iteration gives
\[
1+M_L\le C^{(q^L-1)/(q-1)}(1+M_0)^{q^L}=\exp\!\bigl(O(q^L)\bigr).
\]
Thus this standard coarse polynomial recursion is double-exponential in depth. Theorem
~\ref{thm:global-sobolev-yat} is a qualitative containment result, not a practical
deep-capacity bound; sharper composition estimates or a depth-adapted ambient function
space would be needed for useful depth dependence.
\end{remark}

\begin{corollary}[Universality and characteristicness of $\mathcal{H}_s(X)$]
\label{cor:sobolev-ambient-universal}
$\mathcal{H}_s(X)$ is universal on $X$, hence characteristic.
\end{corollary}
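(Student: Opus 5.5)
The claim is that $\mathcal{H}_s(X)$, the restriction of $H^s(U)$ with $s>d_0/2$, is dense in $C(X)$. Characteristicness then follows exactly as in Corollary~\ref{cor:char-spd}.

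\textbf{Step 1: it is an RKHS with a continuous kernel.} By the Sobolev embedding $H^s(U)\hookrightarrow C(\overline U)$, point evaluations on $X$ are bounded functionals on $\mathcal{H}_s(X)$. So $\mathcal{H}_s(X)$ is an RKHS on $X$. Its kernel is the restriction to $X\times X$ of the kernel of $H^s(U)$ with its quotient norm. That kernel is bounded. It is continuous because the embedding into $C(\overline U)$ is continuous: the feature map $\mathbf{x}\mapsto k(\cdot,\mathbf{x})$ is weakly continuous and has continuous norm $\sqrt{k(\mathbf{x},\mathbf{x})}$, hence is norm-continuous. The Bessel-potential kernel of $H^s(\R^{d_0})$ is an explicit continuous radial function, and its restriction argument gives the same conclusion.

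\textbf{Step 2: density.} Fix $g\in C(X)$ and $\eta>0$.

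\begin{itemize}
\item Extend $g$ to a continuous, compactly supported function on $\R^{d_0}$ using Tietze's theorem.
\item Approximate the extension uniformly within $\eta$ on $X$ by a function $\phi\in C_c^\infty(\R^{d_0})$. Mollification works, since the extension is uniformly continuous. Alternatively, Stone--Weierstrass gives a polynomial $p$ with $\sup_X|g-p|<\eta$; multiplying $p$ by a cutoff $\chi\in C_c^\infty$ equal to one near $\overline U$ gives a smooth compactly supported function.
\item Every $\phi\in C_c^\infty(\R^{d_0})$ lies in $H^k(\R^{d_0})\subset H^s(\R^{d_0})$ for any integer $k\ge s$. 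This is the same Plancherel comparison used in Lemma~\ref{lem:sobolev-algebra-inverse-app}(3). Hence $\phi|_U\in H^s(U)$, and so $\phi|_X\in\mathcal{H}_s(X)$.
\end{itemize}

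Together these show $\sup_X|\phi-g|<\eta$ with $\phi|_X\in\mathcal{H}_s(X)$, which proves universality.

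\textbf{Step 3: characteristicness.} Repeat the Riesz argument from the proof of Corollary~\ref{cor:char-spd}(i). If the mean embedding of a finite signed measure $\mu$ vanishes, then $\int f\,d\mu=0$ for every $f\in\mathcal{H}_s(X)$. By density, $\int f\,d\mu=0$ for every $f\in C(X)$, so $\mu=0$.

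\textbf{Main obstacle.} The only delicate point is checking the hypotheses of the universality framework, namely continuity of the kernel on the compact metric space $X$. The density itself is elementary because smooth functions are already in the space. I would handle continuity through the norm-continuity of the feature map, as in Step 1, rather than by computing the kernel explicitly.
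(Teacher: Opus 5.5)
Your proposal is correct and is essentially the paper's argument. The paper uses two facts: polynomials are dense in $C(X)$ by Stone--Weierstrass, and their restrictions lie in $H^s(U)$ by the cutoff-and-Plancherel estimate of Lemma~\ref{lem:sobolev-algebra-inverse-app}(3). That is your polynomial-times-cutoff bullet, and mollifying a Tietze extension is an equivalent variant. For characteristicness the paper cites the universal-implies-characteristic implication instead of repeating the Riesz argument.

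The kernel-continuity check in your Step~1 is something the paper leaves implicit. For it, rely on your Bessel-potential argument: the kernel of $\mathcal{H}_s(X)$ is the restriction of the continuous, translation-invariant kernel of $H^s(\R^{d_0})$, because iterated quotient-norm restrictions compose. Continuity of the embedding into $C(\overline U)$ alone does not in general make a reproducing kernel continuous.
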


\begin{proof}
Every polynomial has bounded derivatives on the ball $V$, so
Lemma~\ref{lem:sobolev-algebra-inverse-app}(3), with an integer $k\ge s$,
places its restriction to $U$ in $H^s(U)$ and its restriction to $X$ in
$\mathcal{H}_s(X)$. By
Stone--Weierstrass, polynomials are dense in $C(X)$. Characteristicness follows from
universality on compact $X$~\citep{sriperumbudur2011universality}.
\end{proof}

\begin{remark}[Exact local norm and ambient global norm]
The identity
$\|\sum_j\boldsymbol{\alpha}_j k_{b,\varepsilon}(\mathbf{w}_j,\cdot)\|_{\mathcal{H}_{b,\varepsilon}}^2=\boldsymbol{\alpha}^\top\mathbf{K}\boldsymbol{\alpha}$
gives the exact single-layer norm. For a depth-$L$ stack, the exact induced kernel on the original input
space is the prefix-dependent pullback $\widetilde{k}_\ell(\mathbf{x},\mathbf{x}')=k_\ell(\mathbf{z}_{\ell-1}(\mathbf{x}),\mathbf{z}_{\ell-1}(\mathbf{x}'))$;
the fixed Sobolev RKHS $\mathcal{H}_s(X)$ provides a parameter-independent global
container with a coarser norm bound.
\end{remark}

\begin{remark}[Sign of bias and Mercer regime]
The proof requires only $|b_\ell|\le B_\ell$; the sign of $b_\ell$ is irrelevant for
Sobolev containment. The condition $b_\ell\ge0$ is required only for the Mercer/RKHS
structure of the single-layer Yat kernel (Theorem~\ref{thm:psd-mercer}). In the
common case where a single shared bias $b_\ell=b\ge0$ is used, the theorem applies
with $B_\ell=b$.
\end{remark}


\section{Experimental Protocols and Results}
\label{app:experiments}

\subsection{Gaussian and fixed-norm spectral diagnostic protocol.}
\label{app:spectral-protocol}
Figure~\ref{fig:hd-spectrum} uses $m=8$ orthonormal centers, dimensions
$d\in\{16,32,\ldots,2048\}$, $b=\varepsilon=1$, and $10^5$ independent inputs for
each of five seeds. To sample exactly without materializing all $d$ coordinates, the
first $m$ Gaussian coordinates are drawn explicitly and the remaining squared norm is
drawn from $\chi^2_{d-m}$. Gaussian inputs use these coordinates and their full random
norm. Fixed-norm inputs divide the same Gaussian vector by its norm and multiply by
$\sqrt d$. For the centered activation covariance $\widehat\Sigma$, the
reported effective rank is
$\operatorname{tr}(\widehat\Sigma)^2/\operatorname{tr}(\widehat\Sigma^2)$. The
supplement contains the implementation and every reported mean and standard deviation.

\subsection{Frozen-CLIP Diagnostics and Experimental Record}
\label{app:clip}

\paragraph{Experimental question.}
The frozen-CLIP evidence consists of a no-training scale/bandwidth sweep and a
trained-head norm--accuracy association. The supplement provides the complete
configurations, implementations, fitted state, and per-configuration results.

The trained diagnostic uses frozen CLIP ViT-B/32 ImageNet-1k features ($d=512$), a
width-$128$ Yat head, fixed $b=\varepsilon=1$, batch size $1024$, Adam at learning rate
$0.1$, seed $42$, and $30$ epochs. Optimization uses all $1{,}281{,}167$ examples in
the public \texttt{train} split. The reported $73.244\%$ accuracy and all per-class
correlations use the disjoint public \texttt{validation} split of $50{,}000$ examples.
This is a within-run geometric measurement; no examples are reassigned between splits.
The retained record fixes this configuration but does not contain a separate
hyperparameter-selection trace. The seven-point $(b,\varepsilon)$
stability grid repeats the same width, optimizer, epoch count, and single seed while
holding each selected parameter pair fixed.

\paragraph{Frozen-representation scale and bandwidth diagnostic.}
We test the concentration mechanism without fitting a classifier. The public CLIP
features are loaded as float32 and the kernel matrices are evaluated in float64; no
additional feature normalization is applied. From the frozen CLIP validation
representation, we select $12{,}000$ points and $64$ disjoint centers without
replacement at deterministic seed $20260804$. Let
$\varepsilon_0=\operatorname{median}_{i,j}\|\mathbf{x}_i-\mathbf{w}_j\|^2=1.026$.
We first scale both points and centers by
$s\in\{0.25,0.5,1,2,4\}$ at fixed $\varepsilon_0$. We then fix $s=1$ and sweep
$\varepsilon/\varepsilon_0\in\{0.01,0.1,1,10,100\}$. For each kernel, we report the
median coefficient of variation across units and the participation-ratio effective
rank of the sample-centered $64\times64$ activation covariance, using divisor $n-1$.
The RBF convention is
$\exp[-\|\mathbf x-\mathbf w\|^2/(2\varepsilon)]$. A unit enters the CV median only
when the absolute sample mean exceeds $10^{-12}$; the count of such units is recorded
to expose numerical underflow.

\begin{figure}[h]
\centering
\includegraphics[width=0.92\linewidth]{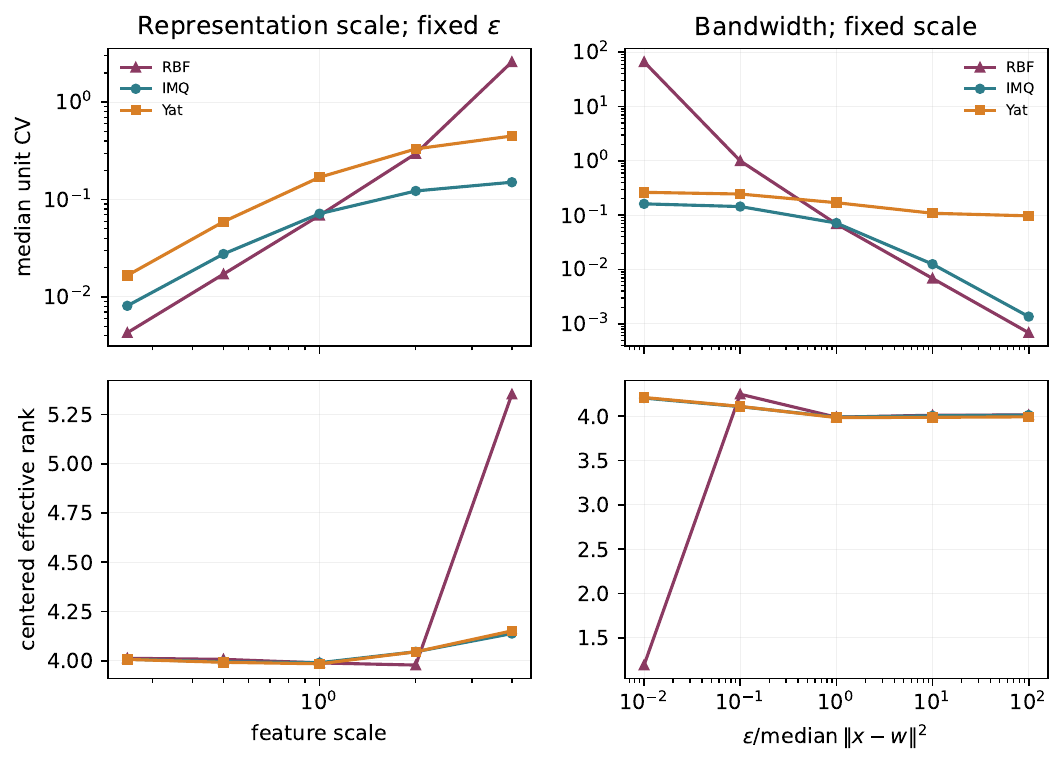}
\caption{\textbf{Purely radial units are sensitive to representation scale and
bandwidth on frozen CLIP features.} No classifier is trained. At
$\varepsilon/\varepsilon_0=100$, median unit CV is $6.85\times10^{-4}$ for RBF,
$1.36\times10^{-3}$ for IMQ, and $9.72\times10^{-2}$ for Yat. At
$\varepsilon/\varepsilon_0=0.01$, RBF has median CV $66.4$, centered effective rank
$1.19$, and only $43$ of $64$ units with numerically nonzero mean. The effective rank
of IMQ and Yat remains near $4$ across most of the sweep, showing that this learned
representation is far from the isotropic orthogonal-center model of
Theorem~\ref{thm:activation-covariance}. The measured conclusion is the relative
variation supplied by the alignment numerator.}
\label{fig:clip-scale-spectrum}
\end{figure}

The four-order-of-magnitude bandwidth sweep displays both radial failure modes described
in the introduction: broad kernels concentrate their activations, while a narrow RBF
can become numerically sparse. Yat is also scale-dependent, but its numerator supplies
a larger relative signal throughout the tested sweep. The diagnostic measures relative
activation variation with centers sampled from frozen features. Classifier-accuracy
attribution is a separate experiment. The supplement includes the CPU implementation
and complete per-configuration statistics.

\paragraph{Per-class squared RKHS norm correlation.}
For the recorded Yat head trained at learning rate $10^{-1}$, the
per-class closed-form squared RKHS norm $\boldsymbol{\alpha}_c^\top\mathbf{K}\boldsymbol{\alpha}_c$
(Proposition~\ref{prop:rkhs-norm}) has Spearman correlation $\rho=0.5628$ with
per-class top-1 accuracy across $n=1000$ classes (one observation per class, $50$
validation examples each). Raw class norms change when one common function is added to
every logit, although softmax predictions do not. We therefore also report the
common-logit-invariant statistic from Corollary~\ref{cor:centered-class-norm},
\[
(\boldsymbol\alpha_c-\bar{\boldsymbol\alpha})^\top
\mathbf K(\boldsymbol\alpha_c-\bar{\boldsymbol\alpha}),
\qquad
\bar{\boldsymbol\alpha}=1000^{-1}\sum_{c=1}^{1000}\boldsymbol\alpha_c.
\]
Its correlation is $\rho=0.5571$. The association therefore persists after removing
the common-logit ambiguity. We treat both correlations as descriptive statistics:
the $50$ examples per class and the shared trained head do not define independent
replicates for a conventional correlation test. Seven additional fixed-parameter,
single-seed runs give raw-norm correlations in $[0.537,0.557]$ and centered-norm
correlations in $[0.530,0.553]$ across the recorded $(b,\varepsilon)$ grid. These
descriptive associations measure fitted class geometry rather than the Rademacher bound.
Both norms are computed from trained centers and readout weights without evaluation
features.

The same Gram formula applies to learned-center RBF and IMQ heads. Yat contributes the
diagonal $(\|\mathbf{x}\|^2+b)^2/\varepsilon$, the directional trace, and the bias
finite-difference identity; the reproducing-property calculation is shared across
fixed-kernel heads.

\subsection{Directional-Tail Benchmark: Protocol and Full Results}
\label{app:directional}

We illustrate Corollary~\ref{cor:directional-gap} with a synthetic experiment in
$d=2$. The target is the single Yat atom
$g_\varepsilon(\mathbf{x};\mathbf{w}^*,b^*)$ with $\mathbf{w}^*=[1,0]$,
$b^*=1$, $\varepsilon=1$.  Along the ray $\mathbf{u}=[1,0]$ the alignment
factor satisfies $a=(\mathbf{u}^\top\mathbf{w}^*)^2=1$, so
Corollary~\ref{cor:directional-gap} predicts
$|g_\varepsilon(r\mathbf{u})-F(r\mathbf{u})|\to 1$ for every finite IMQ
combination $F$. The experiment evaluates this predicted asymptotic behavior; it is
an empirical visualization of the theorem's far-field mechanism.

\paragraph{Setup.}
We draw $N=4{,}000$ training points uniformly (area-measure) from the annulus
$\|\mathbf{x}\|\in[50,100]$, labels $y=g_\varepsilon(\mathbf{x})$.
Two IMQ models are trained: $M{=}50$ and $M{=}200$ learned centers,
shared bandwidth $\varepsilon_{\mathrm{IMQ}}$ trained jointly, Adam for
$5{,}000$ full-batch optimizer updates at learning rates $10^{-3}$ and $5{\times}10^{-4}$.
The training seed is $42$; centers are initialized uniformly on $[-100,100]^2$,
coefficients at zero, and bandwidth at one with parameterization
$\varepsilon_{\mathrm{IMQ}}=\exp(\eta)$. A separate $4{,}000$-point area-uniform
annulus sample uses seed $20260905$ for held-out evaluation.
Both models are evaluated along the ray $r\mathbf{u}$ for $r\in[0,500]$.
The implementation is provided in the supplementary
material; the MLX rerun used $4.24$ and $8.08$\,s of training time, respectively,
on the local Apple Silicon GPU.

\paragraph{Results.}
\begin{table}[h]
\centering
\caption{Mean and max pointwise error on the far tail ($r\ge 400$) for finite
IMQ expansions trained to fit a single Yat atom on an annulus.
The asymptotic limit predicted by Corollary~\ref{cor:directional-gap} is $a=1$.}
\label{tab:directional-tail}
\setlength{\tabcolsep}{6pt}
\begin{tabular}{lccc}
\toprule
Model & $M$ & Mean $|g-F|$ at $r\ge400$ & Max $|g-F|$ at $r\ge400$ \\
\midrule
IMQ-50  & 50  & $1.008$ & $1.008$ \\
IMQ-200 & 200 & $1.007$ & $1.007$ \\
\midrule
Predicted ($a=1$) & --- & $1.000$ & $1.000$ \\
\bottomrule
\end{tabular}
\end{table}

Both fitted expansions approach error $1$ on the evaluated far-field interval,
consistent with Corollary~\ref{cor:directional-gap}. Increasing the width from
$50$ to $200$ leaves the far-field limit unchanged, as the corollary predicts.
The rerun also measures fit quality and parameter scale:
\begin{center}
\small
\begin{tabular}{lrrrrrr}
\toprule
Model & Train MSE & Held-out MSE & Held-out MAE & $\varepsilon_{\IMQ}$ &
$\max_j\|\mathbf v_j\|$ & $\sum_j|a_j|$\\
\midrule
IMQ-50 & $0.2290$ & $0.2318$ & $0.3767$ & $4.2171$ & $130.05$ & $265.24$\\
IMQ-200 & $0.1591$ & $0.1856$ & $0.3145$ & $1.0560$ & $132.46$ & $457.61$\\
\bottomrule
\end{tabular}
\end{center}
These fits retain appreciable error inside the annulus. The experiment establishes
the predicted tail behavior for the fitted expansions; improved annulus fitting is
needed to study extrapolation after accurate interpolation.

\subsection{Causal-LM Depth Diagnostic: Protocol and Full Results}
\label{app:lm-poc}

The exact squared RKHS norm in Sections~\ref{sec:psd}--\ref{sec:capacity} applies to one
shared-$(b,\varepsilon)$ layer. For a fixed preceding network, the pullback theorem
(Theorem~\ref{thm:pullback-rkhs}) defines the corresponding RKHS on the original
input domain. This appendix evaluates whether Yat layers can be optimized in matched
causal language models at $261$M parameters, measuring end-to-end
optimization, held-out behavior, and throughput.

\paragraph{Setup.}
Using the externally public JAX/Flax training codebase described in the provenance
paragraph below, we replace the standard GELU MLP block of a
$261$M decoder-only causal language model ($12$ layers,
$n_{\text{embd}}{=}768$, context $1024$) with a
$\mathbf{x}\to\text{Yat}\to\text{Linear}$ block at the same parameter count.
Both models are trained for $19{,}920$ optimizer steps on $5.22$B C4
tokens~\citep{raffel2019t5,hoffmann2022chinchilla}, with global batch $256$.
We use AdamW~\citep{loshchilov2019adamw} with weight decay $0.01$, $500$
linear-warmup steps, and cosine decay to $5\%$ of the peak learning rate.
The retained main-run configurations use
$\mathrm{lr}{=}0.01$ for GELU and $\mathrm{lr}{=}0.03$ for Yat. We evaluate the
full $\{$pn,sb$\}\times\{+\alpha,+\text{ca}\}$ Yat grid:
\emph{pn} = per-neuron bias, \emph{sb} = bias shared within a layer,
$+\alpha$ = learned output scaling, and $+\text{ca}$ = constant output scaling.
The released implementation represents each hidden coordinate as
\[
u_j(\mathbf x)=\alpha\,
\frac{(\mathbf w_j^\top\mathbf x+b_j)^2}
{\|\mathbf x-\mathbf w_j\|^2+\varepsilon},
\qquad
b_j=\operatorname{softplus}(\beta_j),\quad
\varepsilon=\operatorname{softplus}(\eta),
\]
where the sb variants constrain all $\beta_j$ to one shared scalar, the pn variants
learn one $\beta_j$ per hidden coordinate, and $\eta$ is shared within the layer.
Distance evaluation and the positive transforms are performed in float32, with squared
distances clamped below at zero before adding $\varepsilon$; the activation is then
cast back to the model dtype. The raw bias is initialized at zero, so the initial
positive bias is $\log 2$; $\eta$ is initialized to give
$\varepsilon=10^{-3}$. The $+\alpha$ variants learn the scaling and the
$+\mathrm{ca}$ variants use a fixed scalar. Its numerical value must be matched to
the historical implementation when regenerating the corresponding table rows.
A zero-initialized, bias-free output projection
maps the hidden coordinates back to the residual width.

The sb variants have shared positive $(b,\varepsilon)$ and are therefore finite
expansions of one Mercer kernel at each layer; Proposition~\ref{prop:rkhs-norm} and
the layerwise pullback interpretation apply directly to them. The pn variants use
atom-dependent bias and are not one fixed Mercer-kernel expansion, so the shared-kernel
Gram norm is not assigned to those rows. The scalar scaling is absorbed into the
linear readout coefficients for the shared-kernel representation. The Sobolev theorem
analyzes the explicitly defined Yat stacks; the full transformer additionally contains
attention, normalization, embeddings, and residual connections.
All four variants remain matched empirical
comparisons. Each row is mean$\,\pm\,$sample standard deviation over $3$
independent training seeds. The recovered evaluation notebook implements the
zero-shot task scoring directly; the task-metric histories retain the scores and
sample counts.

\begin{table}[h]
\centering\small
\caption{\textbf{$261$M causal-LM evaluation, $5.2$B C4 tokens, mean$\,\pm\,$sample standard deviation over $3$ training seeds.} Wiki PPL scores $29{,}970$ next tokens; long-range PPL scores $16{,}380$ next tokens in four $4096$-token chunks. The recovered evaluator uses WikiText-2 for the former and prioritizes PG-19 for the latter~\citep{rae2019compressive}; source-selection details are given below. Perplexity is lower-is-better; zero-shot accuracy is higher-is-better.}
\label{tab:lm-poc-261M}
\setlength{\tabcolsep}{7pt}
\textit{Perplexity} \\[-2pt]
\begin{tabular}{lcc}
\toprule
Variant & Wiki PPL $\downarrow$ & Long-range PPL $\downarrow$ \\
\midrule
GELU                 & $44.23_{\pm 1.44}$           & $114.08_{\pm 2.82}$ \\
Yat (pn$+\alpha$)    & $\mathbf{39.04_{\pm 1.03}}$ & $104.20_{\pm 3.40}$ \\
Yat (sb$+\alpha$)    & $39.07_{\pm 0.39}$           & $\mathbf{101.09_{\pm 3.89}}$ \\
Yat (pn$+\mathrm{ca}$) & $42.18_{\pm 0.28}$         & $108.62_{\pm 3.51}$ \\
Yat (sb$+\mathrm{ca}$) & $42.09_{\pm 1.96}$         & $106.51_{\pm 4.08}$ \\
\bottomrule
\end{tabular}

\medskip
\textit{Zero-shot accuracy: language and ARC tasks} \\[-2pt]
\begin{tabular}{lccc}
\toprule
Variant & LAMBADA $\uparrow$ & ARC-E $\uparrow$ & ARC-C $\uparrow$ \\
\midrule
GELU & $23.59_{\pm0.87}\%$ & $\mathbf{33.53_{\pm1.12}\%}$ & $\mathbf{22.47_{\pm0.90}\%}$ \\
Yat (pn$+\alpha$) & $\mathbf{24.16_{\pm0.43}\%}$ & $33.01_{\pm0.80}\%$ & $22.01_{\pm1.67}\%$ \\
Yat (sb$+\alpha$) & $23.80_{\pm1.20}\%$ & $32.62_{\pm0.84}\%$ & $21.39_{\pm0.81}\%$ \\
Yat (pn$+\mathrm{ca}$) & $23.79_{\pm0.71}\%$ & $33.02_{\pm0.43}\%$ & $22.07_{\pm0.79}\%$ \\
Yat (sb$+\mathrm{ca}$) & $23.43_{\pm0.83}\%$ & $32.98_{\pm1.47}\%$ & $22.44_{\pm0.31}\%$ \\
\bottomrule
\end{tabular}

\medskip
\textit{Zero-shot accuracy: commonsense tasks} \\[-2pt]
\begin{tabular}{lccc}
\toprule
Variant & HellaSwag $\uparrow$ & OpenBookQA $\uparrow$ & PIQA $\uparrow$ \\
\midrule
GELU & $29.00_{\pm0.05}\%$ & $27.87_{\pm1.63}\%$ & $61.77_{\pm0.65}\%$ \\
Yat (pn$+\alpha$) & $\mathbf{29.62_{\pm0.52}\%}$ & $28.20_{\pm2.23}\%$ & $\mathbf{62.44_{\pm0.56}\%}$ \\
Yat (sb$+\alpha$) & $29.52_{\pm0.18}\%$ & $\mathbf{28.40_{\pm0.35}\%}$ & $62.37_{\pm0.74}\%$ \\
Yat (pn$+\mathrm{ca}$) & $29.20_{\pm0.39}\%$ & $27.47_{\pm1.40}\%$ & $61.26_{\pm1.07}\%$ \\
Yat (sb$+\mathrm{ca}$) & $\mathbf{29.62_{\pm0.18}\%}$ & $27.20_{\pm0.20}\%$ & $61.50_{\pm0.69}\%$ \\
\bottomrule
\end{tabular}
\end{table}

\paragraph{Reading.}
Across the six zero-shot tasks, the reported means and standard deviations show no
uniform separation between the four Yat variants and GELU. The two $+\alpha$ variants have lower WikiText
perplexity (39.0 versus 44.2) and lower long-range perplexity (101 versus 114) than
GELU in these three-seed runs. Throughput is approximately 655K tokens/s for Yat and
715K tokens/s for GELU, measured at the training batch size on a TPU v6e-8 slice and averaged
over the full run. Thus the tested Yat parameterizations can be optimized across
12 layers, with an approximately 8\% throughput reduction. For the shared-parameter
rows, Proposition~\ref{prop:rkhs-norm} remains applicable separately at each layer.

\paragraph{Compute resources and implementation.} Each $261$M variant was trained on
$5.22\mathrm{B}$ C4 tokens, approximately $20$ training tokens per parameter. Per-variant training
used bfloat16 model computation in JAX/Flax on a Google Cloud TPU v6e-8 slice
(TPU Research Cloud), while numerically sensitive reductions such as logits and
the loss were accumulated in float32. Compute below is reported in TPU-v6e-8
slice-hours: one hour on the full eight-chip slice counts as one slice-hour.
The reported token counts and rounded throughput imply about $2.03$ slice-hours per
GELU run and $2.21$ per Yat run, or $32.65$ across the fifteen main runs.
This is a throughput-based estimate. Measured elapsed times for every main run and
the complete tuning campaign are needed to determine total training compute.
The supplementary validation record checks the throughput arithmetic.
\ifarxiv
The public FlaxChat repository provides the base training and evaluation implementation.
The release's declarative launch specification records the configurations used for the
reported rows, and the public artifact map freezes the $15$ exact checkpoint/run mappings.
The separate evaluation project is
\url{https://wandb.ai/irf-sic/flaxchat-evals}.
\else
The anonymous artifact contains the declarative launch specification; the public build
provides the identity-bearing checkpoint/run map.
\fi
The supplement includes the $15$ per-run task-metric histories for this comparison. Selected complete
eight-task passes provide all $40$ displayed task-metric cells. We compute
the mean and sample standard deviation ($n-1$ denominator) from full-precision seed
metrics and round only the final values to two decimal places.
For $261$M, these are the second complete pass for seed zero and the first complete
pass for seeds one and two. These whole passes match the earlier table's
recorded scores; the historical reason for selecting them remains undocumented.
The recovered notebook resumes runs by model repository and sample limit, without
including a checkpoint revision in the run identifier or model-loading call.
Rerunning the models additionally requires the original training revision and environment
lockfile.

\paragraph{Evaluation protocol.}
The histories record $29{,}970$ scored tokens for Wiki PPL and $16{,}380$ for
long-range PPL, with the latter using four $4096$-token chunks. The recovered notebook
computes Wiki PPL from the first $30{,}000$ tokens of WikiText-2 test, in nonoverlapping
$1024$-token chunks. For long-range PPL it tries PG-19 validation, then PG-19 test,
then WikiText-103 test. The selected source and executed notebook revision were not
recorded with the task scores, so the long-range column is identified by its logged
context protocol rather than assigned a verified dataset split.
The zero-shot counts are $5153$ LAMBADA examples, $2376$ ARC-Easy, $1172$ ARC-Challenge,
$10{,}042$ HellaSwag, $500$ OpenBookQA, and $1838$ PIQA.

\paragraph{Learning-rate sweep.}
The recovered $261$M Yat precursor sweep tests ten peak rates,
$\{10^{-4},3\!\times\!10^{-4},10^{-3},3\!\times\!10^{-3},10^{-2},0.03,0.1,0.3,1,3\}$.
Each completed candidate uses $1000$ optimizer updates, global batch $256$ and context
$1024$: $262{,}144{,}000$ token slots, with padding excluded from the loss.
AdamW uses weight decay $0.01$, gradient clipping at norm $1$, $100$ linear-warmup
updates, and cosine decay toward $5\%$ of the peak rate. Among these candidates,
$0.03$ has the lowest recorded final-minibatch loss ($4.5621$, versus $4.8306$ at
$0.01$), consistent with the retained Yat main-run rate. The script also records
an exponential loss average with coefficient $0.95$; it does not encode an automatic
selection rule. The GELU main-run rate is $0.01$; the recovered sweep evidence does
not establish its selection history or a separate sweep for each Yat variant.

\paragraph{Checkpoint availability.}
External Hugging Face repositories archive all three $261$M checkpoints for every table
row; an external W\&B project retains the training
curves and logged metrics.
\ifarxiv
The public code, model collection, and training histories are available at
\url{https://github.com/mlnomadpy/flaxchat}, \url{https://huggingface.co/mlnomad}, and
\url{https://wandb.ai/irf-sic/flaxchat}, respectively. The supplementary artifact
records immutable row-level links and their verification status.
\else
Identity-bearing code, model, and W\&B links will be supplied in the non-anonymous version.
\fi

\ifarxiv
\clearpage
\section{Public Resources and Model Index}
\label{app:public-resources}
\noindent
The \href{https://github.com/mlnomadpy/flaxchat}{FlaxChat codebase},
\href{https://huggingface.co/mlnomad}{Hugging Face model hub}, and
\href{https://wandb.ai/irf-sic/flaxchat-evals}{evaluation project}
provide entry points to the released language-model resources.
Table~\ref{tab:public-model-index} links the fifteen retained runs individually.
Checkpoint links identify immutable released revisions; training and evaluation
links identify distinct records. The historical linkage between an evaluation pass
and its executed checkpoint remains as described in Appendix~\ref{app:lm-poc}.

\begin{table}[ht]
\centering
\caption{Public resource index for the five $261$M configurations and three seeds.
Each checkpoint label is the first seven characters of its immutable revision.
Training and evaluation links open the corresponding W\&B run.}
\label{tab:public-model-index}
\small
\setlength{\tabcolsep}{9pt}
\begin{tabular}{llccc}
\toprule
Variant & Seed & Checkpoint & Training & Evaluation\\
\midrule
GELU & 0 & \href{https://huggingface.co/mlnomad/gelu-d12-chinchilla-261M-pytorch/tree/116f05b7e907d6048ef21bb04e306065083721dd}{\texttt{116f05b}} & \href{https://wandb.ai/irf-sic/flaxchat/runs/gd6xv37e}{run} & \href{https://wandb.ai/irf-sic/flaxchat-evals/runs/95f2607b68}{history}\\
GELU & 1 & \href{https://huggingface.co/mlnomad/gelu-d12-chinchilla-261M-seed1-pytorch/tree/3d3ab0b762800e2af029abe3d0ce954d44457b4c}{\texttt{3d3ab0b}} & \href{https://wandb.ai/irf-sic/flaxchat/runs/qmdz0itw}{run} & \href{https://wandb.ai/irf-sic/flaxchat-evals/runs/f9ec1fbf94}{history}\\
GELU & 2 & \href{https://huggingface.co/mlnomad/gelu-d12-chinchilla-261M-seed2-pytorch/tree/df539cd1f6f47a74cabb01f9efa5520a0a89080d}{\texttt{df539cd}} & \href{https://wandb.ai/irf-sic/flaxchat/runs/myfqxkpn}{run} & \href{https://wandb.ai/irf-sic/flaxchat-evals/runs/f6f2a63083}{history}\\
\addlinespace
Yat (pn$+\alpha$) & 0 & \href{https://huggingface.co/mlnomad/yatnmn-softplus-d12-chinchilla-261M-pytorch/tree/e813df4e386c2bc15fddb822d7ddb7d1df3b8b8c}{\texttt{e813df4}} & \href{https://wandb.ai/irf-sic/flaxchat/runs/uesk2fq3}{run} & \href{https://wandb.ai/irf-sic/flaxchat-evals/runs/666c1a2a9f}{history}\\
Yat (pn$+\alpha$) & 1 & \href{https://huggingface.co/mlnomad/yatnmn-softplus-d12-chinchilla-261M-seed1-pytorch/tree/74d3eb04a1d7c0297cc4164ddf0e6aae507932b9}{\texttt{74d3eb0}} & \href{https://wandb.ai/irf-sic/flaxchat/runs/ijoumh6z}{run} & \href{https://wandb.ai/irf-sic/flaxchat-evals/runs/e10b4463ea}{history}\\
Yat (pn$+\alpha$) & 2 & \href{https://huggingface.co/mlnomad/yatnmn-softplus-d12-chinchilla-261M-seed2-pytorch/tree/f56e95238223c9f19424d4263c61c47dc11c1ccf}{\texttt{f56e952}} & \href{https://wandb.ai/irf-sic/flaxchat/runs/d6k18imn}{run} & \href{https://wandb.ai/irf-sic/flaxchat-evals/runs/0b648d3e09}{history}\\
\addlinespace
Yat (sb$+\alpha$) & 0 & \href{https://huggingface.co/mlnomad/yatnmn-softplus-sb-d12-chinchilla-261M-pytorch/tree/677ea39d4a992825a9d4b1df98ea369fa726c5aa}{\texttt{677ea39}} & \href{https://wandb.ai/irf-sic/flaxchat/runs/b1ool4vs}{run} & \href{https://wandb.ai/irf-sic/flaxchat-evals/runs/7f7b10ef9b}{history}\\
Yat (sb$+\alpha$) & 1 & \href{https://huggingface.co/mlnomad/yatnmn-softplus-sb-d12-chinchilla-261M-seed1-pytorch/tree/c0f5ea1785c3dba6e18701b8e8c9be138eb4e81c}{\texttt{c0f5ea1}} & \href{https://wandb.ai/irf-sic/flaxchat/runs/5h37m8in}{run} & \href{https://wandb.ai/irf-sic/flaxchat-evals/runs/71c8b2304d}{history}\\
Yat (sb$+\alpha$) & 2 & \href{https://huggingface.co/mlnomad/yatnmn-softplus-sb-d12-chinchilla-261M-seed2-pytorch/tree/da2ca189597a2500eda167d1ba702f7d7a37c584}{\texttt{da2ca18}} & \href{https://wandb.ai/irf-sic/flaxchat/runs/vl7wj9a4}{run} & \href{https://wandb.ai/irf-sic/flaxchat-evals/runs/936d1174e3}{history}\\
\addlinespace
Yat (pn$+\mathrm{ca}$) & 0 & \href{https://huggingface.co/mlnomad/yatnmn-softplus-ca-d12-chinchilla-261M-pytorch/tree/0074b2b24815201a6aea848a5b5ed500aafffc33}{\texttt{0074b2b}} & \href{https://wandb.ai/irf-sic/flaxchat/runs/vojh88ls}{run} & \href{https://wandb.ai/irf-sic/flaxchat-evals/runs/fad8fc8c45}{history}\\
Yat (pn$+\mathrm{ca}$) & 1 & \href{https://huggingface.co/mlnomad/yatnmn-softplus-ca-d12-chinchilla-261M-seed1-pytorch/tree/03655837e5dff7a45f55a4f1fd97781fb456319f}{\texttt{0365583}} & \href{https://wandb.ai/irf-sic/flaxchat/runs/qoi4e2hs}{run} & \href{https://wandb.ai/irf-sic/flaxchat-evals/runs/8aca7502f2}{history}\\
Yat (pn$+\mathrm{ca}$) & 2 & \href{https://huggingface.co/mlnomad/yatnmn-softplus-ca-d12-chinchilla-261M-seed2-pytorch/tree/a5fe50fb8b83e9ce553abca0668ad3d37c7a0aca}{\texttt{a5fe50f}} & \href{https://wandb.ai/irf-sic/flaxchat/runs/9owo2b9y}{run} & \href{https://wandb.ai/irf-sic/flaxchat-evals/runs/d7b955d495}{history}\\
\addlinespace
Yat (sb$+\mathrm{ca}$) & 0 & \href{https://huggingface.co/mlnomad/yatnmn-softplus-sb-ca-d12-chinchilla-261M-pytorch/tree/105af05258cbb8c8d63c33a7773499c6872b365a}{\texttt{105af05}} & \href{https://wandb.ai/irf-sic/flaxchat/runs/j57dlijv}{run} & \href{https://wandb.ai/irf-sic/flaxchat-evals/runs/e756f707ca}{history}\\
Yat (sb$+\mathrm{ca}$) & 1 & \href{https://huggingface.co/mlnomad/yatnmn-softplus-sb-ca-d12-chinchilla-261M-seed1-pytorch/tree/9203b20a512b70fa16f9a266a9e60df940650727}{\texttt{9203b20}} & \href{https://wandb.ai/irf-sic/flaxchat/runs/i25lwkhh}{run} & \href{https://wandb.ai/irf-sic/flaxchat-evals/runs/d65dbede87}{history}\\
Yat (sb$+\mathrm{ca}$) & 2 & \href{https://huggingface.co/mlnomad/yatnmn-softplus-sb-ca-d12-chinchilla-261M-seed2-pytorch/tree/65a88a12956843b6e88de2ab5db0425c3ed17138}{\texttt{65a88a1}} & \href{https://wandb.ai/irf-sic/flaxchat/runs/3rxabp1q}{run} & \href{https://wandb.ai/irf-sic/flaxchat-evals/runs/608823698a}{history}\\
\bottomrule
\end{tabular}
\end{table}

\paragraph{Using the resources.}
The released configurations and metric histories support inspection of the reported
training setup and reconstruction of the table aggregates. A new evaluation should
pin model, tokenizer, evaluator, and dataset revisions and record them together.
The kernel proofs and notation are self-contained in Appendices~\ref{app:background}--\ref{app:layer-theory}.

\acksection
We acknowledge the Google TPU Research Cloud program for providing computational resources.
\else

\section*{NeurIPS Paper Checklist}

\begin{enumerate}

\item {\bf Claims}

\item[] Question: Do the main claims made in the abstract and introduction accurately reflect the paper's contributions and scope?

\item[] Answer: \answerYes{}

\item[] Justification: The introduction's three numbered kernel contributions are
PSD/Mercer structure with Loewner-domination universality, nonradial alignment via the
directional trace, covariance limit, and coefficient-budget obstruction, and exact IMQ recovery with
three-atom tightness. They correspond to Proposition~\ref{prop:inherited-guarantees},
Theorem~\ref{thm:activation-covariance}, Theorem~\ref{thm:imq-embed}, and
Theorem~\ref{thm:main-atom-separation}. The subsequent trained-layer
control is Proposition~\ref{prop:rkhs-norm} and Theorem~\ref{thm:rademacher}. For deep
stacks, the paper proves the prefix-conditioned pullback and ambient Sobolev containment.

\item {\bf Limitations}

\item[] Question: Does the paper discuss the limitations of the work performed by the authors?

\item[] Answer: \answerYes{}

\item[] Justification: Section~\ref{sec:discussion} states the domain of every central
result: shared $(b,\varepsilon)$ for the exact squared RKHS norm, qualitative
fixed-kernel universality, coefficient variation for the IMQ obstruction, Gaussian and
fixed-norm spherical inputs for the covariance theorems, and exterior shells for extrapolation. It
also identifies faithful intervention and matched-quality systems evaluation as the two
next empirical questions. The artifact section records checkpoint and training-run
coverage for all fifteen runs.

\item {\bf Theory assumptions and proofs}

\item[] Question: For each theoretical result, does the paper provide the full set of assumptions and a complete (and correct) proof?

\item[] Answer: \answerYes{}

\item[] Justification: Every theorem, proposition, lemma, and corollary states its assumptions inline ($b\ge 0$, $\varepsilon>0$, compactness of $\mathcal{X}$, etc.); notation is consolidated in Appendix~\ref{app:notation}, generic RKHS background in Appendix~\ref{sec:prelim}, and named results from the literature in Appendix~\ref{app:bg}. Proofs and supporting results are organized by distance (Appendix~\ref{app:distance-proofs}), alignment (Appendix~\ref{app:alignment-proofs}), constructive representation (Appendix~\ref{app:representation-proofs}), and layer theory (Appendix~\ref{app:layer-theory}).

\item {\bf Experimental result reproducibility}

\item[] Question: Does the paper fully disclose all the information needed to reproduce the main experimental results of the paper?

\item[] Answer: \answerNo{}

\item[] Justification: The Gaussian/fixed-norm spectrum, frozen-CLIP scale, fixed-parameter CLIP
norm--accuracy, and directional-tail diagnostics have supplied implementations and
machine-readable result records.
The causal-LM appendix reports the model architecture, optimizers, schedules, seed counts,
token budgets, datasets, JAX/Flax implementation, TPU v6e-8 slice, and bfloat16
model-compute precision. The public artifact map freezes $15$ exact checkpoint/run
mappings, and the anonymous-safe launch specification records the reported
configurations. The $15$ evaluation histories reproduce all $40$ displayed
task-metric cells. End-to-end reruns still require the historical code and environment,
and verified evaluation revision and checkpoint linkage.

\item {\bf Open access to data and code}

\item[] Question: Does the paper provide open access to the data and code, with sufficient instructions to faithfully reproduce the main experimental results, as described in supplemental material?

\item[] Answer: \answerNo{}

\item[] Justification: Reproduction scripts are supplied for the Gaussian/fixed-norm spectrum,
frozen-CLIP scale, fixed-parameter CLIP norm--accuracy, and directional-tail diagnostics,
and the CLIP features come from a public ViT-B/32 checkpoint and the public ImageNet-1k
validation set. The CLIP scripts fail loudly on dataset errors; synthetic smoke tests
require an explicit flag and mark their outputs as non-evidentiary. The causal-LM
training codebase, checkpoints, and W\&B metric histories are externally available.
Identity-bearing links and the $15$-run mapping are supplied only with the public
build; the anonymous supplement retains the declarative configuration record and
per-run task-metric histories.

\item {\bf Experimental setting/details}

\item[] Question: Does the paper specify all the training and test details (e.g., data splits, hyperparameters, how they were chosen, type of optimizer, etc.) necessary to understand the results?

\item[] Answer: \answerNo{}

\item[] Justification: Appendix~\ref{app:spectral-protocol} specifies the
Gaussian/fixed-norm spectrum diagnostic, and Appendix~\ref{app:clip} specifies the two retained
frozen-feature diagnostics, including the trained head's actual width, fixed kernel
parameters, optimizer, epoch count, seed, and evaluation set.
Appendix~\ref{app:lm-poc} and the supplement specify the reported language-model
configurations. Exact historical reconstruction additionally requires
the original code revision, environment, evaluated-checkpoint linkage, and
evaluation pass-selection rationale. Per-seed task-metric histories are supplied.

\item {\bf Experiment statistical significance}

\item[] Question: Does the paper report error bars suitably and correctly defined or other appropriate information about the statistical significance of the experiments?

\item[] Answer: \answerNo{}

\item[] Justification: The retained CLIP norm--accuracy result is a single recorded run, and its $\rho\in[0.54,0.56]$ parameter-grid range reports parameter sensitivity. The $261$M causal-LM tables report mean $\pm$ standard deviation across three seeds.

\item {\bf Experiments compute resources}

\item[] Question: For each experiment, does the paper provide sufficient information on the computer resources (type of compute workers, memory, time of execution) needed to reproduce the experiments?

\item[] Answer: \answerNo{}

\item[] Justification: The Gaussian and fixed-norm spectrum diagnostic uses CPU;
the frozen-CLIP scale diagnostic used a CPU-only Kaggle worker. The directional-tail
rerun took $4.24$ and $8.08$ seconds of training on Apple Silicon for its two models.
The trained CLIP probe uses one Apple Silicon accelerator at $d=512$.
The causal-LM studies use TPU v6e-8 slices and bfloat16 model computation.
Rounded throughput implies approximately $32.65$ slice-hours across the fifteen
$261$M main runs; measured elapsed times and total tuning compute remain unresolved.
Exact reconstruction also requires the historical software environment.

\item {\bf Code of ethics}

\item[] Question: Does the research conducted in the paper conform, in every respect, with the NeurIPS Code of Ethics?

\item[] Answer: \answerYes{}

\item[] Justification: The paper is a theoretical contribution about a kernel construction; no human subjects, no personal data, no harmful applications.

\item {\bf Broader impacts}

\item[] Question: Does the paper discuss both potential positive societal impacts and negative societal impacts of the work performed?

\item[] Answer: \answerNA{}

\item[] Justification: The paper develops mathematical foundations for a hidden-unit primitive; broader societal impact is not directly applicable to this contribution.

\item {\bf Safeguards}

\item[] Question: Does the paper describe safeguards that have been put in place for responsible release of data or models with a high risk for misuse?

\item[] Answer: \answerNA{}

\item[] Justification: The release consists of research-scale checkpoints trained on
public corpora and contains no restricted or personal dataset.

\item {\bf Licenses for existing assets}

\item[] Question: Are the creators or original owners of assets (e.g., code, data, models), used in the paper, properly credited and are the license and terms of use explicitly mentioned and properly respected?

\item[] Answer: \answerYes{}

\item[] Justification: The CLIP ViT-B/32 model~\citep{radford2021clip} is
MIT-licensed, and ImageNet-1k is used under its research-only terms; we use
precomputed embeddings and do not redistribute raw images. C4~\citep{raffel2019t5}
is used under ODC-BY together with the applicable Common Crawl terms, WikiText
under CC BY-SA, and PG-19~\citep{rae2019compressive} under Apache-2.0. Each asset is cited
at its point of use, and no third-party raw data or model checkpoint is redistributed.

\item {\bf New assets}

\item[] Question: Are new assets introduced in the paper well documented and is the documentation provided alongside the assets?

\item[] Answer: \answerYes{}

\item[] Justification: The authors release trained causal-LM checkpoints, public
training histories, row-level immutable checkpoint/run mappings, an aggregate manifest,
a declarative launch specification, a validator, and diagnostic code. The public
artifact statement records all $15$ paper-time checkpoint mappings and the inputs
remaining for exact historical regeneration; model cards document the released checkpoints.

\item {\bf Crowdsourcing and research with human subjects}

\item[] Question: For crowdsourcing experiments and research with human subjects, does the paper include the full text of instructions given to participants and screenshots, if applicable, as well as details about compensation (if any)?

\item[] Answer: \answerNA{}

\item[] Justification: No human subjects.

\item {\bf Institutional review board (IRB) approvals or equivalent for research with human subjects}

\item[] Question: Does the paper describe potential risks incurred by study participants, whether such risks were disclosed to the subjects, and whether Institutional Review Board (IRB) approvals (or an equivalent approval/review based on the requirements of your country or institution) were obtained?

\item[] Answer: \answerNA{}

\item[] Justification: No human subjects.

\item {\bf Declaration of LLM usage}

\item[] Question: Does the paper describe the usage of LLMs if it is an important, original, or non-standard component of the core methods in this research?

\item[] Answer: \answerNA{}

\item[] Justification: LLMs are not part of the methodology of this paper.

\end{enumerate}
\fi

\end{document}